\newtheorem{theorem}{Theorem}[section]
\newtheorem{prop}{Proposition}[section]
\newtheorem{lemma}[theorem]{Lemma}
\newtheorem*{remark}{Remark}
\newtheorem{defn}{Definition}[]
\title{Rethinking Privacy Preserving Deep Learning: \\ How to Evaluate and Thwart Privacy Attacks}
\author{%
  Lixin Fan\thanks{equal contribution.} \\
  WeBank AI Lab\\
   %examples of more authors
   \And
   Kam Woh Ng$^{*}$ \\
   WeBank AI Lab \\
   \And
   Ce Ju$^{*}$ \\
   WeBank AI Lab \\
   \And
   Tianyu Zhang \\
   WeBank AI Lab \\
   \AND
   Chang Liu \\
   WeBank AI Lab \\
   \And
   Chee Seng Chan \\
   University of Malaya \\
   \And
   Qiang Yang \\
   Hong Kong Univ. of Science and Tech. \\
}
\begin{document}
	
	\maketitle
	
	\begin{abstract}
		This paper investigates capabilities of Privacy-Preserving Deep Learning (PPDL) mechanisms against various forms of privacy attacks. First, we propose to quantitatively measure the trade-off between model accuracy and privacy losses incurred by \textit{reconstruction, tracing and membership attacks}. Second, we formulate reconstruction attacks as solving a noisy system of linear equations, and prove that attacks are guaranteed to be defeated if condition (\ref{eq:recon-cond}) is unfulfilled. Third, based on theoretical analysis, a novel Secret Polarization Network (SPN) is proposed to thwart privacy attacks, which pose serious challenges to existing PPDL methods. Extensive experiments showed that model accuracies are improved on average by 5-20\% compared with baseline mechanisms, in regimes where data privacy are satisfactorily protected.
	\end{abstract}
	
	\section{Introduction}

	Privacy-preserving deep learning (PPDL)  aims to collaboratively train and share a deep neural network model among multiple participants, without exposing to each other information about their private training data.  
	This typical \textit{federated learning} setting is particularly attractive to business scenarios in which raw data e.g. medical records or bank transactions are too sensitive and valuable to be disclosed to other parties \cite{communicationEfficient/mcMahan2017,yang2019federated}.  
	While \textit{differential privacy} based approaches e.g. \cite{DLDP_Abadi16,PPDL/shokri2015} attract much attentions due to its theoretical guarantee of privacy protection and low computational complexity \cite{DP/dwork2006,Calibrating_Noise_DA/dwork2006}, there is %a well-grounded consensus that %privacy preserving techniques inevitably result in drops of model {utilities}.   
	a fundamental trade-off between \textit{privacy guarantee} vs \textit{utility} of learned models, i.e. % is inevitable i.e. 
	%<<<<<<< HEAD
	overly conservative privacy protections often significantly deteriorate model utilities (\textit{accuracies} for classification models).  
	Existing solutions e.g. \cite{DLDP_Abadi16,PPDL/shokri2015} %, that follow the differential privacy theory 
	are unsatisfactory in our view --- low $\epsilon$ privacy budget value does not necessarily lead to desired levels of privacy protection. 
	For instance, the leakage of shared gradients may admit complete reconstruction of training data under certain circumstances \cite{DeepLeakage_Han19,BeyondInferClass_Wang19,Gradient_Leakage/wei2020,inverting_gradient/geiping2020},  even though substantial fraction of gradients elements are truncated \cite{PPDL/shokri2015} or large random noise are added \cite{DLDP_Abadi16}.
	
	In order to make critical analysis and fair evaluations of different PPDL algorithms, we argue that one must employ an \textit{objective} evaluation protocol to quantitatively measure privacy preserving capabilities against various forms of privacy attacks. 
	Following a privacy adversary approach \cite{attack/dwork2017,exploitingFeatLeak_Vitaly18},  we propose to
	evaluate the admitted privacy loss by three objective measures i.e.  \textit{reconstruction},  \textit{tracing} and \textit{membership} losses, with respect to the accuracies of protected models. To this end, Privacy-Preserving Characteristic (PPC) curves are used to delineate the trade-off, with Calibrated Averaged Performance (CAP) faithfully quantifying a given PPC curve. These empirical measures complement the theoretical bound of the privacy loss and constitute the first contribution of our work (see Figure \ref{fig:ppc-all} for example PPC).   

	As demonstrated by experimental results in Sect. \ref{sect:exper},  the leakage of shared gradients poses serious challenges to existing PPDL methods\cite{DLDP_Abadi16,PPDL/shokri2015,DeepLeakage_Han19}. 
	Our second contribution, therefore, is a novel \textit{secret polarization network} (SPN) and a polarization loss term, which 
	%effectively defeat the Deep Leakage attacks and, at the same time, improve the classification accuracy of the baseline CNN network. The private loss term defined in Section ? 
	%The novel network architecture, coined as \textit{secret polarization network} (SPN), 
	%<<<<<<< HEAD
	%bring about two advantages in tandem with public backbone networks --- first, SPN helps to defeat privacy attacks by adding \textit{secrete} and  \textit{element-wise adaptive} gradients to shared gradients;  second, the added polarization loss acts as a regularization term to consistently improve the classification accuracies of baseline networks in federated learning settings. This SPN based mechanism has demonstrated strong capability to thwart three types of privacy attacks without significant deterioration of model accuracies. As summarized by %CMP and 
	%CAP values in Figure \ref{fig:barchart-summary}, SPN compares favorably with existing solutions \cite{PPDL/shokri2015} and \cite{DLDP_Abadi16} with pronounced improvements of performances against reconstruction, membership and tracing attacks. 
	%=======
	bring about two advantages in tandem with public backbone networks --- first, SPN helps to defeat privacy attacks by adding \textit{secrete} and  \textit{element-wise adaptive} gradients to shared gradients;  second, the added polarization loss acts as a regularization term to consistently improve the classification accuracies of baseline networks in federated learning settings. This SPN based mechanism has demonstrated strong capability to thwart three types of privacy attacks without significant deterioration of model accuracies. As summarized by %CMP and 
	CAP values in Fig. \ref{fig:barchart-summary}, SPN compares favorably with existing solutions \cite{PPDL/shokri2015} and \cite{DLDP_Abadi16} with pronounced improvements of performances against reconstruction, membership and tracing attacks. 
	%>>>>>>> 2ccb4a654db1c6498e7bc0002c8d86a83ed5e4c1
	%Moreover, the accuracy of the backbone network is not compromised but improved, with the magnitudes of improvements actually increased with the number of participants in the federated learning setting. 
	
	%the successfulness of recent reconstruction attacks \cite{Model_Inversion/fredrikson2015,DeepLeakage_Han19,BeyondInferClass_Wang19,Gradient_Leakage/wei2020,inverting_gradient/geiping2020} and 
	Our third contribution is the formulation of reconstruction attacks as solving a \textit{noisy system of linear equations}, and it is proved that reconstructions are guaranteed to fail if the necessary condition (\ref{eq:recon-cond}) in Theorem \ref{thm:recon-lineareq} is purposely invalidated. This theoretical analysis sheds new light on the effectiveness of DP based privacy preserving mechanisms. 

	\begin{figure}[t]
		%\centering
		%\begin{minipage}[t]{0.25\textwidth}
		\centering
		%\includegraphics[scale=0.48]{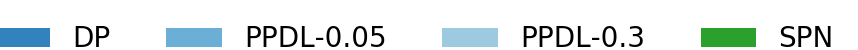}	
		%\\
		\begin{subfigure}{0.48\linewidth}
			\includegraphics[width=0.49\linewidth,height=2cm]{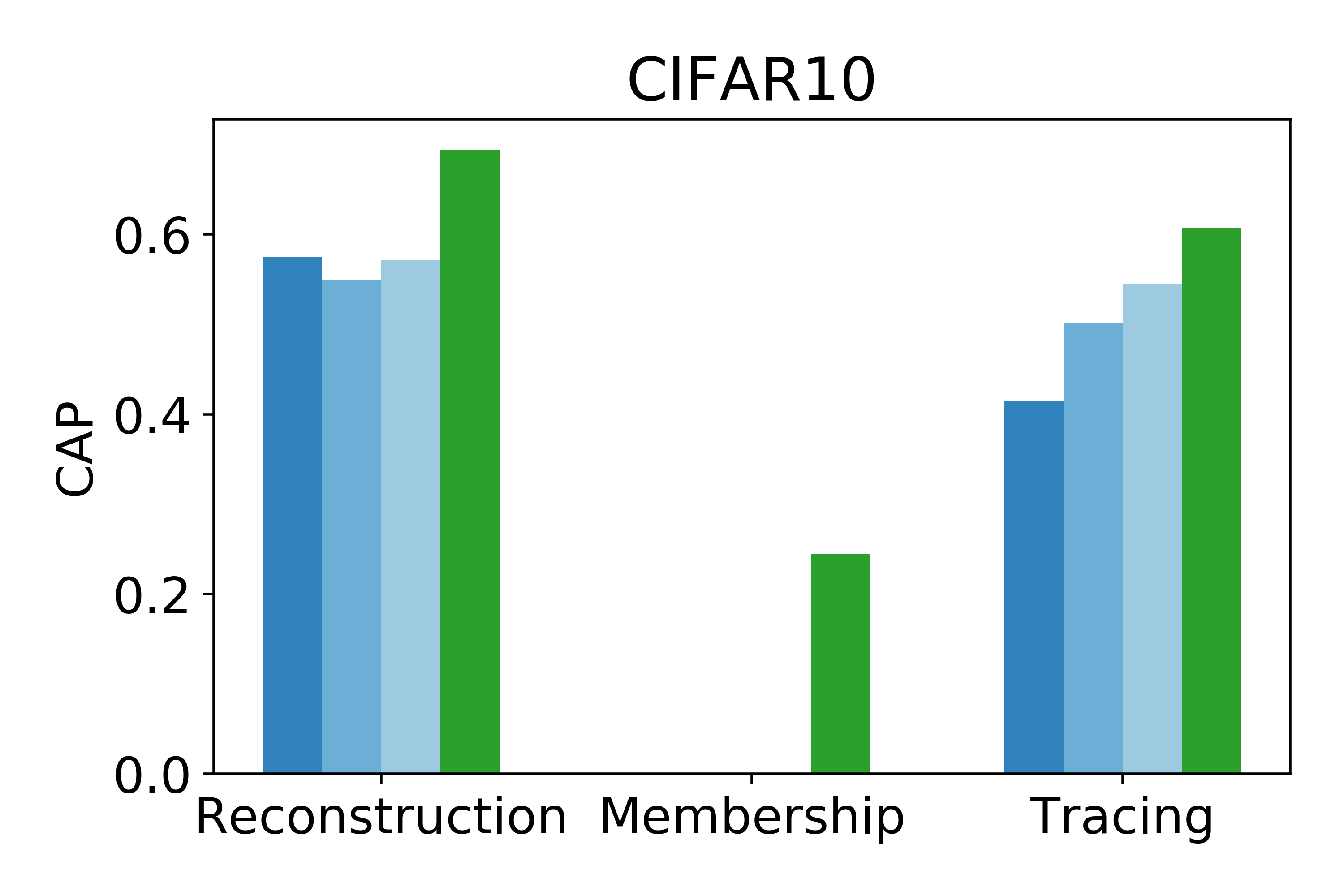}
			\includegraphics[width=0.49\linewidth,height=2cm]{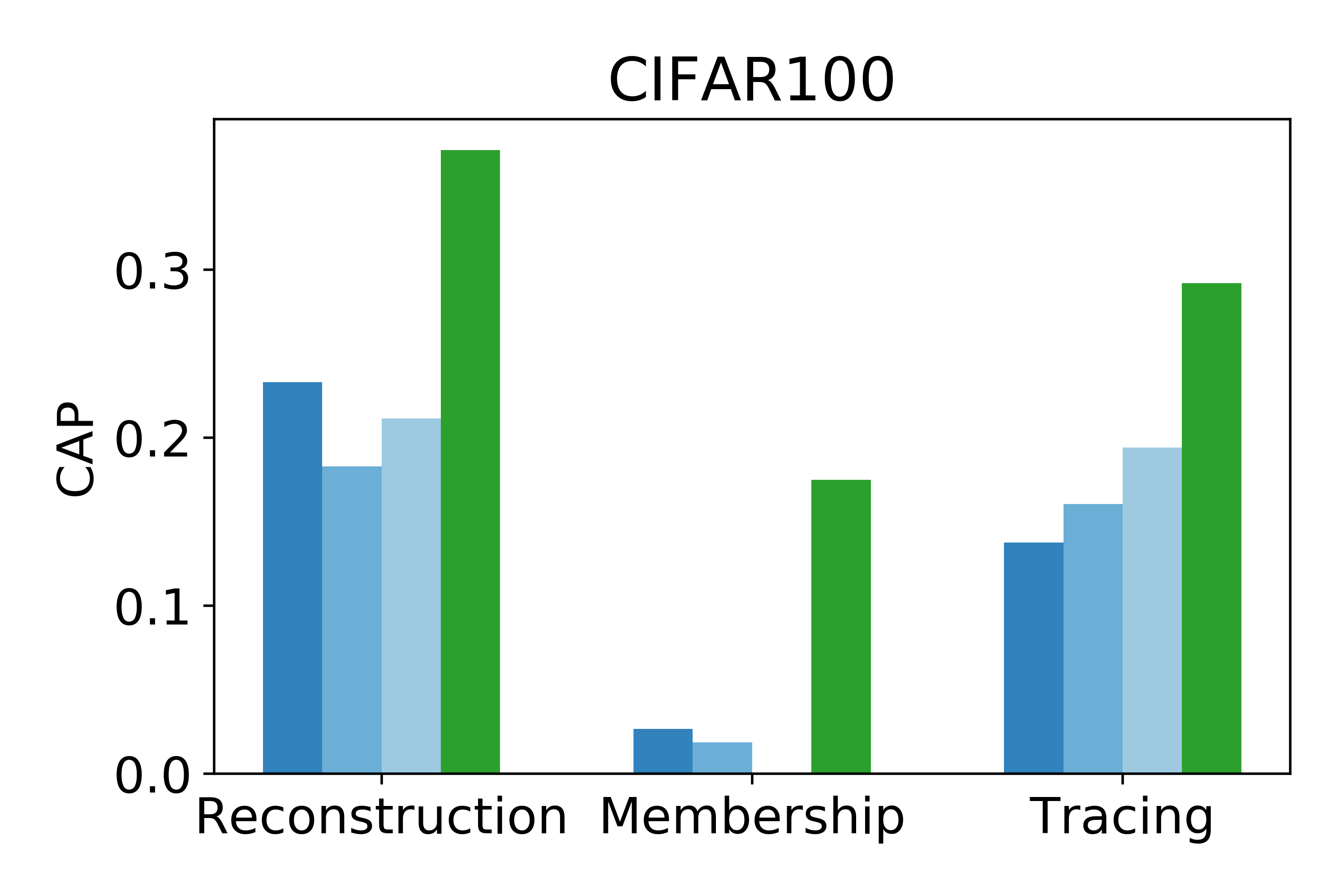}
			\caption{Attack Batch Size 1}
		\end{subfigure}
		\begin{subfigure}{0.48\linewidth}
			\includegraphics[width=0.49\linewidth,height=2cm]{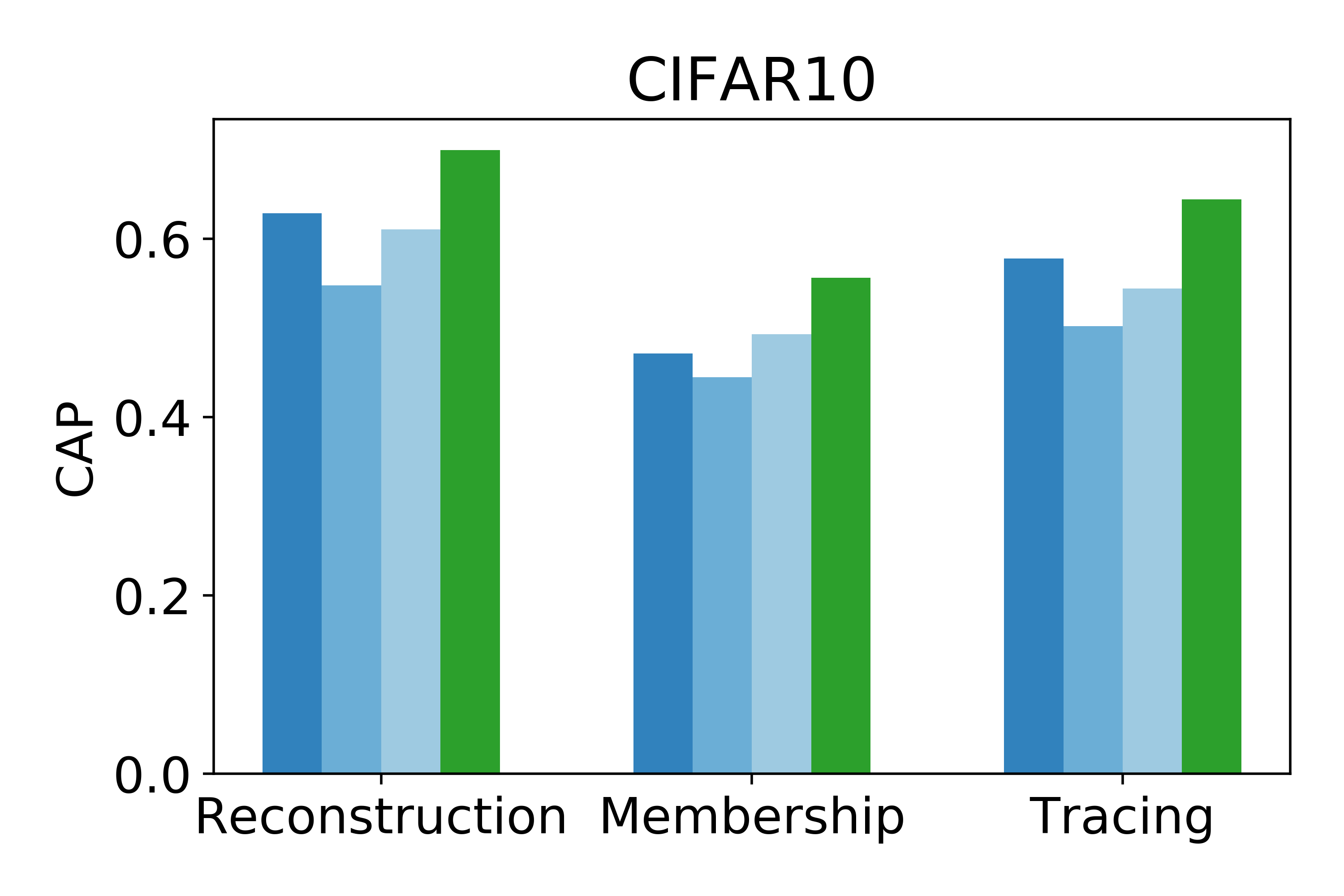}
			\includegraphics[width=0.49\linewidth,height=2cm]{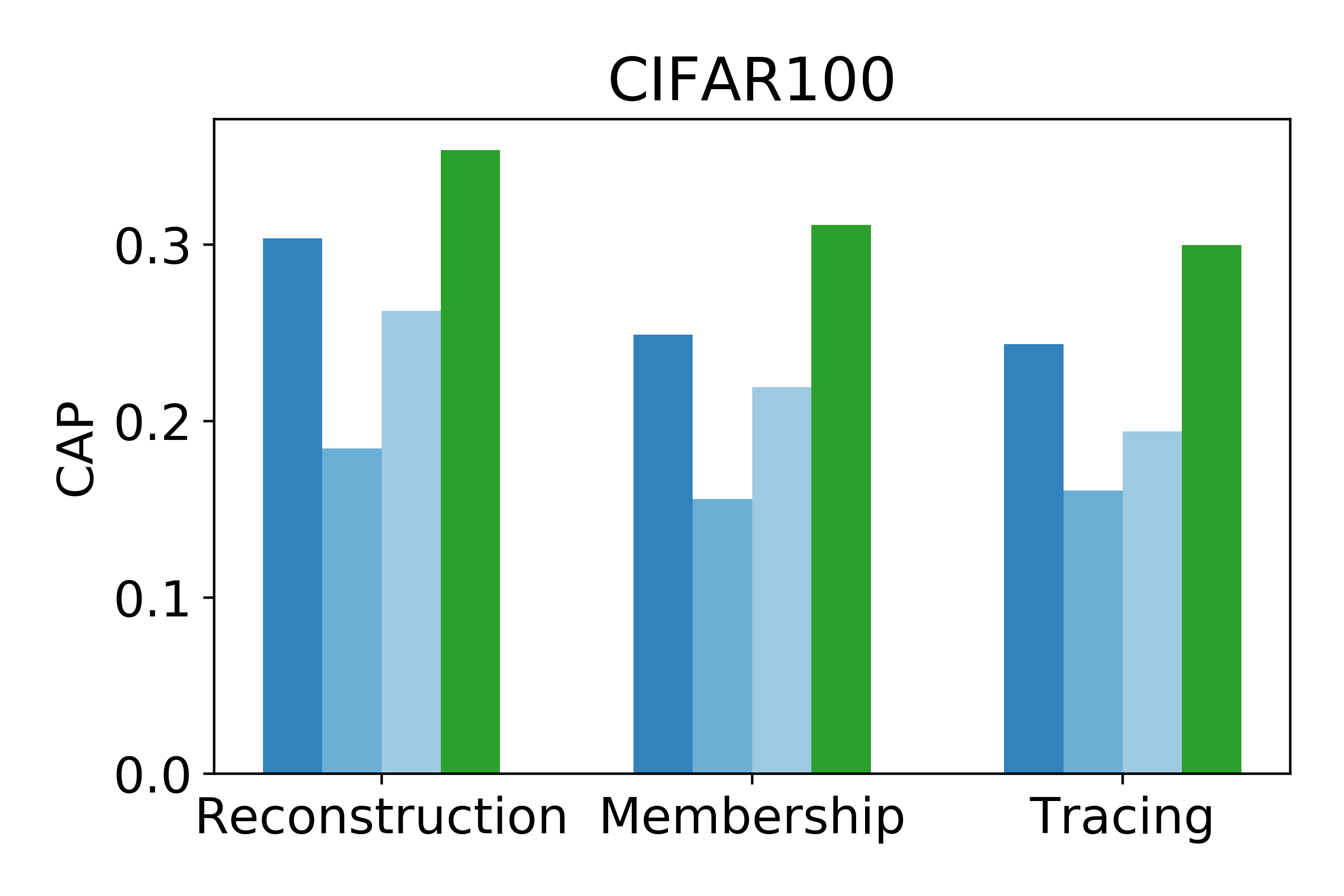}
			\caption{Attack Batch Size 8}
		\end{subfigure}
		
		%\end{minipage}
		\caption{Comparison of Calibrated %Maximum/
			Averaged Performances (%CMPs/
			CAPs) for the proposed SPN, PPDL \cite{PPDL/shokri2015} and DP \cite{DLDP_Abadi16} methods, against reconstruction, membership and tracing attacks (%CMP/
			CAP the higher the better, see threat model and evaluation protocol in Sect. \ref{subsect:protocol}). (a): CIFAR10/100 models attacked with batch size 1; (b): CIFAR10/100 models attacked with batch size 8.  \label{fig:barchart-summary}}
		%\caption{Comparison of SPN, PPDL and DP privacy-preserving mechanisms. \label{fig:barchart-summary}}
		\vspace{-0.2cm}
	\end{figure}
	
	\subsection{Related Work} \label{subsect:related}
	
	%We first review two recent privacy preserving deep learning (PPDL) methods:
	%can be broadly categorized into three types: % In terms of the underlying protection principles: 
	%first, 
	\cite{DLDP_Abadi16} demonstrated how to maintain data privacy by adding Gaussian noise to shared gradients during the training of deep neural networks.
	%second, 
	\cite{PPDL/shokri2015} proposed to randomly select and share a small fraction of gradient elements (those with large magnitudes) to reduce privacy loss. 
	Although both methods \cite{DLDP_Abadi16,PPDL/shokri2015} offered strong \textit{differential privacy} (DP) guarantees \cite{DP/dwork2006,Calibrating_Noise_DA/dwork2006}, as shown by \cite{exploitingFeatLeak_Vitaly18,DeepLeakage_Han19} and our empirical studies, 
	pixel-level reconstructions of training data and disclosing of membership information raise serious concerns about potential privacy loss. 
	%the theoretically justified privacy losses  do not necessarily correspond to desired levels of privacy protection, especially in the face of privacy attacks reviewed below. 
	
	%We therefore argue that one must employ objective evaluation of privacy breaches associated with drops in accuracy of trained models. 
	%the inevitable trade-off with the drop in accuracy made it difficult to maintain the desired level of privacy protection. 
	
	%no objective evaluation of privacy breaches were empirically evaluated. 

	%and offered an algorithmic sweet spot of the trade-off between model accuracy and privacy loss.  
	%The fundamental trade-off between privacy loss and utility is not critically analyzed; 
	
	Dwork et.al. \cite{attack/dwork2017} have formulated privacy attacks towards a database, as a series of queries maliciously chosen according to an attack strategy designed to compromise privacy.  Among three privacy attacks i.e. \textit{reconstruction},\textit{tracing} and \textit{re-identification} discussed in \cite{attack/dwork2017},  
	the detrimental reconstruction attack is formulated as solving a noisy system of linear equations,  
	and reconstruction errors are essentially bounded by the worst-case accuracies of query answers (Theorem 1 in \cite{attack/dwork2017}). However, this formulation is not directly applicable to deep learning, since queries about private training data are not explicitly answered during the training or inferencing of DNNs. 
	%in the distributed learning setting. 
	%Instead, the reconstruction of training data are made possible, by observing the shared \textit{gradients} during the learning stage. 
	%Two forms of privacy attacks i.e. \textit{reconstruction} and \textit{tracing} have been formulated from an adversary standpoint  in \cite{attack/dwork2017,Protection_private_ML_2approches/abadi2017}, 
	%with the relationship between reconstruction error and privacy losses being established (Theorem 1 in \cite{attack/dwork2017}).  
	
	In the context of deep learning, membership attacks was investigated in \cite{Membership_Inference/shokri2017} while \cite{Model_Inversion/fredrikson2015} demonstrated that recognizable face images can be recovered from confidence values revealed along with predictions. %for neural network models.
	\cite{exploitingFeatLeak_Vitaly18} demonstrated with both CNNs and RNNs that periodical gradient updates during training leaked information about \textit{training data, features} as well as class \textit{memberships}.  Possible defences such as selective gradient sharing, reducing dimensionality, and dropout were proved to be ineffective or had a negative impact on the quality of the collaboratively trained model.  Based on the assumption that activation functions are twice-differentiable, recent attacks were proposed to reconstruct training data with pixel-level accuracies %empirical evaluation and theoretical analysis presented in
	\cite{DeepLeakage_Han19,BeyondInferClass_Wang19,Gradient_Leakage/wei2020,inverting_gradient/geiping2020}. %Yet no explanations of the successfulness of reconstruction attacks against \cite{DLDP_Abadi16,PPDL/shokri2015} were provided. 
	%None of above existing work  \textit{quantitatively} investigate the trade-off between model quality (accuracy) vs privacy attacks.  To our best knowledge, the present paper is the first work that adopts privacy attacks to evaluate the privacy preserving capabilities of strategies proposed in  \cite{DLDP_Abadi16,PPDL/shokri2015,exploitingFeatLeak_Vitaly18,DeepLeakage_Han19}, with extensive experiments over different networks and datasets (see Section \ref{sect:exper}). 
	These recent reconstruction attacks were adopted in the present work to evaluate capabilities of privacy-preserving strategies proposed in  \cite{DLDP_Abadi16,PPDL/shokri2015,exploitingFeatLeak_Vitaly18,DeepLeakage_Han19}, with extensive experiments conducted over different networks and datasets (see Sect. \ref{sect:exper} and supplementary material).
	
	\begin{figure}[t]	
		%	\begin{subfigure}{0.95\linewidth}
		\centering
		\hspace*{\fill}
		\begin{subfigure}[t]{0.4\linewidth}
			\centering
			\includegraphics[width=0.99\linewidth]{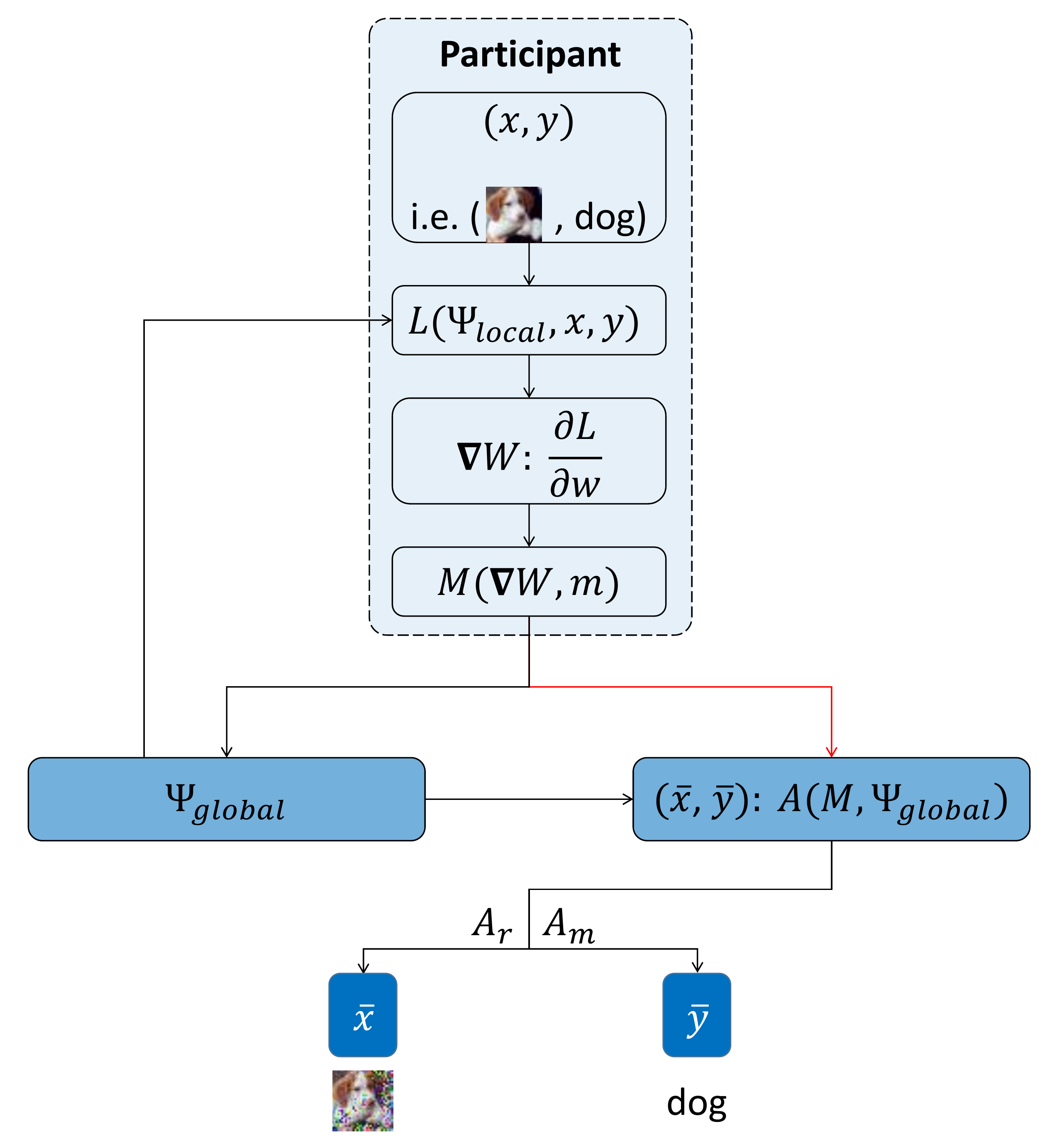}
			\caption{\textbf{Reconstruction attacks $\mathcal{A}_r$}, with relative MSE between reconstructed and original data $\frac{\|\bar{x}-x\|}{\|x\|}$.
				\textbf{Membership attacks $\mathcal{A}_m$}, with categorical distance between reconstructed and original labels $dist_m( \bar{y}, y)$.}
		\end{subfigure}
		\hfill
		\begin{subfigure}[t]{0.4\linewidth}
			\centering
			\includegraphics[width=0.99\linewidth]{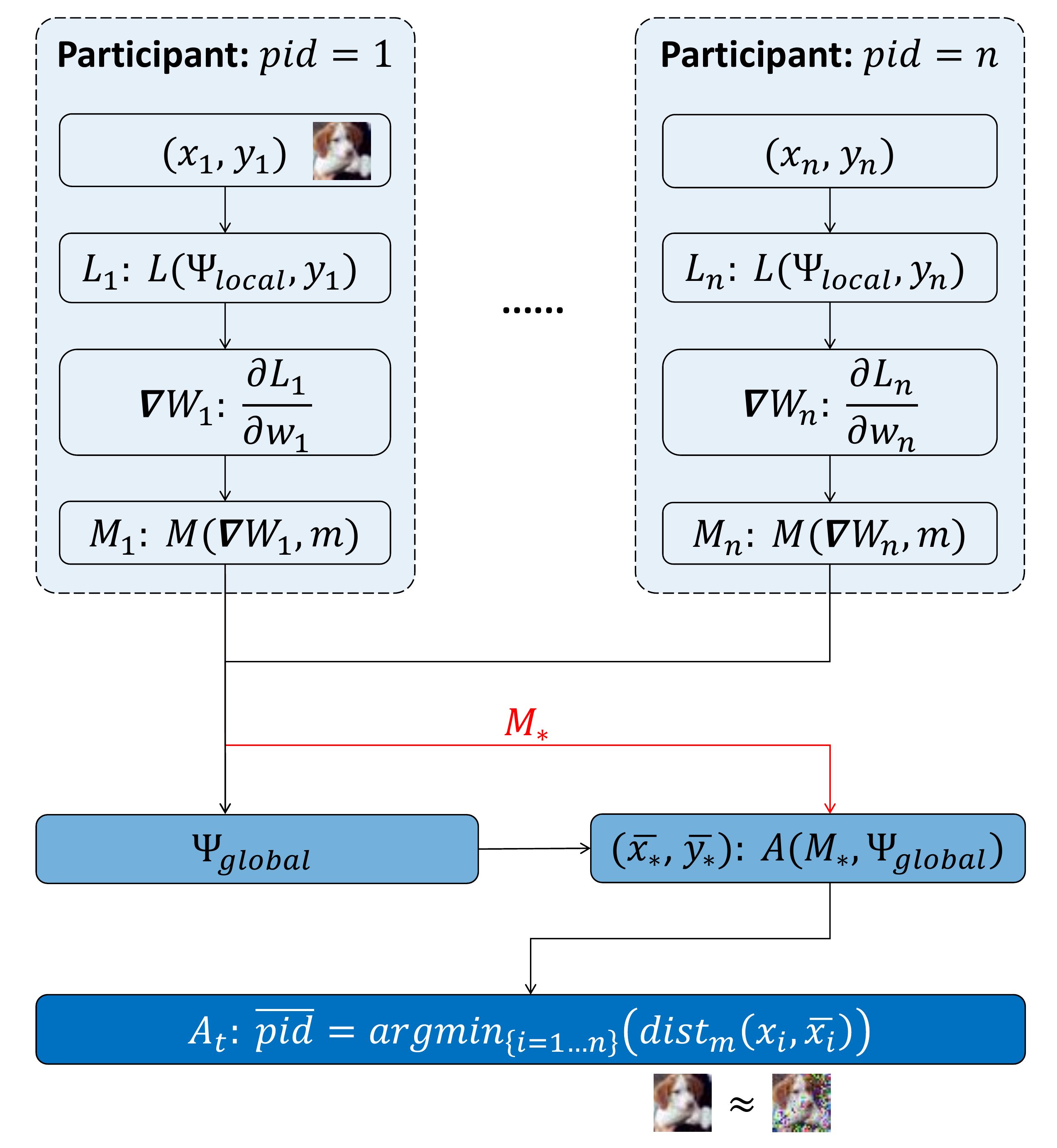}
			\caption{\textbf{Tracing attacks $\mathcal{A}_t$}, with categorical distance between recovered and actual participant IDs  $dist_m( \overline{pid}, p)$.}
		\end{subfigure}
		\hspace*{\fill}
		%\hfill
		%\includegraphics[scale=0.1]{imgs/gradient_0.0001_DPN.png}
		%	\end{subfigure}	
		\caption{Three privacy attacks considered in this work %(a) Reconstruction and Membership attacks. (b) Tracing attacks. 
			(see text in Sect. \ref{sect:PrivacyAttack}). }
		\label{fig:threat-model}
		\vspace{-10pt}
	\end{figure}
	
	Homomorphic-Encryption (HE) based \cite{Cryptonets/gilad-bachrach2016,hardy2017private,PPDL_Homomorphic_Encryption/trieu2018} and Secure Multi-Party Computation (MPC) based privacy-preserving approaches  \cite{Deepsecure/rouhani2018,ABY3/mohassel2018} 
	demonstrated strong privacy protection via encryption, but often incur significantly more demanding computational and communication costs. For instance, \cite{PPDL_Homomorphic_Encryption/trieu2018} reported 2-3 times communication overheads and \cite{ABY3/mohassel2018,HEwGPU19} had to speed up highly-intensive computation with efficient implementations. In this paper our work is only compared with Differential Privacy based mechanisms \cite{PPDL/shokri2015,DLDP_Abadi16}, and we refer readers to \cite{yang2019federated,Survey_PPDL/tanuwidjaja2019} for thorough reviews of HE and MPC based privacy-preserving methods therein. %Comparisons of HE/SMPC based methods \cite{Cryptonets/gilad-bachrach2016,hardy2017private}, with respect to differential privacy based approaches \cite{DLDP_Abadi16,PPDL/shokri2015} and the privacy-preserving method proposed in the present paper are discussed in Section \ref{sect:dis-concl}. 
	
	\section{Privacy Attacks on Training Data} \label{sect:PrivacyAttack}
	
	In this work we consider a distributed learning scenario, in which $K (K\ge 2)$ participants collaboratively learn a multi-layered deep learning model without exposing their private training data (this setting is also known as \textit{federated learning} \cite{communicationEfficient/mcMahan2017,yang2019federated}). We assume one participant is the  \textit{honest-but-curious adversary}. %, whose goal is to infer information about the training data of another participant, by analyzing periodic updates to the joint model during training. 
	The adversary is honest in the sense that he/she faithfully follows the collaborative learning protocol and does not submit any malformed messages, but he/she may launch privacy attacks on the \textit{training data} of other participants, by analyzing periodic updates to the joint model (e.g. \textit{gradients}) during training. 
	
	Fig. \ref{fig:threat-model} illustrates three privacy attacks considered in this work. 
	The goal of \textit{reconstruction attack} is to recover original training data $x$ as accurate as possible by analyzing the publicly shared gradients, which might be perturbed by privacy-preserving mechanisms. 
	Subsequent \textit{membership attack} and  \textit{tracing  attack} are  based on reconstruction attacks --- for the former, membership labels are derived either directly during the reconstruction stage or by classifying reconstructed data;  for the latter, the goal is to determine whether a given training data item belongs to certain participant, by comparing it against reconstructed data\footnote{Note that membership inference in \cite{exploitingFeatLeak_Vitaly18} is the tracing attack considered in our work.}.

	%Fig. Threat model.  (Liu Chang) 
	% \begin{figure}
	% 	\centering
	% 	\begin{minipage}[t]{0.45\textwidth}
	% 		\centering
	% 		\includegraphics[width=\textwidth]{imgs/threat_model_new.pdf}
	% 		\caption{Reconstruction $\mathcal{A}_r$ and membership attacks $\mathcal{A}_m$.}
	% 		\label{fig:threat-model}
	% 	\end{minipage}
	% 	%\hspace{0.1\textwidth}
	% 	\begin{minipage}[t]{0.4\textwidth}
	% 		\centering
	% 		\includegraphics[width=\textwidth]{imgs/tracing_attack_new.pdf}
	% 		\caption{Tracing attack  $\mathcal{A}_t$ .} 
	% 		\label{fig:tracing-attack}
	% 	\end{minipage}
	% \end{figure}

	\subsection{Evaluation of Trade-off by Privacy Preserving Mechanism}\label{subsect:protocol}
	
	We assume there is a Privacy-Preserving Mechanism (PPM)\footnote{We do not restrict ourselves to privacy mechanisms considered by differential privacy\cite{DP/dwork2006,Calibrating_Noise_DA/dwork2006,PPDL/shokri2015,DLDP_Abadi16}.} $\mathcal{M}$ that aims to defeat the privacy attacks $\mathcal{A}$ by modifying the public information from $\mathcal{G}$ to $\bar{\mathcal{G}}_m=\mathcal{M}(\mathcal{G}, m)$, that is exchanged during the learning stage and $m$ is the controlling parameter of the amount of changes exerted on $\mathcal{G}$. This modification protects the private information $x$ from being disclosed to the adversary, who can only make an estimation based on public information i.e. $\bar{x}_m=\mathcal{A}(\bar{\mathcal{G}}_m)$. Needless to say, a PPM can defeat any adversaries by introducing exorbitant modification so that \textit{ $dist( \bar{x}, x)$ is as large as possible}, where $dist()$ is a properly defined distance measure such as MSE. 
	%\footnote{ inversely,  structure similarity (SSIM) for image data.}.
	The modification of public information, however, inevitably deteriorates the performances of global models i.e. $Acc(\bar{\mathcal{G}}_m) \le Acc({\mathcal{G}}_m)$, where $Acc()$ denotes model performances such as accuracies or any other metrics that is relevant to the model task in question. 
	A well-designed PPM is expected to have \textit{$Acc(\bar{\mathcal{G}}_m)$ as high as possible}. 
	
	We propose to plot Privacy Preserving Characteristic (PPC) to illustrate the trade-off between two opposing goals i.e. to maintain high model accuracies and low privacy losses as follows, 
	\begin{defn}[Privacy Preserving Characteristic]\label{def:PPC}
		For a given Privacy-Preserving Mechanism $\mathcal{M}$, its privacy loss and performance trade-off is delineated by a set of calibrated performances i.e. $\{ Acc(\bar{\mathcal{G}}_m) \cdot dist( \bar{x}_m, x) | m \in \{m_1, \cdots, m_n\} \}$, where $Acc()$ is the model performance, $dist()$ a distance measure, $\bar{\mathcal{G}}_m=\mathcal{M}(\mathcal{G}, m)$ is the modified public information, $x$ is the private data, $\bar{x}_m=\mathcal{A}(\bar{\mathcal{G}}_m)$ is the estimation of private data by the attack and $m$ the controlling parameter of the mechanism.  
		
		Moreover, % the Calibrated Maximum Performance (CMP) and 
		Calibrated Averaged Performance (CAP) for a given PPC is defined as follows, 
		\begin{align}
		% CMP = \max_{m=m_1,\cdot,m_n} \big( Acc(\bar{\mathcal{G}}_{m}) \cdot dist( \bar{x}_{m}, x)\big),  
		CAP(\mathcal{M}, \mathcal{A}) = \frac{1}{n} \sum_{m=m_1}^{m_n} Acc(\bar{\mathcal{G}}_{m}) \cdot dist( \bar{x}_{m}, x).
		\end{align}
	\end{defn}
	Fig. \ref{fig:ppc-all} illustrates example PPCs of different mechanisms against privacy attacks. 
	%, in which the controlling parameter $m$ depends on the magnitudes of gradient perturbations being added to the original gradient components.  
	One may also quantitatively summarize PPCs with CAP ---  \textit{
		the higher the CAP value is, the better the mechanism is at preserving privacy without compromising the model performances} (see Table \ref{tab:CAP-performance}).   
	%We adopt both PPC and CAP to compare different mechanisms in terms of capabilities to defense  privacy attacks (see Figure \ref{fig:ppc-all}  and Table \ref{tab:CAP-performance} in Sect. \ref{sect:exper}).  
	%summarizes CAPs of different privacy-preserving mechanisms. 
	
	%\textbf{Accuracy vs reconstruction loss}: we adopt Deep Leakage algorithm \cite{DeepLeakage_Han19} to reconstruct training data at various stage of the training epochs, and  
	%
	%\textbf{Accuracy vs membership loss}: 
	%
	%\textbf{Accuracy vs tracing loss}: 

	\subsection{Formulation of Reconstruction Attack} \label{subsect:recon-formu}
	%Inspired by \cite{attack/dwork2017,Revealing_info_PP/dinur2003}, 
	%We give below a rigorous formulation that shows how to treat reconstruction of training data from shared (partial) gradients 
	%in deep neural networks as to \textit{solve a noisy system of equations}, in which equation parameters are determined by network parameters and shared gradients, and training data are unknown variables to be solved. 
	%This formulation admits the explanations of the successful reconstruction attacks \cite{DeepLeakage_Han19,Gradient_Leakage/wei2020,inverting_gradient/geiping2020} against Privacy-preserving deep learning methods \cite{DLDP_Abadi16,PPDL/shokri2015}.  
	Consider a neural network $\Psi(x;w, b): \mathcal{X} \rightarrow \mathbb{R}^C$, where $x \in \mathcal{X}$, $w$ and $b$ are the weights and biases of neural networks, and $C$ is the output dimension. In a machine learning task, we optimize the parameters $w$ and $b$ of neural network $\Psi$ with a loss function $\mathcal{L}\big(\Psi(x; w, b), y \big)$, where $x$ is the input data and $y$ is the ground truth labels. We denote the superscript $w^{[i]}$ and $b^{[i]}$ as the $i$-th layer weights and biases. 
	%We abbreviate the loss function by $\mathcal{L}$ in the following paragraphs. %We abbreviate the neural network and the loss function by $\Psi$ and $\mathcal{L}$ in the following paragraphs. 
	The following theorem proves that the reconstruction of input $x$ exists under certain conditions (proofs are given in Appendix A, in supplementary material due to the limited space). 
	
	\begin{restatable}[]{thm}{reconexist}
		%\label{thm:recon-exist_1}
		\begin{theorem}\label{thm:recon-exist_1}
			Suppose a multilayer neural network $\Psi:=\Psi^{[L-1]}\circ \Psi^{[L-2]}\circ \dots \circ \Psi^{[0]}(\hspace*{0.2em} \cdot \hspace*{0.3em}; w, b)$ is $\mathcal{C}^1$, where the $i$-th layer $\Psi^{[i]}$ is a fully-connected layer\footnote{Any convolution layers can be converted into a fully-connected layer by simply stacking together spatially shifted convolution kernels (see proofs in supplementary material).} %The $i$-th step forward propagation is $o^{[i+1]} = a\big(w^{[i]}\cdot o^{[i]} + b^{[i]}\big),$ where $o^{[i]}, o^{[i+1]}$, $w^{[i]}$ and $b^{[i]}$ are an input, output vector, a weight matrix and a bias vector in each layer respectively, and $a$ is an activation function.
			Then, \textbf{initial input $x^*$ of $\Psi$ exists}, provided that: if there is an $i$ $(1\leq i \leq L)$ such that
			\begin{enumerate}
				\item Jacobian matrix $D_{x} \big(\Psi^{[i-1]}\circ \Psi^{[i-1]}\circ \dots \circ \Psi^{[0]}\big)$ around $x$ is full-rank; 
				\item Partial derivative $\nabla_{b^{[i]}} \mathcal{L}\big(\Psi(x; w, b), y \big)$\footnote{We write the partial derivative as a diagonal matrix that each adjacent diagonal entries in an order are copies of each entry in $\nabla_{b^{[i]}} \mathcal{L}\big(\Psi(x; w, b), y \big)$, see proofs in Appendix for details.} is nonsingular.		
			\end{enumerate}
			%		Moreover, we have the following inequality around $x^*$, 
			%		\begin{align*}
			%		||x-x^*|| \leq M\cdot || \nabla_{w^{[i]}, b^{[i]}} \mathcal{L}\big(\Psi(x; w, b), y \big)-\nabla_{w^{[i]}, b^{[i]}} \mathcal{L}\big(\Psi(x^*; w, b), y \big) ||.
			%		\end{align*}
		\end{theorem}
	\end{restatable}
	If assumptions in Theorem \ref{thm:recon-exist_1} are met, we can pick an index set $I$ from row index set of $\nabla_{w^{[i]}, b^{[i]}} \mathcal{L}\big(\Psi(x; w, b), y \big)$ such that the following linear equation is well-posed,
	\begin{align*}
	{B}_I \cdot x = {W}_I, 
	\end{align*}
	where ${B}_I := \nabla_{b^{[i]}}^I \mathcal{L}\big(\Psi(x; w, b), y \big)$ and ${W}_I := \nabla_{w^{[i]}}^I \mathcal{L}\big(\Psi(x; w, b), y \big)$. According to Theorem \ref{thm:recon-exist_1}, the initial input $x^*$ is $\big(\Psi^{[i-1]}\circ \Psi^{[i-1]}\circ \dots \circ \Psi^{[0]}\big)^{-1}(x)$. 
	
	%Our theorem provides sufficient conditions of the initial image existence. 
	The linear system can be composed from any subsets of observed gradients elements, and the reconstruction solution exists as long as the condition of full rank matrix is fulfilled. For common privacy-preserving strategies adopted in a distributed learning scenario such as \textit{sharing fewer gradients} or\textit{ adding noisy to shared gradients} \cite{PPDL/shokri2015,DLDP_Abadi16,exploitingFeatLeak_Vitaly18}, the following theorem proves that input $x$ can be reconstructed from such a noisy linear system, if condition (\ref{eq:recon-cond}) is fulfilled. 
	\begin{restatable}[]{thm}{reconlineareq}
		\label{thm:recon-lineareq}
		\begin{theorem}
			Suppose there are perturbations $E_{{B}}, E_{{W}}$ added on ${B}_I, {W}_I$, respectively, such that observed measurements $\bar{{B}}_I =  {B}_I + E_{{B}}, \bar{{W}}_I =  {W}_I + E_{{W}} $. Then, the \textbf{reconstruction $x^*$ of the initial input $x$ can be} \textbf{determined} by solving a noisy linear system $\bar{B}_I \cdot x^* = \bar{W}_I$, provided that
			\begin{align}\label{eq:recon-cond}	
			\|{B}_{I}^{-1} \cdot E_B\|<1;
			\end{align}
			Moreover, the relative error is bounded, 
			\begin{align}
			\frac{\|x^*-x\|}{\|x\|} \leq \frac{\kappa(B_I)}{1- \|{B}_I^{-1} \cdot E_B\|} \Big( \frac{\|E_B\|}{\|B_I\|} + \frac{\|E_W\|}{\|W_I\|} \Big),
			\end{align}
			in which $B_I^{-1}$ is the inverse of $B_I$, where $\kappa(B_I)$ is the conditional number of $B_I$. 			%in which $B_I^{-1}$ is the inverse of $B_I$.
			%\begin{enumerate}
			%	\item Model parameters $w$ are known; 
			%	\item $||\bar{A}_I^{-1}\cdot E||<1$.
			%\end{enumerate}
		\end{theorem}
	\end{restatable}
	
	%\begin{remark}
	%\textbf{Remark}: 
	%1) If the $i$-th layer refers to the $L$-th layer, Theorem \ref{T1.3} theoretically confirms the deep leakage method proposed in \cite{DeepLeakage_Han19} that minimization of gradient differences yields a recovery of initial image. 	
	%1) The linear system can be composed from any subsets of observed gradients elements, and the reconstruction solution exists as long as the condition of full rank matrix is fulfilled.  %As demonstrated by experiment results (see Section \ref{sect:exper}),  this requirement is often fulfilled in practice and reconstruction attacks success even though a minor fraction of gradients are shared as suggested by \cite{PPDL/shokri2015}. 
	
	In the deep leakage approach~\cite{DeepLeakage_Han19}, the recovery of initial image requires model parameters $\mathcal{W}$ and the corresponding gradients $\nabla \mathcal{W}$ such that a minimization of gradient differences $E_p := ||\nabla \mathcal{W}' - \nabla \mathcal{W}||$ yields a recovery $\bar{x}$ of initial image.  The minimizing error $E_p$ introduces more errors to the noisy linear system. Therefore, for any iterative reconstruction algorithms like \cite{DeepLeakage_Han19} to be successful,  condition $\|{B}_I^{-1} \cdot E_B\|<1$ is \emph{necessary}.  In other words, a sufficiently large perturbation $\| E_B \| > \| {B}_I \|$ such as Gaussian noise is \textit{guaranteed} to defeat reconstruction attacks. To our best knowledge, (\ref{eq:recon-cond}) is the first analysis that  elucidates a theoretical guarantee for thwarting reconstruction attacks like \cite{DeepLeakage_Han19}. Nevertheless, existing mechanisms \cite{PPDL/shokri2015,DLDP_Abadi16} have to put up with significant drops in model accuracy incurred by high levels of added noise (see Sect. \ref{subsect:exper-compare}).  
	%  yet at the cost of deteriorated model accuracies incurred by large perturbations (see experiment results in Sect. \ref{sect:exper}).  
	%Even if $||E|| < || \bar{A}_I ||$, iterative reconstruction algorithms may converge to a solution $o_k$ that is far from the original input data $o^*$, depending on the initial setting of  $o_0$ (see discussion in \cite{Gradient_Leakage/wei2020}). 
	
	%\end{remark}
	%approach. 

	\vspace{-0.23cm}
	\section{Privacy Preserving with Secret Polarization Network} \label{sect:PP-SPN}
	
	In Sect. \ref{sect:PrivacyAttack} we have proved that 
	%reconstruction errors from shared gradients are bounded provided that ?.  
	%In particular, the convergence condition in (?) suggests that 
	%reconstruction errors of training data are unbounded if 
	the necessary condition of successful reconstruction attack is unfulfilled if sufficiently large perturbations are added. 
	%It was also shown that adaptive perturbation should be added with the increasing learning steps (see Corollary \ref{coro}).  
	%thus effectively protecting training data from being leaked to privacy adversaries. 
	We illustrate in this section a novel multi-task dual-headed networks, which leverages private network parameters and element-wise adaptive gradient perturbations to defeat reconstruction attacks and, simultaneously, maintain high model accuracies. 
	
	\vspace{-0.23cm}
	\subsection{Secret Perturbation of Gradients via Polarization Loss}
	
	\begin{figure}[t]
		\centering
		\includegraphics[width=0.7\textwidth]{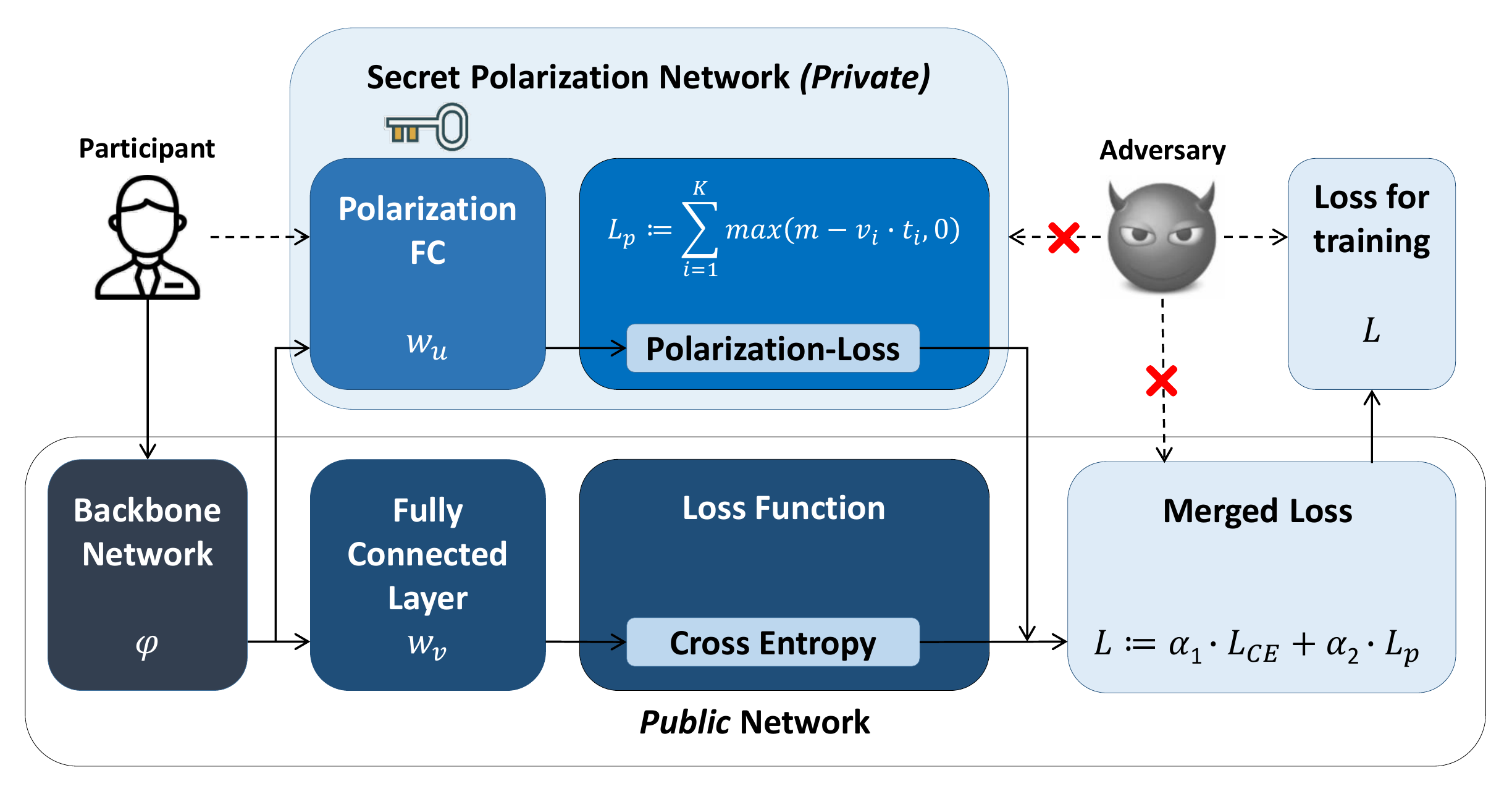}
		\caption{Our proposed SPN architecture that consists of a public and a private network (see text in Sect. \ref{sect:PP-SPN}).}
		\label{fig:SPN}
		\vspace{-10pt}
	\end{figure}

	Fig. \ref{fig:SPN} illustrates a Secret Polarization Network (SPN), 
	in which fully connected polarization layers are kept private with its \textit{parameters not shared} during the distributed learning process. Appendix shows the pseudo codes of the proposed method. 
	%Moreover, by introducing the following loss (eq. \ref{eq:loss-ce-pl}) on the outputs of polarization layers, additional gradients terms are backpropagated into the  public backbone network layers.
	%\footnote{It is worth noting that \cite{exploitingFeatLeak_Vitaly18} also adopted a multi-task learning scheme, yet for the purpose of active property inference attack.}. 

	Formally, the proposed dual-headed network consists of a public and a private SPN network based on a backbone network: $\Psi \big(\varphi(\hspace*{0.2em} \cdot \hspace*{0.3em};w, b) ;w_u, b_u \big) \oplus \Phi \big( \varphi(\hspace*{0.2em} \cdot \hspace*{0.3em};w, b) ;w_v, b_v \big): \mathcal{X} \rightarrow [0, 1]^C \oplus \mathbb{R}^K$, i.e. $ u \oplus v = \Psi \big( \varphi(x;w, b); w_u, b_u\big) \oplus \Phi \big( \varphi(x;w, b); w_v, b_v \big) \in [0, 1]^C \oplus \mathbb{R}^K$, where $\varphi(\hspace*{0.2em} \cdot \hspace*{0.3em};w, b)$ is the backbone network. The multi-task composite loss is as follows, 
	\begin{align} \label{eq:ce-spn-los} \small 
	\mathcal{L}\big(\Psi \oplus \Phi, y \oplus t \big):=&\alpha_1 \cdot \mathcal{L}_{CE}(u, y) +  \alpha_2\cdot \mathcal{L}_{P}(v, t) \\
	=& \underbrace{\alpha_1 \cdot \sum_{c=1}^C - y_c \cdot \log (u_c)}_{\text{CE loss}}  + \underbrace{\alpha_2 \cdot \sum_{c=1}^C \sum_{k=1}^K \max(m-v_k\cdot t_c^k, 0)}_{\text{polarization loss}}, 
	\end{align}
	where $\alpha_1$ and $\alpha_2$ are hyper-parameters with $\alpha_1+\alpha_2 = 1$. $y_c$ is an one-hot representation of labels for class $c$, 
	and $t_c \in \{-1, +1\}^K$ is the target $K$-bits binary codes randomly assigned to each class $c$ for $c = 1, \cdots, C$. 
	Note that by minimizing the polarization loss, Hamming distances between threshold-ed outputs $Bin(v_k)$ of \textit{intra-class} data items are minimized and, at the same time, Hamming distances are maximized for \textit{inter-class} data items (where $Bin(v_k) \in \{-1, +1\}$, see proofs in Appendix). The polarization loss therefore joints forces with the CE loss to improve the model accuracies.

	\begin{figure}[t]	
		\begin{subfigure}{0.95\linewidth}
			\centering
			\includegraphics[width=0.23\textwidth,height=0.09\textheight]{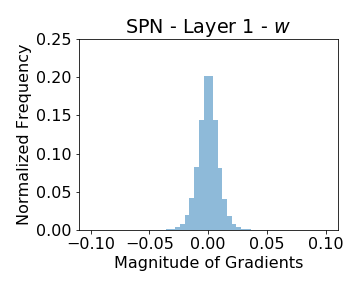}
			\includegraphics[width=0.23\textwidth,height=0.09\textheight]{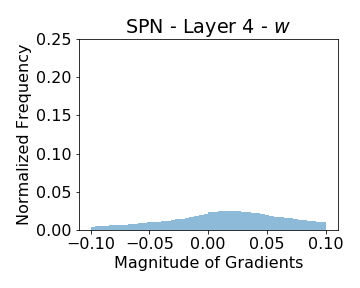}
			\includegraphics[width=0.23\textwidth,height=0.09\textheight]{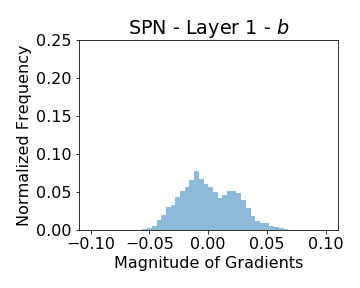}
			\includegraphics[width=0.23\textwidth,height=0.09\textheight]{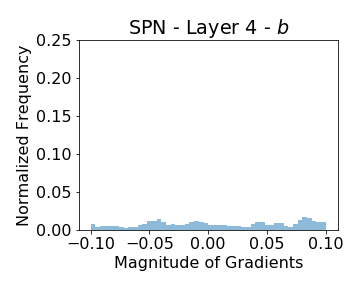}
		\end{subfigure}	
		\begin{subfigure}{0.95\linewidth}
			\centering
			\includegraphics[width=0.23\textwidth,height=0.09\textheight]{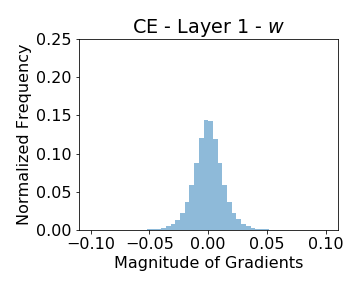}
			\includegraphics[width=0.23\textwidth,height=0.09\textheight]{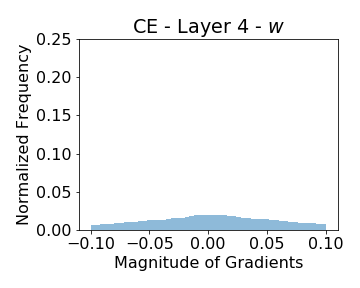}
			\includegraphics[width=0.23\textwidth,height=0.09\textheight]{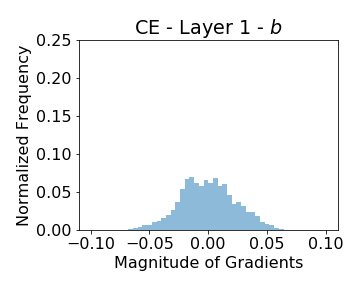}
			\includegraphics[width=0.23\textwidth,height=0.09\textheight]{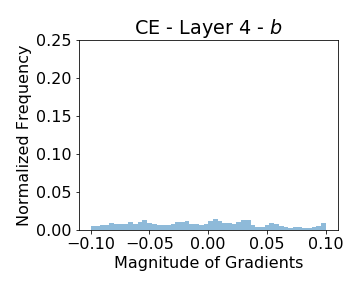}
		\end{subfigure}
		\caption{Distributions of gradients at each layer. \textbf{Left}: distributions of gradients w.r.t.  \textit{weights, w}; \textbf{Right}: distributions of gradients w.r.t. \textit{biases, b}; \textbf{Top}: gradients by polarization loss; \textbf{Bottom}: gradients by CE loss. 
			Cosine similarities  between gradients by polarization and CE losses are (from left to right): -0.0033, 0.1760, 0.0243, and 0.1861 respectively.  
			%Cosine similarities between gradients by CE loss and random Gaussian noise are (from left to right): -0.0010, -0.0004, 0.0243, and 0.0105 respectively.  
			\label{fig:grad-distr}}
		\vspace{-10pt}
	\end{figure}

	At each step of the optimization, the gradient of the loss $\bigtriangledown_{w, b} \mathcal{L}\big(\Psi \oplus \Phi, y \oplus t  \big)$ is a linear combination of gradient of CE loss and polarization loss as follows,
	\begin{align}\small
	%\bigtriangledown_{w_u, b_u} \mathcal{L} &=  \alpha_1 \cdot \sum_{c=1}^C (y_c-u_c) \cdot \frac{\partial u_c}{\partial w_u, b_u};   %\\
	%\bigtriangledown_{w_v, b_v} \mathcal{L} =   \alpha_2 \cdot  \sum_{c=1}^C \sum_{k\in \mathcal{I}^c} (-t_c^k) \cdot \frac{\partial v_k}{\partial w_v, b_v}\\
	\bigtriangledown_{w, b} \mathcal{L} 	&=  \alpha_1 \cdot \sum_{c=1}^C (y_c-u_c) \cdot \frac{\partial u_c}{\partial w, b} + \underbrace{\alpha_2 \cdot  \sum_{c=1}^C \sum_{k\in \mathcal{I}^c} (-t_c^k) \cdot \frac{\partial v_k}{\partial w, b}}_{\text{secret perturbation}},\label{eq:combined-grad}
	\end{align}
	%\underbrace{}_\textrm{secret perturbation},
	where $\mathcal{I}^c:=\Big\{k \in \{1, \cdots, K\} \Big| m-v_k\cdot t_c^k >0 \Big\}$.

	Note that $w_v$ %and $ \frac{\partial v_k}{\partial w_v, b_v}$ 
	is kept secret from other participants including the adversary. 
	%Even though the combined gradients  in (\ref{eq:combined-grad}) are shared publicly, 
	%<<<<<<< HEAD
	The summand due to the polarization loss in (\ref{eq:combined-grad}) is therefore unknown to the adversaries, and acts as perturbations to gradients ascribed to the CE loss.    Perturbations introduced by polarization loss, on the one hand, protect training data with $\alpha_{2}$ controlling the protection levels.  On the other hand, %unlike Gaussian or Laplacian noise, 
	SPN gradients back-propagated to the backbone network layers exhibit strong correlations with CE gradients (see distributions and cosine similarities between gradients by polarization and CE losses in Fig. \ref{fig:grad-distr}).  We ascribe improvements of the model accuracies brought by SPN to element-wise adaptive perturbations introduced by polarization loss. 

	\vspace{-0.2cm}
	\section{Experimental Results}\label{sect:exper}
	
	%We first illustrate below experiment settings, followed by comparison of the proposed SPN based method against existing privacy-preserving mechanisms, using Privacy-Preserving Characteristic  (PPC) defined in Sect. \ref{subsect:protocol}. Finally, improvements brought by SPNs in a federated learning setting are presented. 
	
	\vspace{-0.05cm}
	\subsection{Experiment Setup and Evaluation Metrics}
	\label{subsec:experimentsettings}
	
	\textbf{Dataset}. Popular image datasets MNIST and CIFAR10/100 are used in our experiments. %CIFAR100 with 100 classes is also used.  
	%\textbf{Network Architecture}. We use network architecture from \cite{DeepLeakage_Han19} for analysis and attacks. In federated learning setting, we used modified AlexNet and VGG16.
	Implementation of \textbf{DP} \cite{DLDP_Abadi16} method from Facebook Research Team \footnote{\url{https://github.com/facebookresearch/pytorch-dp}} 
	%where \textit{noise\_multiplier} is $m$, controlling parameters 
	is used. % and we disable the clipping feature.
	Implementation\footnote{\url{https://www.comp.nus.edu.sg/~reza/files/PPDL.zip}} of \textbf{PPDL} \cite{PPDL/shokri2015} method from Torch/Lua are re-implemented in PyTorch/Python.  PPDL is similar to gradient pruning which is one of the suggested protections in \cite{DeepLeakage_Han19}. We only show in this paper results with 5\% and 30\% of selected gradients, named respectively, as \textbf{PPDL-0.05} and \textbf{PPDL-0.3}. We refer reviewers to more results in the supplementary material. 
	Implementation of \textbf{Deep Leakage} attack \cite{DeepLeakage_Han19}, network architecture and default setting from the official released source code\footnote{\url{https://github.com/mit-han-lab/dlg}} are used in all experiments with training batch size set as $\{1, 4, 8\}$ respectively.  Following analysis in \cite{Gradient_Leakage/wei2020}, we adopt \textit{pattern-initialization} for higher reconstruction successful rates. % their network architecture for accuracy evaluation.

	\begin{figure}[t]	
		%	\begin{subfigure}{0.95\linewidth}
		\centering
		\includegraphics[scale=0.7]{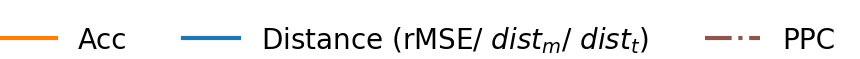}
		\\
		\includegraphics[width=3.2cm,height=2cm]{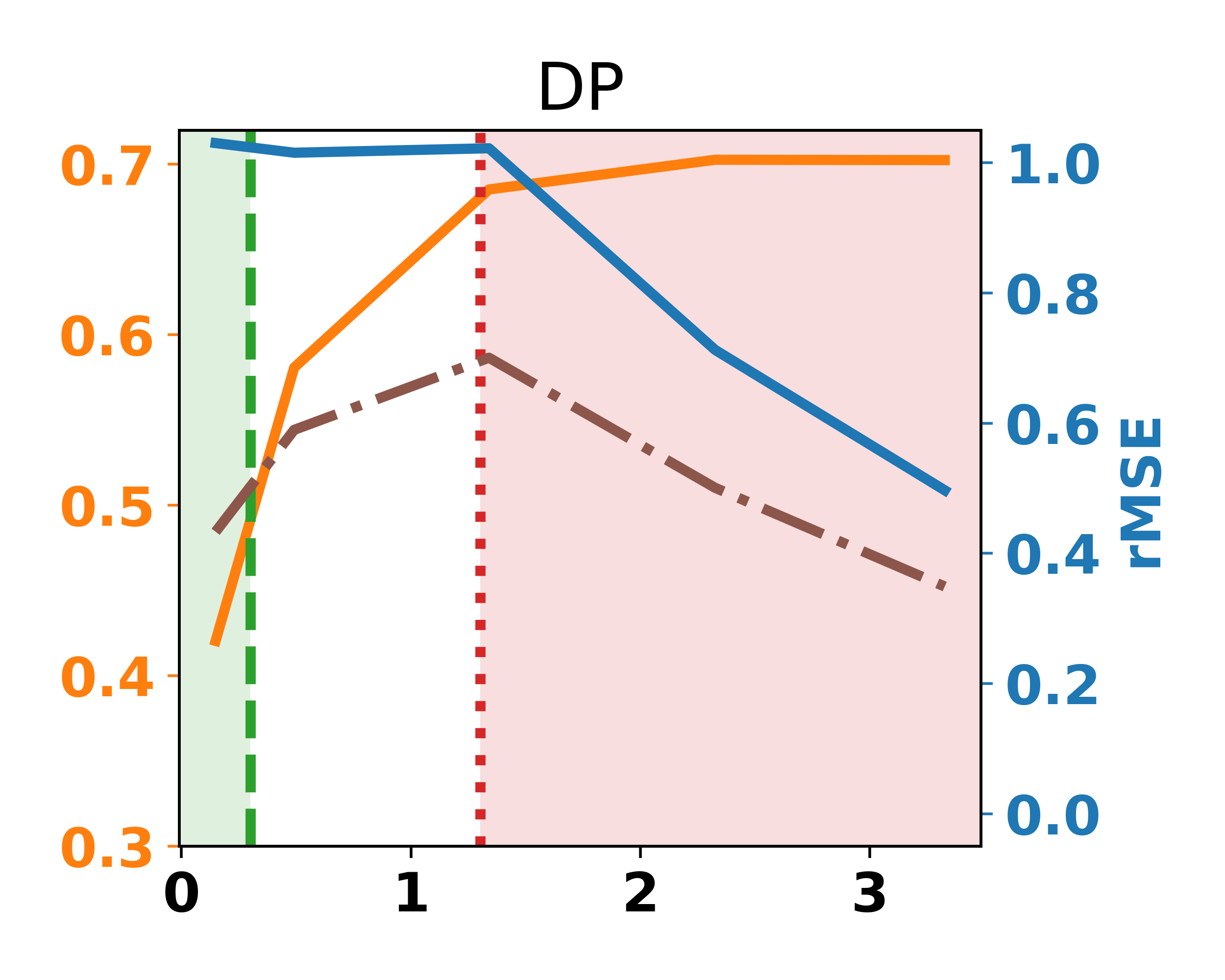}
		\includegraphics[width=3.2cm,height=2cm]{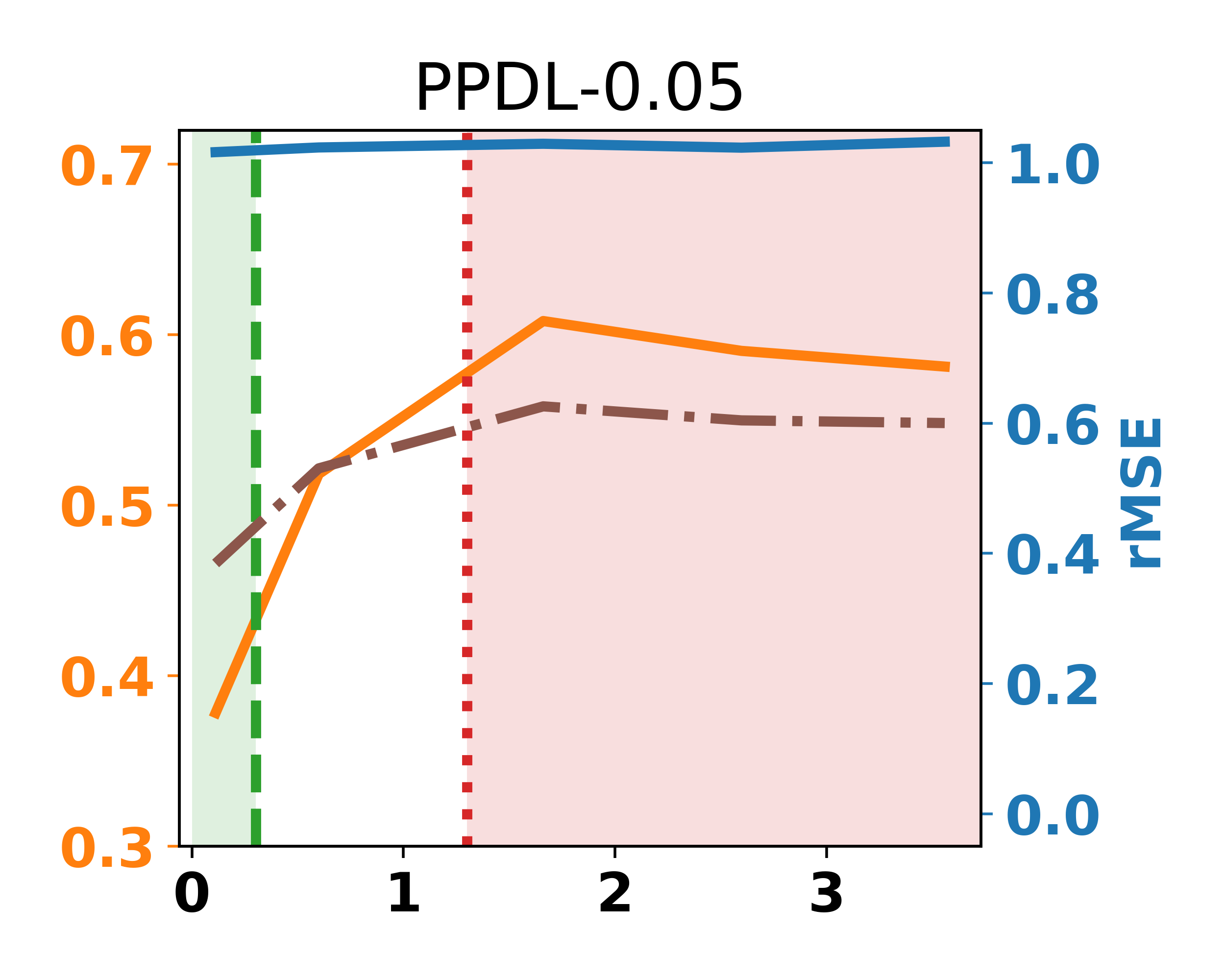}
		\includegraphics[width=3.2cm,height=2cm]{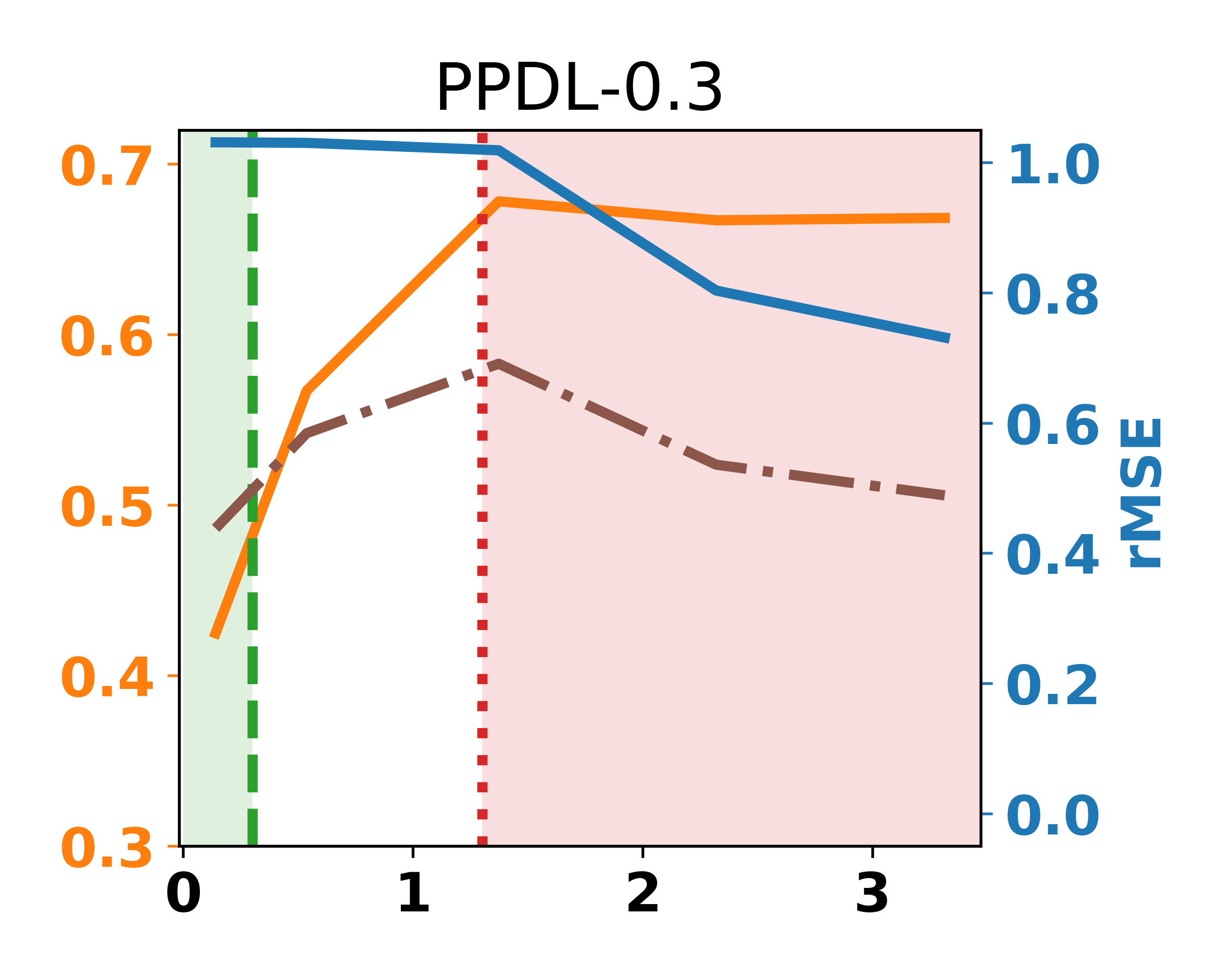}
		\includegraphics[width=3.2cm,height=2cm]{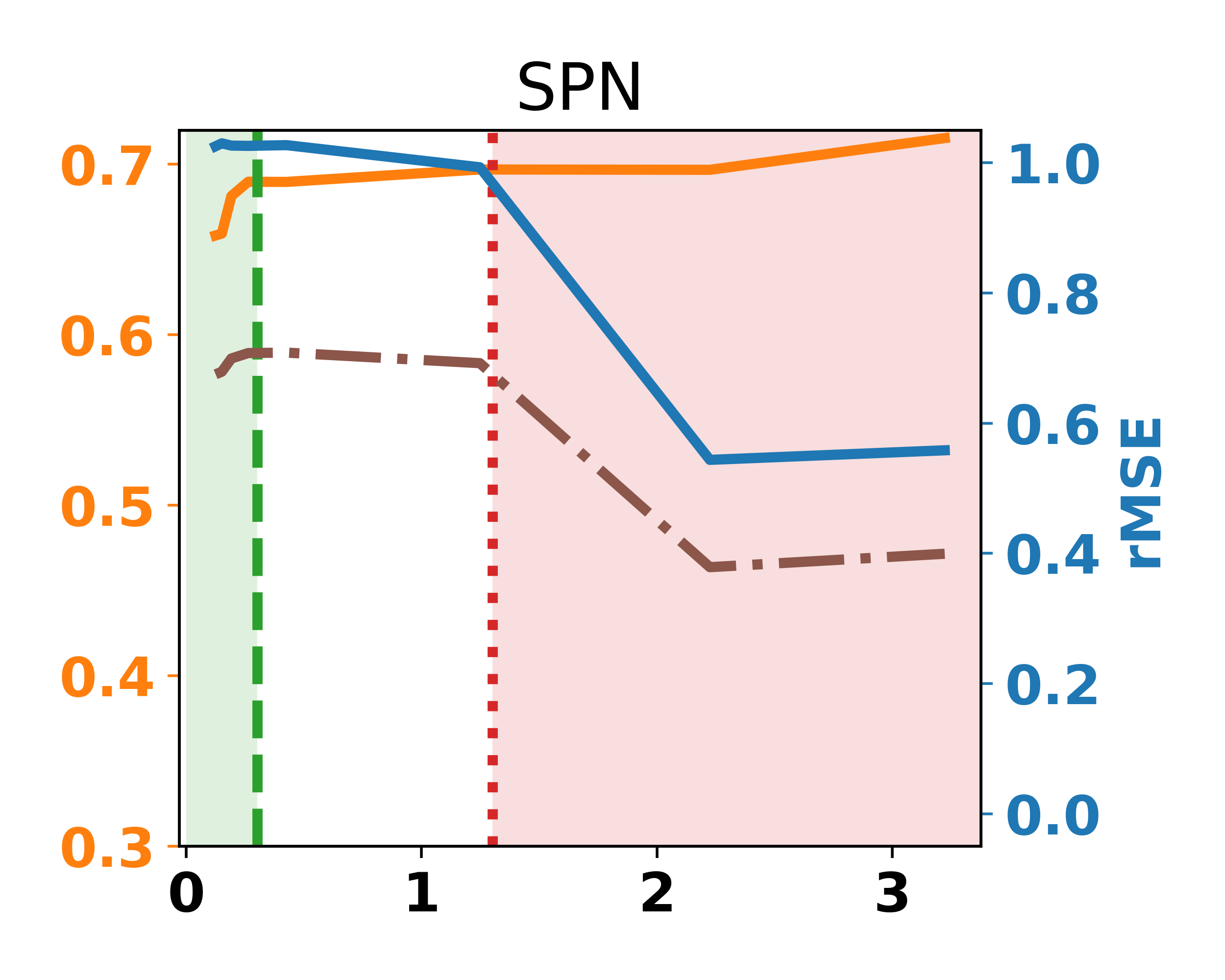}
		\\
		\includegraphics[width=3.2cm,height=2cm]{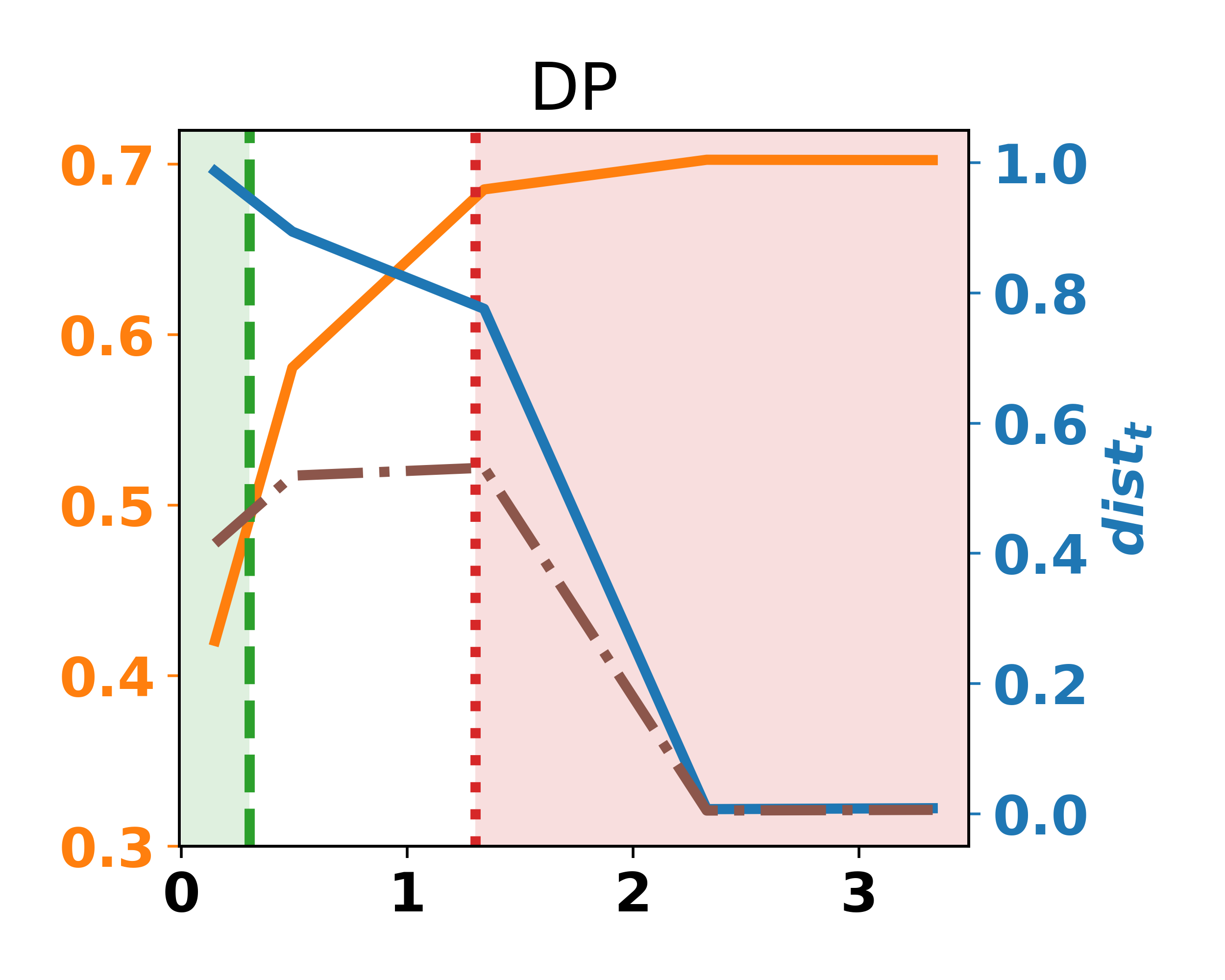}
		\includegraphics[width=3.2cm,height=2cm]{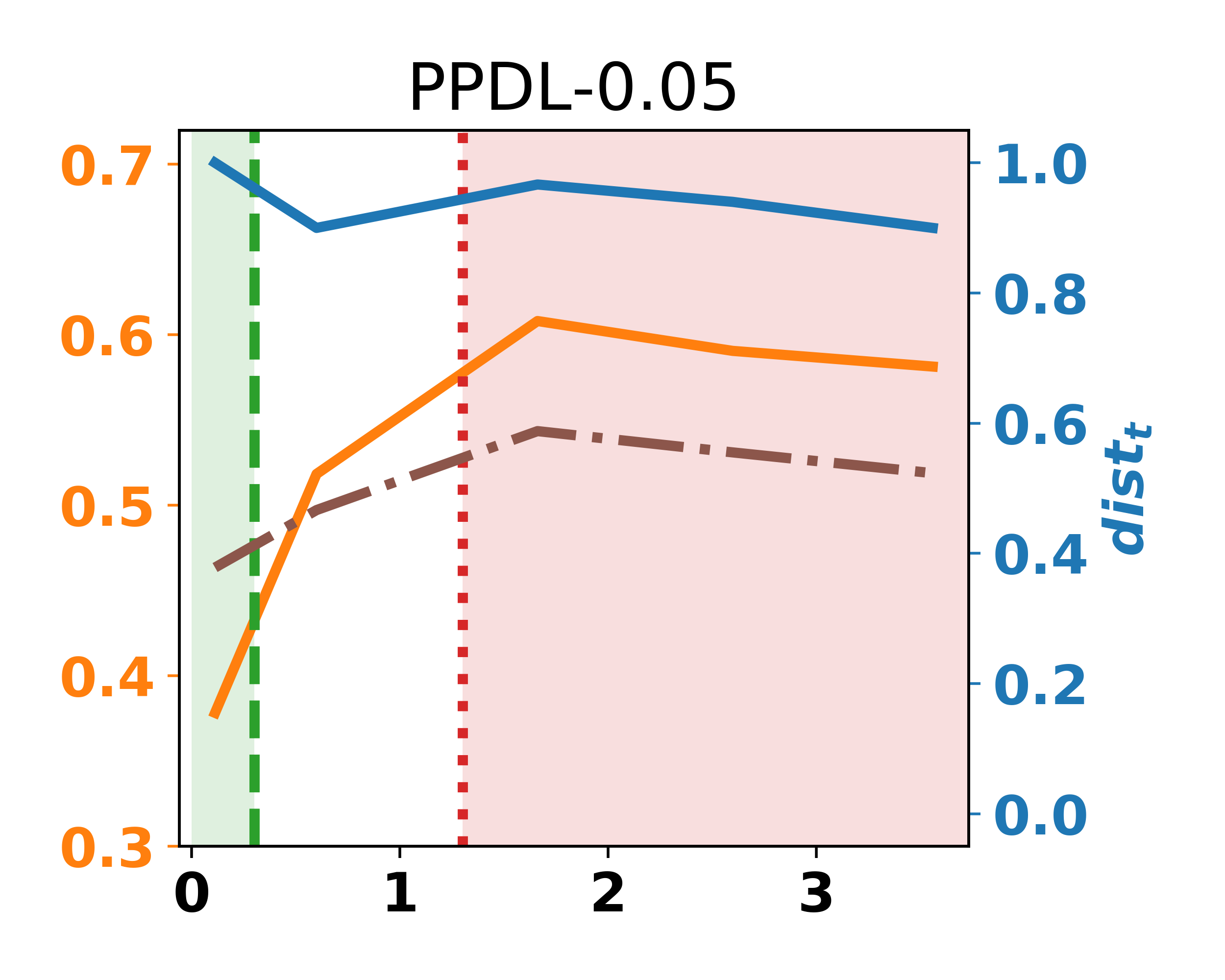}
		\includegraphics[width=3.2cm,height=2cm]{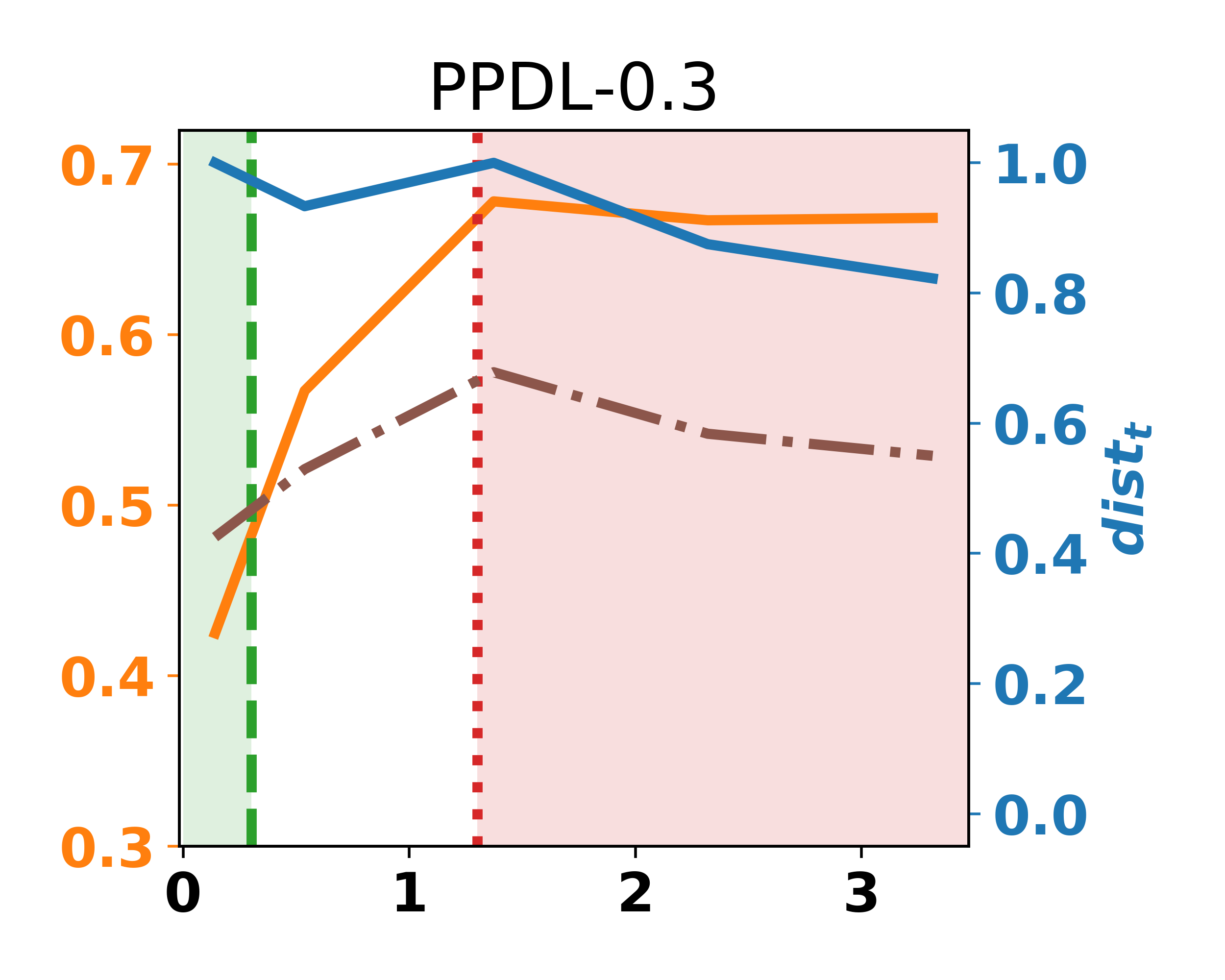}
		\includegraphics[width=3.2cm,height=2cm]{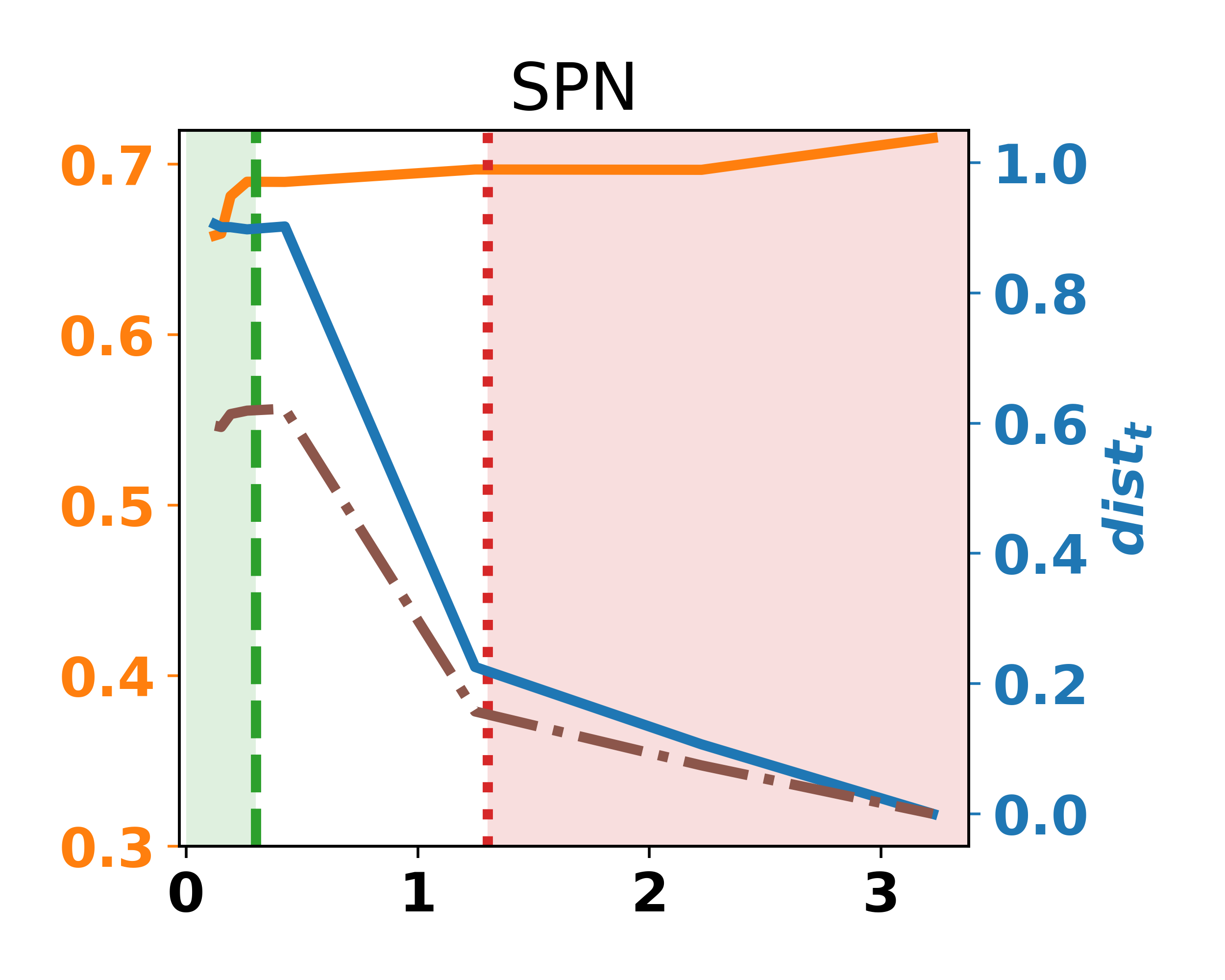}		
		\\
		\includegraphics[width=3.2cm,height=2cm]{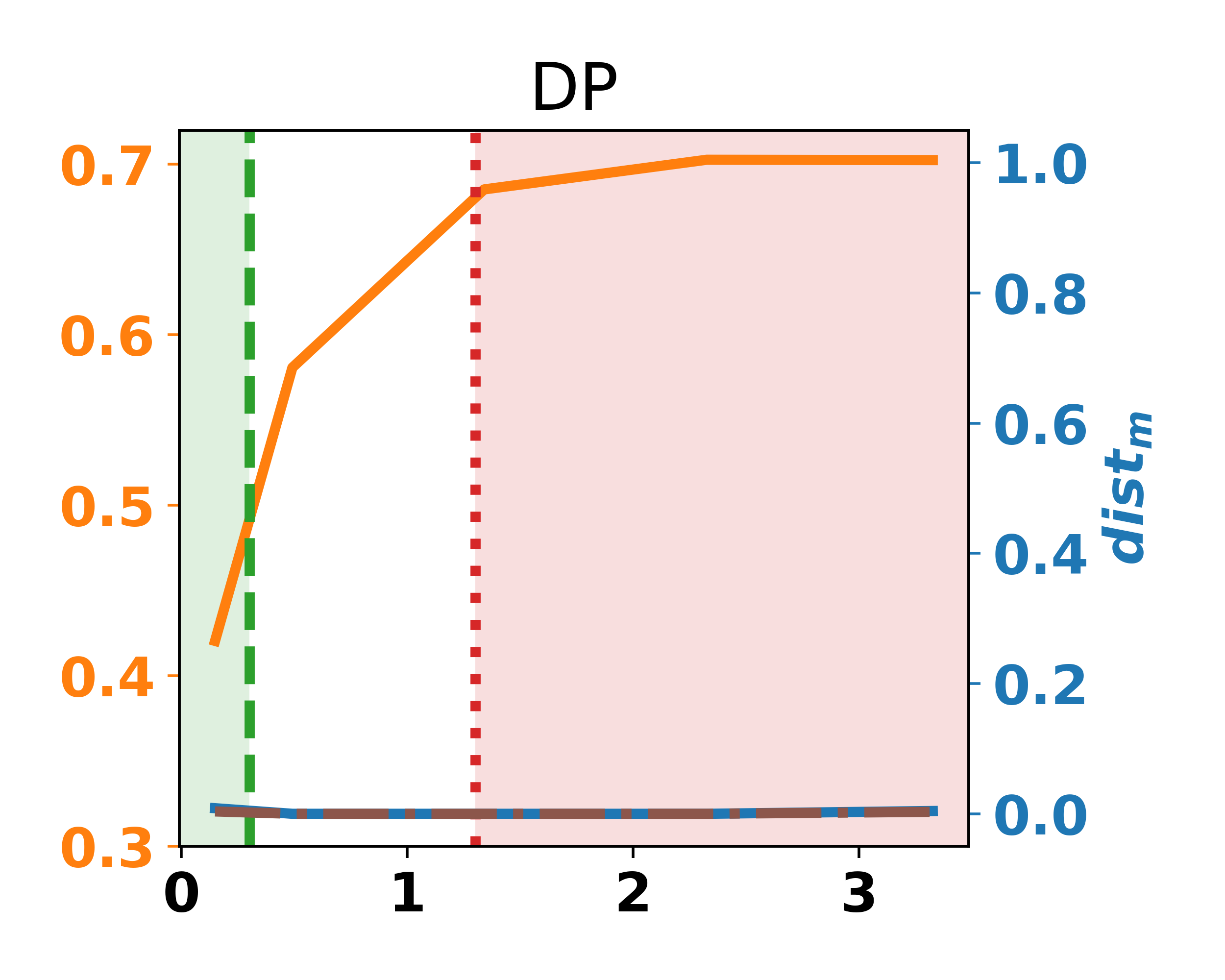}
		\includegraphics[width=3.2cm,height=2cm]{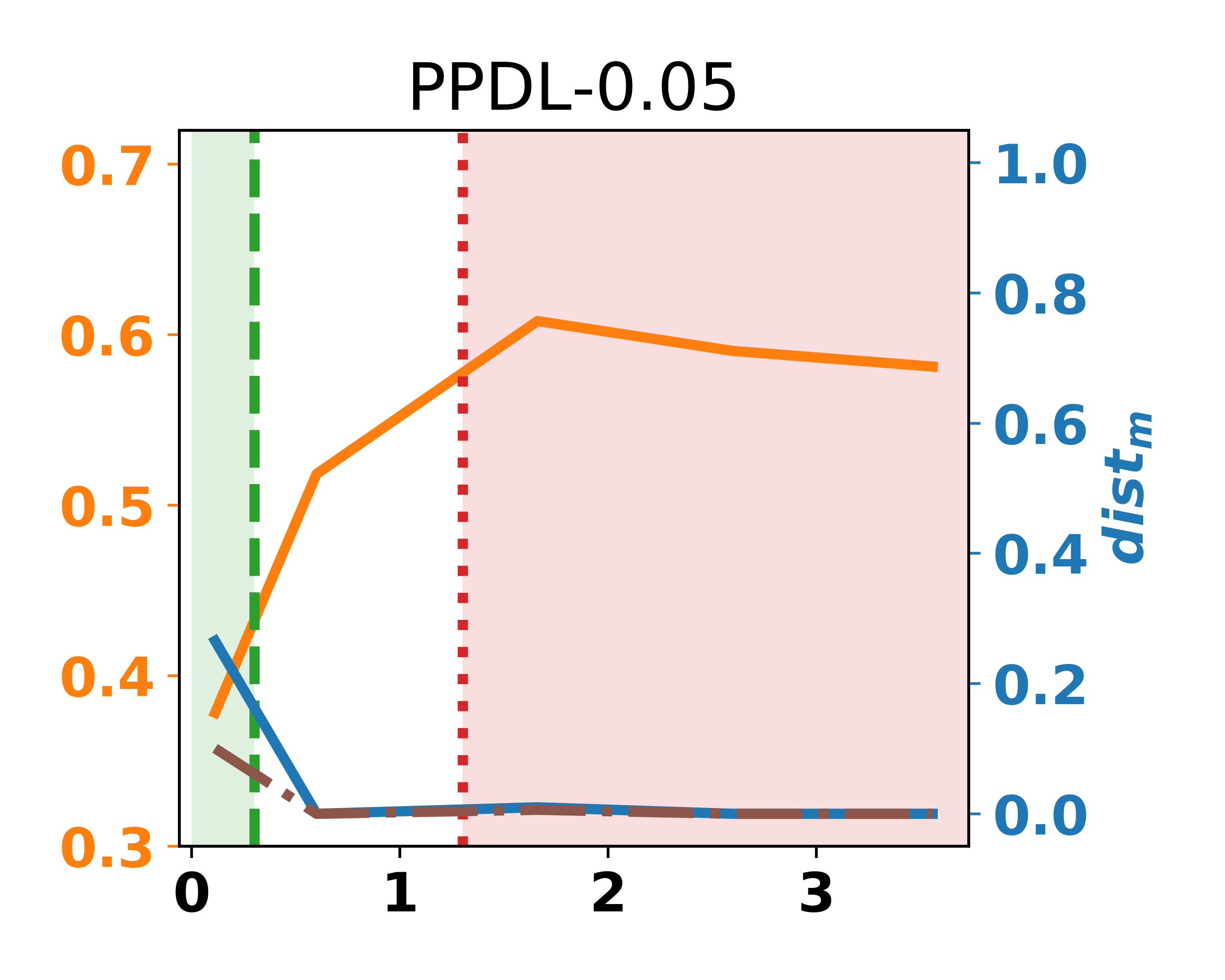}
		\includegraphics[width=3.2cm,height=2cm]{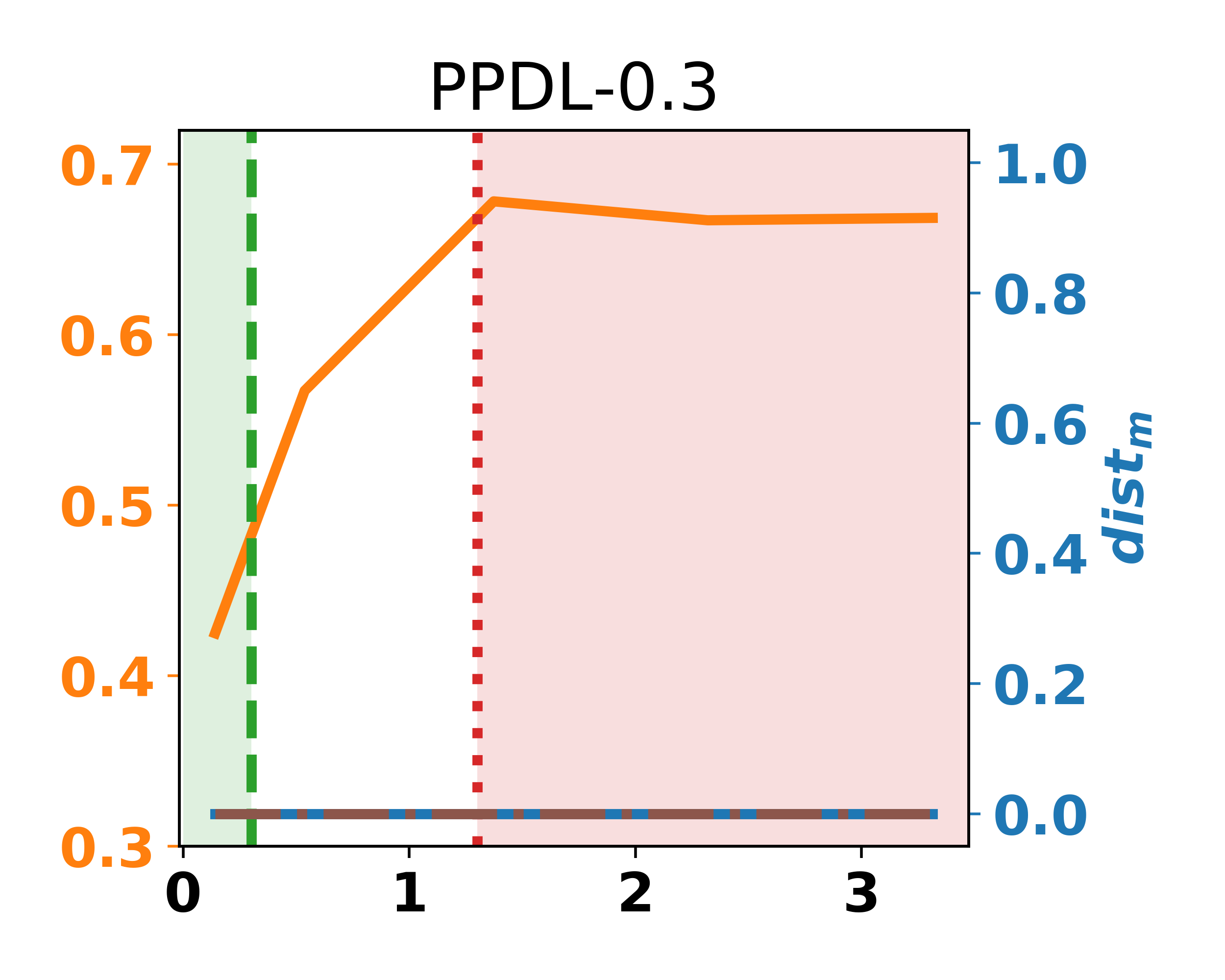}
		\includegraphics[width=3.2cm,height=2cm]{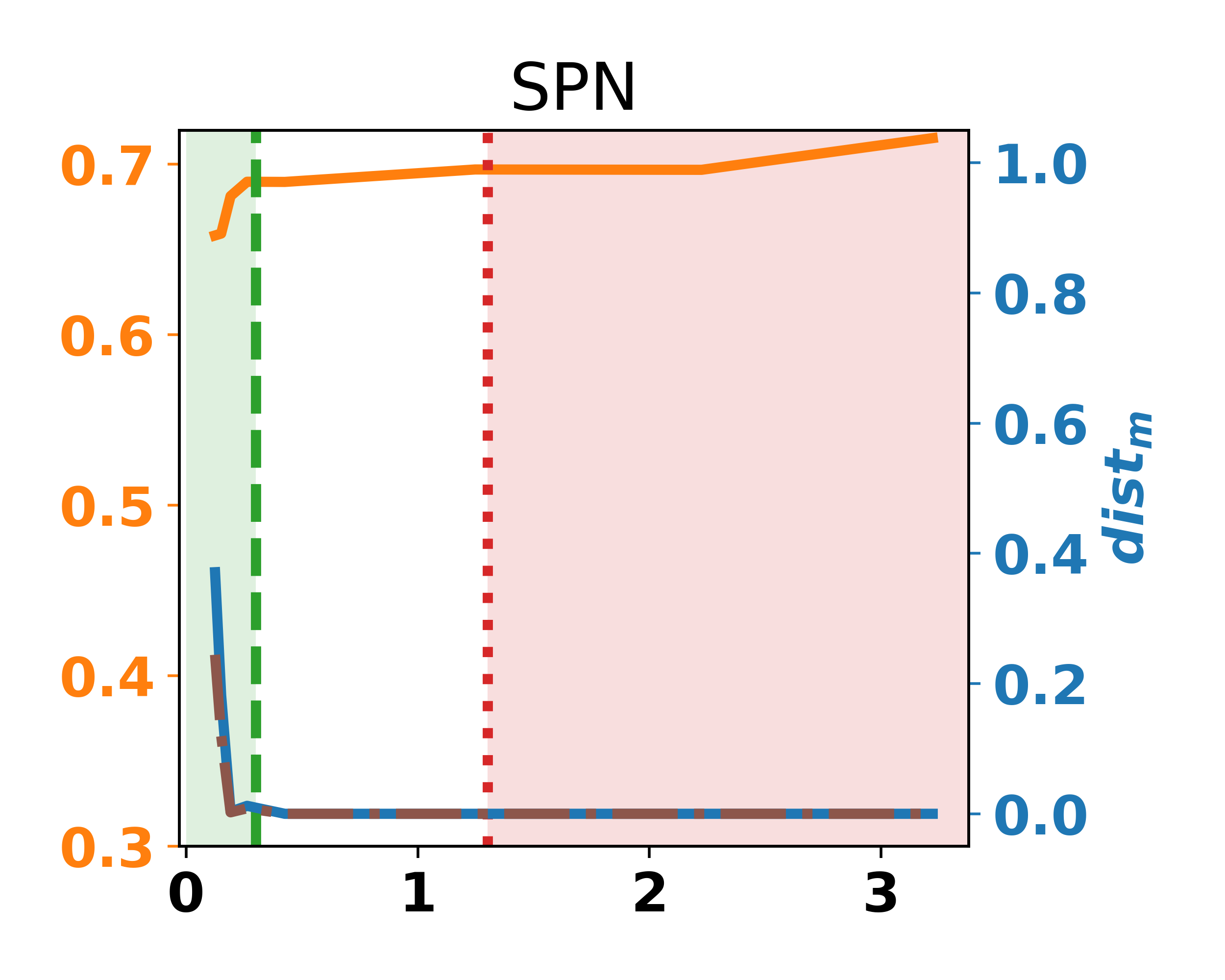}
		%\includegraphics[scale=0.1]{imgs/gradient_0.0001_DPN.png}
		%	\end{subfigure}	
		\caption{%Privacy-Preserving Characteristics  (PPC)  of different mechanisms(y-axis: model accuracies).  \textbf{Left}: Reconstruction attack (x-axis: structural similarity $ssim(x, \bar{x})$);  Middle: membership attack (x-axis: success rates);  and tracing attacks (x-axis: success rates); \textbf{Top}:  with CIFAR10 classification models; \textbf{Bottom}:  with MNIST classification models.
			Privacy-Preserving Characteristics  (PPC)  of different mechanisms (dash-dotted PPC curves); orange curves and y-axis (left): $Acc$ of models; blue curves and y-axis (right): distances for attacks;  x-axis: controlling param. $log_{10} (\frac{||B_I||}{||E_B||} + 1)$. %, see Eq. \ref{eq:recon-cond}). 
			\textbf{Left to Right}: DP, PPDL-0.05, PPDL-0.3 and SPN (Ours).
			%\textbf{Top}: Reconstruction Attack (y-axis: relative MSE $\frac{\|x^*-x\|}{\|x\|}$); \textbf{Middle}: Membership Attack (y-axis: averaged distances between reconstructed memberships and ground-truth labels); \textbf{Bottom}: Tracing Attack (y-axis: distances between reconstructed participant labels and ground-truth participant labels).
			\textbf{Top}: Reconstruction Attack; \textbf{Middle}: Tracing Attack; \textbf{Bottom}: Membership Attack.
			%Green regions to the left of vertical dashed green lines are with theoretical guarantees to defeat reconstruction attacks. 
			See Fig. \ref{fig:recon-images} for example reconstruction images. 
			% in green, white and red regions.
			\label{fig:ppc-all}}
		%\vspace{-0.22cm}
	\end{figure}

	\textbf{Relative Mean Square Error (rMSE)} (= $\frac{||x^*-x||}{||x||}$) is used to measure the distances between reconstructed and original data. 
	% We evaluate the quality of reconstructed samples using rMSE, as rMSE increased, the reconstructed image should become unvisible (see Figure \ref{fig:recon-images}). rMSE is calculated using mean square error between reconstructed image and groundtruth image divided by mean square of groundtruth image (rMSE = $\frac{||x^*-x||}{||x||}$).
	\textbf{Membership Distance ($dist_m(y^*, y)$)} is the \textit{averaged categorical distances} between recovered data labels and original labels. 
	%. We evaluate the membership attack by using averaged distances between reconstructed memberships and ground-truth labels ($dist_m = 1 - A_m$).
	\textbf{Tracing Distance ($dist_t(x)$)} is the \textit{averaged categorical distances} between recovered participant IDs and original IDs, to which the given data $x$ belongs.
	%. We evaluate the tracing attack by using averaged distances between reconstructed participant labels and ground-truth participant labels ($dist_t = 1 - A_t$).
	\begin{figure}[t]
		\centering
		
		\begin{subfigure}{0.3\linewidth}
			\centering
			\includegraphics[scale=0.8]{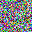}
			\hspace{3pt}
			\includegraphics[scale=0.8]{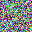}
			\caption{\textbf{Left}:1.04 (0.43); \textbf{Right}:1.13 (0.54) \label{fig:recon-images-green}}
		\end{subfigure}
		\hfill
		\begin{subfigure}{0.32\linewidth}
			\centering
			\includegraphics[scale=0.8]{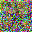}
			\hspace{3pt}
			\includegraphics[scale=0.8]{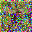}
			\caption{\textbf{Left}:1.01 (18.05); \textbf{Right}:1.00 (21.91) \label{fig:recon-images-white}}%; 1.10 (1.30)}
		\end{subfigure}
		\hfill
		\begin{subfigure}{0.32\linewidth}
			\centering
			\includegraphics[scale=0.8]{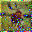}
			\hspace{3pt}
			\includegraphics[scale=0.8]{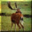}
			\caption{\textbf{Left}:0.86 (43.67); \textbf{Right}:0.02 (2186.8) \label{fig:recon-images-red}} %0.94 (1.59)}
		\end{subfigure}
		\hfill
		%\begin{subfigure}{0.2\linewidth}
		%	\centering
		%	\includegraphics[scale=0.8]{imgs/recimgs/0_0.0001_0.0151.png}
		%	\hspace{3pt}
		%	\includegraphics[scale=0.8]{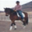}
		%	\caption{0.02 (3.34); 0.01 (3.28)}
		%\end{subfigure}
		\caption{Reconstructed images from different region in Fig. \ref{fig:ppc-all}. \textbf{(a)} Green region \textbf{(b)} White region \textbf{(c)} Red region. 
			Values inside bracket are $\frac{||B_I||}{||E_B||}$ and values outside are rMSE of reconstructed w.r.t. original images. %When $\frac{||B_I||}{||E_B||} < 1$ which is in the green region, it is theoretically guarantees to defeat reconstruction attacks.
		}
		\label{fig:recon-images}
		\vspace{-10pt}
	\end{figure}
	%\begin{figure}[t]
	%	\centering
	%	\includegraphics[scale=0.8]{imgs/recimgs/40_0.05_1.0406.png}
	%	\includegraphics[scale=0.8]{imgs/recimgs/30_0.01_1.0020.png} \hspace{3pt}
	%	\includegraphics[scale=0.8]{imgs/recimgs/20_0.005_0.8589.png} \hspace{3pt}
	%	\includegraphics[scale=0.8]{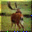} \hspace{3pt}
	%	\includegraphics[scale=0.8]{imgs/recimgs/0_0.0001_0.0151.png}
	%	\caption{Reconstructed images with different rMSEs. \textbf{Left to Right}: 1.0406, 1.002,0.8589, 0.2172 and 0.0151. \label{fig:recon-images}}
	%		\vspace{-10pt}
	%\end{figure}
	\begin{figure}[t]	
		%	\begin{subfigure}{0.95\linewidth}
		\centering
		\includegraphics[scale=0.6]{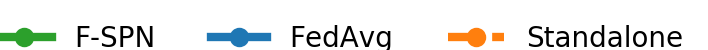}
		\\
		\begin{subfigure}{0.49\linewidth}
			\centering
			\includegraphics[width=0.49\linewidth,height=2cm]{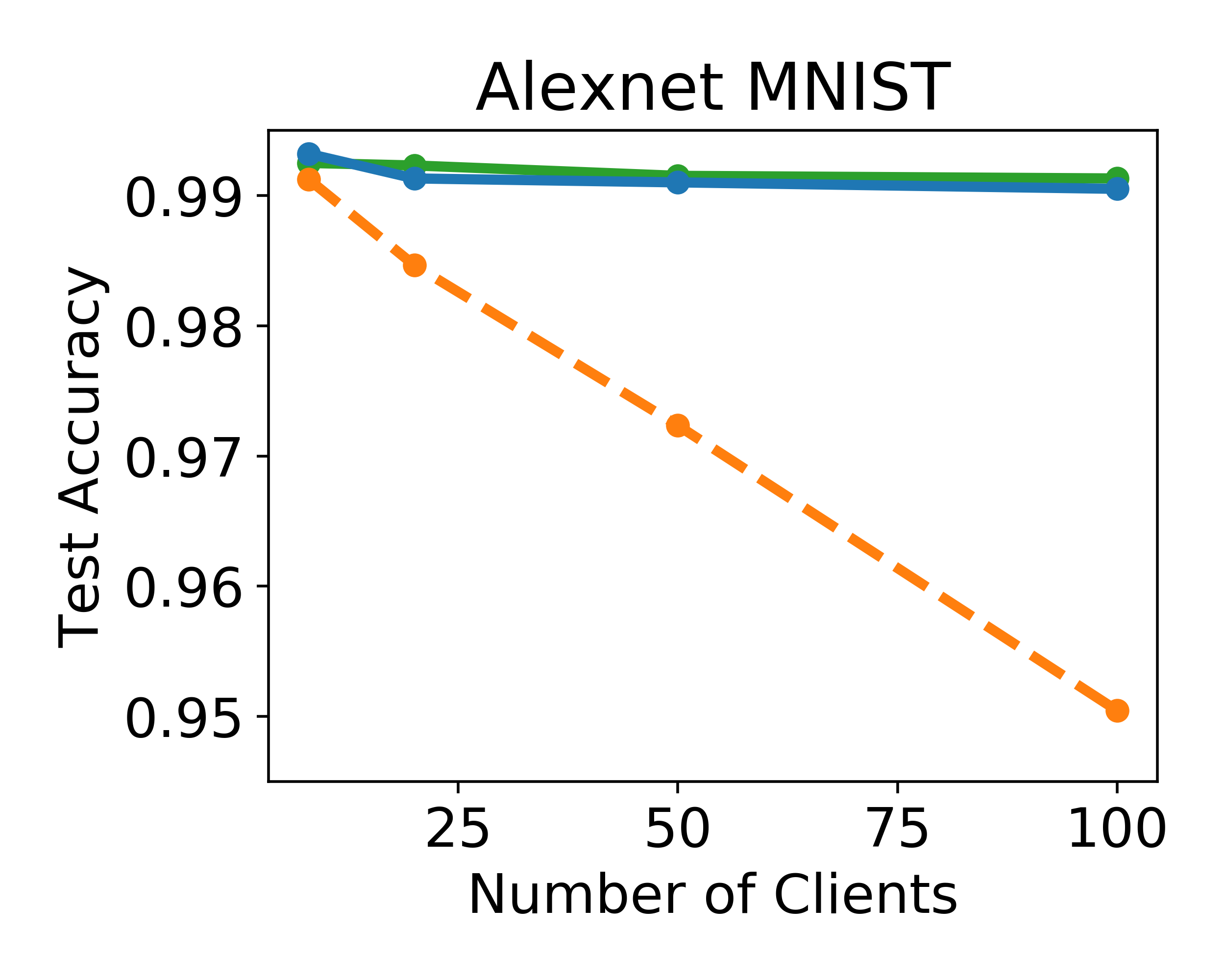}
			\includegraphics[width=0.49\linewidth,height=2cm]{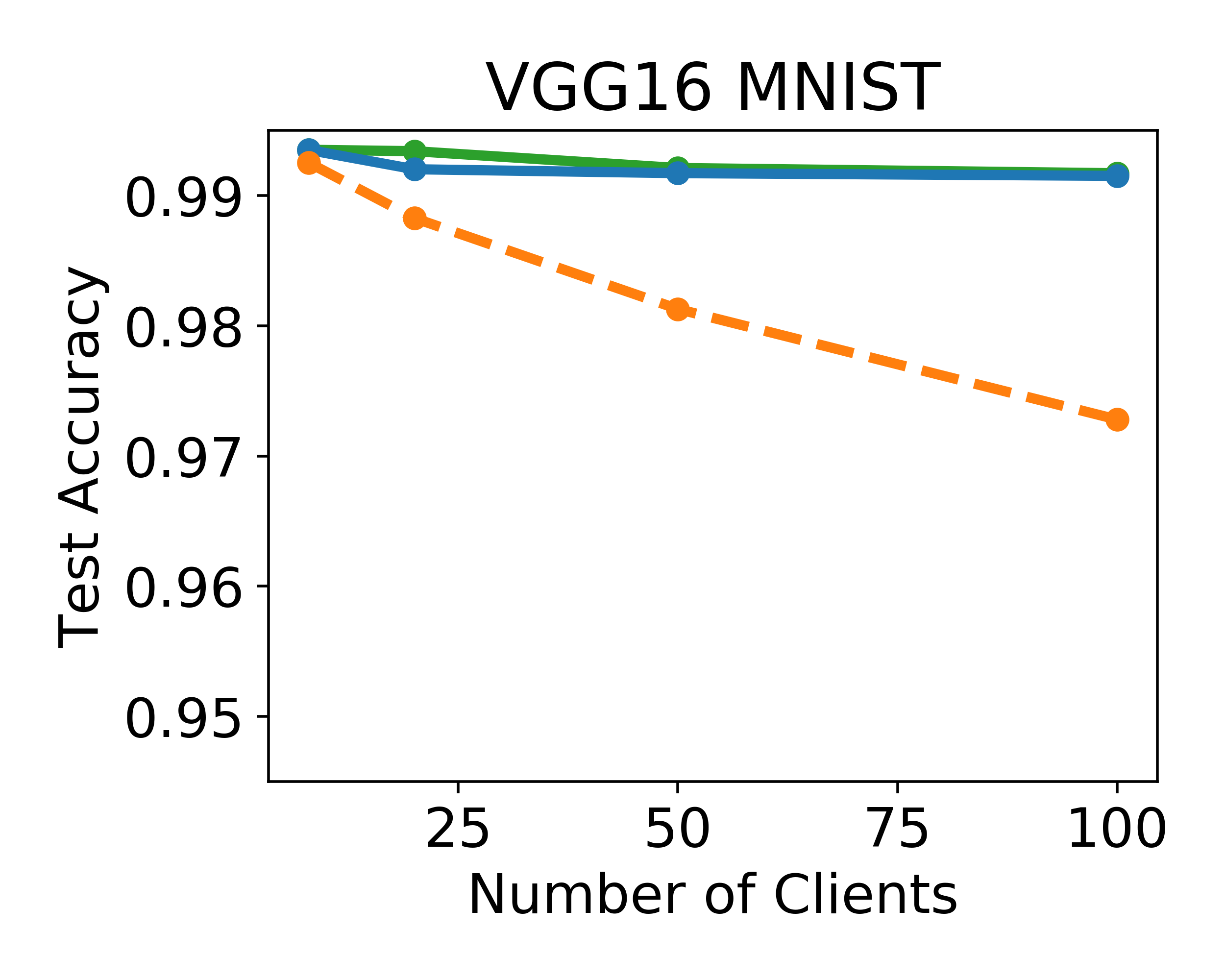}
			\caption{MNIST}
		\end{subfigure}
		\begin{subfigure}{0.49\linewidth}
			\centering
			\includegraphics[width=0.49\linewidth,height=2cm]{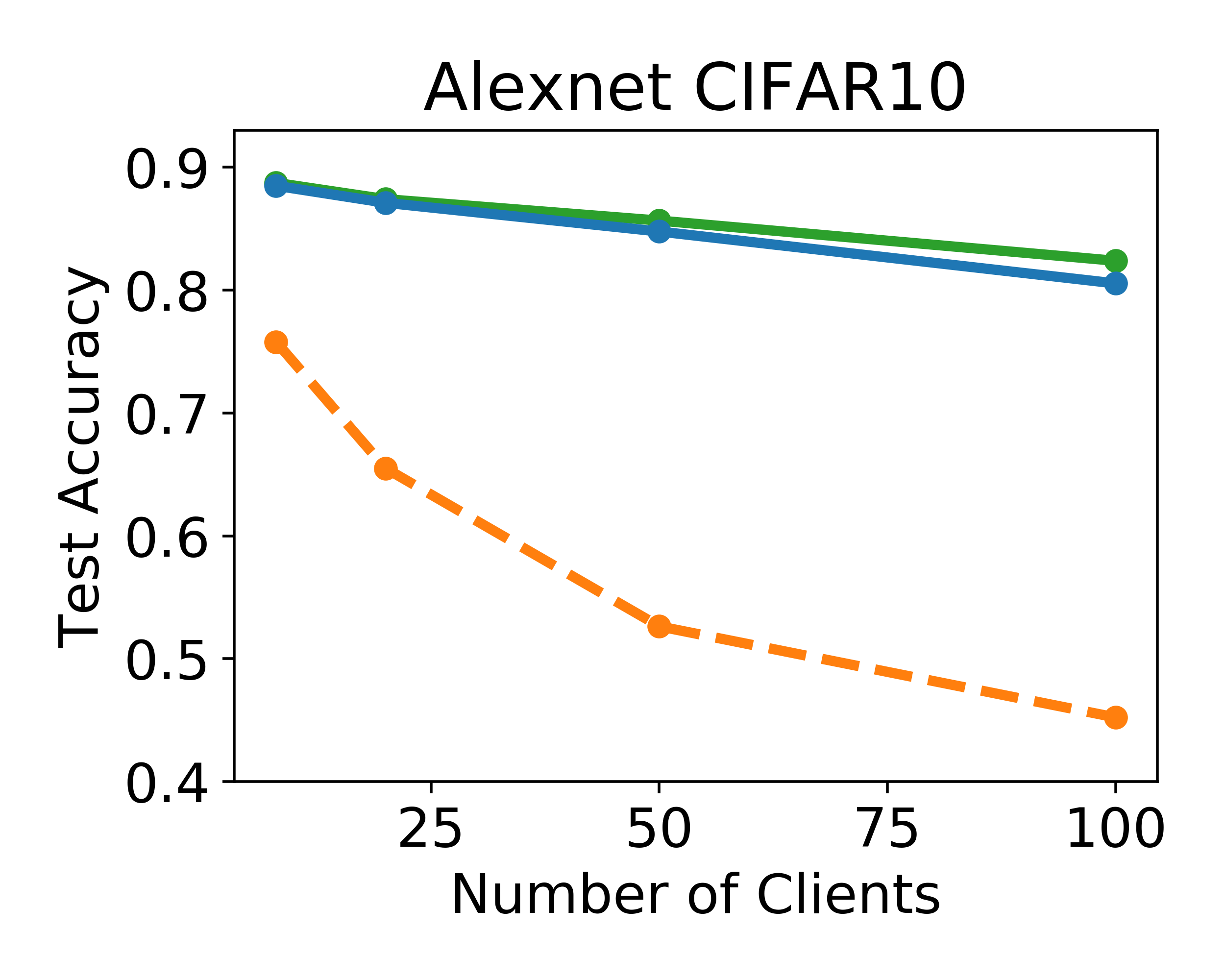}
			\includegraphics[width=0.49\linewidth,height=2cm]{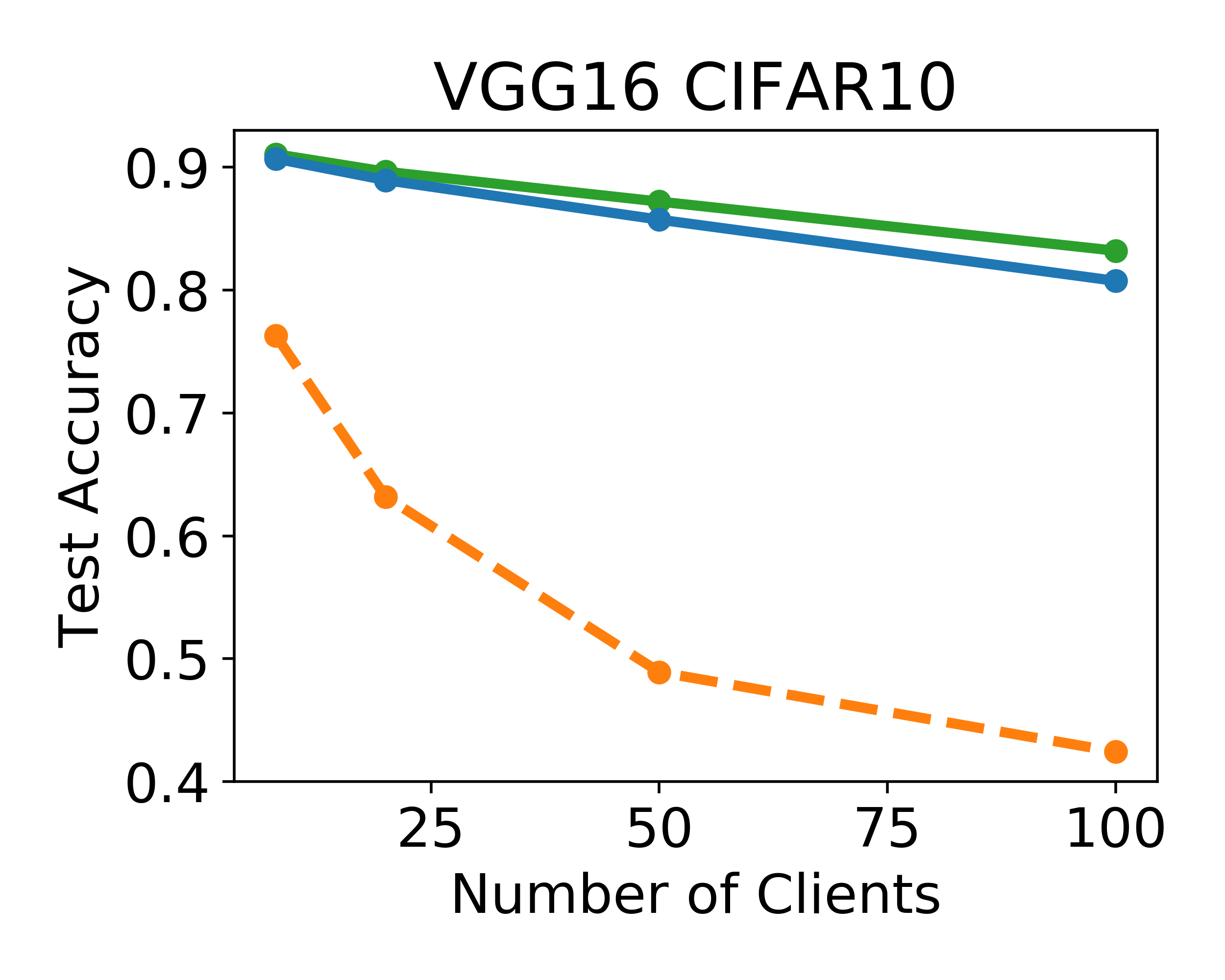}
			\caption{CIFAR10}
		\end{subfigure}
		
		%\includegraphics[scale=0.1]{imgs/gradient_0.0001_DPN.png}
		%	\end{subfigure}	
		\caption{Comparison of accuracies for \textit{standalone }local models, \textit{FedAvg} global model and \textit{Federated SPN} model. 
			Improvements over standalone models increase with the number of clients.  
			%Standalone model is trained \textbf{locally} with only limited number of data, for e.g., only 500 number of data when number of clients is 100.
			%, the lower the acc, SPN degrade slower than baseline on alexnet and vgg
		}
		\label{fig:flacc}
		%\vspace{-10pt}
	\end{figure}
	\begin{table}[t]
		\adjustbox{max width=\textwidth}{
			\centering
			\begin{tabular}{l|c|c|c|c|c|c|c|c|c|c|c|c|c|c|c|c|c|c}
				\toprule
				& \multicolumn{9}{c|}{CIFAR10} &  \multicolumn{9}{c}{CIFAR100} \\
				\hline
				& \multicolumn{3}{c|}{Reconstruction} & \multicolumn{3}{c|}{Membership} & \multicolumn{3}{c|}{Tracing} & \multicolumn{3}{c|}{Reconstruction} & \multicolumn{3}{c|}{Membership} & \multicolumn{3}{c}{Tracing} \\
				\hline
				BS & 1 & 4 & 8 & 1 & 4 & 8 & 1 & 4 & 8  & 1 & 4 & 8 & 1 & 4 & 8 & 1 & 4 & 8 \\
				\hline
				\cite{DLDP_Abadi16} & 0.57 & 0.63 & 0.63 & 0.00 & 0.45 & 0.47 & 0.42 & 0.57 & 0.58 & 0.23 & 0.31 & 0.30 & 0.01 & 0.22 & 0.25 & 0.14 & 0.24 & 0.24 \\
				\cite{PPDL/shokri2015}$\ast$ & 0.55 & 0.55 & 0.55 & 0.00 & 0.37 & 0.44 & 0.50 & 0.50 & 0.50 & 0.18 & 0.18 & 0.18 & 0.02 & 0.13 & 0.16 & 0.16 & 0.16 & 0.16 \\
				\cite{PPDL/shokri2015}$\star$ & 0.57 & 0.61 & 0.61 & 0.00 & 0.43 & 0.49 & 0.54 & 0.54 & 0.54 & 0.21 & 0.26 & 0.26 & 0.00 & 0.19 & 0.22 & 0.19 & 0.19 & 0.19 \\
				\textbf{SPN} & \textbf{0.69} & \textbf{0.70} & \textbf{0.70} & \textbf{0.24} & \textbf{0.50} & \textbf{0.55} & \textbf{0.60} & \textbf{0.62} & \textbf{0.64} & \textbf{0.35} & \textbf{0.35} & \textbf{0.36} & \textbf{0.17} & \textbf{0.28} & \textbf{0.31} & \textbf{0.29} & \textbf{0.30} & \textbf{0.30} \\
				\bottomrule
			\end{tabular}
		}
		\caption{CAP performance with different batch size and dataset for reconstruction, membership and tracing attack. Higher better. BS = Attack Batch Size, \cite{DLDP_Abadi16} = DP, \cite{PPDL/shokri2015}$\ast$ = PPDL-0.05, \cite{PPDL/shokri2015}$\star$ = PPDL-0.3}
		\label{tab:CAP-performance}
		\vspace{-10pt}
	\end{table}
	
	\subsection{Comparison of Privacy Preserving Mechanisms}\label{subsect:exper-compare}
	
	%<<<<<<< HEAD
	Fig. \ref{fig:ppc-all} illustrates example Privacy-Preserving Characteristic  (PPC)   of different mechanisms against \textit{reconstruction, membership} and \textit{tracing} attacks, in which the controlling parameter along x-axis is the ratio $m$ of gradient magnitudes $\|B_I\|$ with respect to magnitudes of added perturbations$\|E_B\|$. 
	%Table ? summarizes AUPPC of different privacy-preserving mechanisms.  We adopt both PPC and AUPPC to compare different mechanisms in terms of capabilities to defense  privacy attacks (see Sect. \ref{sect:exper}).
	It is shown that privacy attacks pose serious challenges to differential privacy based methods \textbf{DP} and \textbf{PPDL}. 
	
	\textbf{Reconstruction attacks}  (top row): when the ratio  ranges between tens to thousands in red regions,  errors decrease rapidly and  \textit{pixel-level information} about original training data are almost completely disclosed (see Fig. \ref{fig:recon-images-red}). 
	In the white regions, increased magnitudes of perturbations lead to large reconstruction errors  (rMSE $\approx 1.0$) with noticeable artifacts and random noisy dots in Fig. \ref{fig:recon-images-white}.  However, model accuracies for DP and PPDL methods also decrease dramatically.  Pronounced drops in accuracies (with more than 20\% for CIFAR10 and 5\% for MNIST)  are observed when added perturbations exceed magnitudes of original gradients (in green regions), beyond which condition (\ref{eq:recon-cond}) of reconstruction attacks is no longer fulfilled and attacks are guaranteed to be defeated (see Theorem \ref{thm:recon-lineareq} and Fig. \ref{fig:recon-images-green}). 
	
	\textbf{Tracing attacks} (middle row): similar trends were observed for distances of tracing attacks. In addition, the distance increases as the number of participants increases. We refer reviewers to ablation studies in supplementary material due to the limited space of this submission. 
	
	\textbf{Membership attacks} (bottom row): the disclosing of memberships is more detrimental, with distances between reconstructed memberships and ground truth labels almost being zero, except for PPDL-0.05 in the green region. With the increase of the number of classes (for CIFAR100) and the training batch size (8),  success rates of membership attacks dropped and the distances increased.  One may mitigate membership attacks by using even larger batch sizes, as suggested in \cite{DeepLeakage_Han19,Gradient_Leakage/wei2020}. 

	In a sharp contrast, Secret Polarization Network (SPN) based mechanism maintains consistent model accuracies, even though gradient magnitudes due to polarization loss exceed gradient magnitudes of original CE loss. Superior performances of SPN mechanism in this green region provide \textit{ theoretically guaranteed  privacy-preserving capabilities}, and at the same time, maintain decent model accuracies to be useful in practice. 
	This superiority is ascribed to the adaptive element-wise gradient perturbations introduced by polarization loss (see discussions near Eq. (\ref{eq:combined-grad})).
	%consistently outperforms DP-SGD and PPDL mechanism,  in terms of the trade-off between model accuracies and capabilities against privacy attacks.  For regimes with acceptable privacy losses (i.e. SSIM scores less than 0.1), model accuracies for SPN mechanism are 5-10\% higher than those of DP-SGD and PPDL mechanisms with varying controlling parameters. 
	%with different controlling parameters. 
	
	%is reflected by the large margin in AUPPC values of different mechanisms summarized in Table \ref{tab:CAP-performance}, in which SPN based mechanism is compared favorably in almost all experiment settings. Figure \ref{fig:barchart-summary} summarizes the comparison of AUPPC values for SPN, PPDL and DP mechanisms, in front of three privacy attacks.  We ascribe the superior SPN privacy-preserving capabilities 
	
	\vspace{-0.3cm}
	\subsection{SPN Polarization Network for Federated Learning}
	
	The dual-headed Secret Polarization Network (SPN) brought improvements in model accuracies in a federated learning setting, in which MNIST and CIFAR10 datasets are evenly distributed among all clients, resulting in small local training datasets on each client (for instance, there are only 500 CIFAR10 training data when the number of clients is 100). Substantial performances deterioration were observed for local standalone models with large numbers of  e.g. 100 clients (see Fig. \ref{fig:flacc}). Since local training data are \textit{i.i.d.}, the FedAvg algorithm \cite{communicationEfficient/mcMahan2017} effectively improved the global model accuracies about 2-4\% for MNIST and 10-40\% for CIFAR10.  The proposed SPN, once integrated with the FedAvg algorithm, consistently improved further model accuracies ranging between 2-3\% for CIFAR10 dataset and about 0.2\% for MNIST (see more results in supplementary material). The improvements are ascribed to element-wise gradients introduced by polarization losses (see discussion in Sect. \ref{sect:PP-SPN}), which in our view advocate the adoption of SPN in practical applications.

	%\vspace{-0.5cm}
	\section{Discussion and Conclusion} \label{sect:dis-concl}
	
	The crux of differential-privacy based approaches is a trade-off between privacy vs accuracy \cite{PPDL/shokri2015,DLDP_Abadi16}. As shown in \cite{exploitingFeatLeak_Vitaly18} and our experiments, existing defenses such as \textit{sharing fewer gradients} and \textit{adding Gaussian or Laplacian noise} are vulnerable to aggressive reconstruction attacks, despite the theoretical privacy guarantee. We extricated from the dilemma by hiding a fraction of network parameters and gradients from the adversary. To this end, we proposed to employ a dual-headed network architecture i.e. Secret Polarization Network (SPN), which on the one hand exerts secret gradient perturbations to original gradients under attack, and on the other hand, maintains performances of the global shared model by jointing forces with the  backbone network. %Table ? below summarizes merits of the proposed approach,  with respect to both DP and HE/SMPC based approaches.  
	This secret-public network configuration provides a theoretically guaranteed privacy protection mechanism without compromising model accuracies, and does not incur significant computational and communication overheads which HE/SMPC based approaches have to  put up with.  We find that the combination of secret-public networks provides a preferable alternative to DP-based mechanisms in application scenarios, whereas large computational and communication overheads are unaffordable  e.g. with mobile or IOT devices. As for future work, the adversarial learning nature of SPN also makes it an effective defense mechanism against adversarial example attacks. To formulate both privacy and adversarial attacks in a unified framework is one of our future directions. 
	
		\section*{Broader Impact}
	\label{impact}
	Our benchmark is likely to increase progress of federated learning and encourage more companies and people to share their data. While there will be immediate benefits resulted from the use of SPN in general, here we also advocate the impact of using our measurement tool to evaluate and thwart privacy attacks. Benefits of using such a tool include increasing transparency in federated learning applications, and mitigating data safety risks in distributed machine learning - see introduction of the paper for more details. 
	
	The \textit{sharing of local model updates} in distributed learning scenarios,  concomitantly disclose \textit{privacy of local data} if no protection measures are taken. 
	Our investigations about the trade-off between \textit{data privacy} protection and \textit{model utilities} for differential-privacy (DP) based approaches, therefore, is of interest to people who concern about the risks of reverse engineering and/or stealing of valuable private data. 
	Moreover, the theoretical guarantee (\ref{eq:recon-cond}) for the first time lays the foundation for a series of protection mechanisms, 
	one of which is instantiated by a secret polarization network (SPN) that 
	thwarts privacy attacks and maintains high model utilities at the same time. 
	The proposed secret-public network configuration, on its own, also paves the way for a novel research direction in our view. 
	Finally, source codes of this work will be made publicly available for people to reproduce and follow up. 
	
	%. table: privacy attacks;  performances; complexities; 

	%\begin{ack}
	%Use unnumbered first level headings for the acknowledgments. All acknowledgments
	%go at the end of the paper before the list of references. Moreover, you are required to declare 
	%funding (financial activities supporting the submitted work) and competing interests (related financial activities outside the submitted work). 
	%More information about this disclosure can be found at: \url{https://neurips.cc/Conferences/2020/PaperInformation/FundingDisclosure}.
	%
	%
	%Do {\bf not} include this section in the anonymized submission, only in the final paper. You can use the \texttt{ack} environment provided in the style file to autmoatically hide this section in the anonymized submission.
	%\end{ack}
	
	\clearpage
	\newpage
	
	\small
	
	\bibliographystyle{plain} %named}
	%\bibliography{ijcai20}
	\bibliography{bib/neurips2020}~

\begin{thebibliography}{10}

\bibitem{DLDP_Abadi16}
Martin Abadi, Andy Chu, Ian Goodfellow, H~Brendan McMahan, Ilya Mironov, Kunal
  Talwar, and Li~Zhang.
\newblock Deep learning with differential privacy.
\newblock In {\em Proceedings of the 2016 ACM SIGSAC Conference on Computer and
  Communications Security}, pages 308--318, 2016.

\bibitem{PPDL_Homomorphic_Encryption/trieu2018}
Yoshinori Aono, Takuya Hayashi, Lihua Wang, Shiho Moriai, et~al.
\newblock Privacy-preserving deep learning via additively homomorphic
  encryption.
\newblock {\em IEEE Transactions on Information Forensics and Security},
  13(5):1333--1345, 2017.

\bibitem{HEwGPU19}
Ahmad~Al Badawi, Jin Chao, Jie Lin, Chan~Fook Mun, Sim~Jun Jie, Benjamin
  Hong~Meng Tan, Xiao Nan, Khin Mi~Mi Aung, and Vijay~Ramaseshan Chandrasekhar.
\newblock The alexnet moment for homomorphic encryption: Hcnn, the first
  homomorphic {CNN} on encrypted data with gpus.
\newblock {\em CoRR}, abs/1811.00778, 2018.

\bibitem{DP/dwork2006}
Cynthia Dwork.
\newblock Differential privacy.
\newblock {\em Automata, languages and programming}, pages 1--12, 2006.

\bibitem{Calibrating_Noise_DA/dwork2006}
Cynthia Dwork, Frank McSherry, Kobbi Nissim, and Adam Smith.
\newblock Calibrating noise to sensitivity in private data analysis.
\newblock In {\em Theory of cryptography conference}, pages 265--284, 2006.

\bibitem{attack/dwork2017}
Cynthia Dwork, Adam Smith, Thomas Steinke, and Jonathan Ullman.
\newblock Exposed! a survey of attacks on private data.
\newblock {\em Annual Review of Statistics and Its Application}, 4:61--84,
  2017.

\bibitem{Model_Inversion/fredrikson2015}
Matt Fredrikson, Somesh Jha, and Thomas Ristenpart.
\newblock Model inversion attacks that exploit confidence information and basic
  countermeasures.
\newblock In {\em Proceedings of the 22nd ACM SIGSAC Conference on Computer and
  Communications Security}, pages 1322--1333, 2015.

\bibitem{inverting_gradient/geiping2020}
Jonas Geiping, Hartmut Bauermeister, Hannah Dr{\"o}ge, and Michael Moeller.
\newblock Inverting gradients--how easy is it to break privacy in federated
  learning?
\newblock {\em arXiv preprint arXiv:2003.14053}, 2020.

\bibitem{Cryptonets/gilad-bachrach2016}
Ran Gilad-Bachrach, Nathan Dowlin, Kim Laine, Kristin Lauter, Michael Naehrig,
  and John Wernsing.
\newblock Cryptonets: Applying neural networks to encrypted data with high
  throughput and accuracy.
\newblock In {\em International Conference on Machine Learning}, pages
  201--210, 2016.

\bibitem{hardy2017private}
Stephen Hardy, Wilko Henecka, Hamish Ivey-Law, Richard Nock, Giorgio Patrini,
  Guillaume Smith, and Brian Thorne.
\newblock Private federated learning on vertically partitioned data via entity
  resolution and additively homomorphic encryption.
\newblock {\em arXiv preprint arXiv:1711.10677}, 2017.

\bibitem{communicationEfficient/mcMahan2017}
H.~Brendan McMahan, Eider Moore, Daniel Ramage, Seth Hampson, and Blaise~Aguera
  y~Arcas.
\newblock Communication-efficient learning of deep networks from decentralized
  data.
\newblock In {\em Proceedings of the 20th International Conference on
  Artificial Intelligence and Statistics (AISTATS)}, 2017.

\bibitem{exploitingFeatLeak_Vitaly18}
Luca Melis, Congzheng Song, Emiliano De~Cristofaro, and Vitaly Shmatikov.
\newblock Exploiting unintended feature leakage in collaborative learning.
\newblock In {\em 2019 IEEE Symposium on Security and Privacy (SP)}, pages
  691--706. IEEE, 2019.

\bibitem{ABY3/mohassel2018}
Payman Mohassel and Peter Rindal.
\newblock Aby3: A mixed protocol framework for machine learning.
\newblock In {\em Proceedings of the 2018 ACM SIGSAC Conference on Computer and
  Communications Security}, pages 35--52, 2018.

\bibitem{Deepsecure/rouhani2018}
Bita~Darvish Rouhani, M~Sadegh Riazi, and Farinaz Koushanfar.
\newblock Deepsecure: Scalable provably-secure deep learning.
\newblock In {\em Proceedings of the 55th Annual Design Automation Conference},
  pages 1--6, 2018.

\bibitem{PPDL/shokri2015}
Reza Shokri and Vitaly Shmatikov.
\newblock Privacy-preserving deep learning.
\newblock In {\em Proceedings of the 22nd ACM SIGSAC conference on computer and
  communications security}, pages 1310--1321, 2015.

\bibitem{Membership_Inference/shokri2017}
Reza Shokri, Marco Stronati, Congzheng Song, and Vitaly Shmatikov.
\newblock Membership inference attacks against machine learning models.
\newblock In {\em 2017 IEEE Symposium on Security and Privacy (SP)}, pages
  3--18, 2017.

\bibitem{Survey_PPDL/tanuwidjaja2019}
Harry~Chandra Tanuwidjaja, Rakyong Choi, and Kwangjo Kim.
\newblock A survey on deep learning techniques for privacy-preserving.
\newblock In {\em International Conference on Machine Learning for Cyber
  Security}, pages 29--46, 2019.

\bibitem{BeyondInferClass_Wang19}
Zhibo Wang, Mengkai Song, Zhifei Zhang, Yang Song, Qian Wang, and Hairong Qi.
\newblock Beyond inferring class representatives: User-level privacy leakage
  from federated learning.
\newblock In {\em IEEE INFOCOM 2019-IEEE Conference on Computer
  Communications}, pages 2512--2520, 2019.

\bibitem{Gradient_Leakage/wei2020}
Wenqi Wei, Ling Liu, Margaret Loper, Ka-Ho Chow, Mehmet~Emre Gursoy, Stacey
  Truex, and Yanzhao Wu.
\newblock A framework for evaluating gradient leakage attacks in federated
  learning.
\newblock {\em arXiv preprint arXiv:2004.10397}, 2020.

\bibitem{yang2019federated}
Qiang Yang, Yang Liu, Tianjian Chen, and Yongxin Tong.
\newblock Federated machine learning: Concept and applications.
\newblock {\em ACM Transactions on Intelligent Systems and Technology (TIST)},
  10(2):12, 2019.

\bibitem{DeepLeakage_Han19}
Ligeng Zhu, Zhijian Liu, and Song Han.
\newblock Deep leakage from gradients.
\newblock In Hanna~M. Wallach, Hugo Larochelle, Alina Beygelzimer, Florence
  d'Alch{\'{e}}{-}Buc, Emily~B. Fox, and Roman Garnett, editors, {\em NeurIPS},
  pages 14747--14756, 2019.

\end{thebibliography}
	
	%%%%%%%%%%%%%%%%%%%%%%%%%%%%%%%%%%%%%%%%%%%%%%%%%%%%%%%%%%%%%%%%%%%%%%%%%%%%%%%%
	%%%%%%%%%%%%%%%%%%%%%%%%%%%%%%%%%%%%%%%%%%%%%%%%%%%%%%%%%%%%%%%%%%%%%%%%%%%%%%%%
	%%%%%%%%%%%%%%%%%%%%%%%%%%%%%%%%%%%%%%%%%%%%%%%%%%%%%%%%%%%%%%%%%%%%%%%%%%%%%%%%
	\newpage
	\section*{Appendix A: Proofs of Reconstruction Attacks}
	Consider a neural network $\Psi(x;w, b): \mathcal{X} \rightarrow \mathbb{R}^C$, where $x \in \mathcal{X}$, $w$ and $b$ are the weights and biases of neural networks, and $C$ is the output dimension. In a machine learning task, we optimize the parameters $w$ and $b$ of neural network $\Psi$ with a loss function $\mathcal{L}\big(\Psi(x; w, b), y \big)$, where $x$ is the input data and $y$ is the ground truth labels. We abbreviate loss function as $\mathcal{L}$ and denote the superscript $w^{[i]}$ and $b^{[i]}$ as the $i$-th layer weights and biases. 
	%The following theorem proves that the reconstruction of input $x$ exists under certain conditions (proofs are given in Appendix A, in supplementary material due to the limited space). 
	
	Suppose a multilayer neural network $\Psi:=\Psi^{[L-1]}\circ \Psi^{[L-2]}\circ \dots \circ \Psi^{[0]}(\hspace*{0.2em} \cdot \hspace*{0.3em}; w, b)$ is $\mathcal{C}^1$, where the $i$-th layer $\Psi^{[i]}$ is a fully-connected layer with the step forward propagation as follows, 
	\[
	o^{[i+1]} = a\big(w^{[i]}\cdot o^{[i]} + b^{[i]}\big),
	\]
	where $o^{[i]}$, $o^{[i+1]}$, $w^{[i]}$ and $b^{[i]}$ are an input vector, an output vector, a weight matrix and a bias vector respectively, and $a$ is the activation function in the $i$-th layer.
	
	By the backpropagation, we have the matrix derivatives on $\Psi^{[i]}$ as follows, 
	\begin{align*}
	\nabla_{w^{[i]}} \mathcal{L} = \nabla_{o^{[i+1]}} \mathcal{L} \cdot  a' \big(w^{[i]}\cdot o^{[i]} + b^{[i]}\big)\cdot {o^{[i]}}^T \\ %\text{and} 
	\hspace*{0.3em} \nabla_{b^{[i]}} \mathcal{L} = \nabla_{o^{[i+1]}} \mathcal{L} \cdot a' \big(w^{[i]}\cdot o^{[i]} + b^{[i]}\big)\cdot {I},
	%\frac{\partial \mathcal{L}}{\partial w^{[i]}} = \frac{\partial \mhcal{L}}{\partial o^{[i+1]}} \cdot  a' \big(w^{[i]}\cdot o^{[i]} + b^{[i]}\big)\cdot {o^{[i]}}^T \text{and} \hspace*{0.3em} \frac{\partial \mathcal{L}}{\partial b^{[i]}} = \frac{\partial \mathcal{L}}{\partial o^{[i+1]}} \cdot a' \big(w^{[i]}\cdot o^{[i]} + b^{[i]}\big)\cdot {I},
	\end{align*}
	which yield the following output equations:
	\begin{align}\label{outputequation}
	\nabla_{w^{[i]}} \mathcal{L} = \nabla_{b^{[i]}} \mathcal{L} \cdot {o^{[i]}}^T,
	%\frac{\partial \mathcal{L}}{\partial w^{[i]}} = \frac{\partial \mathcal{L}}{\partial b^{[i]}} \cdot {o^{[i]}}^T, 
	\end{align}
	where gradients $\nabla_{w^{[i]}} \mathcal{L}$ and $\nabla_{b^{[i]}} \mathcal{L}$ are supposed to be shared in a distributed learning setting, and known to honest-and-curious adversaries who may launch reconstruction attacks on observed gradients.
	
	\begin{figure}[h]
		\centering
		\includegraphics[width=0.9\linewidth]{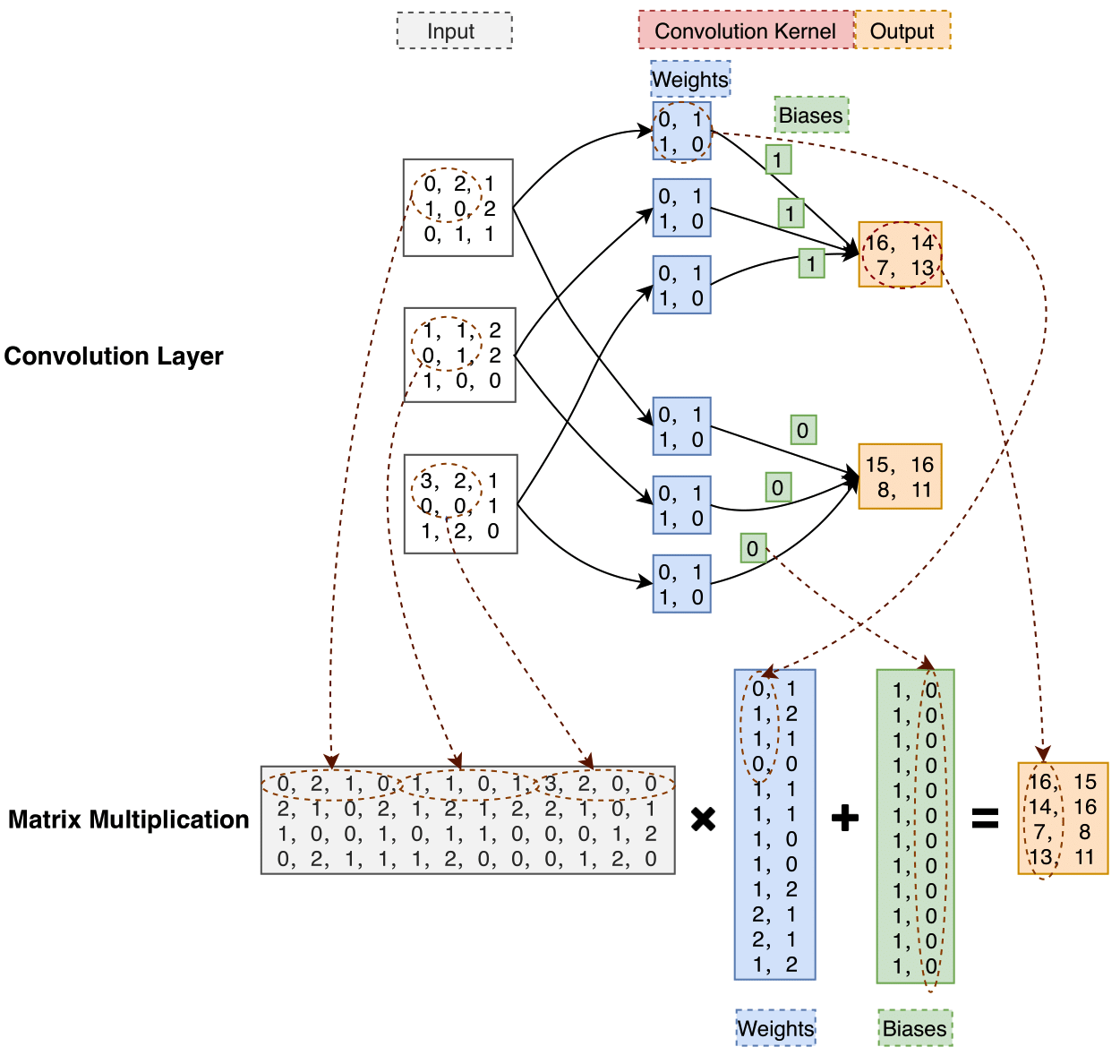}
		\caption{A pictorial example illustrating how to switch a convolution operator to a matrix multiplication.} %In an implementation of convolution operation (e.g. PyTorch), both input image and kernel will unfold into a matrix, then perform matrix multiplication. Hence, it is equivalent to a fully connected operation.} %and the difference is some values in the matrix is repeated.}
		\label{fig:convtofc}
	\end{figure}
	
	\begin{remark}
	Any convolution layers can be converted into a fully-connected layer by simply stacking together spatially shifted convolution kernels, as noted in Footnote {\color{blue}3}. A simple illustration refers to Figure \ref{fig:convtofc} and detailed algorithm refers to a technical report~\footnote{Wei Ma, Jun Lu: An Equivalence of Fully Connected Layer and Convolutional Layer. \url{https://arxiv.org/pdf/1712.01252.pdf}}.
	\end{remark}
	\begin{remark}
		Suppose $\nabla_{w^{[i]}} \mathcal{L} \in \mathbb{R}^{M\cdot N}$, $\nabla_{b^{[i]}} \mathcal{L} \in \mathbb{R}^M$ and $o^{[i]} \in \mathbb{R}^N$, we write 
		\begin{equation*}
		\nabla_{w^{[i]}} \mathcal{L}  := 
		\begin{pmatrix}
		\frac{\partial \mathcal{L}}{\partial w^{[i]}_{mn}}
		\end{pmatrix}_{\begin{subarray}{l} 
			1 \leq m \leq M; \\ 
			1 \leq n \leq N. 
			\end{subarray}},
		\hspace*{1em}
		\nabla_{b^{[i]}} \mathcal{L} := 
		\begin{pmatrix}
		\frac{\partial \mathcal{L}}{\partial b^{[i]}_{1}}, \dots, \frac{\partial \mathcal{L}}{\partial b^{[i]}_{M}}
		\end{pmatrix}^T  
		\text{, and }
		o^{[i]} := 
		\begin{pmatrix}
		o^{[i]}_{1}, \dots, o^{[i]}_{N}
		\end{pmatrix}^T. 
		\end{equation*}
		By the piecewise matrix multiplication, Equation \ref{outputequation} becomes as a linear system in a formal convention as follows, 
		\[
		\frac{\partial \mathcal{L}}{\partial w^{[i]}_{mn}} = \frac{\partial \mathcal{L}}{\partial b^{[i]}_{m}} \cdot o^{[i]}_{n}, \hspace*{0.5em} \text{ for 1 $\leq$ m $\leq$ M and 1 $\leq$ n $\leq$ N.}
		\]
		Hence, we can write the partial derivative $\nabla_{b^{[i]}} \mathcal{L}$ as an $mn\times mn$ diagonal matrix that each n adjacent diagonal entries in an order are copies of each entry, and partial derivative $\nabla_{w^{[i]}} \mathcal{L}$ as an $mn$-dimensional vector. %, as noted in Footnote {\color{blue}4}. 
	\end{remark}
	In the following paragraph, we always abbreviate equation coefficients $\nabla_{w^{[i]}} \mathcal{L}$ and $\nabla_{b^{[i]}} \mathcal{L}$ to $W^{[i]}$ and $B^{[i]}$ respectively.
	\begin{lemma}\label{l1}
		Suppose $d^{[0]}$ and $d^{[1]}, \cdots, d^{[L]}$ are dimensions of input image $x$ and output vectors $o^{[1]}, \cdots, o^{[L]}$ respectively. $x$ and $o^{[i]}$ can be estimated by solving the following $d^{[i]} \cdot d^{[i+1]}$-dimensional linear system if it is well-posed,
		\begin{align}\label{linear_e1}
		W^{[0]} &= B^{[i]} \cdot x\\
		\text{or} \hspace*{0.5em} W^{[i]}  &= B^{[i]} \cdot {o^{[i]}}^T, \hspace*{0.3em}  \text{ for }i = 1, \cdots, L-1. \label{linear_e2}
		\end{align} 
		%Moreover, the total $\sum_{i=0}^{L-1} d^{[i]}\cdot d^{[i+1]}$-dimensional linear system is written in a compact form, i.e. 
		%\begin{align*}
		%[b^{[0]},\cdots, b^{[L-1]}]^T= \diag{\{B^{[0]},\cdots, B^{[L-1]}\}} \cdot [{o^{[0]}}, \cdots, {o^{[L-1]}}]^T,
		%\end{align*}
		%where diag$\{\cdot\}$ represents the block diagonal matrix. 
	\end{lemma}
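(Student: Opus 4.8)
The plan is to derive the lemma as a direct reinterpretation of the output equation (\ref{outputequation}) already established by backpropagation, namely $\nabla_{w^{[i]}} \mathcal{L} = \nabla_{b^{[i]}} \mathcal{L} \cdot {o^{[i]}}^T$, which in the abbreviated notation reads $W^{[i]} = B^{[i]} \cdot {o^{[i]}}^T$. The central observation is that this identity, once the gradients $W^{[i]}$ and $B^{[i]}$ are regarded as \emph{known} (they are precisely the quantities shared with, and observed by, the honest-but-curious adversary), becomes a \emph{linear} equation in the single unknown $o^{[i]}$. First I would fix a layer index $i$ and record the relation above; since the $i$-th layer takes $o^{[i]}$ as its input and $o^{[0]} = x$ by definition, the case $i = 0$ yields the first displayed equation (the input-layer system for $x$) and the cases $i = 1, \dots, L-1$ yield the second (the intermediate-activation systems for $o^{[i]}$).

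Next I would carry out the reshaping prescribed in Remark 2 so that the scalar identities are packaged into a genuine matrix equation. Writing $o^{[i]} \in \mathbb{R}^{d^{[i]}}$ and $o^{[i+1]} \in \mathbb{R}^{d^{[i+1]}}$, the weight gradient $W^{[i]}$ carries $d^{[i]} \cdot d^{[i+1]}$ entries; following the convention of Remark 2, I would represent $B^{[i]}$ as the $d^{[i]} d^{[i+1]} \times d^{[i]} d^{[i+1]}$ diagonal matrix whose adjacent diagonal blocks replicate the bias-gradient components, and $W^{[i]}$ as the corresponding $d^{[i]} d^{[i+1]}$-dimensional vector. Under this convention the componentwise relation $\tfrac{\partial \mathcal{L}}{\partial w^{[i]}_{mn}} = \tfrac{\partial \mathcal{L}}{\partial b^{[i]}_{m}} \cdot o^{[i]}_{n}$ is exactly the assertion that the $d^{[i]} d^{[i+1]}$-dimensional system $W^{[i]} = B^{[i]} \cdot o^{[i]}$ holds, which is the form claimed in the statement.

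Finally, I would invoke well-posedness. The system is well-posed precisely when $B^{[i]}$, in the diagonal matrix form above, is invertible, which is the nonsingularity hypothesis of condition~2 in Theorem~\ref{thm:recon-exist_1} and amounts to all bias-gradient entries being nonzero. In that regime the unique solution is $o^{[i]} = (B^{[i]})^{-1} \cdot W^{[i]}$ (and $x = (B^{[0]})^{-1} \cdot W^{[0]}$ for the input layer), so $x$ and each $o^{[i]}$ are recoverable, as asserted. I expect the only nonroutine step to be the reshaping bookkeeping: one must check that the $d^{[i]} \cdot d^{[i+1]}$ scalar equations — each involving a single component $o^{[i]}_n$ but replicated across the row index $m$ — are consistently collated so that $B^{[i]}$ is square and its invertibility is equivalent to nonsingularity of the bias gradient. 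This is exactly where Remark~2's diagonal-matrix convention does the work; once that identification is in place, the remainder is substitution and matrix inversion.
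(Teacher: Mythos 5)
Your proposal is correct and matches the paper's own (implicit) argument: the paper states Lemma \ref{l1} as an immediate consequence of the backpropagation identity (\ref{outputequation}) together with the diagonal-matrix reshaping convention of its Remark 2, which is precisely the route you take, including the observation that well-posedness amounts to nonsingularity of the reshaped bias-gradient matrix. Your treatment of the collation bookkeeping (the tiling of $o^{[i]}$ across the row index $m$ so that the system becomes square) is in fact more explicit than the paper's.
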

	
	\begin{remark}
		Output vectors $o^{[1]},\cdots, o^{[L]}$ are outputs of neural networks $\Psi(\hspace*{0.2em} \cdot \hspace*{0.3em}; w, b)$ on input image $x$. However, solving Linear System (\ref{linear_e2}) are always numerically unstable in that minor numerical perturbation of $B^{[i]}$ around 0 would yield the infinity solution even if it is a well-posed problem. 
		%Additionally, it is always inconsistent in most cases either that the number of equations $\sum_{i=0}^{L-1} d^{[i]}\cdot d^{[i+1]}$ are too bigger than one of unknowns $\sum_{i=0}^L d^{[i]}$. 
		Hence, it is not typically to directly recover input image $x$ and output vectors $o^{[1]}, \cdots, o^{[L]}$ by simple matrix computations in practice. 
	\end{remark}
	
	\begin{restatable}[]{thm}{restatlemmatwo}
		\begin{lemma}\label{l2}
			Assume the linear system $B\cdot x = W$ is corrupted in coefficients written as
			%with $\delta_A=A-\bar{A}, \delta_b= b-\bar{b}, \delta_x = x - \bar{x}$, and
			$\bar{B}\cdot \bar{x} = \bar{W}$. If $B$ is nonsingular, we have the following inequality,
			\[
			||x-\bar{x}|| \leq ||B^{-1}||\cdot \big( ||W-\bar{W}|| + ||B-\bar{B}||\cdot ||\bar{x}||\big).
			\] 
		\end{lemma}
	\end{restatable}
	\begin{proof}
		Obviously, we have 
		\begin{align}
		B \cdot (x-\bar{x}) = (W-\bar{W}) + (B-\bar{B})\cdot \bar{x},
		\end{align}
		which yields this lemma if $B$ is nonsingular. 
	\end{proof}
	
	According to Lemma \ref{l1} and Lemma \ref{l2}, we have the following existing theorem. 
	%\begin{theorem}\label{T1.3}
	%		Suppose a multilayer neural network $\Psi:=\Psi^{[L-1]}\circ \Psi^{[L-2]}\circ \dots \circ \Psi^{[0]}(\hspace*{0.2em} \cdot \hspace*{0.3em}; w, b)$ is $\mathcal{C}^1(\mathbb{R}^{d^{[0]}})$, where the $i$-th layer $\Psi^{[i]}: \mathbb{R}^{d^{[i]}}\rightarrow \mathbb{R}^{d^{[i+1]}}$ is fully-connected\footnote{Any convolution layers can be converted into a fully-connected layer by simply stacking together spatially shifted convolution kernels.}. If there is an $i$ $(1\leq i \leq L)$ such that Jacobian matrix $D_{o^{[0]}} \big(\Psi^{[i-1]}\circ \Psi^{[i-1]}\circ \dots \circ \Psi^{[0]}\big)$ around $o^{[0]}$ is full-rank and partial derivative $A^{[i]}$ of $\mathcal{L}\big(\Psi(o^{[0]}; w, b), y \big)$ is nonsingular, the initial image $o^{[0]*}$ exists. Moreover, we have the following inequality around $o^{[0]*}$, 
	%	\[
	%	||o^{[0]}-o^{[0]*}|| \leq M\cdot || \big\{\frac{\partial \mathcal{L}}{\partial w^{[i]}}, \frac{\partial \mathcal{L}}{\partial b^{[i]}}\big\} - \big\{\frac{\partial \mathcal{L}}{\partial w^{[i]}}, \frac{\partial \mathcal{L}}{\partial b^{[i]}}\big\}^* ||,
	%	\]
	%		\begin{align}\label{bo}
	%		||o^{[0]}-o^{[0]*}|| \leq M\cdot || \big\{A^{[i]}, b^{[i]}\big\} - \big\{A^{[i]}, b^{[i]}\big\}^* ||,
	%		\end{align}
	%		where $ \big\{A^{[i]}, b^{[i]}\big\}^* $ are partial derivatives of $\mathcal{L}\big(\Psi(o^{[0]*}; w, b), y \big)$.
	%\end{theorem}
	
	\begin{theorem}\label{thm:recon-exist_1-supp}
		Suppose a multilayer neural network $\Psi:=\Psi^{[L-1]}\circ \Psi^{[L-2]}\circ \dots \circ \Psi^{[0]}(\hspace*{0.2em} \cdot \hspace*{0.3em}; w, b)$ is $\mathcal{C}^1$, where the $i$-th layer $\Psi^{[i]}$ is a fully-connected layer. Then, \textbf{initial input $x^*$ of $\Psi$ exists}, provided that: if there is an $i$ $(1\leq i \leq L)$ such that
		\begin{enumerate}
			\item Jacobian matrix $D_{x} \big(\Psi^{[i-1]}\circ \Psi^{[i-1]}\circ \dots \circ \Psi^{[0]}\big)$ around $x$ is full-rank; 
			\item Partial derivative $\nabla_{b^{[i]}} \mathcal{L}\big(\Psi(x; w, b), y \big)$ is nonsingular.		
		\end{enumerate}
		Moreover, we have the following inequality around $x^*$, 
		\begin{align}\label{bo}
		||x-x^*|| \leq M\cdot || \nabla_{w^{[i]}, b^{[i]}} \mathcal{L}\big(\Psi(x; w, b), y \big)-\nabla_{w^{[i]}, b^{[i]}} \mathcal{L}\big(\Psi(x^*; w, b), y \big) ||.
		\end{align}
	\end{theorem}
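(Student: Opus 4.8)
The plan is to combine the two lemmas already proved: Lemma~\ref{l1} reconstructs the hidden activation $o^{[i]}$ from the shared gradients, while the inverse function theorem together with Lemma~\ref{l2} turns this into existence of $x^{*}$ and the quantitative bound~(\ref{bo}). Throughout write $g := \Psi^{[i-1]}\circ\dots\circ\Psi^{[0]}$ for the front portion of the network, so that $g(x)=o^{[i]}$, and abbreviate $\nabla_{w^{[i]},b^{[i]}}\mathcal{L}(\Psi(x;w,b),y)$ as the pair $(W^{[i]},B^{[i]})$ as in the Remark.

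\textbf{Existence.} By the output equation~(\ref{outputequation}) the pair $(W^{[i]},B^{[i]})$ satisfies $W^{[i]}=B^{[i]}\cdot(o^{[i]})^{T}$. Assumption~2 makes $B^{[i]}$ nonsingular, so this system is well-posed and $o^{[i]}=(B^{[i]})^{-1}W^{[i]}$ is uniquely recovered, exactly as in Lemma~\ref{l1}. Assumption~1 states that $D_x g$ is full-rank; since $\Psi$ is $\mathcal{C}^1$, the inverse function theorem supplies an open neighborhood $U$ of $x$ on which $g$ is a diffeomorphism onto its image. Setting $x^{*}:=g^{-1}(o^{[i]})$ gives the claimed initial input, and $g^{-1}$ is $\mathcal{C}^1$ with the operator norm of its derivative bounded on a compact subneighborhood by some $L_g<\infty$.

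\textbf{Error bound.} I would view the reconstruction as solving a perturbed linear system: the true system $B^{[i]}\cdot o^{[i]}=W^{[i]}$ is evaluated at $x$, while the candidate $x^{*}$ produces $\bar{B}^{[i]}\cdot\bar{o}=\bar{W}^{[i]}$ with $\bar{o}=g(x^{*})$ and $(\bar{W}^{[i]},\bar{B}^{[i]})=\nabla_{w^{[i]},b^{[i]}}\mathcal{L}(\Psi(x^{*};w,b),y)$. Lemma~\ref{l2} then yields
\begin{align*}
\|o^{[i]}-\bar{o}\| \le \|(B^{[i]})^{-1}\|\cdot\big(\|W^{[i]}-\bar{W}^{[i]}\|+\|B^{[i]}-\bar{B}^{[i]}\|\cdot\|\bar{o}\|\big),
\end{align*}
so $\|o^{[i]}-\bar{o}\|$ is bounded by $\|\nabla_{w^{[i]},b^{[i]}}\mathcal{L}(\Psi(x;w,b),y)-\nabla_{w^{[i]},b^{[i]}}\mathcal{L}(\Psi(x^{*};w,b),y)\|$ up to a constant. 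Composing with $\|x-x^{*}\|=\|g^{-1}(o^{[i]})-g^{-1}(\bar{o})\|\le L_g\|o^{[i]}-\bar{o}\|$ produces~(\ref{bo}), with $M$ absorbing $L_g$, $\|(B^{[i]})^{-1}\|$ and the factor $\max(1,\|\bar{o}\|)$.

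\textbf{Main obstacle.} The delicate point is making $M$ uniform rather than merely pointwise: the inverse function theorem only furnishes a local Lipschitz bound for $g^{-1}$, and Lemma~\ref{l2} carries the solution-dependent factor $\|\bar{o}\|$. I would fix a compact neighborhood of $x$ on which $D_x g$ stays full-rank — possible because $\Psi\in\mathcal{C}^1$ makes the Jacobian continuous — and on which $\bar{o}$ stays bounded, so that $L_g$, $\|(B^{[i]})^{-1}\|$ and $\|\bar{o}\|$ admit uniform bounds whose product gives $M$. A second subtlety is that moving $x$ to $x^{*}$ simultaneously perturbs the backpropagated factor $\nabla_{o^{[i+1]}}\mathcal{L}$ and the activation derivative $a'$ entering $B^{[i]}$; since both are $\mathcal{C}^1$, I would absorb their variation into the same neighborhood-wise estimate rather than track them explicitly.
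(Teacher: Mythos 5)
Your proposal follows essentially the same route as the paper's own proof: recover $o^{[i]}$ from the well-posed linear system of Lemma \ref{l1} using the nonsingularity of $\nabla_{b^{[i]}}\mathcal{L}$, invoke the inverse/implicit function theorem on the full-rank Jacobian to define $x^* = \big(\Psi^{[i-1]}\circ\dots\circ\Psi^{[0]}\big)^{-1}(o^{[i]})$, and then chain the local Lipschitz bound on the inverse with the perturbation estimate of Lemma \ref{l2} to obtain (\ref{bo}), absorbing $\|\nabla_{b^{[i]}}\mathcal{L}^{-1}\|$, the inverse-map bound, and the solution norm into $M$. Your closing remark on making $M$ uniform via a compact neighborhood is in fact more careful than the paper, which simply fixes the pointwise constant $M := M^{[i]}\cdot\|x^*\|\cdot\|\nabla_{b^{[i]}}\mathcal{L}^{-1}\|+1$.
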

	
	\begin{proof}
		WLOG, we suppose $i$ yields that Jacobian matrix $D_{x} \big(\Psi^{[i-1]}\circ \Psi^{[i-1]}\circ \dots \circ \Psi^{[0]}\big)$ around $x$ is full-rank. By the implicit function theorem, there exists a bounded inverse function $\big(\Psi^{[i-1]}\circ \Psi^{[i-1]}\circ \dots \circ \Psi^{[0]}\big)^{-1}(\cdot \hspace*{0.3em};w, b)$ around $x$, s.t. 
		\begin{align}\label{M}
		\big|\big(\Psi^{[i-1]}\circ \Psi^{[i-1]}\circ \dots \circ \Psi^{[0]}\big)^{-1}(\cdot \hspace*{0.3em};w, b)\big| \leq M^{[i]}.	
		\end{align}
		Since partial derivative $\nabla_{b^{[i]}} \mathcal{L}$ is nonsingular, vector $o^{[i]}$ is solved by matrix computations in Lemma \ref{l1}, and thus the initial image $x^* := \big(\Psi^{[i-1]}\circ \Psi^{[i-1]}\circ \dots \circ \Psi^{[0]}\big)^{-1}(o^{[i]})$. 
		
		By Lemma \ref{l2} and Inequality (\ref{M}), in an open neighborhood of $x^*$, we have 
		\begin{align*}
		||x-x^*|| =& ||\big(\Psi^{[i-1]}\circ \Psi^{[i-1]}\circ \dots \circ \Psi^{[0]}\big)^{-1}(o^{[i]} \hspace*{0.3em};w, b) - \big(\Psi^{[i-1]}\circ \Psi^{[i-1]}\circ \dots \circ \Psi^{[0]}\big)^{-1}(o^{[i]^*} \hspace*{0.3em};w, b)||\\
		\leq&  M^{[i]} \cdot ||o^{[i]}-o^{[i]*}||\\
		\leq&  M^{[i]} \cdot ||{\nabla_{b^{[i]}} \mathcal{L}}^{-1}||  \cdot \big( ||\nabla_{w^{[i]}} \mathcal{L}(\Psi(x; w, b), y) - \nabla_{w^{[i]}} \mathcal{L}(\Psi(x^*; w, b), y)|| \\
		&\hspace*{8em} + ||x^{*}|| \cdot|| \nabla_{b^{[i]}} \mathcal{L}(\Psi(x; w, b), y) - \nabla_{b^{[i]}} \mathcal{L}(\Psi(x^*; w, b), y)|| \big) \\
		\leq&  M \cdot ||\nabla_{w^{[i]}, b^{[i]}} \mathcal{L}\big(\Psi(x; w, b), y \big)-\nabla_{w^{[i]}, b^{[i]}} \mathcal{L}\big(\Psi(x^*; w, b), y \big)||,
		\end{align*}
		where we pick enough big number $M := M^{[i]}\cdot ||x^{*}||\cdot||{\nabla_{b^{[i]}} \mathcal{L}}^{-1}||+1$.
	\end{proof}
	
	\begin{remark}
		%1) If the $i$-th layer refers to the $L$-th layer, Theorem \ref{T1.3} theoretically confirms the deep leakage method proposed in \cite{DeepLeakage_Han19} that minimization of gradient differences yields a recovery of initial image. 	
		1) In the deep leakage approach~\cite{DeepLeakage_Han19}, the recovery of initial image requires model parameters $\mathcal{W}$ and the corresponding gradients $\nabla \mathcal{W}$ such that a minimization of gradient differences $||\nabla \mathcal{W}' - \nabla \mathcal{W}||$ yields a recovery of initial image if the initial image exists. Our theorem provides sufficient conditions of the initial image existence, and Inequality (\ref{bo}) confirms the effectiveness of the deep leakage approach. 
		
		2) Essentially, deep leakage approach is a trade-off computational technique for the matrix approach in the meaning that a loss in accuracy is trade-off with the existence of approximate solution by the optimization approach. Both approaches require model parameters $\mathcal{W}$ and the corresponding gradients $\nabla \mathcal{W}$. 
		
		3) If Jacobian matrix is not full-rank or $\nabla_{b^{[i]}} \mathcal{L}$ is singular, the inverse problem is ill-posed and a minimization of gradient differences might yield multiple solutions or an infeasibility which is observed as noisy images.   
	\end{remark}
	
	If assumptions in Theorem \ref{thm:recon-exist_1-supp} are met, we pick an index set $I$ from row index set of $B^{[i]}$ and $W^{[i]}$ such that the following linear equation is well-posed,
	\begin{align*}
	{B}_I \cdot x = {W}_I, 
	\end{align*}
	where ${B}_I := B^{[i]}_I$ and ${W}_I := W^{[i]}_I$. 
	
	%According to Theorem \ref{thm:recon-exist_1}, the initial input $x^*$ is $\big(\Psi^{[i-1]}\circ \Psi^{[i-1]}\circ \dots \circ \Psi^{[0]}\big)^{-1}(x)$. 
	
	%Suppose there is a perturbation $E$ added on $A_I$ such that measurement matrix $\bar{A}_I =  A_I + E$. We consider the following \textit{iterative} equation for Equation (\ref{oe}), 
	%\begin{align}\label{io}
	%\bar{A}_I \cdot o_{k+1} - E \cdot o_k = b_I.
	%\end{align}
	
	\begin{theorem}
		Suppose there are perturbations $E_{{B}}, E_{{W}}$ added on ${B}_I, {W}_I$, respectively, such that observed measurements $\bar{{B}}_I =  {B}_I + E_{{B}}, \bar{{W}}_I =  {W}_I + E_{{W}} $. Then, the \textbf{reconstruction $x^*$ of the initial input $x$ can be} \textbf{determined} by solving a noisy linear system $\bar{B}_I \cdot x^* = \bar{W}_I$, provided that
		\begin{align}\label{eq:recon-cond-supp}	
		\|{B}_{I}^{-1} \cdot E_B\|<1;
		\end{align}
		Moreover, the relative error is bounded, 
		\begin{align}\label{e13}
		\frac{\|x^*-x\|}{\|x\|} \leq \frac{\kappa(B_I)}{1- \|{B}_I^{-1} \cdot E_B\|} \Big( \frac{\|E_B\|}{\|B_I\|} + \frac{\|E_W\|}{\|W_I\|} \Big),
		\end{align}
		in which $B_I^{-1}$ is the inverse of $B_I$, where $\kappa(B_I)$ is the conditional number of $B_I$.
	\end{theorem}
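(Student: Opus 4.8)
The plan is to treat this as a standard perturbation analysis for linear systems, leveraging the fact that $B_I$ is nonsingular (guaranteed by Theorem \ref{thm:recon-exist_1} through assumption 2 on $\nabla_{b^{[i]}}\mathcal{L}$). First I would establish that the perturbed system actually admits a unique solution $x^*$. Writing $\bar{B}_I = B_I(I + B_I^{-1}E_B)$, the hypothesis $\|B_I^{-1}E_B\| < 1$ lets me invoke the Neumann series to conclude that $I + B_I^{-1}E_B$ is invertible, with the standard estimate $\|(I + B_I^{-1}E_B)^{-1}\| \leq (1 - \|B_I^{-1}E_B\|)^{-1}$. Since $B_I$ is invertible, so is $\bar{B}_I$, which gives existence and uniqueness of $x^*$ and justifies the word ``determined'' in the statement.

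Next I would derive the error equation. Starting from $\bar{B}_I x^* = \bar{W}_I$, i.e. $(B_I + E_B)x^* = W_I + E_W$, and subtracting the exact relation $B_I x = W_I$, I obtain $B_I(x^* - x) = E_W - E_B x^*$. Rewriting $x^* = x + (x^* - x)$ and collecting the perturbation terms on the left yields $(I + B_I^{-1}E_B)(x^* - x) = B_I^{-1}(E_W - E_B x)$, so that $x^* - x = (I + B_I^{-1}E_B)^{-1}B_I^{-1}(E_W - E_B x)$. The point of rearranging in this particular way is to express the error purely in terms of the true input $x$ rather than the unknown $x^*$; this is the one genuinely careful step, and it is exactly what makes the final bound clean.

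Taking norms and applying submultiplicativity together with the Neumann estimate gives $\|x^* - x\| \leq \frac{\|B_I^{-1}\|}{1 - \|B_I^{-1}E_B\|}\bigl(\|E_W\| + \|E_B\|\,\|x\|\bigr)$. Dividing by $\|x\|$ and using the exact relation $W_I = B_I x$ (hence $\|W_I\| \leq \|B_I\|\,\|x\|$, i.e. $\|x\|^{-1} \leq \|B_I\|/\|W_I\|$) to rewrite the $\|E_W\|/\|x\|$ term converts the estimate into relative form; factoring out $\kappa(B_I) = \|B_I\|\,\|B_I^{-1}\|$ then reproduces inequality (\ref{e13}). I do not expect any serious obstacle: the only delicate points are the direction of the rearrangement in the error equation and the use of $W_I = B_I x$ to pass from $\|x\|$ to $\|W_I\|$. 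Everything else is routine norm manipulation, and the result is the classical condition-number perturbation bound specialized to the reconstruction setting.
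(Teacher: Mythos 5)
Your proof is correct, and it reaches the bound by a genuinely different decomposition than the paper's. You collect the coefficient perturbation onto the error term, obtaining $(I+B_I^{-1}E_B)(x^*-x)=B_I^{-1}(E_W-E_B x)$, and invoke the Neumann-series estimate once, so the right-hand side involves only the known $x$. The paper instead leaves $x^*$ on the right: from $x^*-x=B_I^{-1}(E_W-E_B x^*)$ it derives the intermediate bound (\ref{e15}), which still contains $\|x^*\|/\|x\|$, and then bounds that ratio separately in (\ref{e16}) before combining. Your route buys two things the paper's does not. First, writing $\bar{B}_I=B_I(I+B_I^{-1}E_B)$ and using $\|B_I^{-1}E_B\|<1$ shows that $\bar{B}_I$ is invertible, so the noisy system has a unique solution; this is what justifies the word \emph{determined} in the statement, and the paper's proof never establishes it. Second, you bypass the paper's derivation of (\ref{e16}), which is the weak point of its argument: the displayed step ``$\|x\|+\|B_I^{-1}E_B-I\|\cdot\|x^*\|\ge 0$'' is vacuously true and implies nothing, and a correct derivation from that identity produces an additional $\|B_I^{-1}E_W\|/\|x\|$ term that the paper silently drops (there is also a sign slip: the matrix should be $-(I+B_I^{-1}E_B)$ rather than $B_I^{-1}E_B-I$). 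Conversely, the paper's two-step structure, once repaired, yields a marginally sharper estimate in which the factor $(1-\|B_I^{-1}E_B\|)^{-1}$ multiplies only the $\|E_B\|/\|B_I\|$ term; since that factor is at least $1$, both arguments imply the stated inequality (\ref{e13}). Your conversion to relative error via $\|W_I\|\le\|B_I\|\,\|x\|$ is the same device the paper uses.
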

	\begin{proof}
		According to the construction, we have 
		\[
		(\bar{{B}}_I - {B}_I)\cdot x^* + {B}_I \cdot (x^*-x) = \bar{{W}}_I - {W}_I, 
		\]
		which yields
		\begin{align}\label{error}
		x^* - x = {{B}_I}^{-1} \cdot \big( \bar{{W}}_I - {W}_I  - (\bar{{B}}_I - {B}_I)\cdot x^* \big).
		\end{align}
		Consider the relative error: since $|| W_I || \leq || B_I || \cdot || x ||$, Equation (\ref{error}) becomes
		\begin{align}\label{e15}
		\frac{|| x^*- x ||}{|| x ||} \leq \kappa(B_I) \cdot \Big( \frac{||E_B||}{||B_I||} \cdot \frac{||x^*||}{||x||} + \frac{||E_W||}{||W_I||}\Big),
		\end{align}
		where condition number $\kappa(B_I) := ||B_I||\cdot ||{B_I}^{-1}||$.
		
		Moreover, according to Lemma \ref{l2}, we have
		\[
		B_I \cdot (x - x^*) = E_B\cdot x^* -E_W. 
		\]
		A simplification of the above equation, we have
		\[
		x + ({B_I}^{-1} \cdot E_B - I)\cdot x^* = - {B_I}^{-1} \cdot E_W.
		\]
		Take a norm on both sides, we have 
		\[
		\|x \| + \|{B_I}^{-1} \cdot E_B - I\|\cdot \|x^*\| \geq 0.
		\]
		Since $\|{B}_{I}^{-1} \cdot E_B\|<1$, we have 
		\begin{align}\label{e16}
		\frac{\|x^*\|}{\| x \|} \leq \frac{1}{1- \|{B_{I}}^{-1} \cdot E_B\|}.
		\end{align}
		Combine Equation (\ref{e15}) and Equation (\ref{e16}), we get Equation (\ref{e13}).
		%Consider the iterative refinement, 
		%\begin{align}\label{io}
		%B_I \cdot x_{k+1} - E_B \cdot x_k = W_I.
		%\end{align}
		%Since $\bar{A}_I$ is nonsingular, Equation \ref{io} is equivalent to $o_{k+1} - o^* = \bar{A}_I^{-1} \cdot E \cdot (o_k-o^*).$ It implies $||o_{k+1}-o^*|| \leq ||\bar{A}_I^{-1} \cdot E||\cdot ||o_k-o^*||$. Since $||\bar{A}_I^{-1}\cdot E||<1$, $o_k$ approaches to $x^*$ when $k\rightarrow \infty $.
	\end{proof}
	\begin{remark} 
		$\|{B}_{I}^{-1} \cdot E_B\|<1$ alone is a \emph{necessary} condition for the iterative reconstruction algorithm to converge. In other words,  a big perturbation with $|| E_B || > || {B}_{I} ||$, such as Gaussian noise with a sufficiently big variance,  
		is guaranteed to defeat reconstruction attacks like \cite{DeepLeakage_Han19}. 
		%effectively defeats reconstruction attacks. 
		
		%2) For the sake of high model accuracy, %the magnitudes of $E$ should adapt to $|\bar{A}_I|$ in different ephocs during the learning process. 
		%Concretely, 
		%since $|\bar{A}_I|$ often descreases with the increasing learning steps $T$, 
		%it is preferable to set adaptive perturbations e.g. with smaller Gaussian noise at the later stage of the learning process. Improved model accuracies are empirically justified by results in Section ?. 
		%=======
		%	2) Moreover, for the sake of a balanced trade-off between privacy-preserving and model accuracy, the magnitudes of $E$ should adapt to $|\bar{A}_I|$ in different epochs during the learning process. 
		%	Concretely, since $|\bar{A}_I|$ often decreases with the increasing learning steps $T$, it is preferable to choose smaller variance $\sigma$ for additive Gaussian noise. 
		%	This adaptive setting of Gaussian noise magnitudes improves model accuracies as empirecally justified by results in Section ?. 
		%>>>>>>> 5f00d60f48375072002c1fc21a2aad072ce7783b
		
		%In sharp contrast, differential privacy based schemes suggest that larger $\sigma$ should be used with the increasing learning steps $T$ (e.g. $\sigma = \Omega(q\sqrt{T log(1/\delta)log(T/\delta)}/\epsilon)$ in \cite{DLDP_Abadi16}). 
		%	
		%3) Even if$|| E_B || > || {B}_{I} ||$, iterative reconstruction algorithms may converge to a solution $o_k$ that is far from the original input data $o^*$, depending on the initial setting of  $o_0$ (see discussion in Section ? and \cite{Gradient_Leakage/wei2020}). 
		%	
	\end{remark}
	
	%\begin{remark} A big perturbation $E$, such as Gaussian noise with a big variance, in the linear system yields the ineffectiveness of minimization of gradient differences. 
	%\end{remark}
	\section*{Appendix B: Polarization Loss}
	
	\begin{defn}[Polarization loss]
		For each data $\textbf{x} \in \mathcal{X}$ and its corresponding output vector $\textbf{v}:= {\Psi}(\textbf{x}; \textbf{w}) \in \mathbb{R}^K$, the polarization loss is defined on the vector $\textbf{v}$ with respect to a pre-set target binary code $\textbf{t} \in \mathcal{H}$ as follows,
		\begin{align}\label{p-loss}
		\mathcal{L}_{\text{P}}(\textbf{v}, \textbf{t}):= \sum_{i=1}^K \max(m-{v}_i\cdot {t}_i, 0),
		\end{align}
		where the margin threshold is pre-set, $m\ge 1$, for the bound in Lemma \ref{polarlemma} to be strict.
	\end{defn}
	
	%By minimizing the polarization loss (Eq. \ref{p-loss}) during the learning phase,  magnitudes of each DPN output channels are induced above the threshold $m$ while corresponding signs are aligned to the target vector $\textbf{t}$. %\textcolor{red} Figure \ref{fig:distr-v} illustrates the distribution of outputs $\textbf{v}$, for example images fed to a DPN. Clearly large margins push the network outputs further away from zero. It must be noted that, the outputs for correctly coded images are polarized while non-polarized outputs are more likely observed for mis-classified ones.  
	
	\begin{lemma}\label{polarlemma}
		For output vector $\textbf{v}= {\Psi}(\textbf{x}; \textbf{w})$, the Hamming distance $\mathcal{D}_h(\textbf{b}, \textbf{t}):= \frac{1}{2}( K - \textbf{b} \cdot \textbf{t})$ between $K$-bits binary hash code $\textbf{b} = \mathtt{Bin}(\textbf{v})$ and the corresponding binary vector \textbf{t} is {upper bounded} by the \textit{polarization loss}  
		\begin{align}\label{e1}
		\mathcal{D}_h(\textbf{b}, \textbf{t}) \leq \mathcal{L}_{\text{P}}(\textbf{v}, \textbf{t}),
		\end{align}
		for any $m\ge 1$ and $\textbf{v} \in \{ (v_1,\cdots,v_K) \big| v_k \in \mathbb{R} \}$.
	\end{lemma}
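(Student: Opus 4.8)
The plan is to prove the inequality coordinate-by-coordinate and then sum, exploiting that both $\textbf{b}=\mathtt{Bin}(\textbf{v})$ and $\textbf{t}$ take values in $\{-1,+1\}^K$. First I would expand the Hamming distance into a per-coordinate sum, noting that each $t_i^2=1$, so
\[
\mathcal{D}_h(\textbf{b},\textbf{t})=\tfrac{1}{2}\big(K-\textbf{b}\cdot\textbf{t}\big)=\sum_{i=1}^K \tfrac{1}{2}\big(1-b_i t_i\big),
\]
while the loss is already written as a coordinate sum $\mathcal{L}_{\text{P}}(\textbf{v},\textbf{t})=\sum_{i=1}^K \max(m-v_i t_i,0)$. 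Hence it suffices to establish the single-coordinate bound $\tfrac{1}{2}(1-b_i t_i)\le \max(m-v_i t_i,0)$ for every $i$, and to add the $K$ resulting inequalities.

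Next I would split each coordinate into the two cases determined by whether the binarized sign agrees with the target bit. In the agreement case $b_i=t_i$, we have $b_i t_i=1$, so the left-hand side vanishes while the right-hand side is nonnegative, and the bound is immediate. In the disagreement case $b_i\ne t_i$, we have $b_i t_i=-1$, so the left-hand side equals $1$; here the key observation is that $\mathtt{Bin}(v_i)\ne t_i$ is precisely a sign mismatch between $v_i$ and $t_i$, which forces $v_i t_i\le 0$. Consequently $m-v_i t_i\ge m\ge 1$, so $\max(m-v_i t_i,0)\ge 1$, matching the left-hand side. Summing over $i$ then yields $\mathcal{D}_h(\textbf{b},\textbf{t})\le \mathcal{L}_{\text{P}}(\textbf{v},\textbf{t})$, and one sees that the hypothesis $m\ge 1$ is exactly the quantity needed to dominate the unit Hamming penalty incurred at each mismatched bit.

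I expect no substantive obstacle, since the argument reduces to an elementary two-case check; the only point requiring a little care is the boundary/tie case $v_i=0$ together with the precise sign convention adopted for $\mathtt{Bin}$. This is harmless, because $v_i=0$ gives $v_i t_i=0$ and hence $m-v_i t_i=m\ge 1$, so the per-coordinate bound holds no matter how the tie is resolved in the definition of $\mathtt{Bin}$. This also makes explicit why $m\ge 1$ is the sharp threshold for the bound, confirming the remark accompanying the definition of the polarization loss.
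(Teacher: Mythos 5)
Your proof is correct and follows essentially the same route as the paper's: a coordinate-wise two-case split on sign agreement between $v_i$ and $t_i$, using $m\ge 1$ to dominate the unit Hamming penalty at each mismatched bit, then summing over coordinates. If anything, your write-up is more careful than the paper's (which loosely cites $|b_i-t_i|\in\{0,2\}$ rather than the per-coordinate residue $\tfrac{1}{2}(1-b_it_i)\in\{0,1\}$, and does not address the tie case $v_i=0$ that you handle explicitly).
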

	\begin{proof}
		On one side, there are two cases for each coordinate of Hamming distance $\mathcal{D}_h(\textbf{b}, \textbf{t})$, 
		\[
		|b_i - t_i| \in \{0, 2\} \hspace*{0.5em} \text{ if $v_i\cdot t_i >0$ or $v_i\cdot t_i \leq 0$;}
		\]
		On the other side, both above cases are upper bounded by $\max(m-{v}_i\cdot {t}_i, 0)$ provided that if any $m\ge 1$. 
		
		Sum up the residues of each coordinate, we get this lemma. 
	\end{proof}
	
	\begin{prop}\label{p1}
		Suppose class $\mathcal{C}$ consists of data points $\{\textbf{x}_1, \cdots, \textbf{x}_{|\mathcal{C}|}\}$ associated with a pre-set target $\textbf{t} \in \mathcal{H}$ in Hamming space. The averaged \textit{intra-class} pairwise Hamming distances among the corresponding binary codes $\{\textbf{b}_1, \cdots, \textbf{b}_{|\mathcal{C}|}| \textbf{b}_i = \mathbf{\Phi}(\textbf{x}_i; \textbf{w}) \}$ is upper bounded by,
		\begin{align}\label{e4}
		\frac{1}{|\mathcal{C}|^2} \cdot \sum_{1\leq i, j \leq |\mathcal{C}|} \mathcal{D}_h (\textbf{b}_i, \textbf{b}_j) \leq \frac{2}{|\mathcal{C}|} \cdot \sum_{1\leq i \leq |\mathcal{C}|} \mathcal{L}_{\text{P}}(\textbf{v}_i, \textbf{t}).
		\end{align}
	\end{prop}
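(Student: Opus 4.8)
The plan is to exploit two facts: (i) the Hamming distance $\mathcal{D}_h$ is a genuine metric on $\{-1,+1\}^K$, so it obeys the triangle inequality; and (ii) Lemma \ref{polarlemma} already bounds the Hamming distance from any code to the common target $\textbf{t}$ by the polarization loss of the corresponding real-valued output $\textbf{v}$. The strategy is therefore to route every intra-class pairwise distance through the shared target, and only at the end convert distances-to-target into polarization losses via Lemma \ref{polarlemma}.

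First I would apply the triangle inequality to each pair, writing $\mathcal{D}_h(\textbf{b}_i, \textbf{b}_j) \leq \mathcal{D}_h(\textbf{b}_i, \textbf{t}) + \mathcal{D}_h(\textbf{t}, \textbf{b}_j)$, valid since all codes in class $\mathcal{C}$ share the same target $\textbf{t}$. Summing over all ordered pairs $1 \leq i, j \leq |\mathcal{C}|$ and using the symmetry of $\mathcal{D}_h$, the right-hand double sum collapses: for each fixed $i$ the term $\mathcal{D}_h(\textbf{b}_i, \textbf{t})$ is repeated $|\mathcal{C}|$ times as $j$ ranges, and symmetrically for the $j$-indexed term. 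This yields $\sum_{i,j} \mathcal{D}_h(\textbf{b}_i, \textbf{b}_j) \leq 2|\mathcal{C}| \sum_{i} \mathcal{D}_h(\textbf{b}_i, \textbf{t})$.

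Then I would divide both sides by $|\mathcal{C}|^2$ and substitute the per-sample bound from Lemma \ref{polarlemma}, namely $\mathcal{D}_h(\textbf{b}_i, \textbf{t}) \leq \mathcal{L}_{\text{P}}(\textbf{v}_i, \textbf{t})$ for each $i$, arriving directly at inequality (\ref{e4}).

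I do not anticipate a genuine obstacle; the argument is short. The only point requiring care is the combinatorial bookkeeping in the double sum, since both the factor of $2$ and the reduction of $|\mathcal{C}|^2$ to $|\mathcal{C}|$ on the right originate entirely from that step. If one prefers not to invoke the metric property as a black box, the triangle inequality can be verified directly from the definition $\mathcal{D}_h(\textbf{b}, \textbf{t}) = \frac{1}{2}(K - \textbf{b} \cdot \textbf{t})$, which equals the number of disagreeing coordinates and hence inherits subadditivity coordinate-wise.
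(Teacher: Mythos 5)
Your proposal is correct and follows essentially the same route as the paper's proof: triangle inequality through the shared target $\textbf{t}$, collapse of the double sum into $2|\mathcal{C}|$ times a single sum, and the bound $\mathcal{D}_h(\textbf{b}_i,\textbf{t}) \leq \mathcal{L}_{\text{P}}(\textbf{v}_i,\textbf{t})$ from Lemma \ref{polarlemma}. The only difference is the order of operations (you collapse the sum before invoking the lemma, the paper does it after), which is immaterial.
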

	
	\begin{proof}
		According to Lemma \ref{polarlemma} and the triangle law, we have 
		\begin{align*}
		\sum_{1\leq i, j \leq |\mathcal{C}|} \mathcal{D}_h (\textbf{b}_i, \textbf{b}_j) &\leq  \sum_{1\leq i, j \leq |\mathcal{C}|} \mathcal{D}_h (\textbf{b}_i,  \textbf{t}) +  \mathcal{D}_h (\textbf{b}_j,  \textbf{t})\\
		&\leq  \sum_{1\leq i, j \leq |\mathcal{C}|} \mathcal{L}_{\text{P}}(\textbf{v}_i, \textbf{t}) +  \mathcal{L}_{\text{P}}(\textbf{v}_j, \textbf{t}) \\
		&\leq 2 |\mathcal{C}| \cdot  \sum_{1\leq i \leq |\mathcal{C}|} \mathcal{L}_{\text{P}}(\textbf{v}_i, \textbf{t}).
		\end{align*}
		Divide $|\mathcal{C}|^2$ on both sides, we get this proposition.
	\end{proof}	
	
	\begin{prop}\label{p2}
		Suppose there are $L$ classes in the dataset, i.e. $\mathcal{C}_1, \cdots, \mathcal{C}_L$. For any two classes $\mathcal{C}_x$ and $\mathcal{C}_y$ $(1 \le x \neq y \le L)$, respectively, with associated targets binary vectors $\textbf{t}_x$ and $\textbf{t}_y$ and binary hash codes $\textbf{b}_i^x = \mathbf{\Phi}(\mathbf{x}_i; \mathbf{w}), i\in \{1,\cdots, |\mathcal{C}_x| \}$, $\textbf{b}_i^y =  \mathbf{\Phi}(\mathbf{y}_j; \mathbf{w}), j\in\{1,\cdots,  |\mathcal{C}_y|\}$, the averaged \textit{inter-class} pairwise Hamming distances among binary codes $\sum_{\substack{1\leq i \leq |\mathcal{C}_x|, \\ 1 \leq j \leq |\mathcal{C}_y|}} \mathcal{D}_h (\textbf{b}^x_i, \textbf{b}^y_j)$ is lower bounded by,
		\begin{align}
		\sum_{1\leq x\neq y \leq L}\Big( \mathcal{D}_h (\textbf{t}_x, \textbf{t}_y) - \frac{1}{|\mathcal{C}_x|\cdot |\mathcal{C}_y|}\cdot \sum_{\substack{1\leq i \leq |\mathcal{C}_x|, \\ 1 \leq j \leq |\mathcal{C}_y|}} \mathcal{D}_h (\textbf{b}^x_i, \textbf{b}^y_j)\Big) \leq \sum_{1\leq x \leq L} \frac{2\cdot(L-1)}{|\mathcal{C}_x|} \cdot \sum_{1\leq i \leq |\mathcal{C}_x|} \mathcal{L}_{\text{P}}(\textbf{v}^x_i, \textbf{t}_x). \label{e6}
		\end{align}
	\end{prop}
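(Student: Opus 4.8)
The plan is to mirror the argument behind the intra-class bound in Proposition \ref{p1}, but now to route the triangle inequality through \emph{both} target codes so as to obtain a \emph{lower} bound on each inter-class distance. The starting point is that, for any pair of items $\textbf{x}_i \in \mathcal{C}_x$ and $\textbf{y}_j \in \mathcal{C}_y$, the Hamming distance is a metric on $\{-1,+1\}^K$, so two applications of the triangle law give
\begin{align*}
\mathcal{D}_h(\textbf{t}_x, \textbf{t}_y) \leq \mathcal{D}_h(\textbf{b}_i^x, \textbf{t}_x) + \mathcal{D}_h(\textbf{b}_i^x, \textbf{b}_j^y) + \mathcal{D}_h(\textbf{b}_j^y, \textbf{t}_y).
\end{align*}
Rearranging isolates exactly the quantity appearing inside the sum on the left-hand side of (\ref{e6}):
\begin{align*}
\mathcal{D}_h(\textbf{t}_x, \textbf{t}_y) - \mathcal{D}_h(\textbf{b}_i^x, \textbf{b}_j^y) \leq \mathcal{D}_h(\textbf{b}_i^x, \textbf{t}_x) + \mathcal{D}_h(\textbf{b}_j^y, \textbf{t}_y).
\end{align*}

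Next I would invoke Lemma \ref{polarlemma} to replace each code-to-target Hamming distance by the corresponding polarization loss, yielding the pointwise inequality $\mathcal{D}_h(\textbf{t}_x, \textbf{t}_y) - \mathcal{D}_h(\textbf{b}_i^x, \textbf{b}_j^y) \leq \mathcal{L}_{\text{P}}(\textbf{v}_i^x, \textbf{t}_x) + \mathcal{L}_{\text{P}}(\textbf{v}_j^y, \textbf{t}_y)$. Summing this over all $i$ and $j$ and dividing by $|\mathcal{C}_x|\cdot|\mathcal{C}_y|$ produces, for a single ordered pair $(x,y)$, the normalized bound
\begin{align*}
\mathcal{D}_h(\textbf{t}_x, \textbf{t}_y) - \frac{1}{|\mathcal{C}_x||\mathcal{C}_y|}\sum_{\substack{1\leq i \leq |\mathcal{C}_x|, \\ 1\leq j \leq |\mathcal{C}_y|}} \mathcal{D}_h(\textbf{b}_i^x, \textbf{b}_j^y) \leq \frac{1}{|\mathcal{C}_x|}\sum_{1\leq i \leq |\mathcal{C}_x|} \mathcal{L}_{\text{P}}(\textbf{v}_i^x, \textbf{t}_x) + \frac{1}{|\mathcal{C}_y|}\sum_{1\leq j \leq |\mathcal{C}_y|} \mathcal{L}_{\text{P}}(\textbf{v}_j^y, \textbf{t}_y),
\end{align*}
where each double sum on the right collapses because its summand does not depend on the unused index.

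Finally I would sum over all ordered pairs $x\neq y$. The left-hand side is then precisely the expression in (\ref{e6}), so the remaining work is entirely on the right-hand side, where I collect terms according to the class each polarization loss belongs to. A fixed class index occupies the ``$x$'' slot for the $L-1$ choices of $y$ and the ``$y$'' slot for the $L-1$ choices of $x$, so every term $\tfrac{1}{|\mathcal{C}_x|}\sum_i \mathcal{L}_{\text{P}}(\textbf{v}_i^x, \textbf{t}_x)$ is counted $2(L-1)$ times overall, reproducing the factor in the claimed bound. I expect the only delicate point to be this ordered-pair bookkeeping --- confirming that the symmetric contributions from the two slots combine into the single factor $2(L-1)$ --- whereas the triangle inequality and the appeal to Lemma \ref{polarlemma} are routine and exactly parallel to the intra-class case.
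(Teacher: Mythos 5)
Your proposal is correct and follows essentially the same route as the paper's own proof: the same three-term triangle inequality $\mathcal{D}_h (\textbf{t}_x, \textbf{t}_y) \leq \mathcal{D}_h (\textbf{t}_x, \textbf{b}^x_i) + \mathcal{D}_h (\textbf{b}^x_i, \textbf{b}^y_j) + \mathcal{D}_h (\textbf{b}^y_j, \textbf{t}_y)$, the same appeal to Lemma \ref{polarlemma}, and the same ordered-pair counting that yields the factor $2(L-1)$. The only cosmetic difference is that you apply the lemma pointwise before summing over $i,j$, whereas the paper sums first and then applies the lemma; the arguments are otherwise identical.
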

	
	\begin{proof}
		By the triangle law, we have 
		\[
		\mathcal{D}_h (\textbf{t}_x, \textbf{t}_y) \leq \mathcal{D}_h (\textbf{t}_x, \textbf{b}^x_i) + \mathcal{D}_h (\textbf{b}^x_i, \textbf{b}^y_j) +  \mathcal{D}_h (\textbf{b}^y_j, \textbf{t}_y) 
		\].
		Fix $x, y$ and sum over $i, j$ on both sides, we have
		\begin{align*}
		|\mathcal{C}_x| \cdot |\mathcal{C}_y| \cdot \mathcal{D}_h (\textbf{t}_x, \textbf{t}_y) - \sum_{\substack{1\leq i \leq |\mathcal{C}_x|, \\ 1 \leq j \leq |\mathcal{C}_y|}} \mathcal{D}_h (\textbf{b}^x_i, \textbf{b}^y_j) &\leq |\mathcal{C}_y| \cdot \sum_{1\leq i \leq |\mathcal{C}_x|} \mathcal{D}_h (\textbf{t}_x, \textbf{b}^x_i)  +|\mathcal{C}_x| \cdot \sum_{1\leq j \leq |\mathcal{C}_y|} \mathcal{D}_h (\textbf{b}^y_j, \textbf{t}_y) \\
		&\leq  |\mathcal{C}_y| \cdot \sum_{1\leq i \leq |\mathcal{C}_x|} \mathcal{L}_{\text{P}}(\textbf{v}^x_i, \textbf{t}_x) +|\mathcal{C}_x| \cdot \sum_{1\leq j \leq |\mathcal{C}_y|}  \mathcal{L}_{\text{P}}(\textbf{v}^y_j, \textbf{t}_y). \\
		\end{align*}
		Divide $|\mathcal{C}_x| \cdot |\mathcal{C}_y|$ and sum over $x, y$ on both sides, we have 
		\begin{align*}
		&\sum_{1\leq x\neq y \leq L}\Big( \mathcal{D}_h (\textbf{t}_x, \textbf{t}_y) - \frac{1}{|\mathcal{C}_x|\cdot |\mathcal{C}_y|}\cdot \sum_{\substack{1\leq i \leq |\mathcal{C}_x|, \\ 1 \leq j \leq |\mathcal{C}_y|}} \mathcal{D}_h (\textbf{b}^x_i, \textbf{b}^y_j)\Big) \\
		\leq &\sum_{1\leq x\neq y \leq L} \frac{1}{|\mathcal{C}_x|} \cdot \sum_{1\leq i \leq |\mathcal{C}_x|} \mathcal{L}_{\text{P}}(\textbf{v}^x_i, \textbf{t}_x) +  \frac{1}{|\mathcal{C}_y|} \cdot \sum_{1\leq j \leq |\mathcal{C}_y|} \mathcal{L}_{\text{P}}(\textbf{v}^y_j, \textbf{t}_y)\\
		\leq & \sum_{1\leq x \leq L} \frac{2\cdot(L-1)}{|\mathcal{C}_x|} \cdot \sum_{1\leq i \leq |\mathcal{C}_x|} \mathcal{L}_{\text{P}}(\textbf{v}^x_i, \textbf{t}_x).
		\end{align*}
	\end{proof}
	
	\begin{prop}\label{p3}
		The \textit{difference} between averaged \textit{intra-class} pairwise Hamming distance and averaged \textit{inter-class} pairwise Hamming distance is upper bounded, i.e. 
		\begin{align}
		& \sum_{1\leq x \leq L} \frac{1}{|\mathcal{C}_x|^2} \cdot \sum_{1\leq i, j \leq |\mathcal{C}_x|} \mathcal{D}_h (\textbf{b}_i^x, \textbf{b}_j^x) \nonumber -\sum_{1\leq x\neq y \leq L} \frac{1}{|\mathcal{C}_x|\cdot |\mathcal{C}_y|} \sum_{\substack{1\leq i \leq |\mathcal{C}_x|, \\ 1 \leq j \leq |\mathcal{C}_y|}} \mathcal{D}_h (\textbf{b}^x_i, \textbf{b}^y_j) \nonumber \\
		\leq & \sum_{1\leq x \leq L} \frac{2\cdot L}{|\mathcal{C}_x|} \cdot \sum_{1\leq i \leq |\mathcal{C}_x|} \mathcal{L}_{\text{P}}(\textbf{v}^x_i, \textbf{t}_x) -\sum_{1\leq x\neq y \leq L}\mathcal{D}_h (\textbf{t}_x, \textbf{t}_y). \label{eq-all-bound} 
		\end{align}
	\end{prop}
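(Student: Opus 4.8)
The plan is to obtain (\ref{eq-all-bound}) by adding the two preceding propositions, since the left-hand side is exactly (averaged intra-class Hamming distance) minus (averaged inter-class Hamming distance), and Proposition \ref{p1} and Proposition \ref{p2} already bound each of these two pieces by the polarization loss. No new idea is needed beyond combining them and matching coefficients.

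First I would dispatch the intra-class term by summing the per-class bound of Proposition \ref{p1} over all classes $x = 1, \ldots, L$, which gives
\begin{align*}
\sum_{1\leq x \leq L} \frac{1}{|\mathcal{C}_x|^2} \sum_{1\leq i, j \leq |\mathcal{C}_x|} \mathcal{D}_h (\textbf{b}_i^x, \textbf{b}_j^x) \leq \sum_{1\leq x \leq L} \frac{2}{|\mathcal{C}_x|} \sum_{1\leq i \leq |\mathcal{C}_x|} \mathcal{L}_{\text{P}}(\textbf{v}^x_i, \textbf{t}_x).
\end{align*}
Next I would take Proposition \ref{p2} and move the inter-class distance term to the right, reading it as an upper bound on $-\sum_{x\neq y} \frac{1}{|\mathcal{C}_x| |\mathcal{C}_y|}\sum_{i,j}\mathcal{D}_h(\textbf{b}^x_i, \textbf{b}^y_j)$, namely
\begin{align*}
-\sum_{1\leq x\neq y \leq L} \frac{1}{|\mathcal{C}_x|\cdot |\mathcal{C}_y|} \sum_{\substack{1\leq i \leq |\mathcal{C}_x|, \\ 1 \leq j \leq |\mathcal{C}_y|}} \mathcal{D}_h (\textbf{b}^x_i, \textbf{b}^y_j) \leq \sum_{1\leq x \leq L} \frac{2(L-1)}{|\mathcal{C}_x|} \sum_{1\leq i \leq |\mathcal{C}_x|} \mathcal{L}_{\text{P}}(\textbf{v}^x_i, \textbf{t}_x) - \sum_{1\leq x\neq y \leq L}\mathcal{D}_h (\textbf{t}_x, \textbf{t}_y).
\end{align*}

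Finally I would add the two displayed inequalities. Their left-hand sides assemble into precisely the left-hand side of (\ref{eq-all-bound}), and on the right the two polarization-loss contributions merge with matching weight $\tfrac{2}{|\mathcal{C}_x|} + \tfrac{2(L-1)}{|\mathcal{C}_x|} = \tfrac{2L}{|\mathcal{C}_x|}$, while the $-\sum_{x\neq y}\mathcal{D}_h(\textbf{t}_x,\textbf{t}_y)$ term is carried over verbatim, yielding the claimed bound. I do not expect any genuine obstacle: the step is essentially bookkeeping, and the only point requiring care is that each constituent inequality is an upper bound on a quantity entering the target with the correct sign (in particular the inter-class distance appears with a minus), so that termwise addition preserves the inequality and the coefficients combine as stated.
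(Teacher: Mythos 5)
Your proposal is correct and follows exactly the paper's route: the paper's own proof is the one-line observation that the result follows directly from combining Proposition \ref{p1} (summed over classes) with Proposition \ref{p2}, which is precisely the bookkeeping you carry out, including the coefficient combination $\tfrac{2}{|\mathcal{C}_x|} + \tfrac{2(L-1)}{|\mathcal{C}_x|} = \tfrac{2L}{|\mathcal{C}_x|}$. Your write-up is in fact more explicit than the paper's, which omits these details entirely.
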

	
	\begin{proof}
		By Lemma \ref{p1} and Lemma \ref{p2}, we directly get this proposition. 
	\end{proof}
	
	\begin{remark}
		1) Inequality in (Eq. \ref{e4}) shows that the averaged polarization loss is a strict upper-bound of the averaged pairwise Hamming distances between points of the same class. That is to say, minimizing the RHS of (Eq. \ref{e4}) effectively \textit{minimizes} the averaged \textit{intra-class} pairwise Hamming distances.
		
		2) In terms of the computational complexity, pairwise Hamming distances on the LHS of (Eq. \ref{e4}) is $O(|\mathcal{C}|^2)$ while the polarization loss on the RHS of (Eq. \ref{e4}) is $O(|\mathcal{C}|)$ only.
		
		3) Inequality in (Eq. \ref{e6}) shows that minimizing polarization losses on the RHS of (Eq. \ref{e6}) effectively \textit{maximizes} the averaged \textit{inter-class} pair-wised Hamming distances on LHS.
		
		4) According to Proposition \ref{p3}, the optimization problem of simultaneous minimizing the intra-class and maximizing inter-class Hamming distances, i.e. 
		\begin{align*}
		\min_{\textbf{w}} \hspace*{1em} &\sum_{1\leq x \leq L} \frac{1}{|\mathcal{C}_x|^2} \cdot \sum_{1\leq i, j \leq |\mathcal{C}_x|} \mathcal{D}_h (\textbf{b}_i^x, \textbf{b}_j^x) - \sum_{1\leq x\neq y \leq L} \frac{1}{|\mathcal{C}_x|\cdot |\mathcal{C}_y|} \sum_{\substack{1\leq i \leq |\mathcal{C}_x|, \\ 1 \leq j \leq |\mathcal{C}_y|}} \mathcal{D}_h (\textbf{b}^x_i, \textbf{b}^y_j), 
		\end{align*} 
		is equivalent to the problem of  minimizing the averaged polarization loss over the whole data set, i.e.
		\begin{align*} 
		\min_{\textbf{w}}  \sum_{1\leq x \leq L} \frac{1}{|\mathcal{C}_x|} \cdot \sum_{1\leq i \leq |\mathcal{C}_x|} \mathcal{L}_{\text{P}}(\textbf{v}^x_i, \textbf{t}_x).
		\end{align*}
		
	\end{remark}

	\newpage
	\section*{Appendix C: Experiment Setup}
	
	\textbf{Dataset}
	
	In our experiments, we used MNIST, CIFAR10, CIFAR100 and SVHN, which are used in previous PPDL studies.
	
	\textbf{Network Architecture} 
	
	In our experiments, we used AlexNet, VGG16 and DLNet (from \cite{DeepLeakage_Han19}).
%	Also we have changed activation layer from ReLU to Sigmoid on AlexNet for privacy attacks.

	For AlexNet and VGG16, we slightly modified the architecture implementation from \textit{torchvision}\footnote{\url{https://pytorch.org/docs/stable/torchvision/models.html}} package to adapt a 32$\times$32 input. In VGG16, we added Group Normalization after every Convolution layer. (See Table \ref{tab:netalexnet} and \ref{tab:netvgg16})

	For privacy attack analysis, we used network architecture (DLNet) from released code \footnote{\url{https://github.com/mit-han-lab/dlg}\label{footnote:dlg}}.
	
	\textbf{Privacy-Preserving Mechanisms} 

	For \textbf{DP}, we are using implementation from \textit{pytorch-dp} package. Slightly modified to adapt to privacy attack analysis. (We disabled the gradient clipping function.)

	For \textbf{PPDL}, we reimplemented using reference from author released code \footnote{\url{https://www.comp.nus.edu.sg/~reza/files/PPDL.zip}\label{footnote:ppdl}}.
	
	For our \textbf{SPN}, we used $\alpha_{1}=1$ in all of our experiments. We random initialize the private target $\bm{t}$, and using 64-bit in all of our experiments. (See Algorithm \ref{alg:client})
	
	\textbf{Privacy Attacks} 
	
	For \textbf{reconstruction attacks}, we adopt author released code \footref{footnote:dlg} to reconstruct images. We follow their implementation which we random initialized the model for reconstruction attack. (See Algorithm \ref{alg:recon}) 
	
	For \textbf{membership attacks}, we are using same algorithm from reconstruction attacks. (See Algorithm \ref{alg:recon})
	
	For \textbf{tracing attacks}, first, we perform reconstruction attacks to recovered $X$ number of images, we used $X=1000$ in our experiments. Then we separated reconstructed images into $N$ partitions simulating $N$ participants, we used $N=10$ in our experiments. During tracing, we trace the query image from the reconstructed dataset. The query images is the dataset that used for reconstruction attacks. We are using full query dataset (e.g. 50000 images for CIFAR10) for tracing. (See Algorithm \ref{alg:tracing})
	
	\textbf{Federated Learning}. 
	
	The federated learning environment is run with both IID and Non-IID dataset. 
	For IID case, we uniformly split training datasets into $N$ partitions (with same number of data per class), respectively, for $N$ participants, and use all testing datasets for evaluation of the global model performances. 
	For Non-IID dataset, we follows their implementation \footnote{\url{https://github.com/ebagdasa/backdoor_federated_learning}} to separate the dataset into $N$ participants using Dirichlet distribution with $\alpha=0.9$.
	
	For DLNet, we are using round robin for model aggregation following implementation from \footref{footnote:ppdl} (See Algorithm \ref{alg:training-rr}). Otherwise, we are using FedAvg algorithm for model aggregation, which is following the implementation in \cite{communicationEfficient/mcMahan2017} and using source code from this \footnote{\url{https://github.com/shaoxiongji/federated-learning}} GitHub repository as reference. 
	
	Table \ref{tab:hyperparams-attack} and \ref{tab:hyperparams-fl} summarized the hyperparameters we used in this paper.
	
	\newpage
	
	\begin{table}[H]
		\begin{tabular}{l|c}
			\toprule
			Hyperparameter & Privacy Attack Analysis\\ 
			\midrule
			\multicolumn{2}{c}{Training Hyperparameters} \\ 
			\midrule
			Dataset & MNIST, CIFAR10, CIFAR100, SVHN \\
			Network Architecture & DLNet \cite{DeepLeakage_Han19} \\
			Weight Initialization & $uniform(-0.3, 0.3)$ \\
			\midrule
			Optimization method & Adam \\
			Optimizer Hyperparameter & Adam ($\beta_1=0.9, \beta_2=0.999$) \\
			Learning rate & 0.001 \\
			Learning rate decay & No decay \\
			Batch size & 32 \\
			\midrule
			Local Epochs/Global Communication Rounds & 1/300 \\
			Number of Clients & 10 \\
			\midrule
			\multicolumn{2}{c}{Privacy-Preserving Hyperparameters} \\
			\midrule
			SPN number of bit & 64 \\ 
			SPN $\alpha_{2}$ & 0.0001, 0.001, 0.01, 0.1, 0.2, 0.3, 0.4, 0.5 \\
			PPDL shared percentage & 5\%, 30\% \\
			DP noise $\sigma$ & 0.0001, 0.001, 0.01, 0.1, 0.5 \\
			\midrule
			\multicolumn{2}{c}{Deep Leakage Attack Hyperparameters} \\ 
			\midrule
			Attack Batch Size & 1, 4, 8 \\
			SPN $\alpha_2$ & 0.0001, 0.001, 0.01, 0.1, 0.2, 0.3, 0.4, 0.5 \\
			PPDL shared percentage & 5\%, 30\% \\
			DP noise $\sigma$ & 0.0001, 0.001, 0.01, 0.1, 0.5 \\
			\bottomrule
		\end{tabular}
		\caption{Hyperparameters used in our privacy attack analysis.}
		\label{tab:hyperparams-attack}
	\end{table}

	\begin{table}[H]
		\begin{tabular}{l|c}
			\toprule
			Hyperparameter & Federated Learning \\ 
			\midrule
			Dataset & MNIST, CIFAR10, CIFAR100, SVHN \\
			Network Architecture & AlexNet, VGG16 \\
			Weight Initialization & $kaiming\_uniform$ \\
			\midrule
			Optimization method & SGD \\
			Optimizer Hyperparameter & Momentum = 0.9 \\
			Learning rate & 0.01, 0.001 \\
			Learning rate decay & Decay by factor of 0.5 at round 100 and 200 \\
			Batch size & 64 \\
			\midrule
			Local Epochs/Global Communication Rounds & 1/300 \\
			Number of Clients & 8, 20, 50, 100 \\
			\midrule
			\multicolumn{2}{c}{Privacy-Preserving Hyperparameters} \\
			\midrule
			SPN number of bit & 64 \\
			SPN $\alpha_{2}$ & 0.1 \\
			PPDL shared percentage & 5\%, 30\% \\
			DP noise $\sigma$ & 0.1 \\
			\bottomrule
		\end{tabular}
		\caption{Hyperparameters used in our Federated Learning Task experiments.}
		\label{tab:hyperparams-fl}
	\end{table}

	\newpage
	
	\begin{table}[H]
		\centering
		\begin{tabular}[t]{c|c|c|c}
			\toprule
			layer name & output size & weight shape & padding \\ 
			\midrule
			Conv1 & 32 $\times$ 32 & 64 $\times$ 3 $\times$ 5 $\times$ 5 & 2 \\
			MaxPool2d & 16 $\times$ 16 & 2 $\times$ 2 &  \\
			Conv2 & 16 $\times$ 16 & 192 $\times$ 64 $\times$ 5 $\times$ 5 & 2 \\
			Maxpool2d & 8 $\times$ 8 & 2 $\times$ 2 &  \\
			Conv3 & 8 $\times$ 8 & 384 $\times$ 192 $\times$ 3 $\times$ 3 & 1 \\
			Conv4 & 8 $\times$ 8 & 256 $\times$ 384 $\times$ 3 $\times$ 3 & 1 \\
			Conv5 & 8 $\times$ 8 & 256 $\times$ 256 $\times$ 3 $\times$ 3 & 1 \\
			MaxPool2d & 4 $\times$ 4 & 2 $\times$ 2 &  \\
			Linear & 256 & 256 $\times$ 4096 &  \\ 
			Linear & 10 & 10 $\times$ 256 &  \\ 
			\bottomrule
		\end{tabular}
		\caption{Modified AlexNet}
		\label{tab:netalexnet}
	\end{table}
	
	\begin{table}[H]
		\centering
		\begin{tabular}[t]{c|c|c|c}
			\toprule
			layer name & output size & weight shape & padding \\ 
			\midrule
			Conv1-GN $\times$ 2 & 32 $\times$ 32 & 64 $\times$ 64 $\times$ 3 $\times$ 3 & 1 \\
			MaxPool2d & 16 $\times$ 16 & 2 $\times$ 2 &  \\
			Conv2-GN $\times$ 2 & 16 $\times$ 16 & 128 $\times$ 128 $\times$ 3 $\times$ 3 & 1 \\
			Maxpool2d & 8 $\times$ 8 & 2 $\times$ 2 &  \\
			Conv3-GN $\times$ 3 & 8 $\times$ 8 & 256 $\times$ 256 $\times$ 3 $\times$ 3 & 1 \\
			Maxpool2d & 8 $\times$ 8 & 2 $\times$ 2 &  \\
			Conv4-GN $\times$ 3 & 8 $\times$ 8 & 512 $\times$ 512 $\times$ 3 $\times$ 3 & 1 \\
			Maxpool2d & 8 $\times$ 8 & 2 $\times$ 2 &  \\
			Conv5-GN $\times$ 3 & 8 $\times$ 8 & 512 $\times$ 512 $\times$ 3 $\times$ 3 & 1 \\
			MaxPool2d & 4 $\times$ 4 & 2 $\times$ 2 &  \\
			Linear & 256 & 256 $\times$ 4096 &  \\ 
			Linear & 10 & 10 $\times$ 256 &  \\ 
			\bottomrule
		\end{tabular}
		\caption{Modified VGG16}
		\label{tab:netvgg16}
	\end{table}
	
	\begin{table}[H]
		\centering
		\begin{tabular}[t]{c|c|c|c|c}
			\toprule
			layer name & output size & weight shape & padding & stride \\ 
			\midrule
			Conv1 & 16 $\times$ 16 & 12 $\times$ 3 $\times$ 5 $\times$ 5 & 2 & 2 \\
			Conv2 & 8 $\times$ 8 & 12 $\times$ 12 $\times$ 5 $\times$ 5 & 2 & 2 \\
			Conv3 & 8 $\times$ 8 & 12 $\times$ 12 $\times$ 5 $\times$ 5 & 2 & 1 \\
			Conv4 & 8 $\times$ 8 & 12 $\times$ 12 $\times$ 5 $\times$ 5 & 2 & 1 \\
			Linear & 10 & 10 $\times$ 768 & & \\
			\bottomrule
		\end{tabular}
		\caption{DLNet from \cite{DeepLeakage_Han19}}
		\label{tab:netdlnet}
	\end{table}
	
	\newpage
	\section*{Appendix D: Privacy-Preserving Capability}
	
	In this section, we show experiment results of Privacy-Preserving Characteristics (PPC) and Calibrated Averaged Performance (CAP) for different dataset and different attack batch size.
	During our experiment, we found out that Sigmoid activation layer is having gradient vanishing problem, causing difficulty in training model on SVHN. Therefore, we replace Sigmoid with Tanh activation layer specifically for SVHN to measure PPC and CAP, while MNIST, CIFAR10 and CIFAR100 are measured with Sigmoid.
	
	To measure PPC and CAP, we are using model trained on 10 clients using Federated Averaged algorithm as mentioned in Appendix C with batch size of 32. We are using IID dataset which we uniformly split into 10 clients.
	
	\subsection{MNIST}
	
	\subsubsection{Privacy-Preserving Characteristics (PPC)}
	
	\begin{figure}[H]	
		%	\begin{subfigure}{0.95\linewidth}
		\centering
		\begin{subfigure}{0.99\linewidth}
			\centering
			\includegraphics[scale=0.7]{imgs/legends/legend_ppc_horizontal.png}
			\\
			\includegraphics[width=0.24\linewidth]{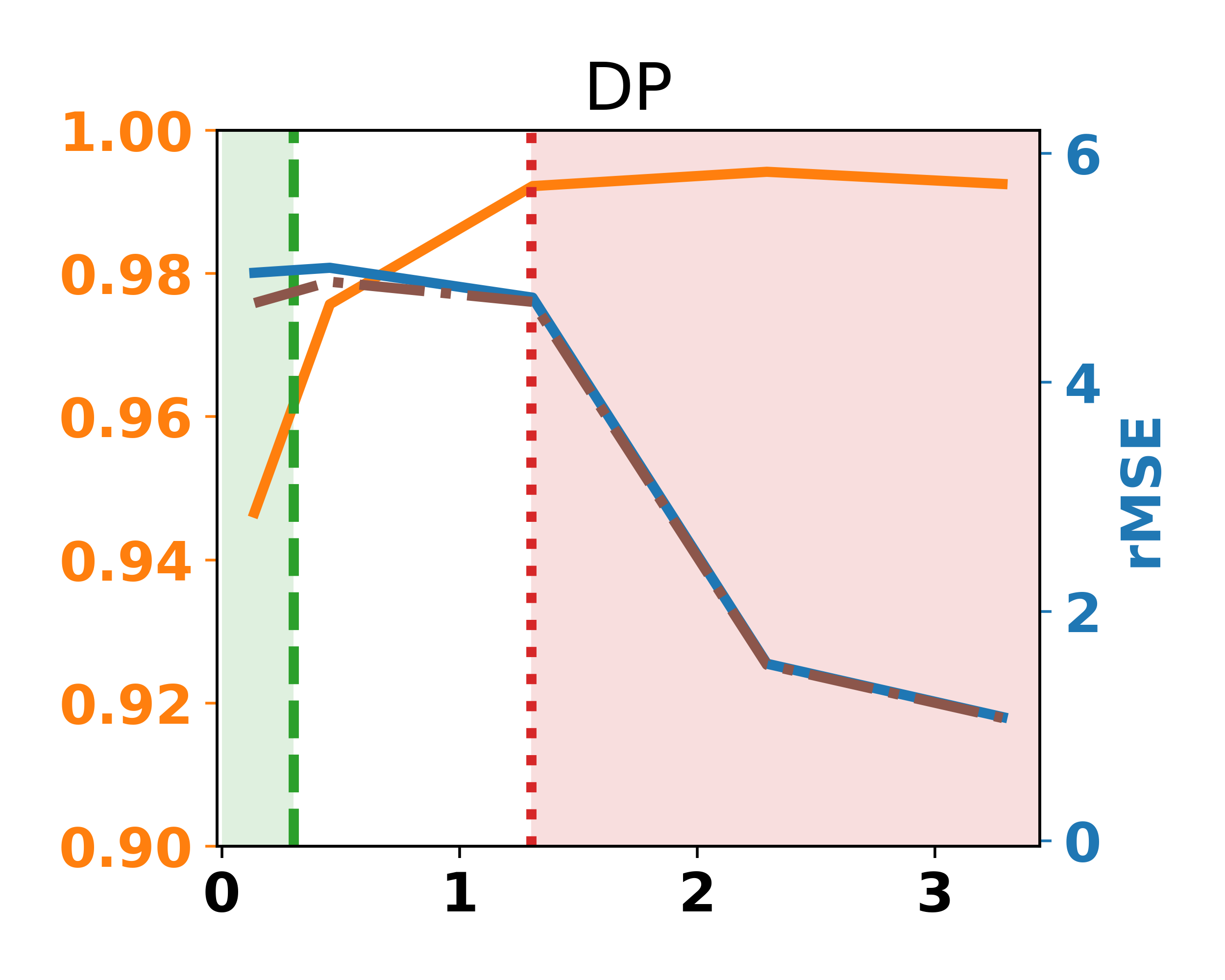}
			\includegraphics[width=0.24\linewidth]{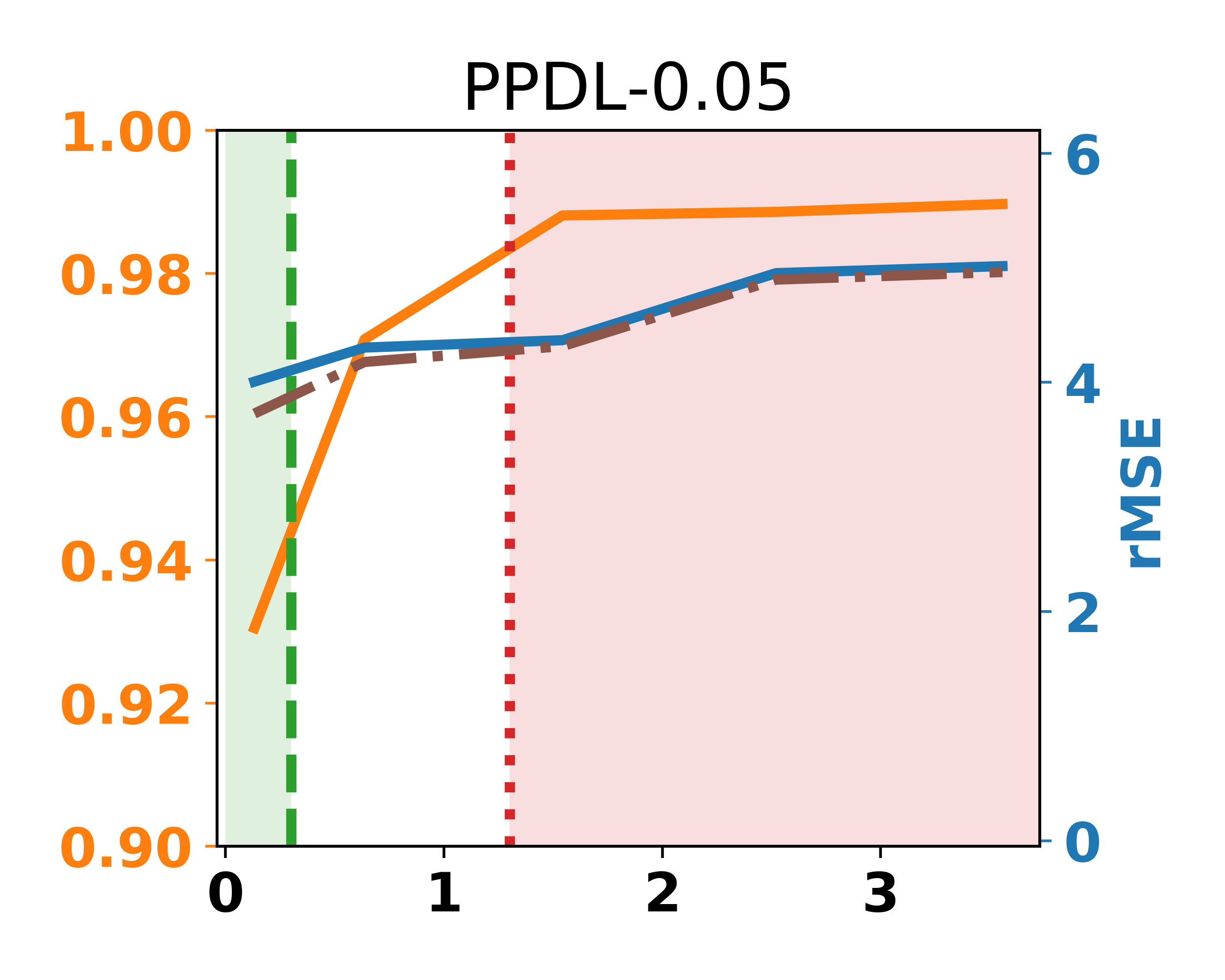}
			\includegraphics[width=0.24\linewidth]{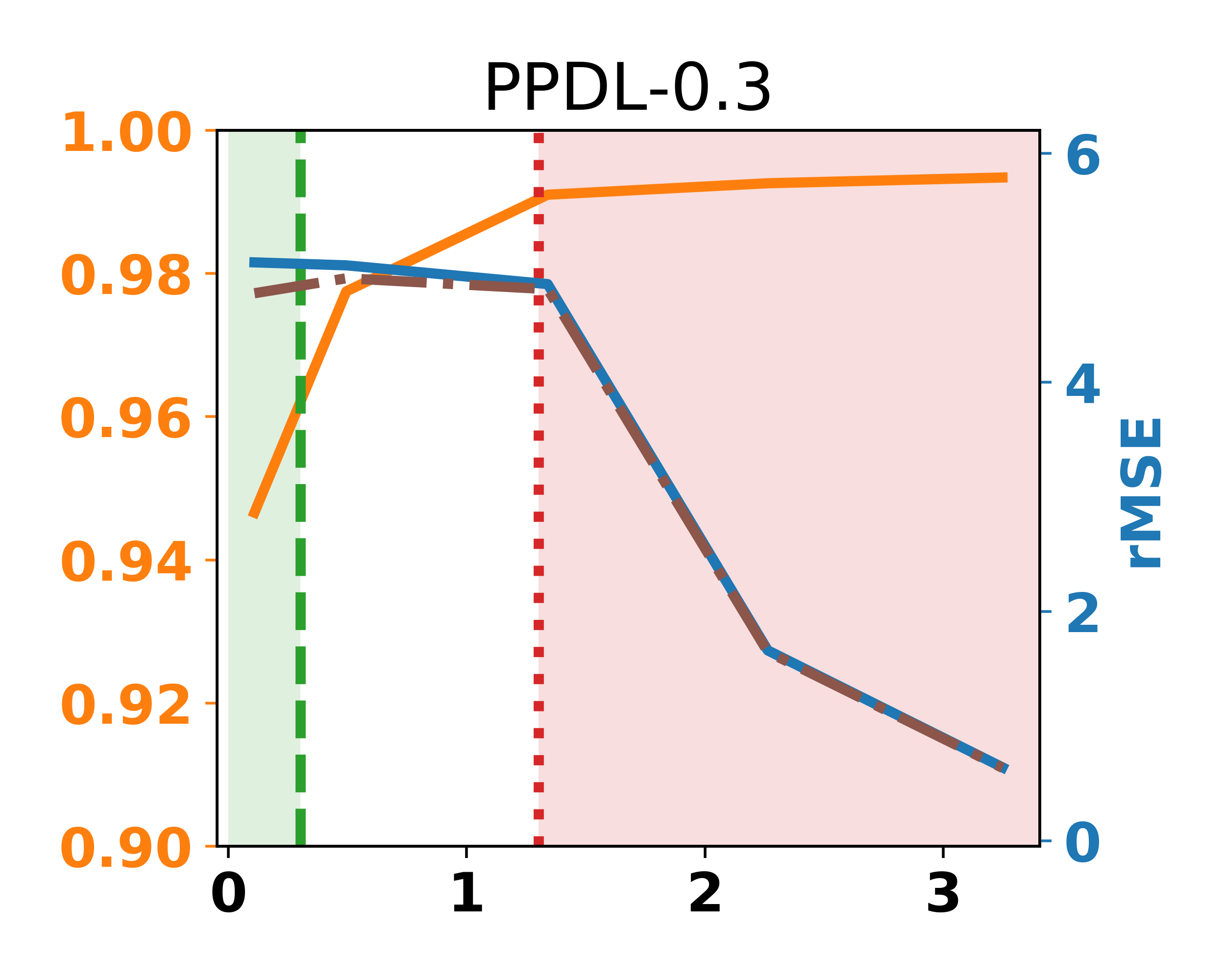}
			\includegraphics[width=0.24\linewidth]{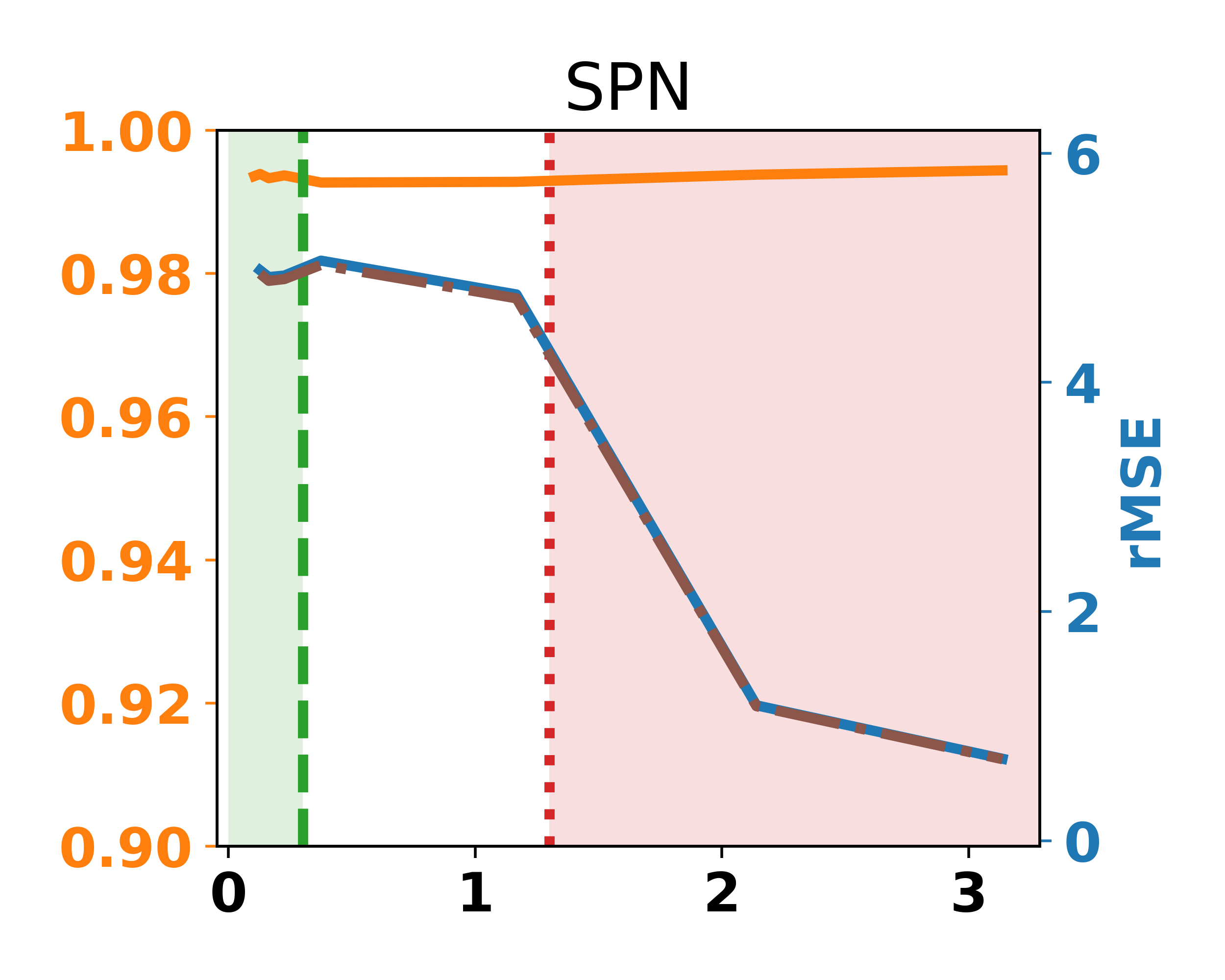}
			\caption{Reconstruction Attack}
		\end{subfigure}
		
		\begin{subfigure}{0.99\linewidth}
			\centering
			\includegraphics[width=0.24\linewidth]{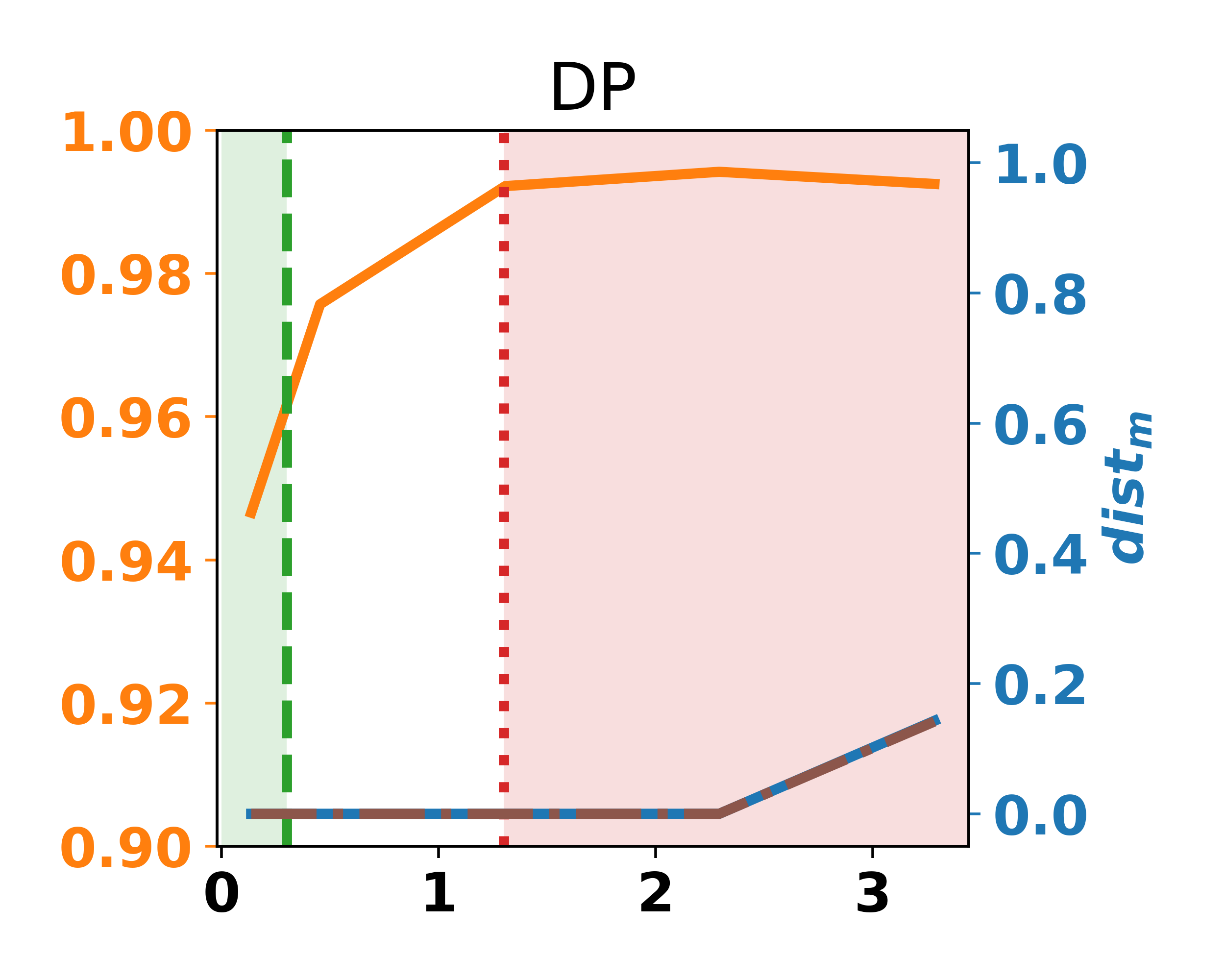}
			\includegraphics[width=0.24\linewidth]{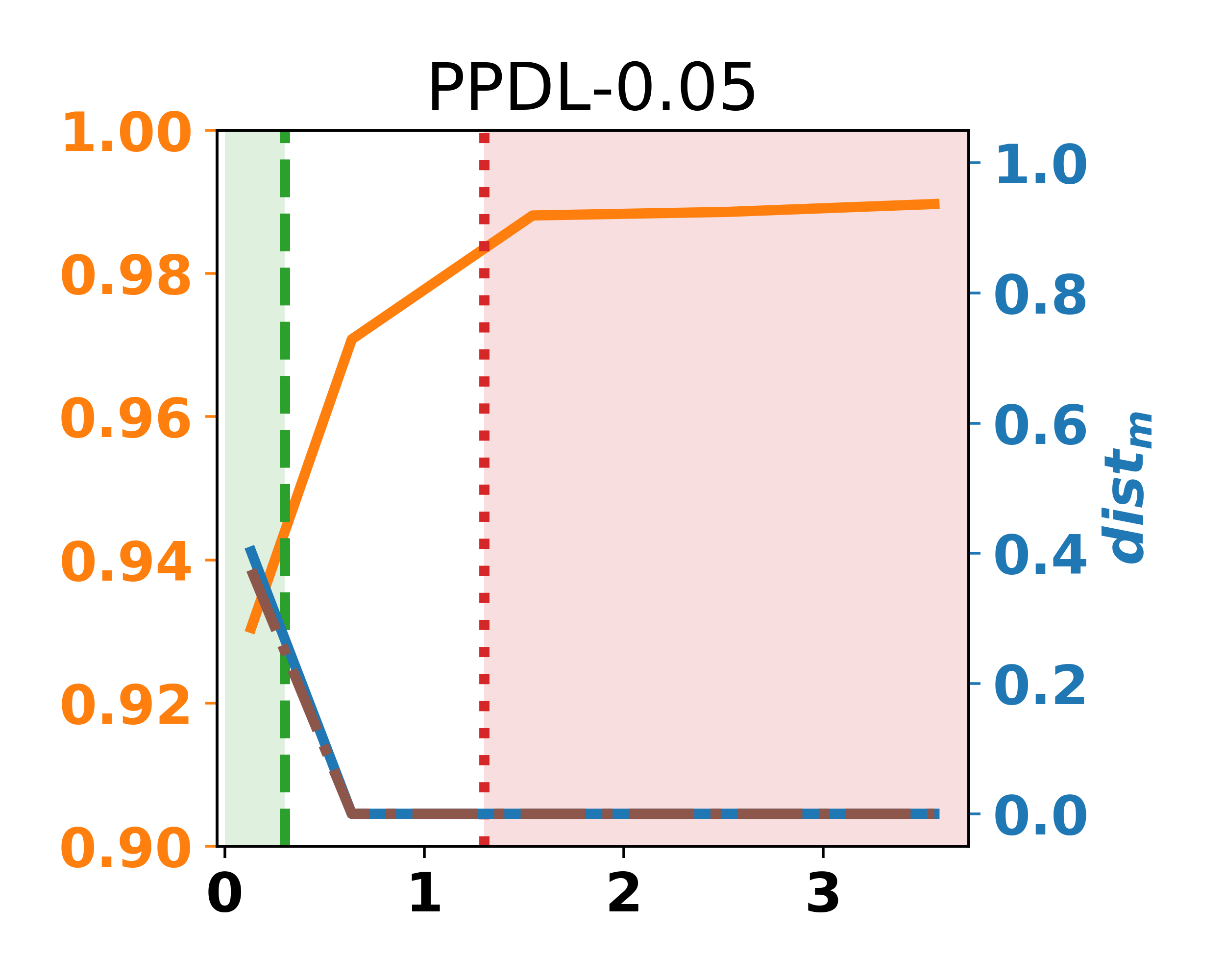}
			\includegraphics[width=0.24\linewidth]{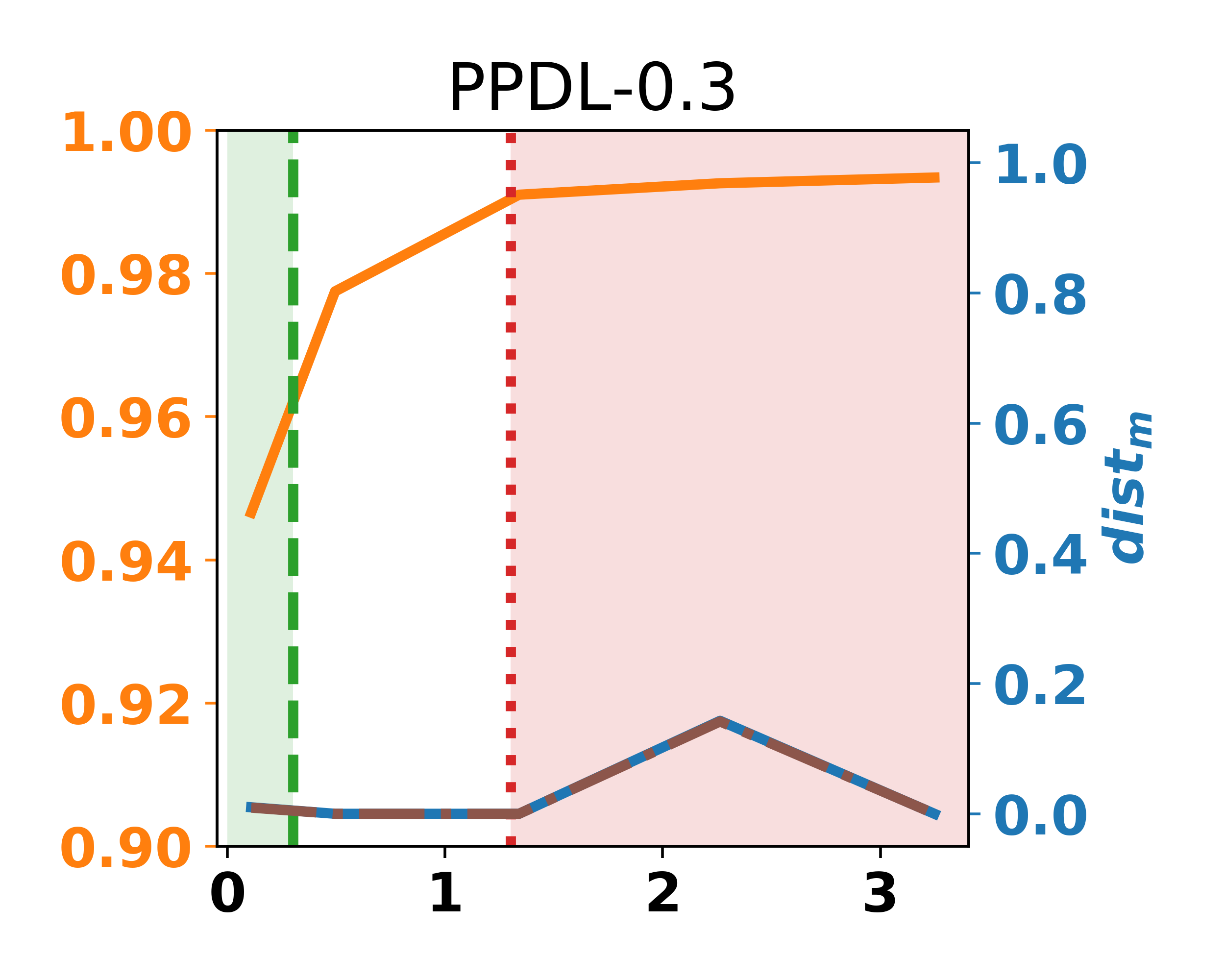}
			\includegraphics[width=0.24\linewidth]{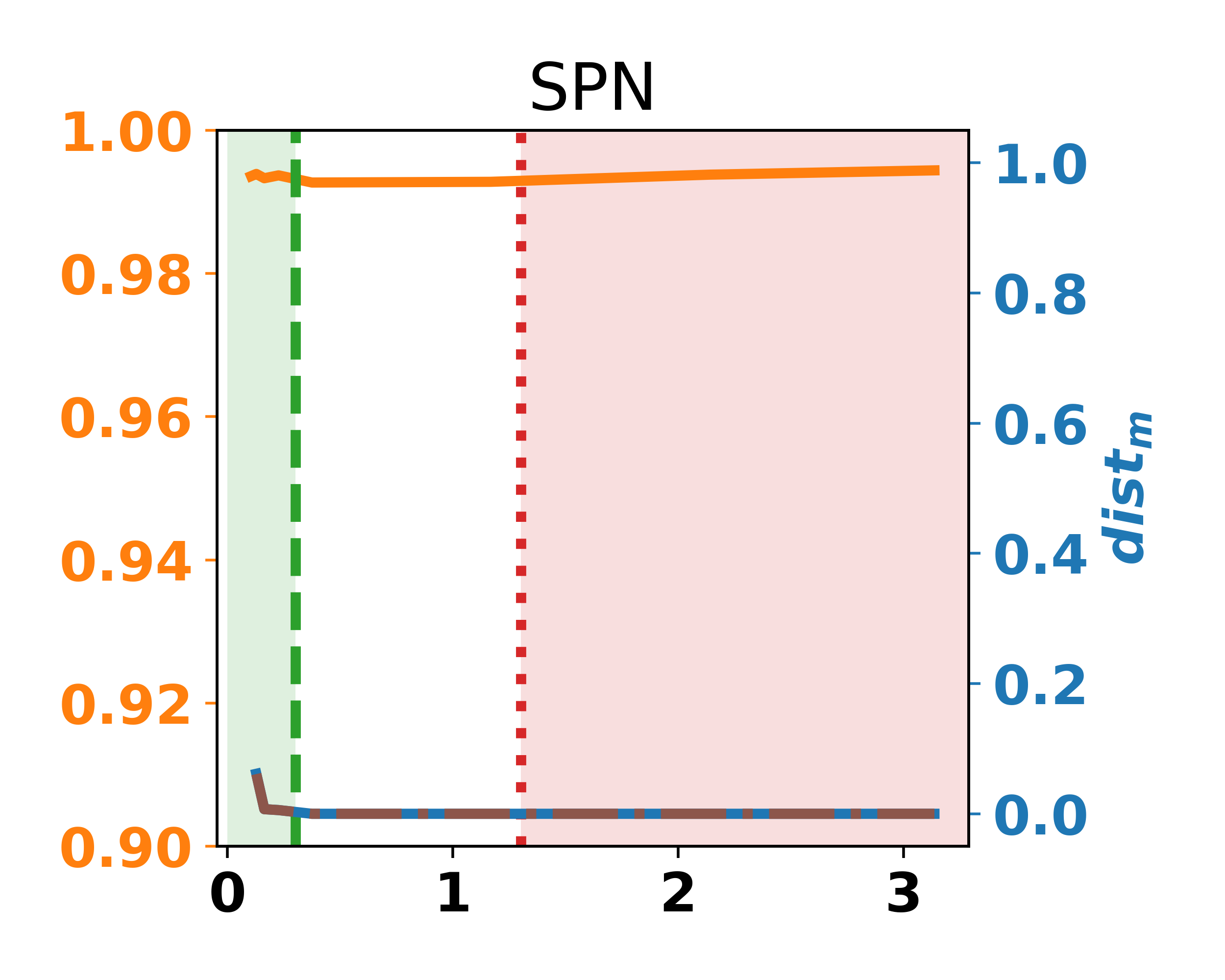}
			\caption{Membership Attack}
		\end{subfigure}
		
		\begin{subfigure}{0.99\linewidth}
			\centering
			\includegraphics[width=0.24\linewidth]{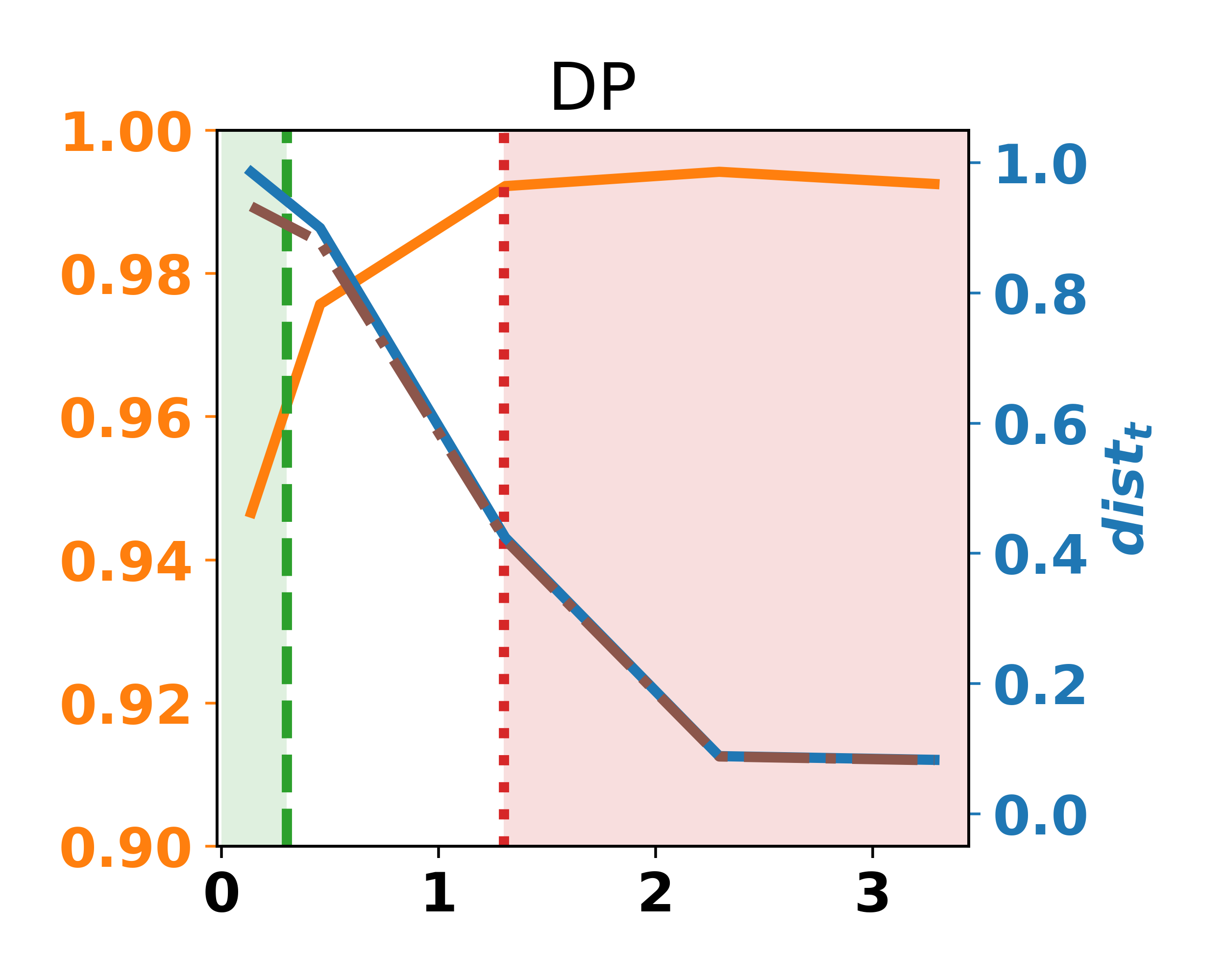}
			\includegraphics[width=0.24\linewidth]{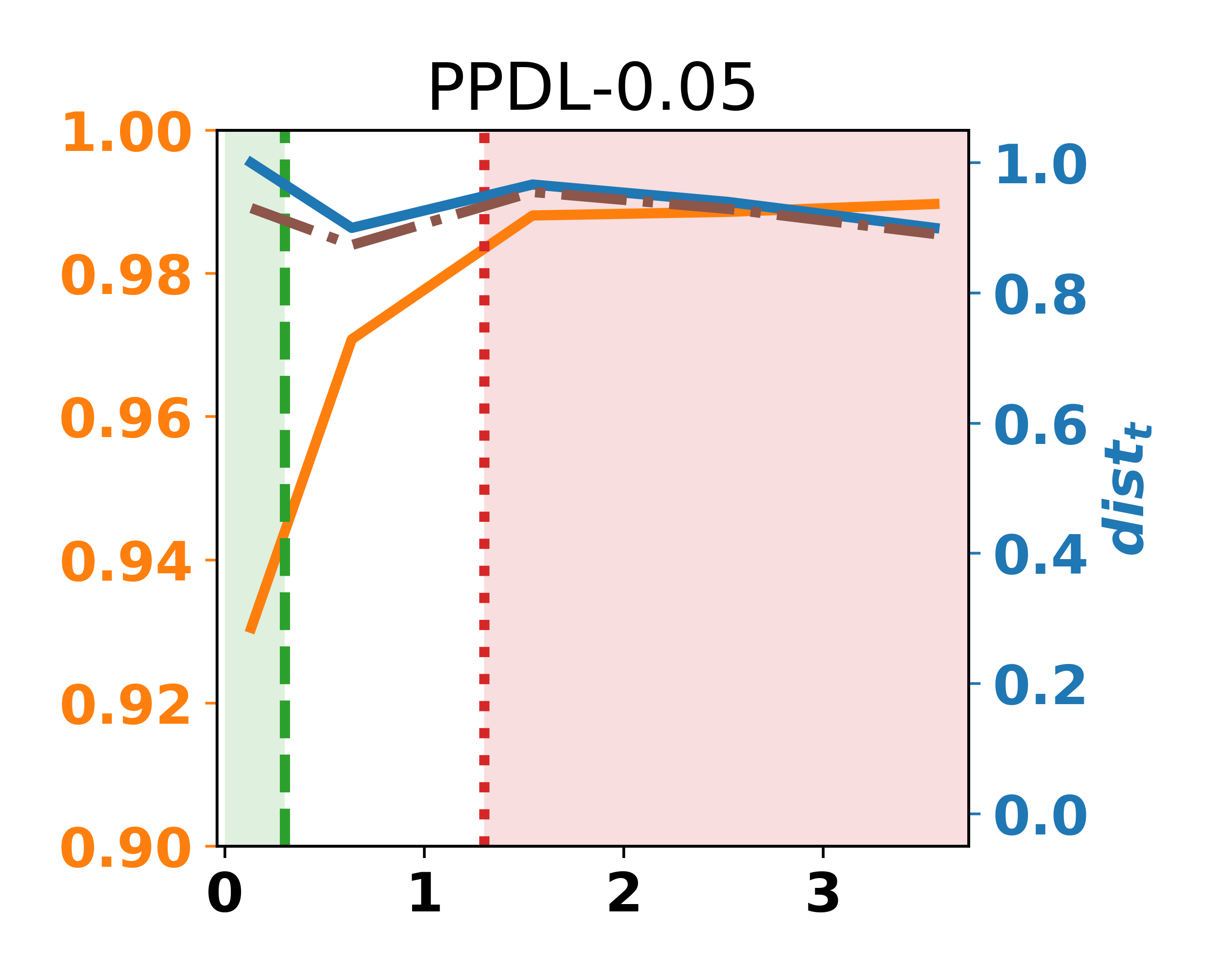}
			\includegraphics[width=0.24\linewidth]{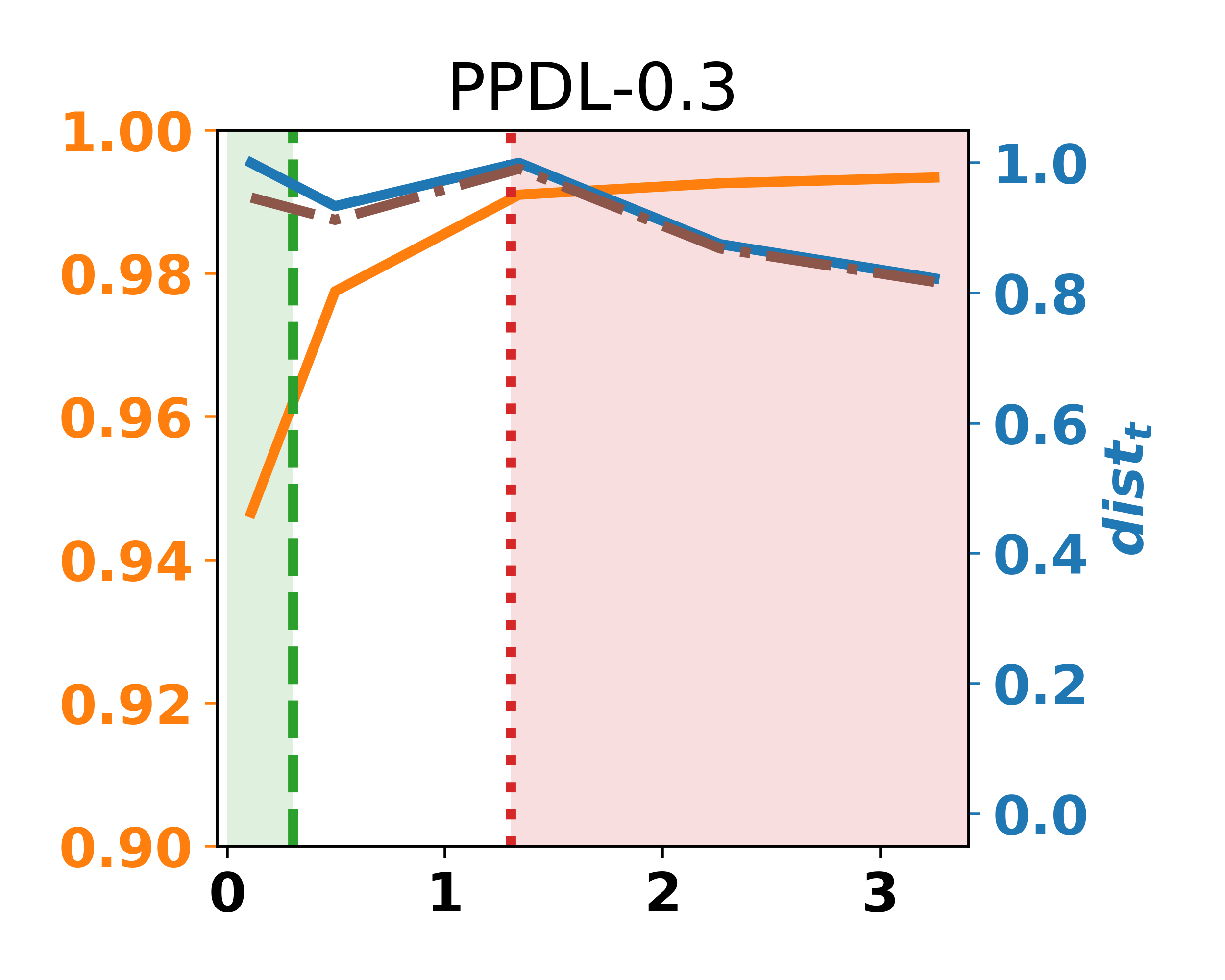}
			\includegraphics[width=0.24\linewidth]{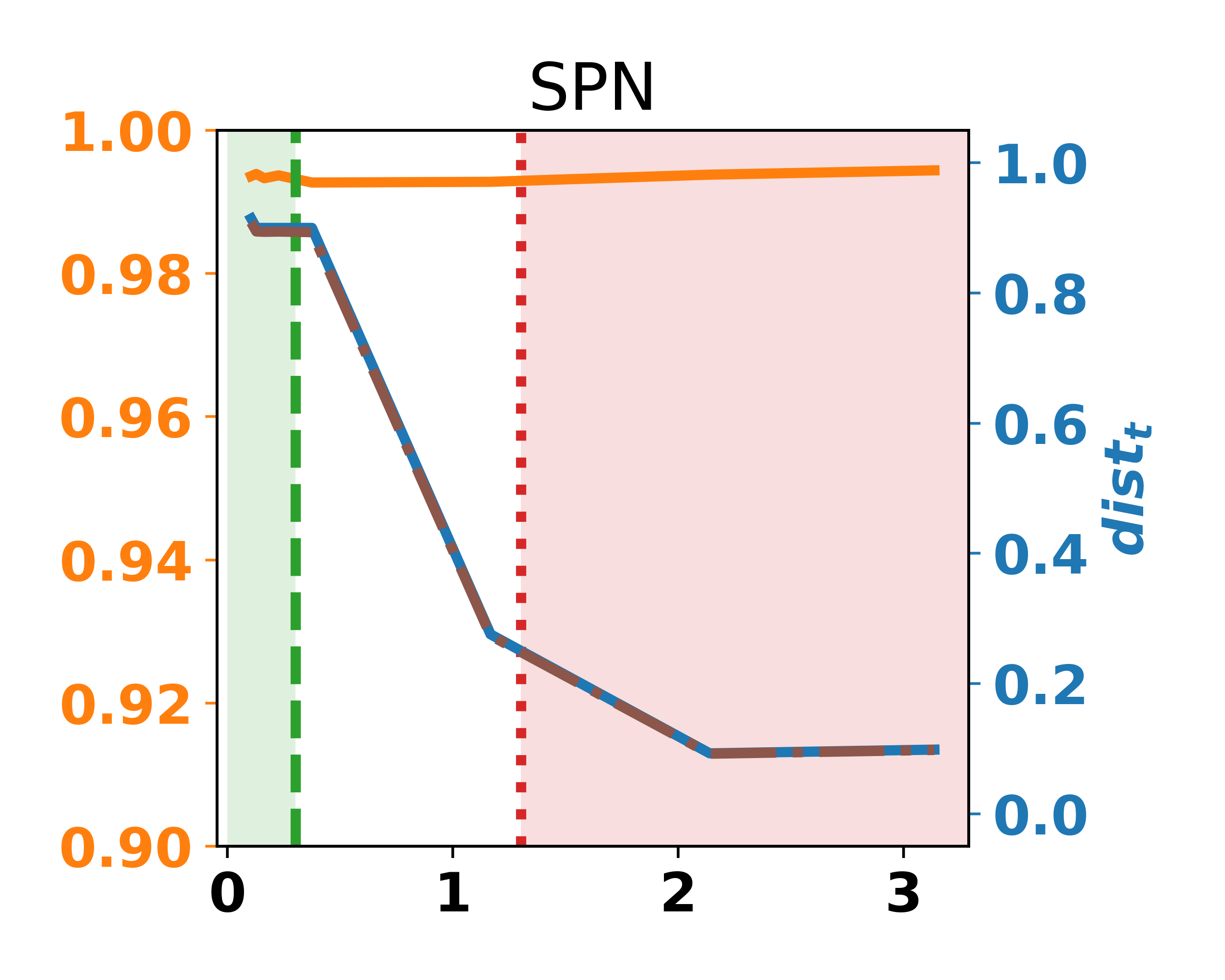}
			\caption{Tracing Attack}
		\end{subfigure}
		
		\caption{Attack with Batch Size 1}
		\label{fig:ppc-mnist-bs1}
		%\vspace{-0.22cm}
	\end{figure}

	\newpage
	
	\begin{figure}[H]	
		%	\begin{subfigure}{0.95\linewidth}
		\centering
		\begin{subfigure}{0.99\linewidth}
			\centering
			\includegraphics[scale=0.7]{imgs/legends/legend_ppc_horizontal.png}
			\\
			\includegraphics[width=0.24\linewidth]{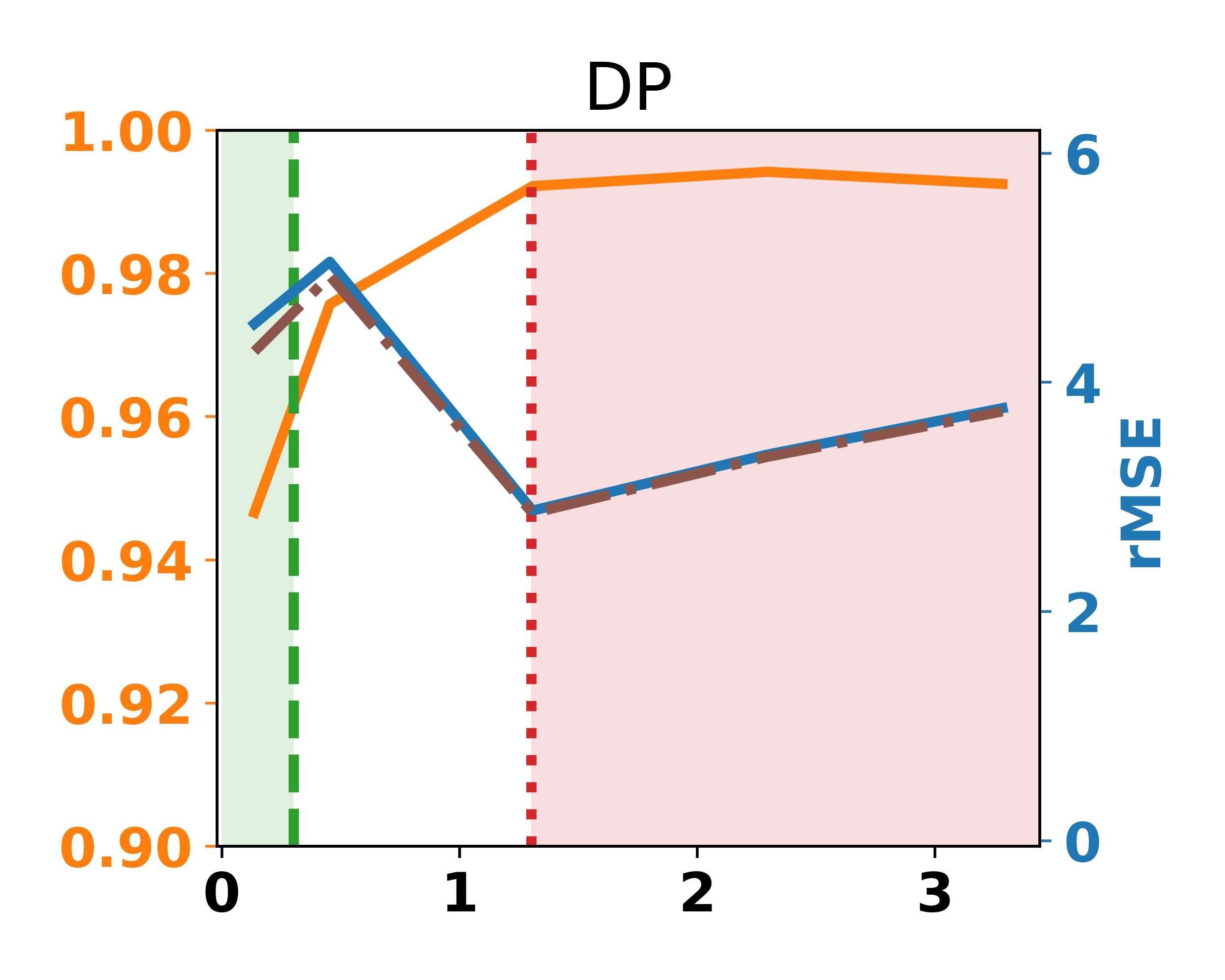}
			\includegraphics[width=0.24\linewidth]{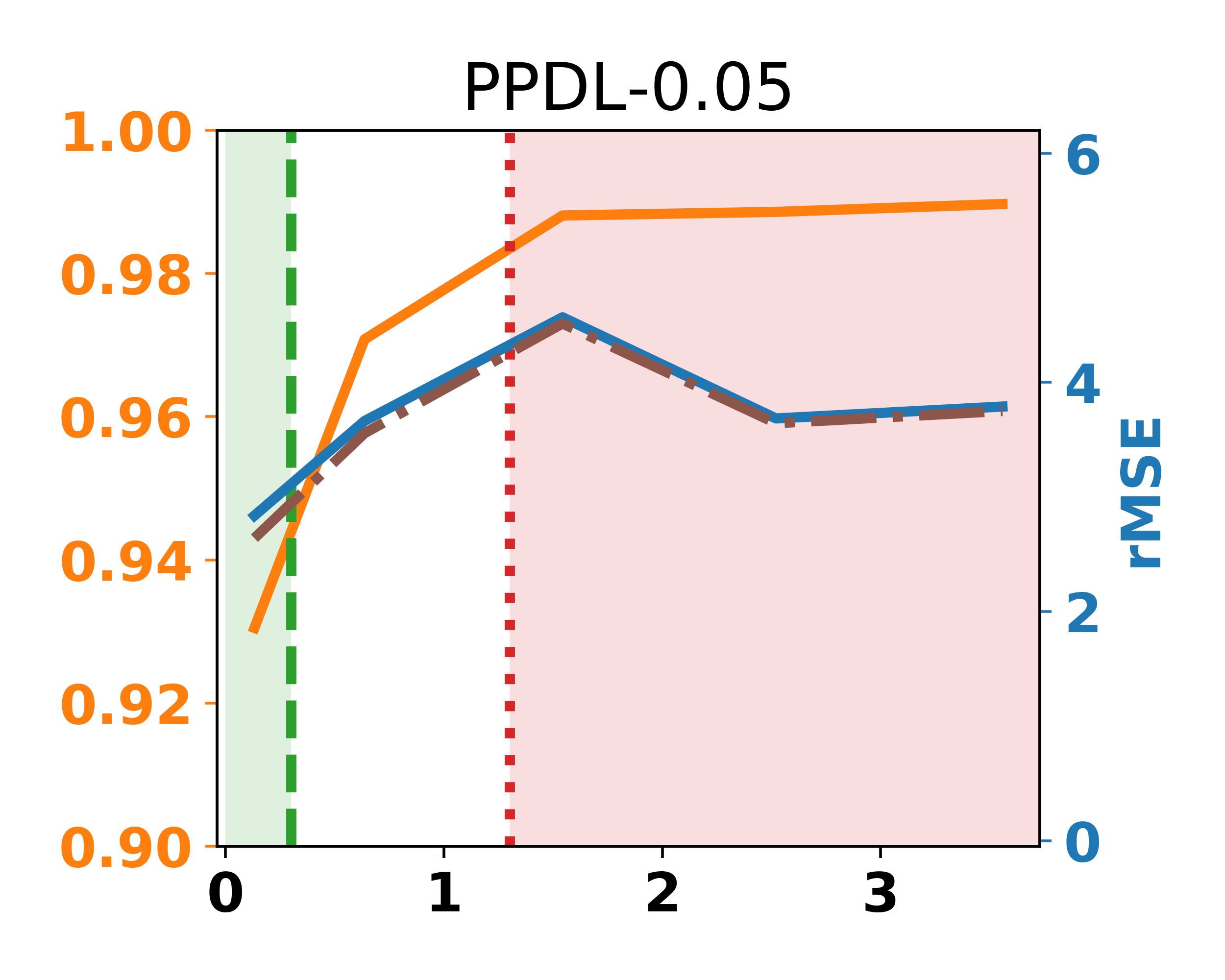}
			\includegraphics[width=0.24\linewidth]{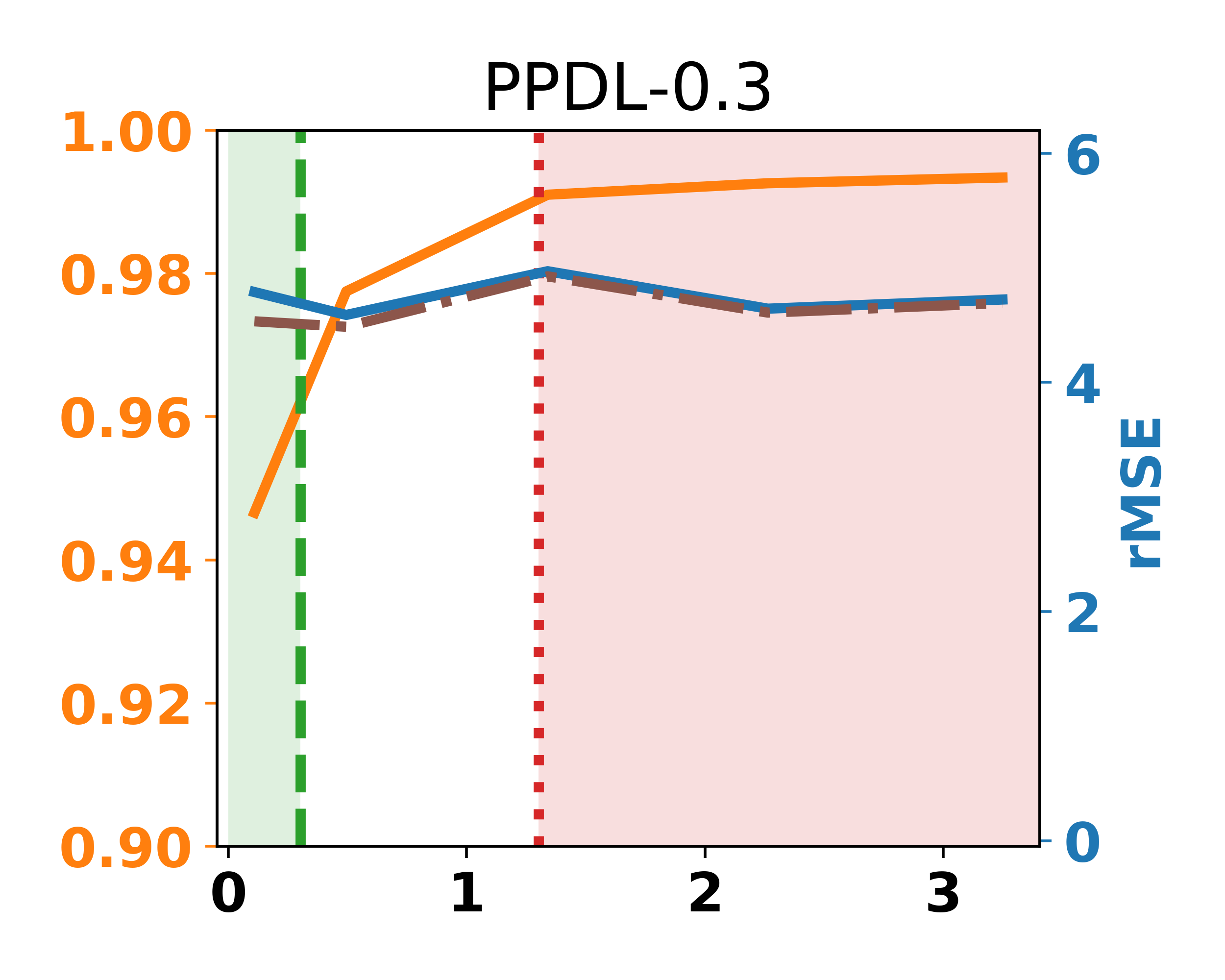}
			\includegraphics[width=0.24\linewidth]{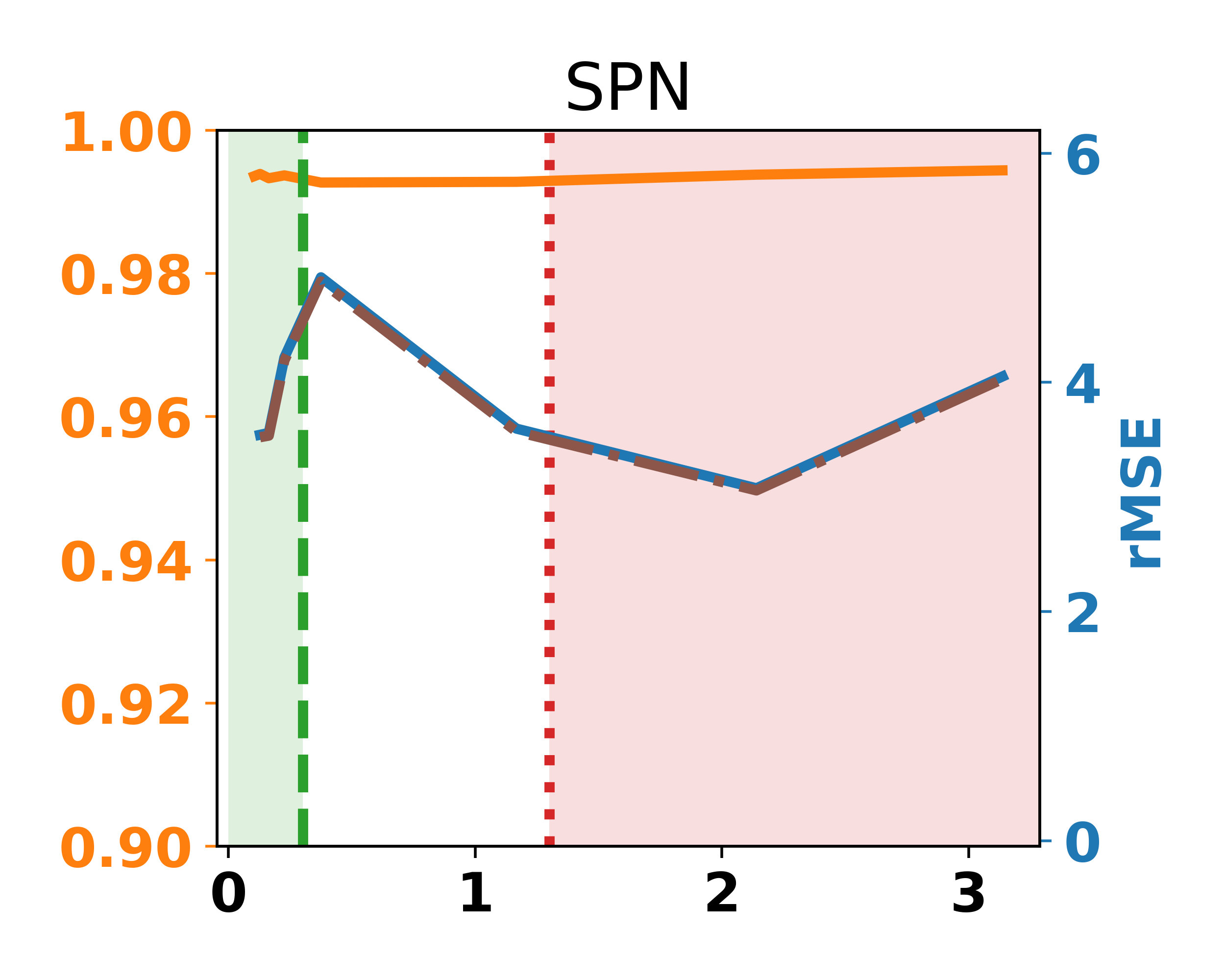}
			\caption{Reconstruction Attack}
		\end{subfigure}
		
		\begin{subfigure}{0.99\linewidth}
			\centering
			\includegraphics[width=0.24\linewidth]{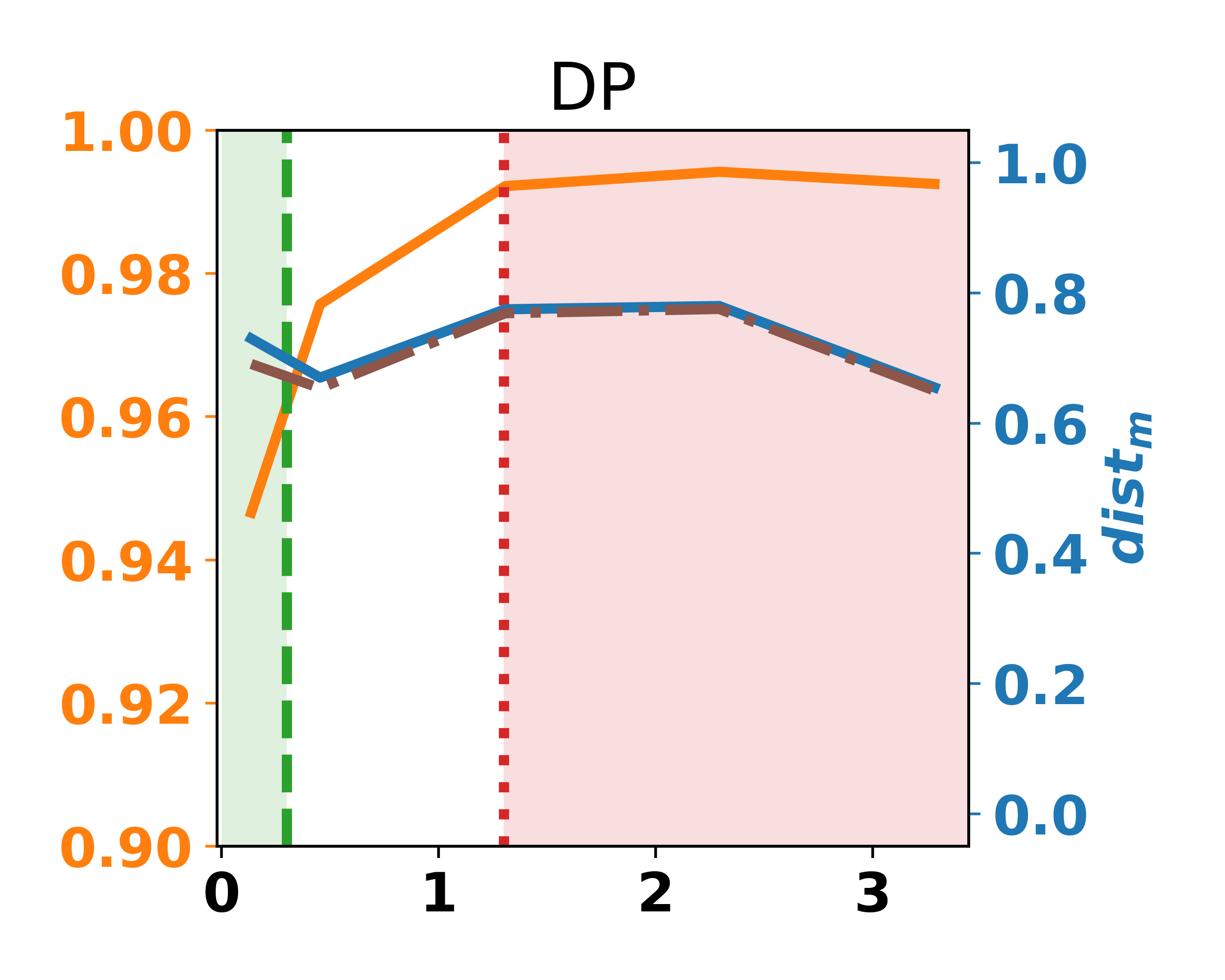}
			\includegraphics[width=0.24\linewidth]{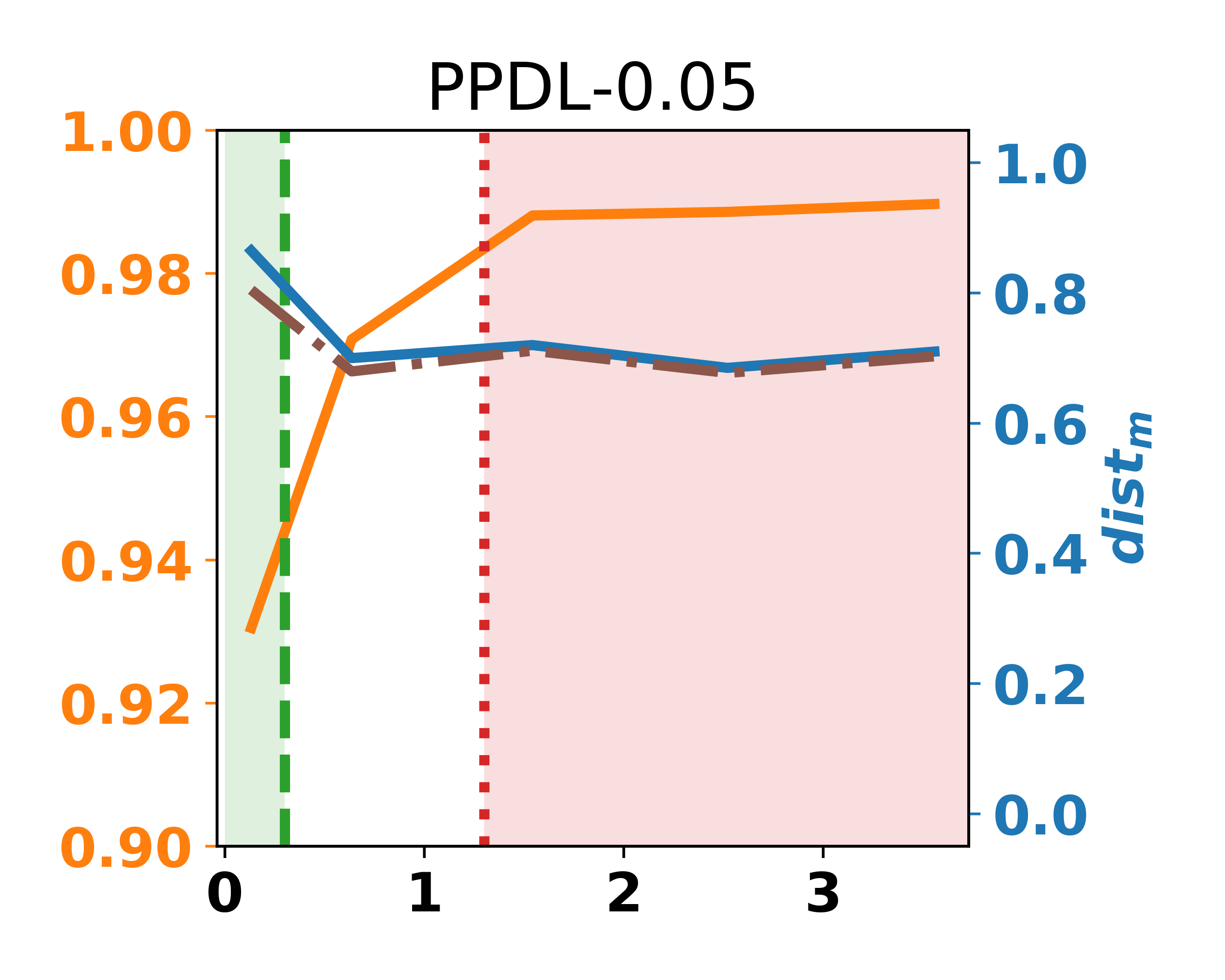}
			\includegraphics[width=0.24\linewidth]{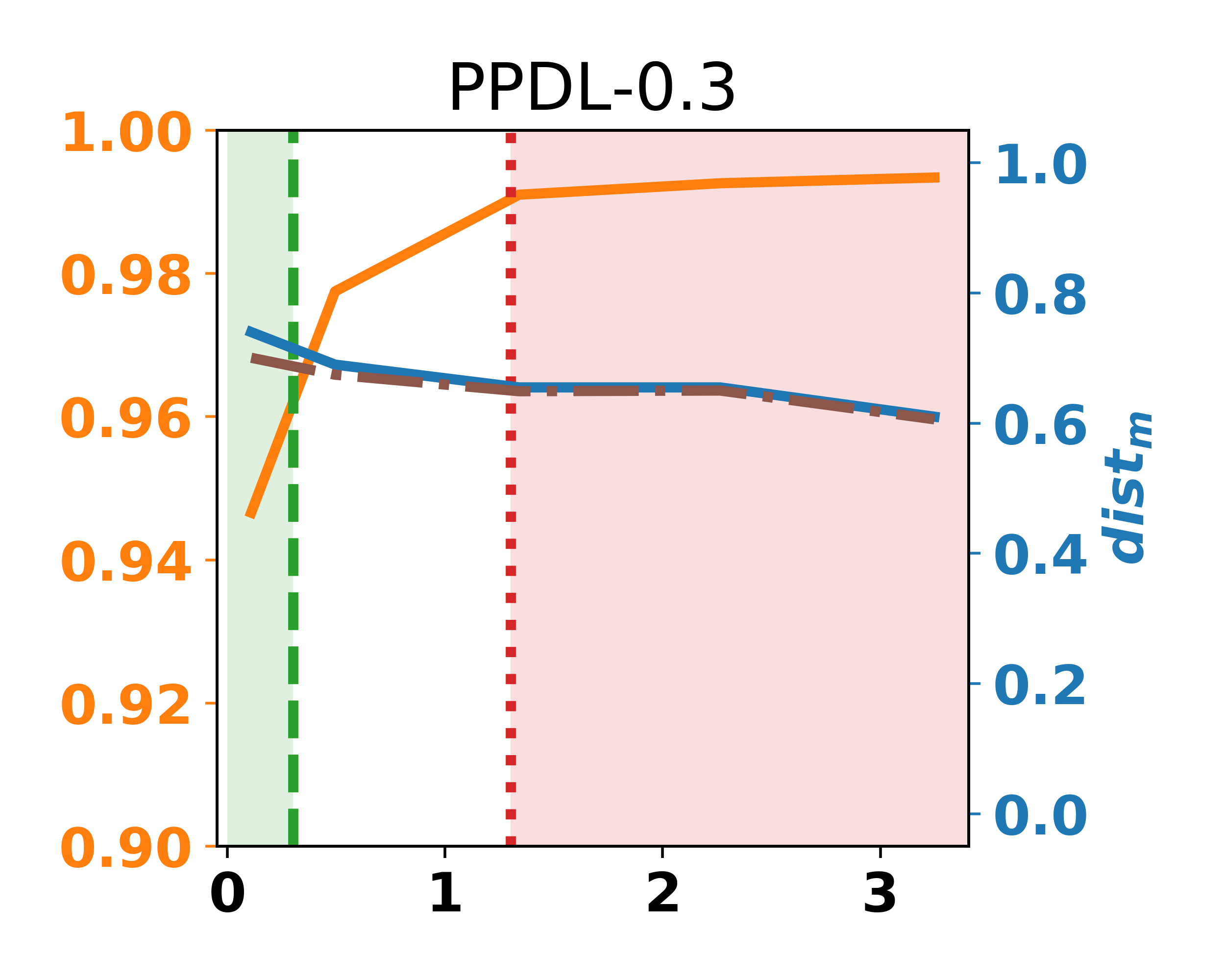}
			\includegraphics[width=0.24\linewidth]{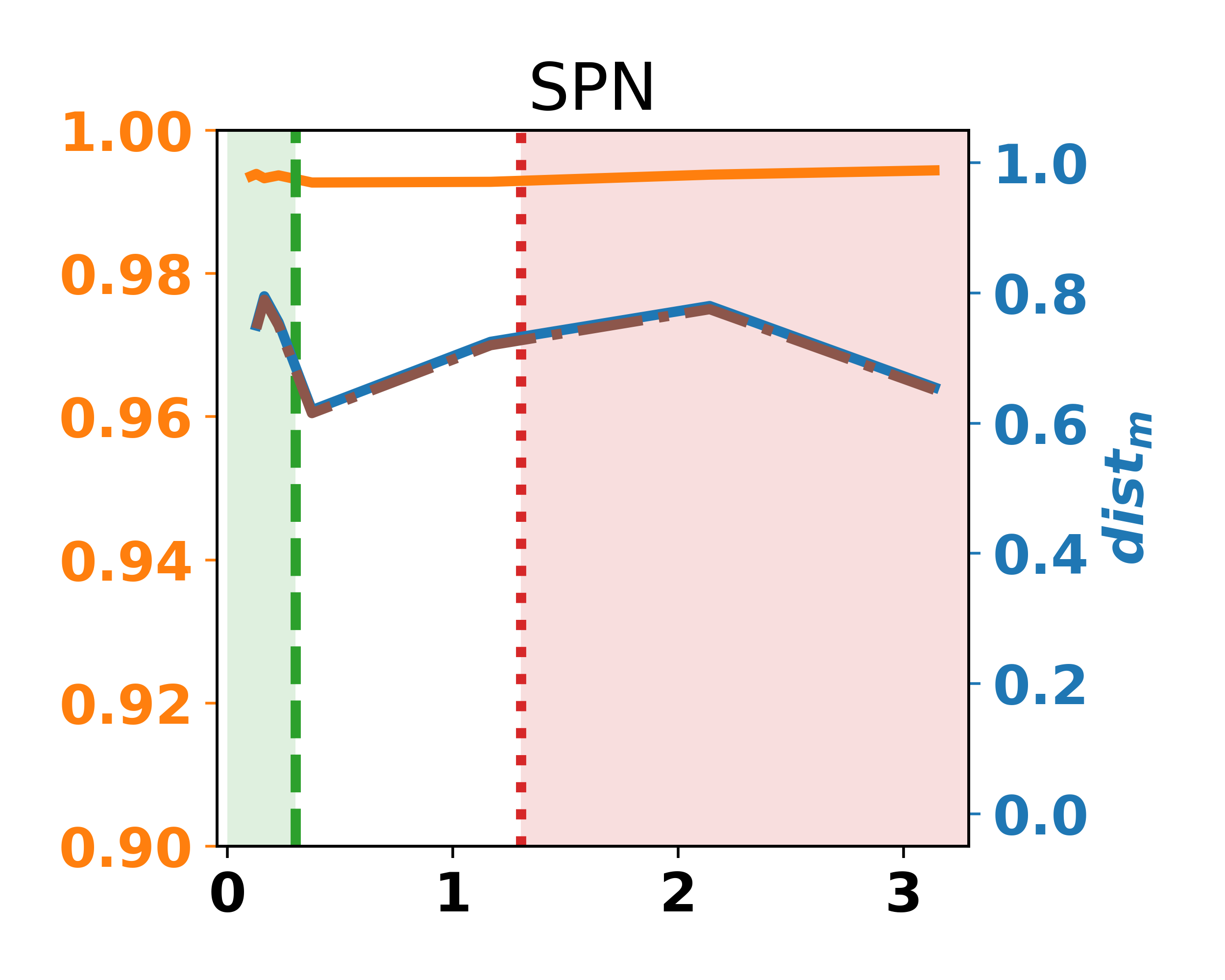}
			\caption{Membership Attack}
		\end{subfigure}
		
		\begin{subfigure}{0.99\linewidth}
			\centering
			\includegraphics[width=0.24\linewidth]{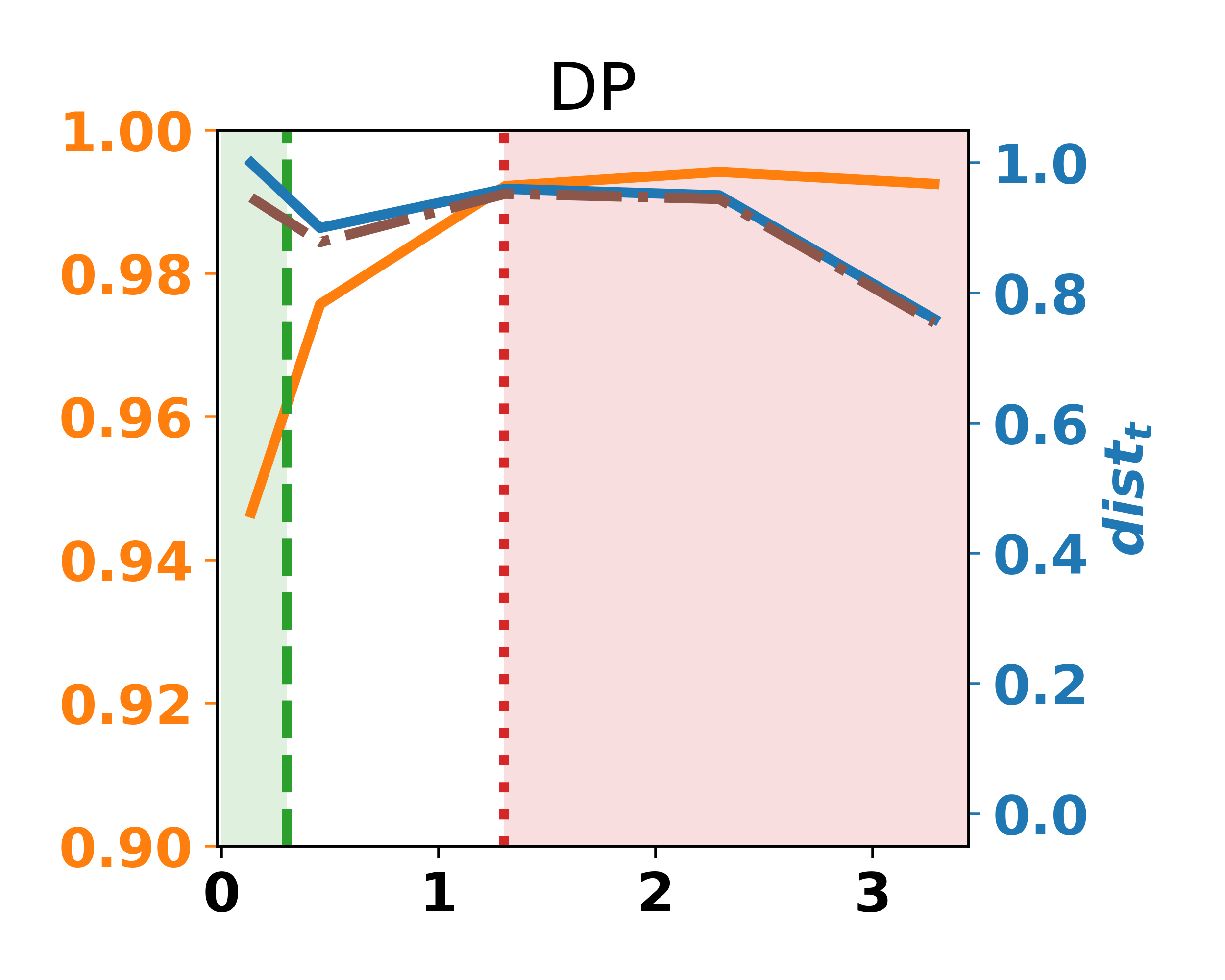}
			\includegraphics[width=0.24\linewidth]{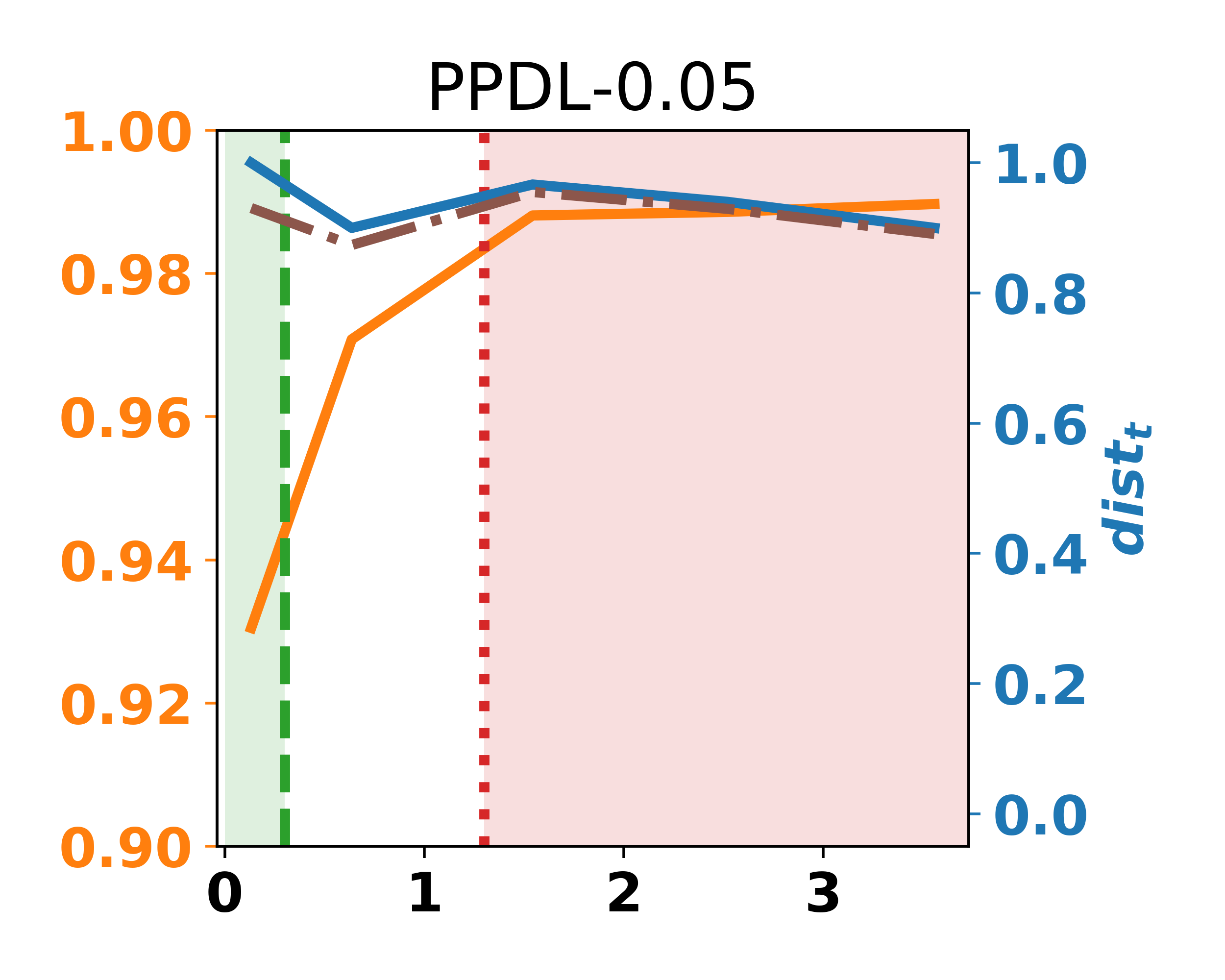}
			\includegraphics[width=0.24\linewidth]{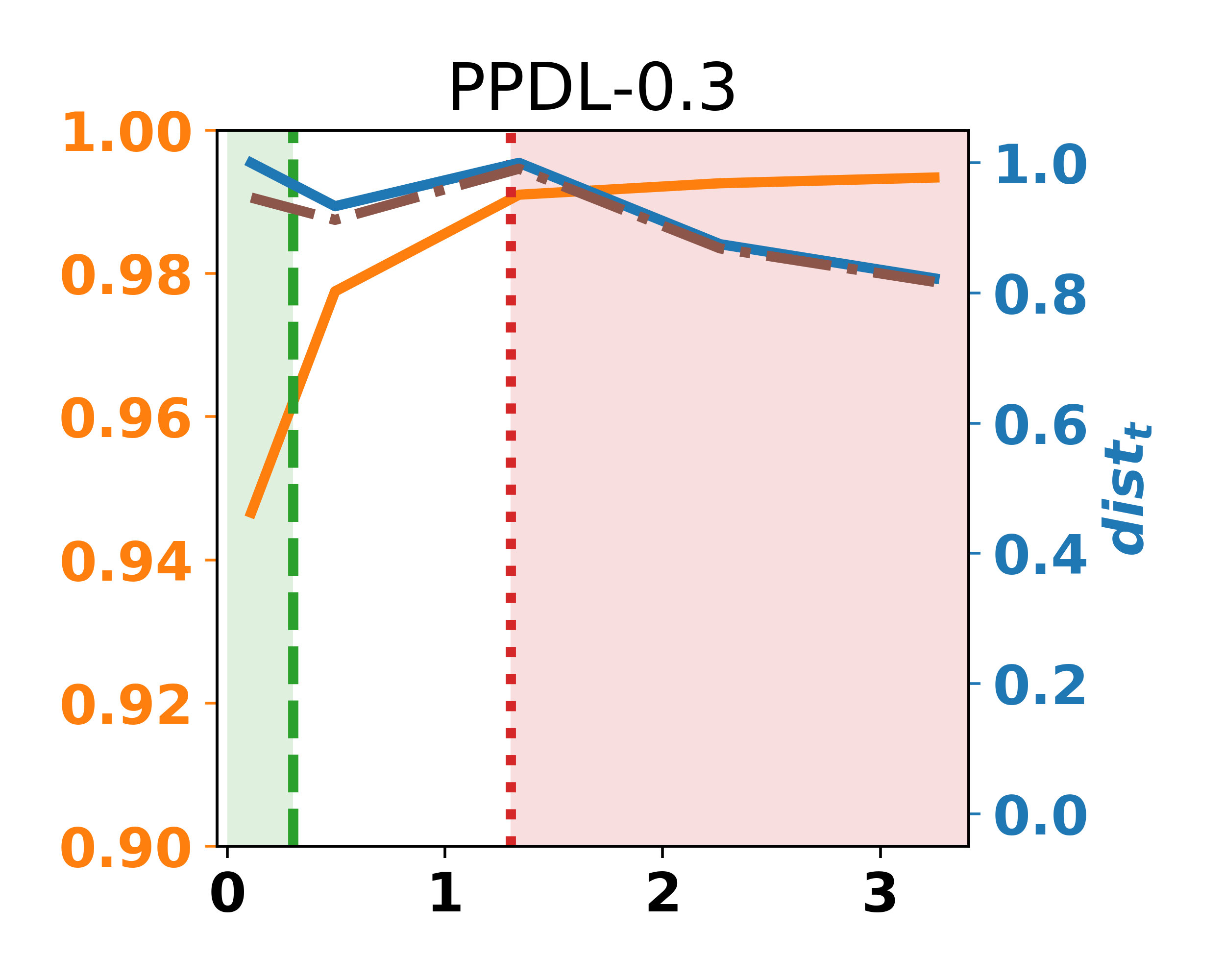}
			\includegraphics[width=0.24\linewidth]{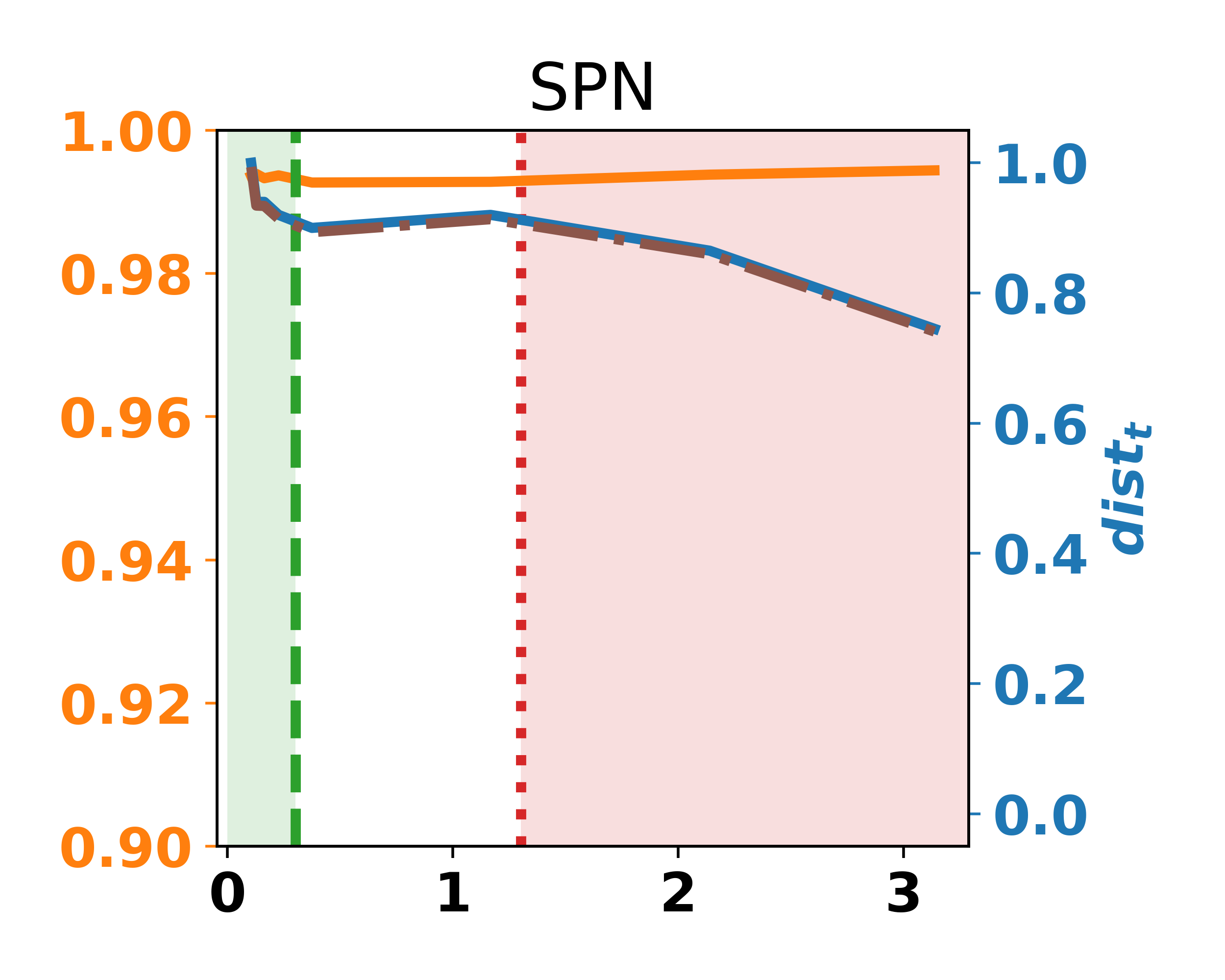}
			\caption{Tracing Attack}
		\end{subfigure}
	
		\caption{Attack with Batch Size 4}
		\label{fig:ppc-mnist-bs4}
		%\vspace{-0.22cm}
	\end{figure}
	
	\begin{figure}[H]	
		%	\begin{subfigure}{0.95\linewidth}
		\centering
		\begin{subfigure}{0.99\linewidth}
			\centering
			\includegraphics[scale=0.7]{imgs/legends/legend_ppc_horizontal.png}
			\\
			\includegraphics[width=0.24\linewidth]{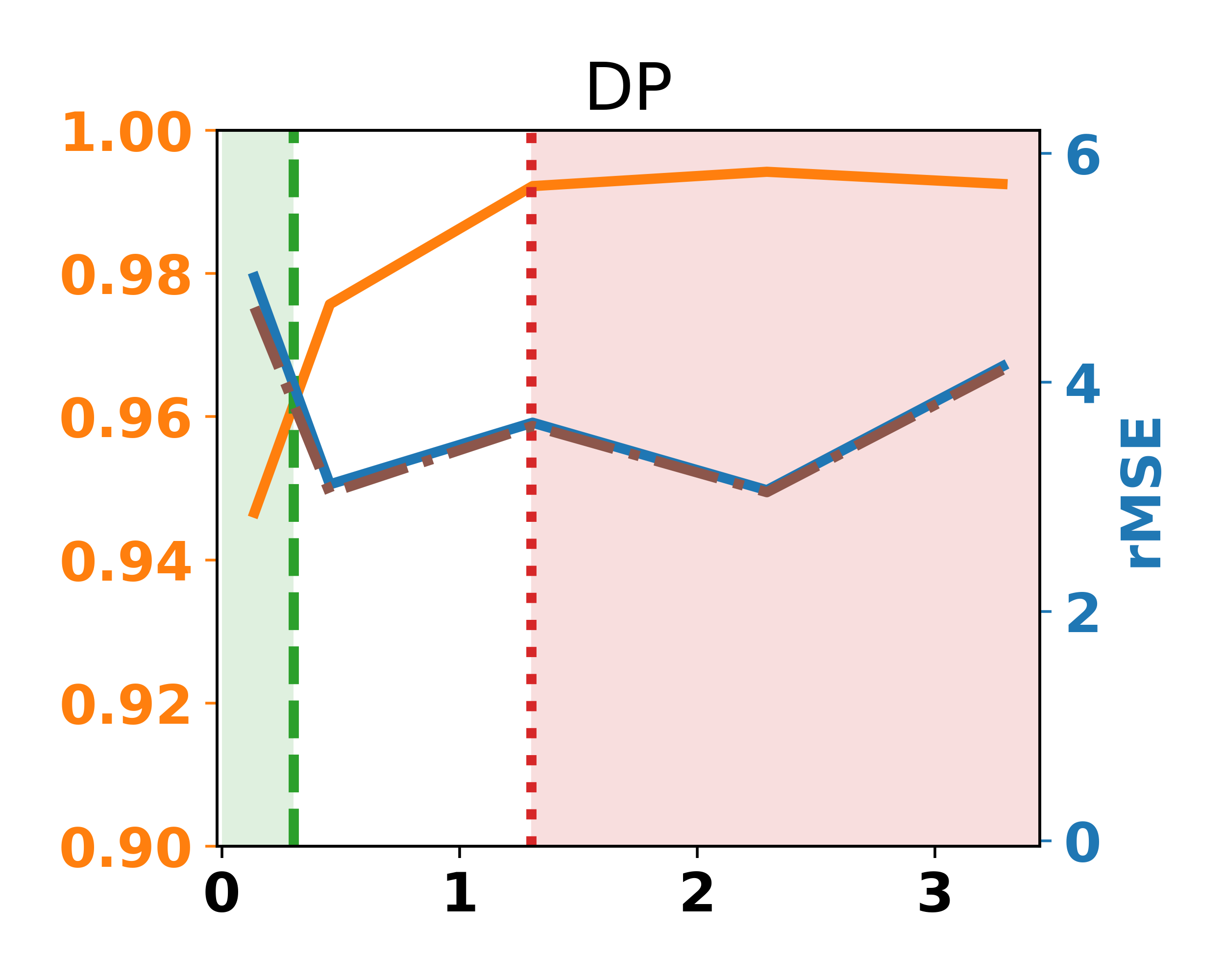}
			\includegraphics[width=0.24\linewidth]{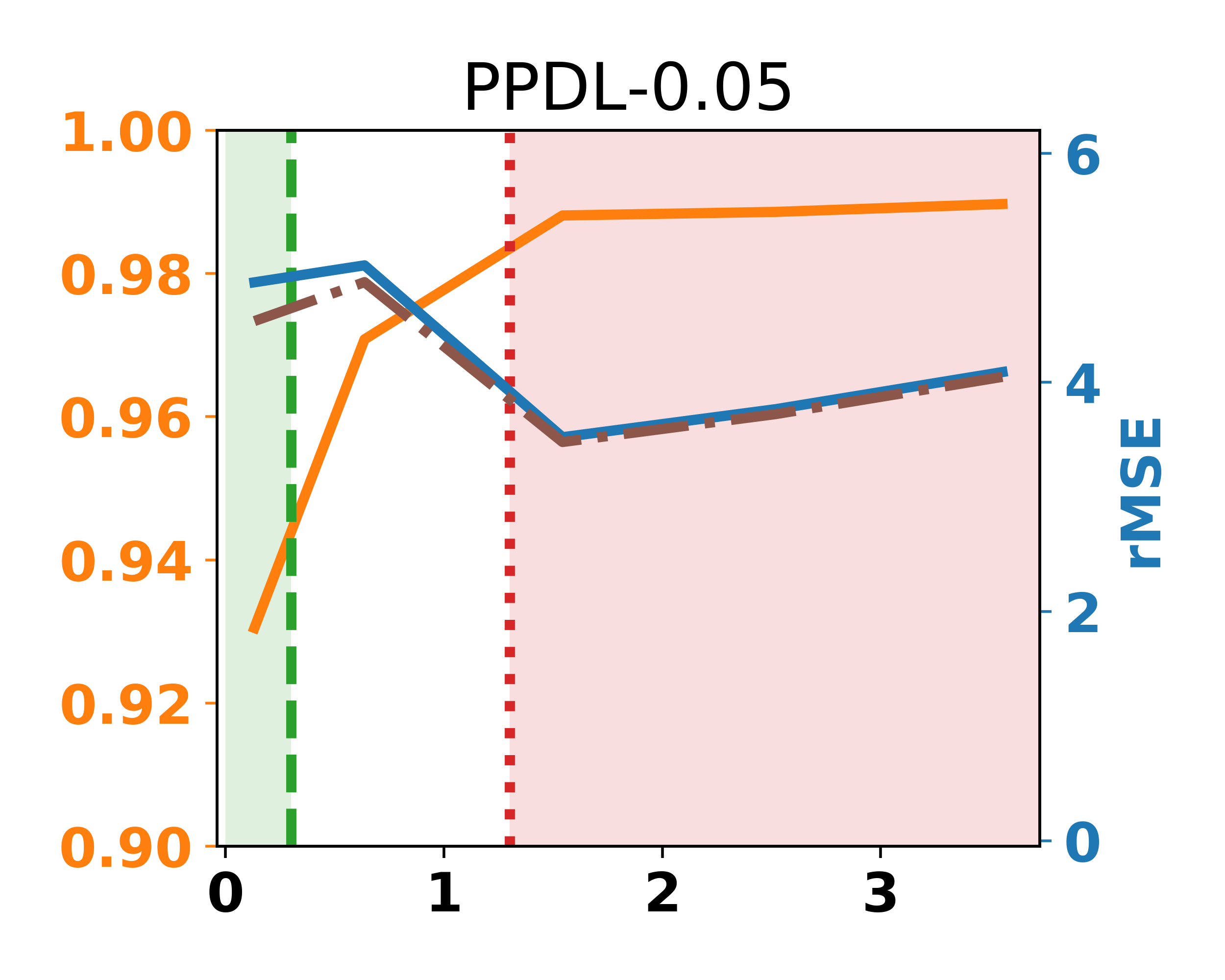}
			\includegraphics[width=0.24\linewidth]{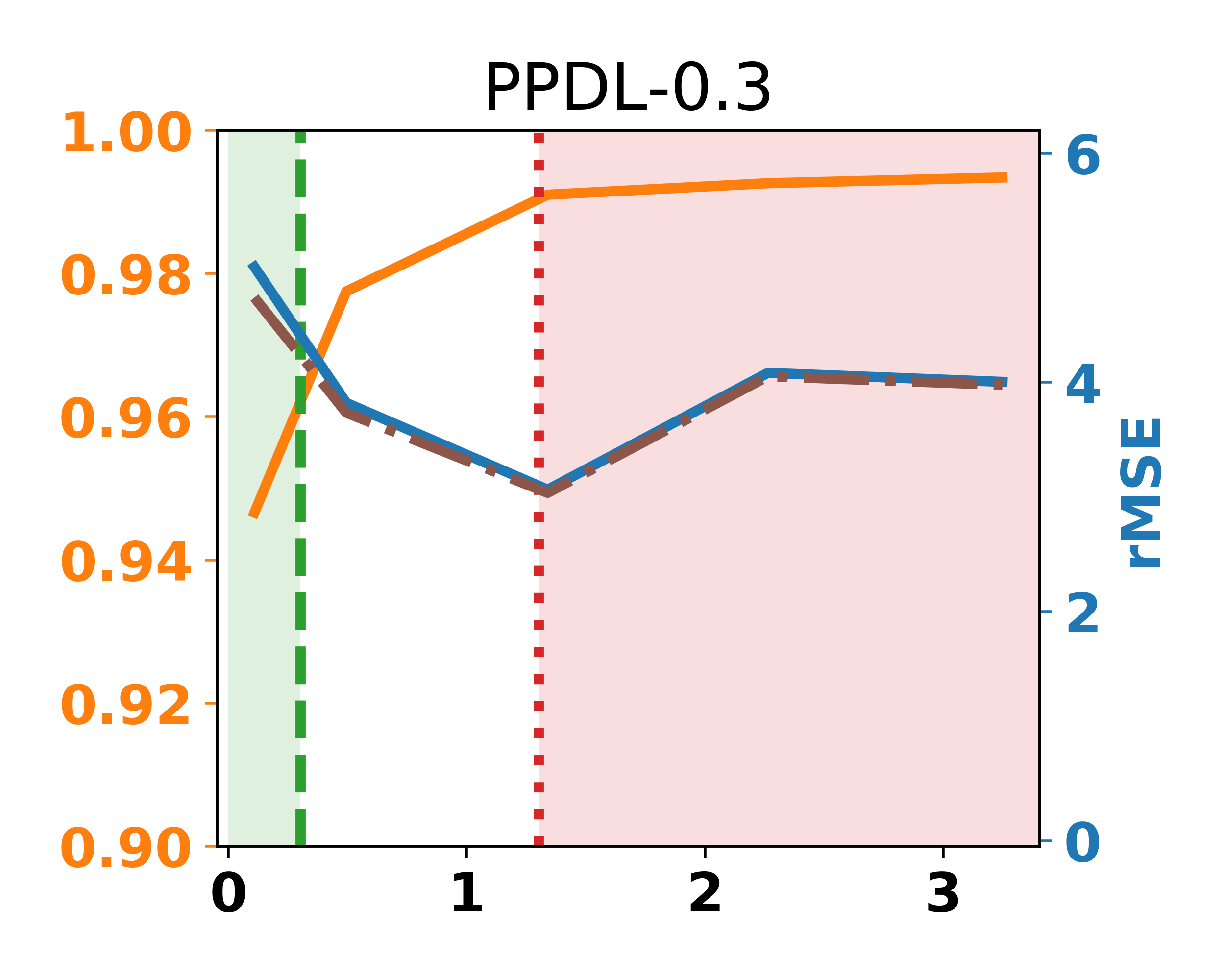}
			\includegraphics[width=0.24\linewidth]{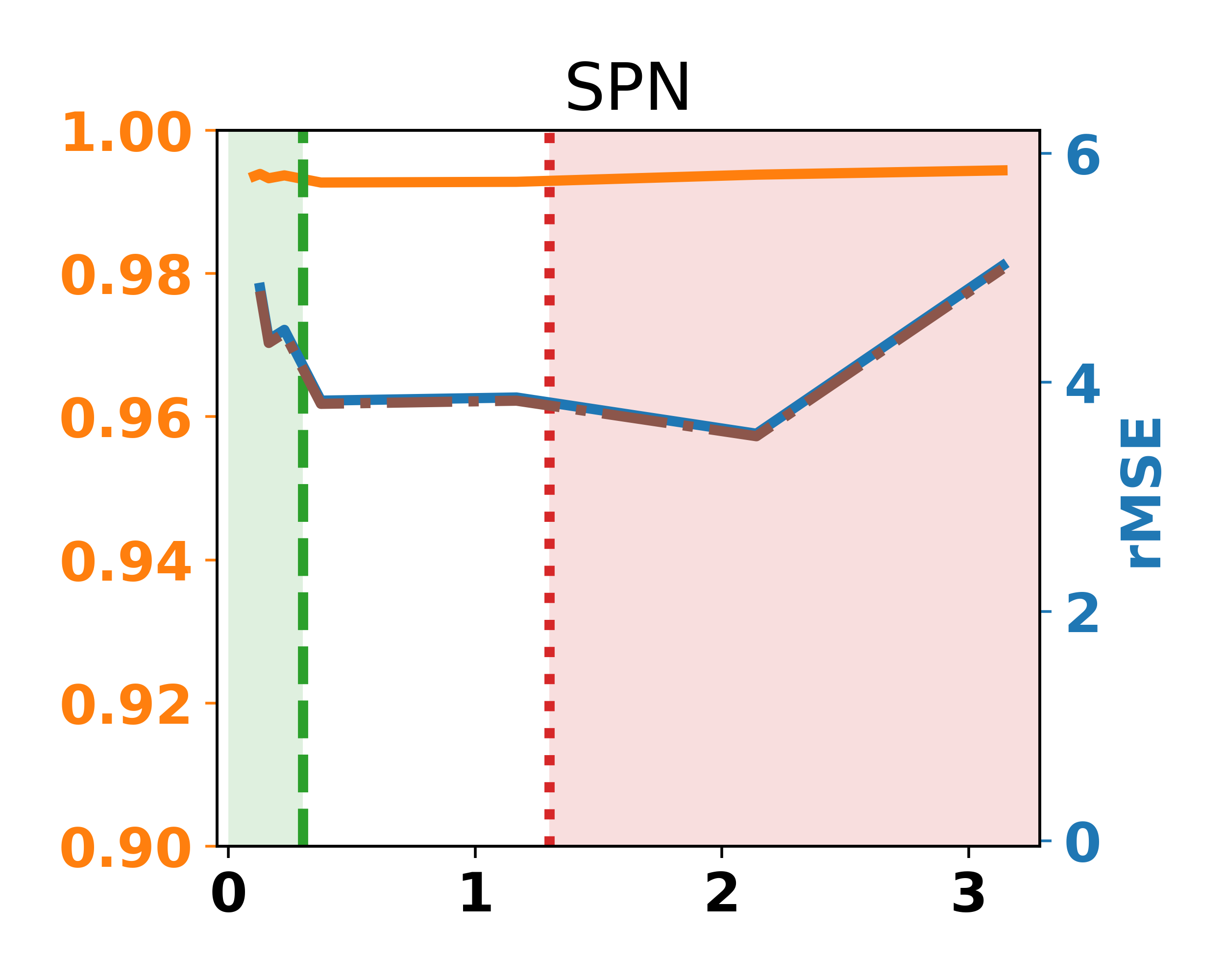}
			\caption{Reconstruction Attack}
		\end{subfigure}
		
		\begin{subfigure}{0.99\linewidth}
			\centering
			\includegraphics[width=0.24\linewidth]{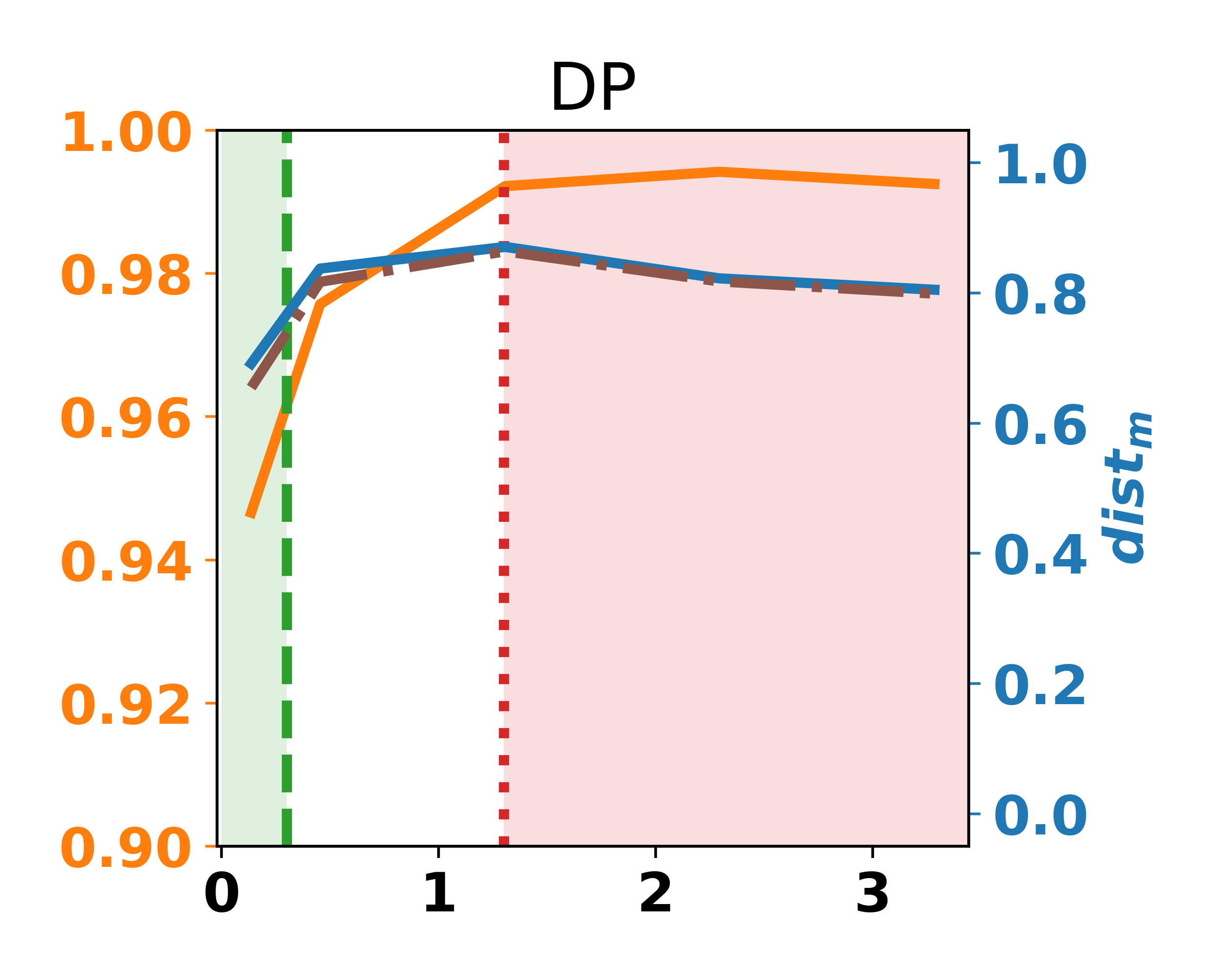}
			\includegraphics[width=0.24\linewidth]{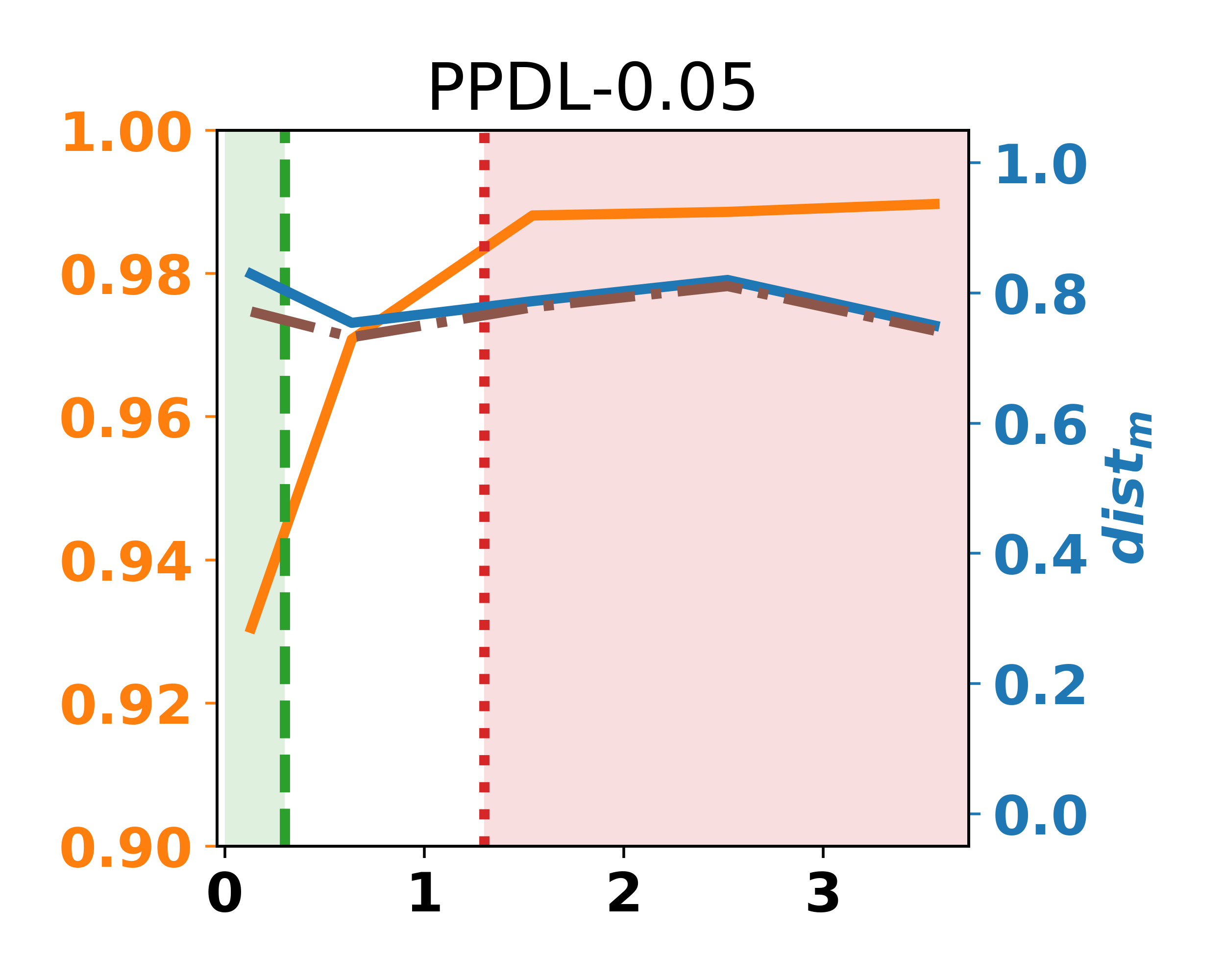}
			\includegraphics[width=0.24\linewidth]{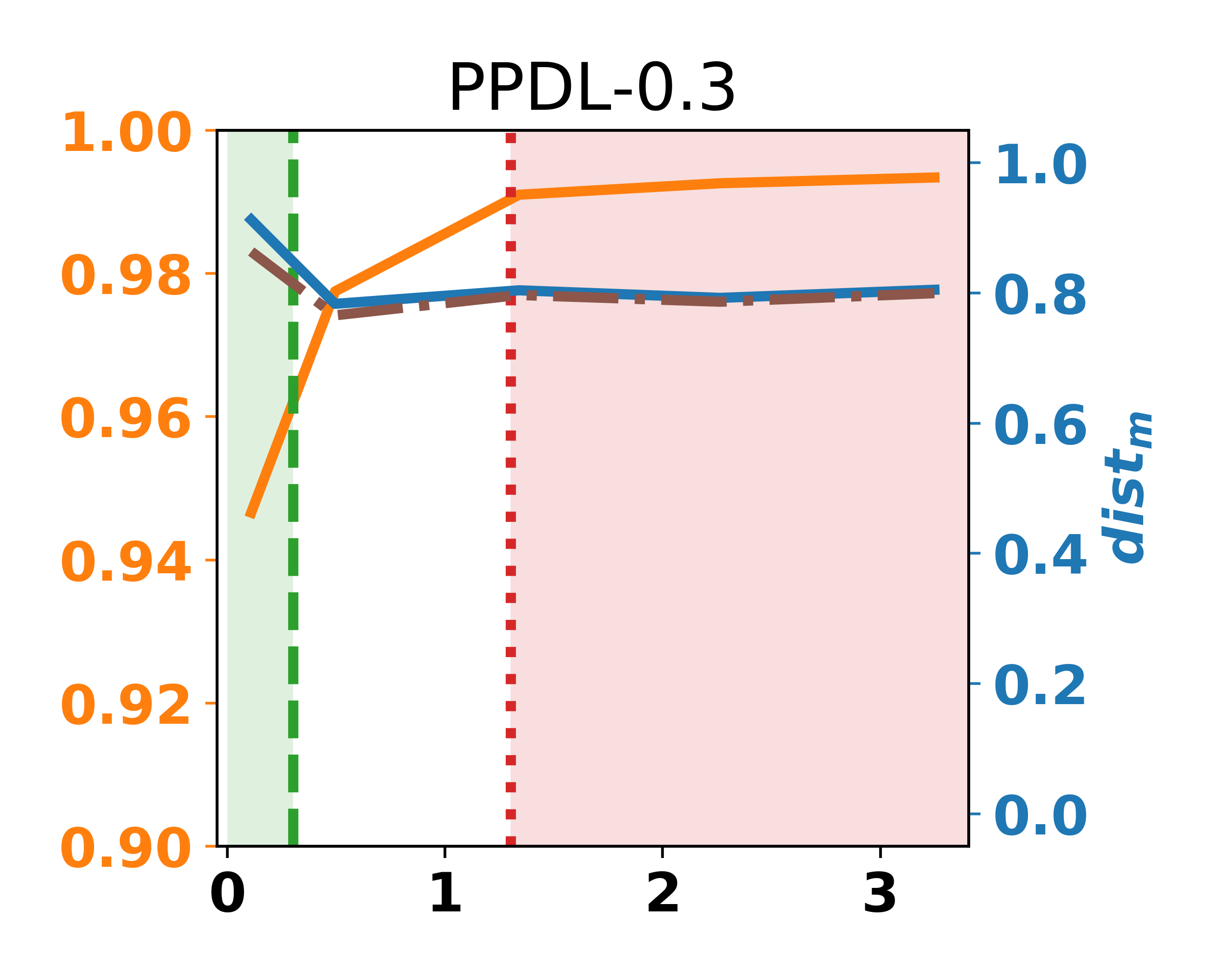}
			\includegraphics[width=0.24\linewidth]{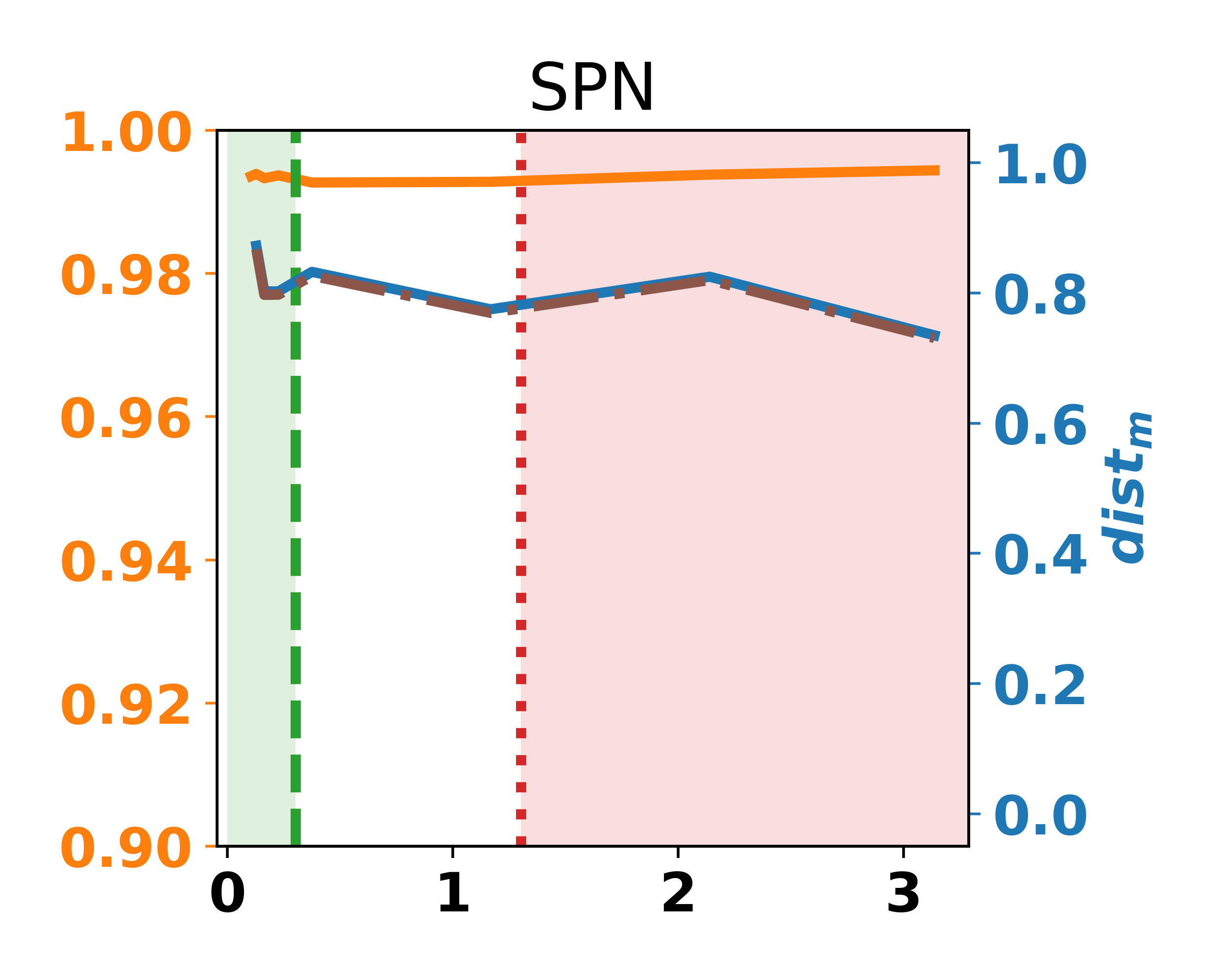}
			\caption{Membership Attack}
		\end{subfigure}
		
		\begin{subfigure}{0.99\linewidth}
			\centering
			\includegraphics[width=0.24\linewidth]{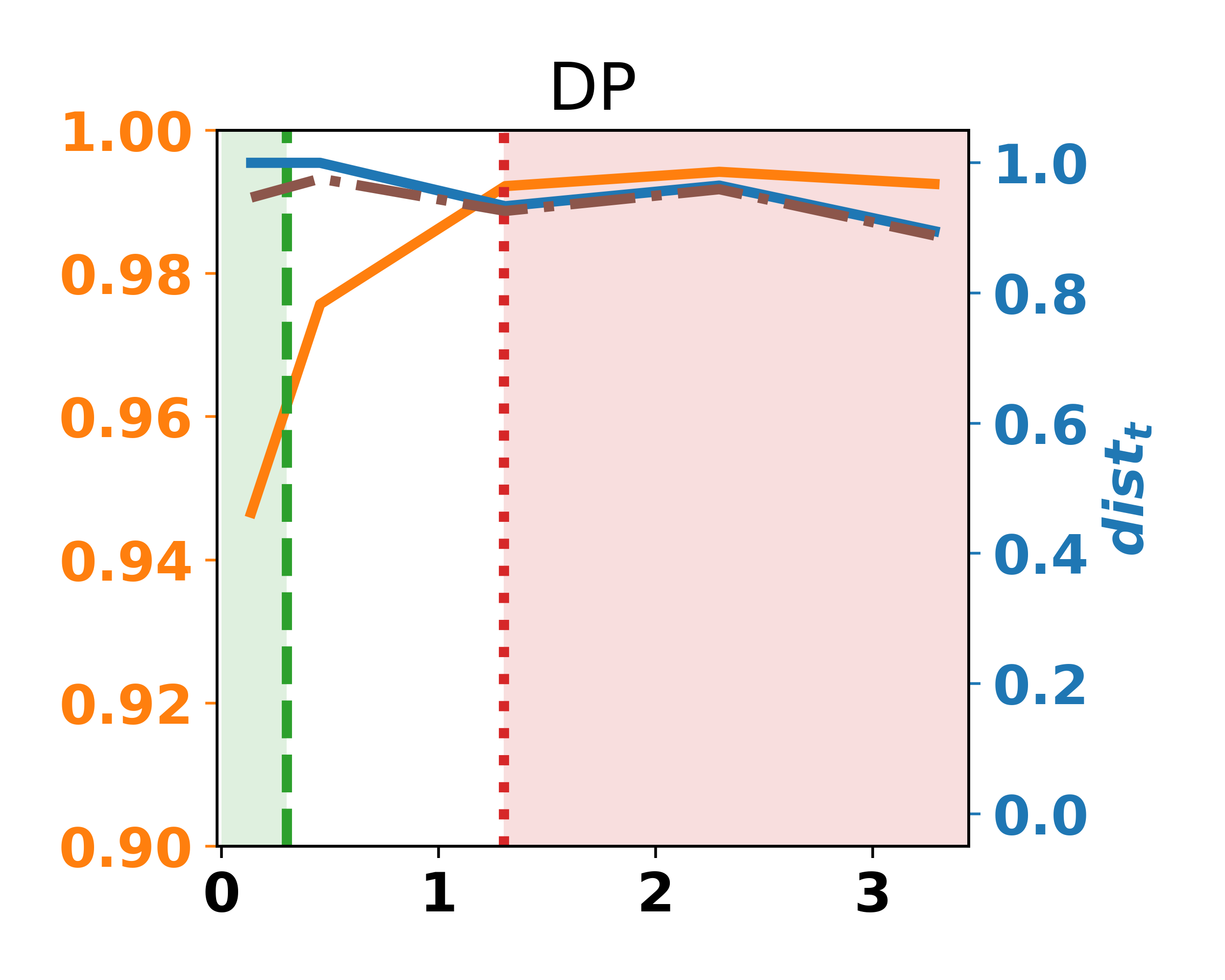}
			\includegraphics[width=0.24\linewidth]{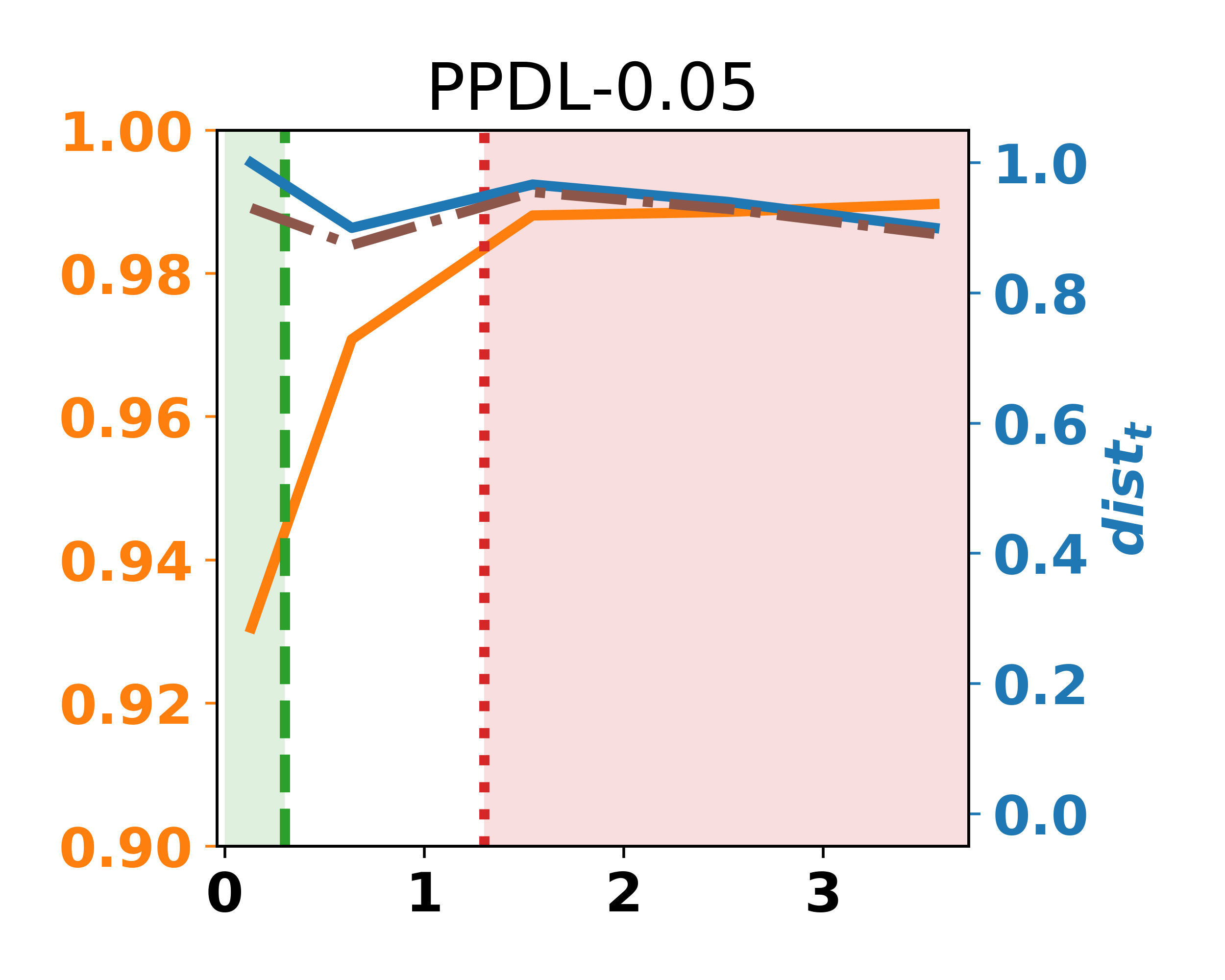}
			\includegraphics[width=0.24\linewidth]{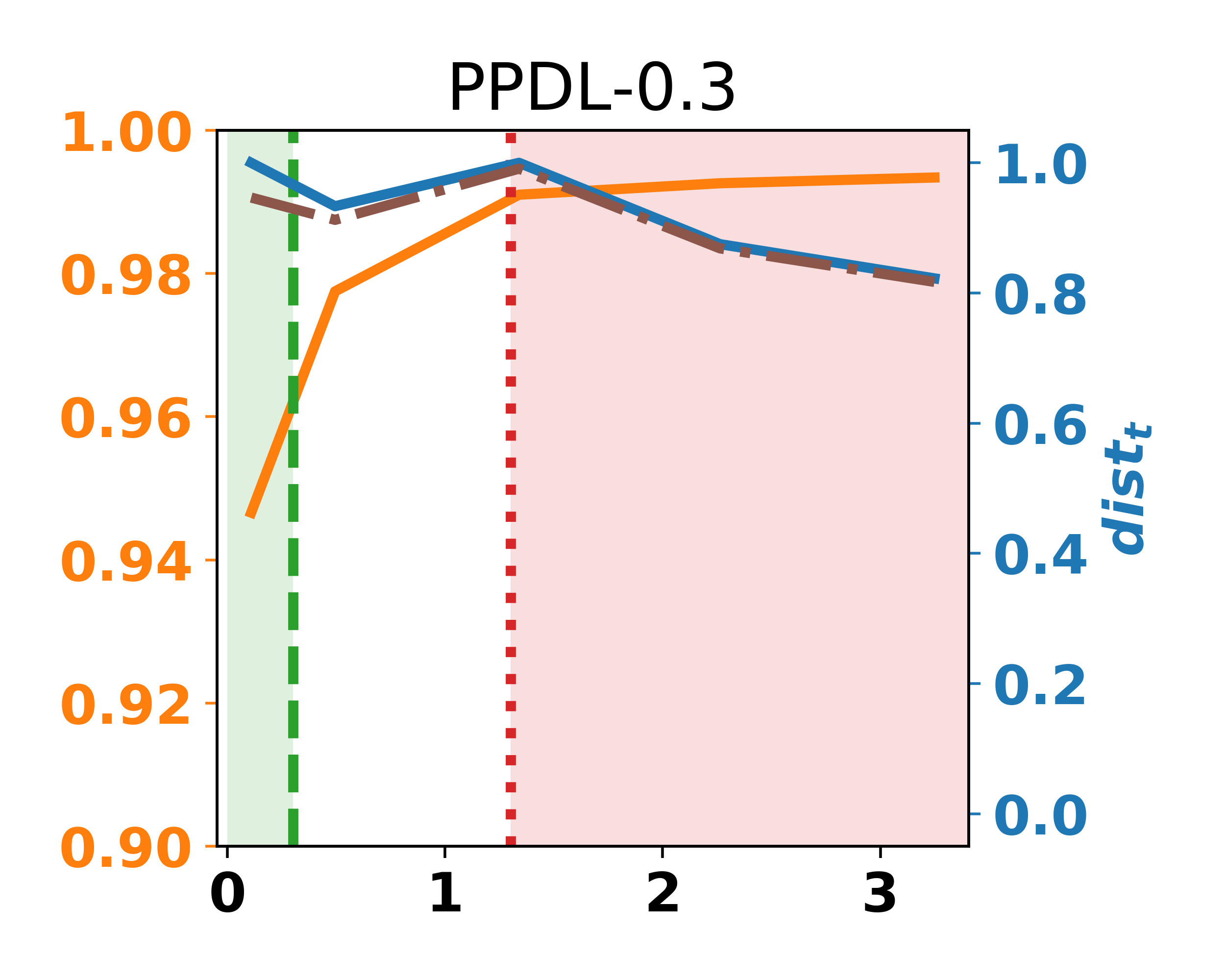}
			\includegraphics[width=0.24\linewidth]{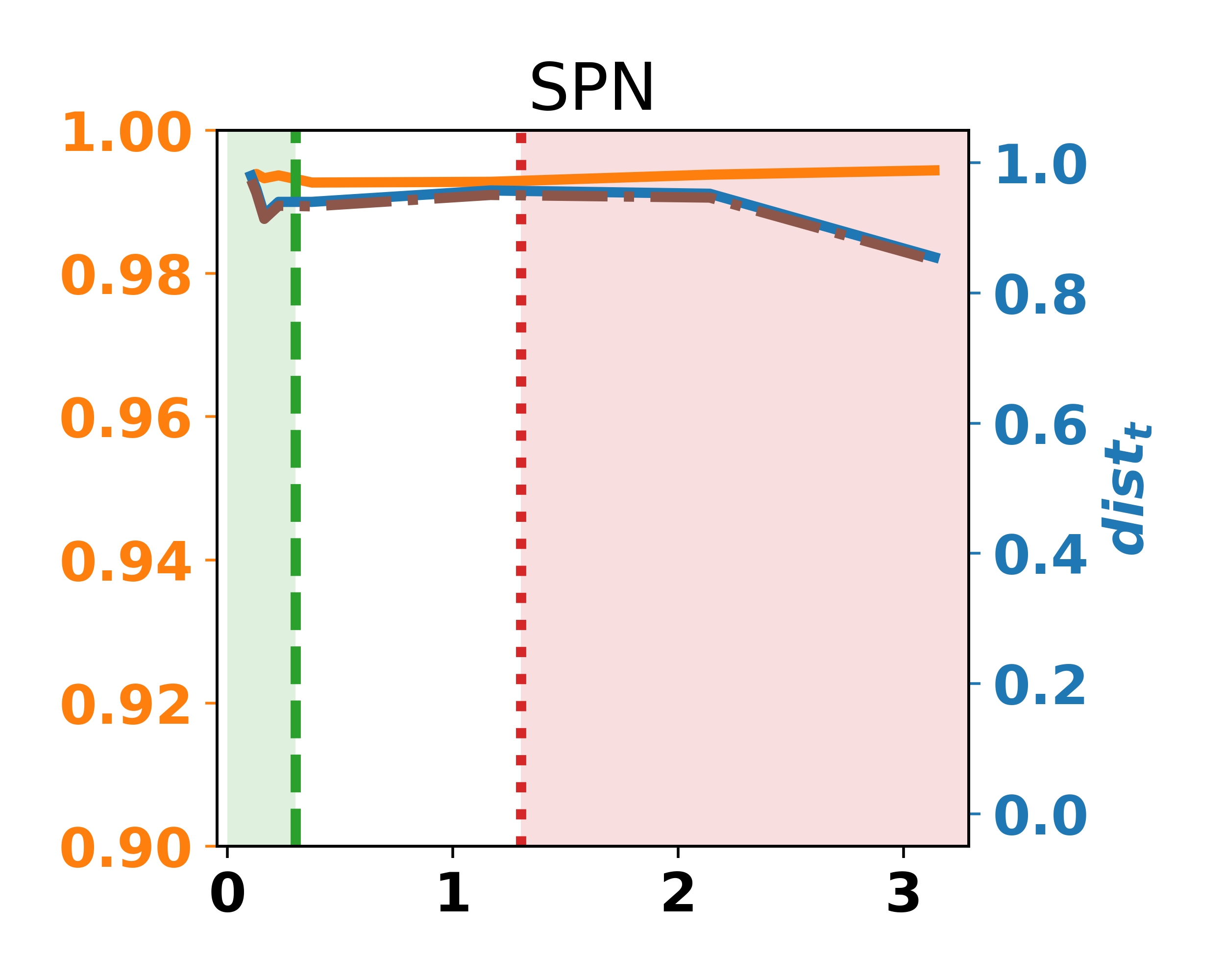}
			\caption{Tracing Attack}
		\end{subfigure}
		
		\caption{Attack with Batch Size 8}
		\label{fig:ppc-mnist-bs8}
		%\vspace{-0.22cm}
	\end{figure}
	
	\subsubsection{Calibrated Averaged Performance (CAP)}
	
	\begin{table}[H]
		\centering
		\begin{tabular}{l|c|c|c|c|c|c|c|c|c}
			\toprule
			& \multicolumn{3}{c|}{Reconstruction} & \multicolumn{3}{c|}{Membership} & \multicolumn{3}{c}{Tracing} \\
			\midrule
			BS & 1 & 4 & 8 & 1 & 4 & 8 & 1 & 4 & 8 \\
			\midrule
			DP \cite{DLDP_Abadi16} & 3.38 &\textbf{ 3.83 }& 3.69 & 0.00 & 0.71 & 0.79 & 0.91 & 0.93 & \textbf{0.95} \\
			PPDL-0.05 \cite{PPDL/shokri2015} & \textbf{4.42 }& 3.62 & 4.13 & 0.37 & 0.72 & 0.77 & 0.92 & 0.92 & 0.92 \\
			PPDL-0.3 \cite{PPDL/shokri2015} & 4.04 & 4.65 & 3.91 & 0.00 & 0.66 & 0.80 & \textbf{0.95} & \textbf{0.95} & \textbf{0.95} \\
			SPN (ours) & 4.30 & 3.72 & \textbf{4.33} & \textbf{0.37} &\textbf{0.73} & \textbf{0.81} & 0.90 & 0.93 & 0.94 \\
			\bottomrule
		\end{tabular}
		\caption{CAP performance with different batch size on MNIST for reconstruction, membership, and tracing attack. Higher better. BS = Attack Batch Size.}
		\label{tab:CAP-mnist-supp}
	\end{table}
	
	\subsubsection{Reconstructed Images}
	
	\begin{figure}[H]
		\centering
		
		\begin{subfigure}{0.32\linewidth}
			\centering
			\includegraphics[scale=0.8]{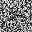}
			\hspace{1pt}
			\includegraphics[scale=0.8]{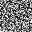}
			\hspace{1pt}
			\includegraphics[scale=0.8]{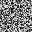}
			\hspace{1pt}
			\includegraphics[scale=0.8]{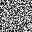}
			\\
			\vspace{2pt}
			\includegraphics[scale=0.8]{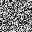}
			\hspace{1pt}
			\includegraphics[scale=0.8]{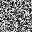}
			\hspace{1pt}
			\includegraphics[scale=0.8]{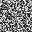}
			\hspace{1pt}
			\includegraphics[scale=0.8]{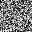}
			\\
			\vspace{2pt}
			\includegraphics[scale=0.8]{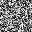}
			\hspace{1pt}
			\includegraphics[scale=0.8]{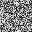}
			\hspace{1pt}
			\includegraphics[scale=0.8]{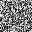}
			\hspace{1pt}
			\includegraphics[scale=0.8]{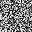}
			\\
			\vspace{2pt}
			\includegraphics[scale=0.8]{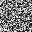}
			\hspace{1pt}
			\includegraphics[scale=0.8]{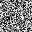}
			\hspace{1pt}
			\includegraphics[scale=0.8]{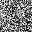}
			\hspace{1pt}
			\includegraphics[scale=0.8]{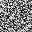}
			\caption{$5.17 (0.37)$ \label{fig:recon-images-green-mnist-supp}}
		\end{subfigure}
		\hfill
		\begin{subfigure}{0.32\linewidth}
			\centering
			\includegraphics[scale=0.8]{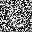}
			\hspace{1pt}
			\includegraphics[scale=0.8]{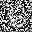}
			\hspace{1pt}
			\includegraphics[scale=0.8]{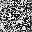}
			\hspace{1pt}
			\includegraphics[scale=0.8]{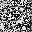}
			\\
			\vspace{2pt}
			\includegraphics[scale=0.8]{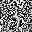}
			\hspace{1pt}
			\includegraphics[scale=0.8]{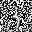}
			\hspace{1pt}
			\includegraphics[scale=0.8]{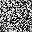}
			\hspace{1pt}
			\includegraphics[scale=0.8]{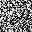}
			\\
			\vspace{2pt}
			\includegraphics[scale=0.8]{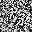}
			\hspace{1pt}
			\includegraphics[scale=0.8]{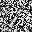}
			\hspace{1pt}
			\includegraphics[scale=0.8]{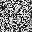}
			\hspace{1pt}
			\includegraphics[scale=0.8]{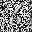}
			\\
			\vspace{2pt}
			\includegraphics[scale=0.8]{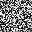}
			\hspace{1pt}
			\includegraphics[scale=0.8]{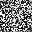}
			\hspace{1pt}
			\includegraphics[scale=0.8]{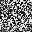}
			\hspace{1pt}
			\includegraphics[scale=0.8]{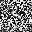}
			\caption{$4.70 (16.89)$ \label{fig:recon-images-white-mnist-supp}}%; 1.10 (1.30)}
		\end{subfigure}
		\hfill
		\begin{subfigure}{0.32\linewidth}
			\centering
			\includegraphics[scale=0.8]{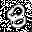}
			\hspace{1pt}
			\includegraphics[scale=0.8]{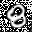}
			\hspace{1pt}
			\includegraphics[scale=0.8]{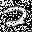}
			\hspace{1pt}
			\includegraphics[scale=0.8]{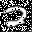}
			\\
			\vspace{2pt}
			\includegraphics[scale=0.8]{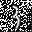}
			\hspace{1pt}
			\includegraphics[scale=0.8]{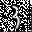}
			\hspace{1pt}
			\includegraphics[scale=0.8]{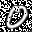}
			\hspace{1pt}
			\includegraphics[scale=0.8]{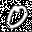}
			\\
			\vspace{2pt}
			\includegraphics[scale=0.8]{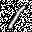}
			\hspace{1pt}
			\includegraphics[scale=0.8]{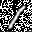}
			\hspace{1pt}
			\includegraphics[scale=0.8]{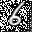}
			\hspace{1pt}
			\includegraphics[scale=0.8]{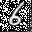}
			\\
			\vspace{2pt}
			\includegraphics[scale=0.8]{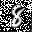}
			\hspace{1pt}
			\includegraphics[scale=0.8]{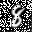}
			\hspace{1pt}
			\includegraphics[scale=0.8]{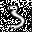}
			\hspace{1pt}
			\includegraphics[scale=0.8]{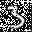}
			\caption{$1.46 (1011.30)$ \label{fig:recon-images-red-mnist-supp}} %0.94 (1.59)}
		\end{subfigure}
		
		%\begin{subfigure}{0.2\linewidth}
		%	\centering
		%	\includegraphics[scale=0.8]{imgs/recimgs/0_0.0001_0.0151.png}
		%	\hspace{3pt}
		%	\includegraphics[scale=0.8]{imgs/recimgs/391_0.0001_0.0116.png}
		%	\caption{0.02 (3.34); 0.01 (3.28)}
		%\end{subfigure}
		\caption{Reconstructed images from different region mentioned in the main paper. \textbf{(a)} Green region \textbf{(b)} White region \textbf{(c)} Red region. 
			Values inside bracket are mean of $\frac{||B_I||}{||E_B||}$ and values outside are mean of rMSE of reconstructed w.r.t. original images. High rMSE (e.g. 1.46 in the red region) is due to original image is having a lot of zero valued pixel, hence getting smaller $||x||$ and higher rMSE. Also, found out that images that have solid color pixels (e.g. fully dark (0,0,0) or fully white (255,255,255)) are more difficult to attack.
		}
		\label{fig:recon-images-mnist-supp}
	\end{figure}

	\newpage
	
	\subsection{CIFAR10}
	
	\subsubsection{Privacy-Preserving Characteristics (PPC)}
	
	\begin{figure}[H]	
		%	\begin{subfigure}{0.95\linewidth}
		\centering
		\begin{subfigure}{0.99\linewidth}
			\centering
			\includegraphics[scale=0.7]{imgs/legends/legend_ppc_horizontal.png}
			\\
			\includegraphics[width=0.24\linewidth]{imgs/rmse/rmse_cifar10_dp.png}
			\includegraphics[width=0.24\linewidth]{imgs/rmse/rmse_cifar10_ppdl_0_05.png}
			\includegraphics[width=0.24\linewidth]{imgs/rmse/rmse_cifar10_ppdl_0_3.png}
			\includegraphics[width=0.24\linewidth]{imgs/rmse/rmse_cifar10_spn.png}
			\caption{Reconstruction Attack}
		\end{subfigure}
		
		\begin{subfigure}{0.99\linewidth}
			\centering
			\includegraphics[width=0.24\linewidth]{imgs/mms/mms_cifar10_dp.png}
			\includegraphics[width=0.24\linewidth]{imgs/mms/mms_cifar10_ppdl_0_05.png}
			\includegraphics[width=0.24\linewidth]{imgs/mms/mms_cifar10_ppdl_0_3.png}
			\includegraphics[width=0.24\linewidth]{imgs/mms/mms_cifar10_spn.png}
			\caption{Membership Attack}
		\end{subfigure}
		
		\begin{subfigure}{0.99\linewidth}
			\centering
			\includegraphics[width=0.24\linewidth]{imgs/trace/trace_cifar10_dp.png}
			\includegraphics[width=0.24\linewidth]{imgs/trace/trace_cifar10_ppdl_0_05.png}
			\includegraphics[width=0.24\linewidth]{imgs/trace/trace_cifar10_ppdl_0_3.png}
			\includegraphics[width=0.24\linewidth]{imgs/trace/trace_cifar10_spn.png}
			\caption{Tracing Attack}
		\end{subfigure}
		
		\caption{Attack with Batch Size 1}
		\label{fig:ppc-cifar10-bs1}
		%\vspace{-0.22cm}
	\end{figure}
	
	\newpage
	
	\begin{figure}[H]	
		%	\begin{subfigure}{0.95\linewidth}
		\centering
		\begin{subfigure}{0.99\linewidth}
			\centering
			\includegraphics[scale=0.7]{imgs/legends/legend_ppc_horizontal.png}
			\\
			\includegraphics[width=0.24\linewidth]{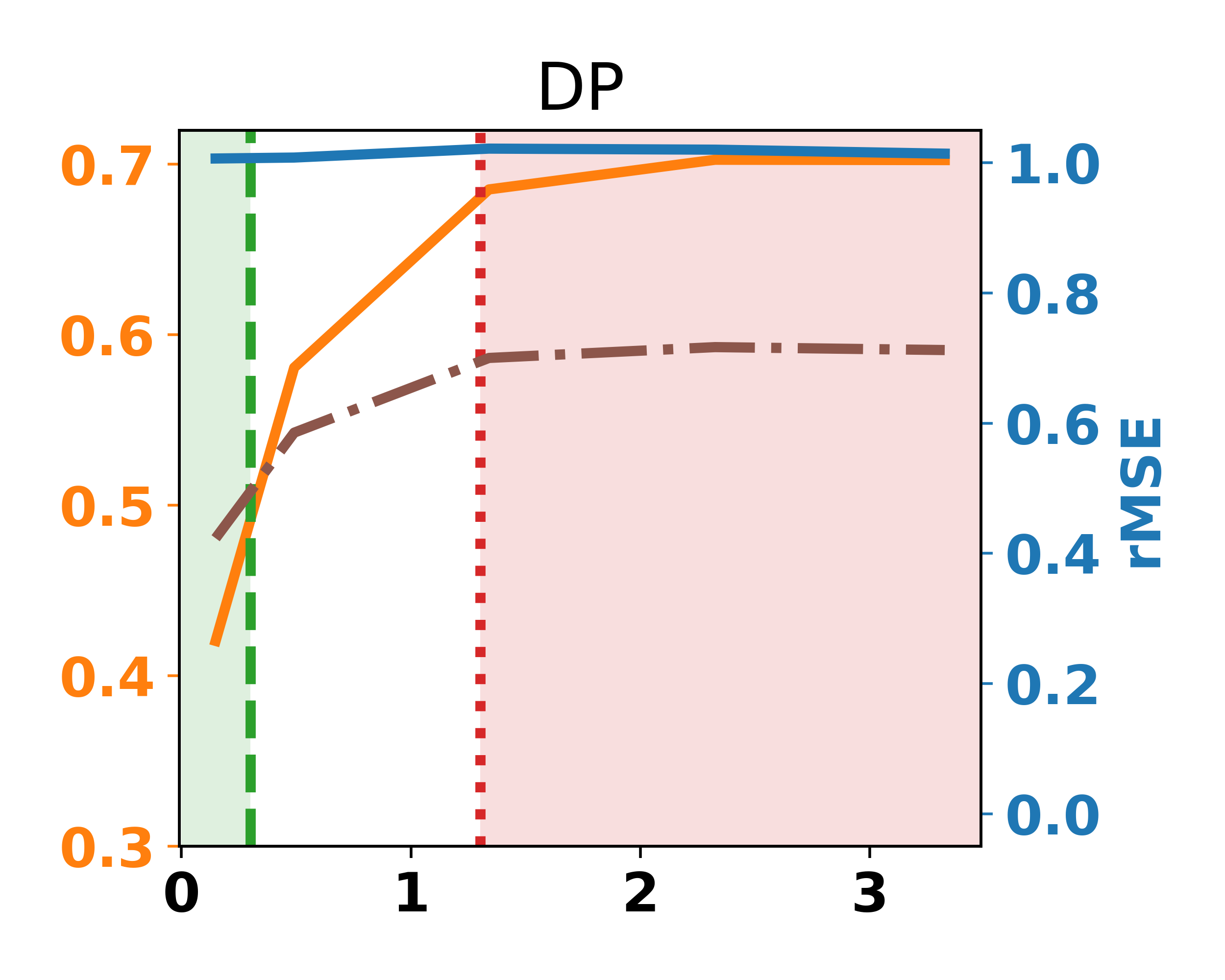}
			\includegraphics[width=0.24\linewidth]{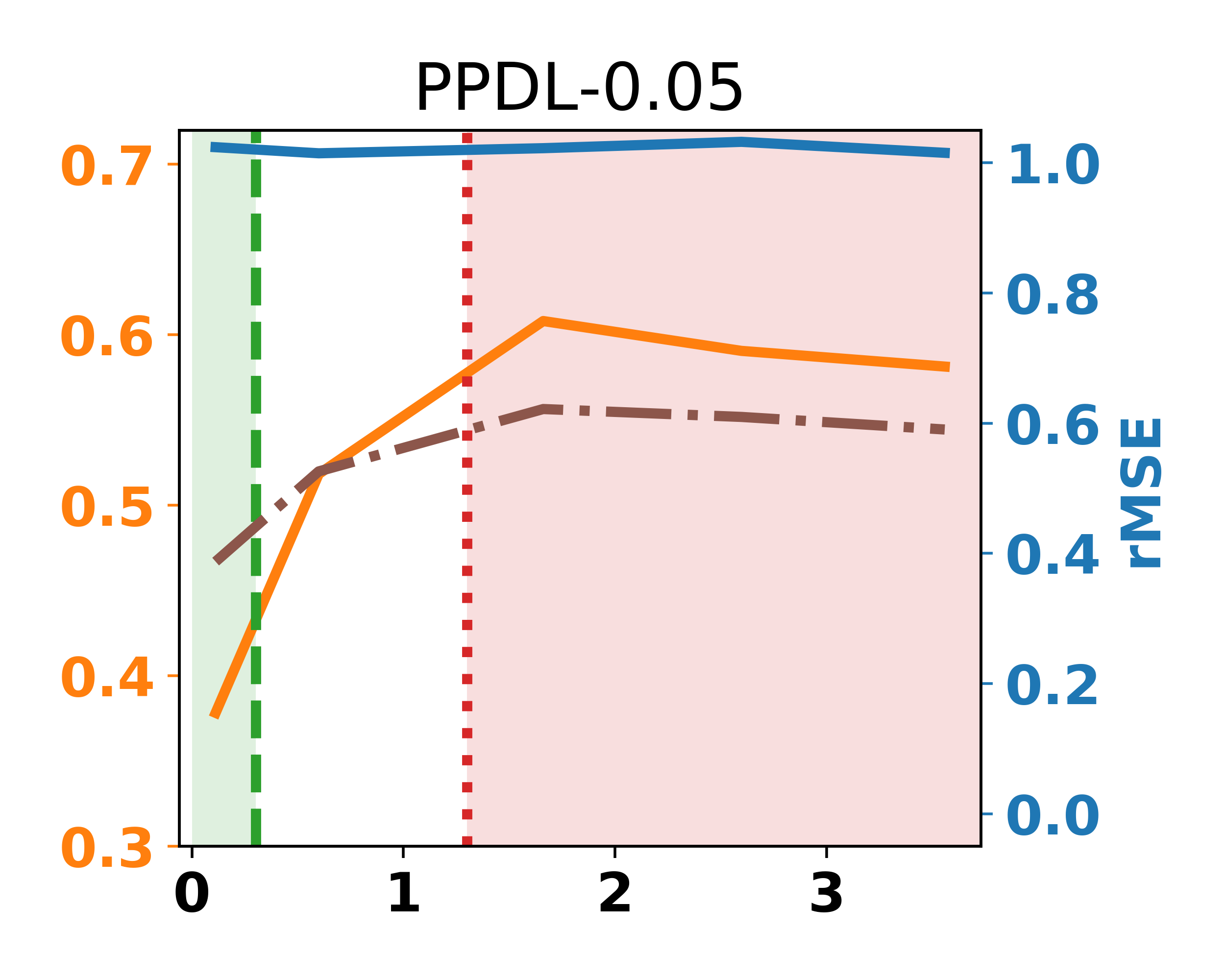}
			\includegraphics[width=0.24\linewidth]{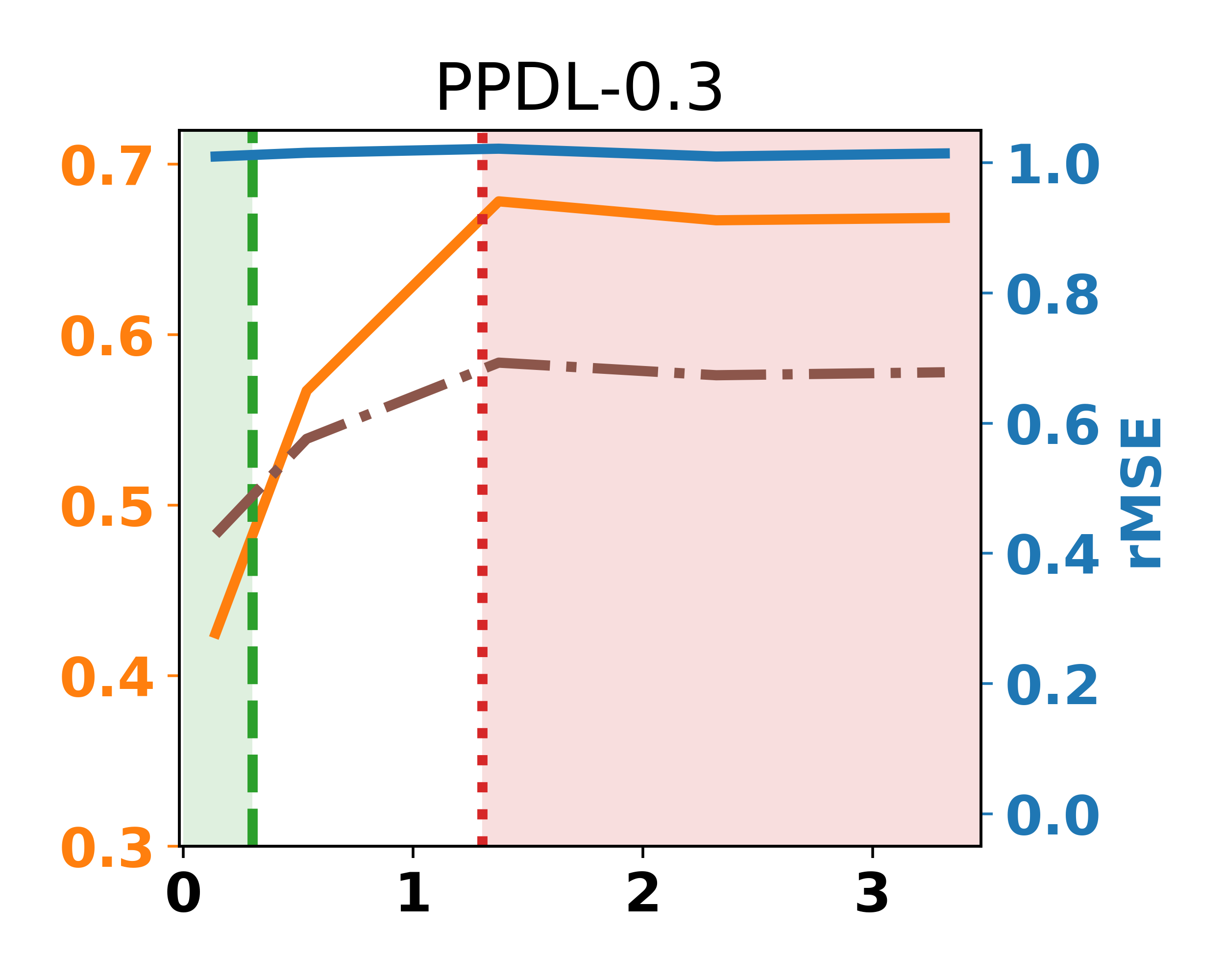}
			\includegraphics[width=0.24\linewidth]{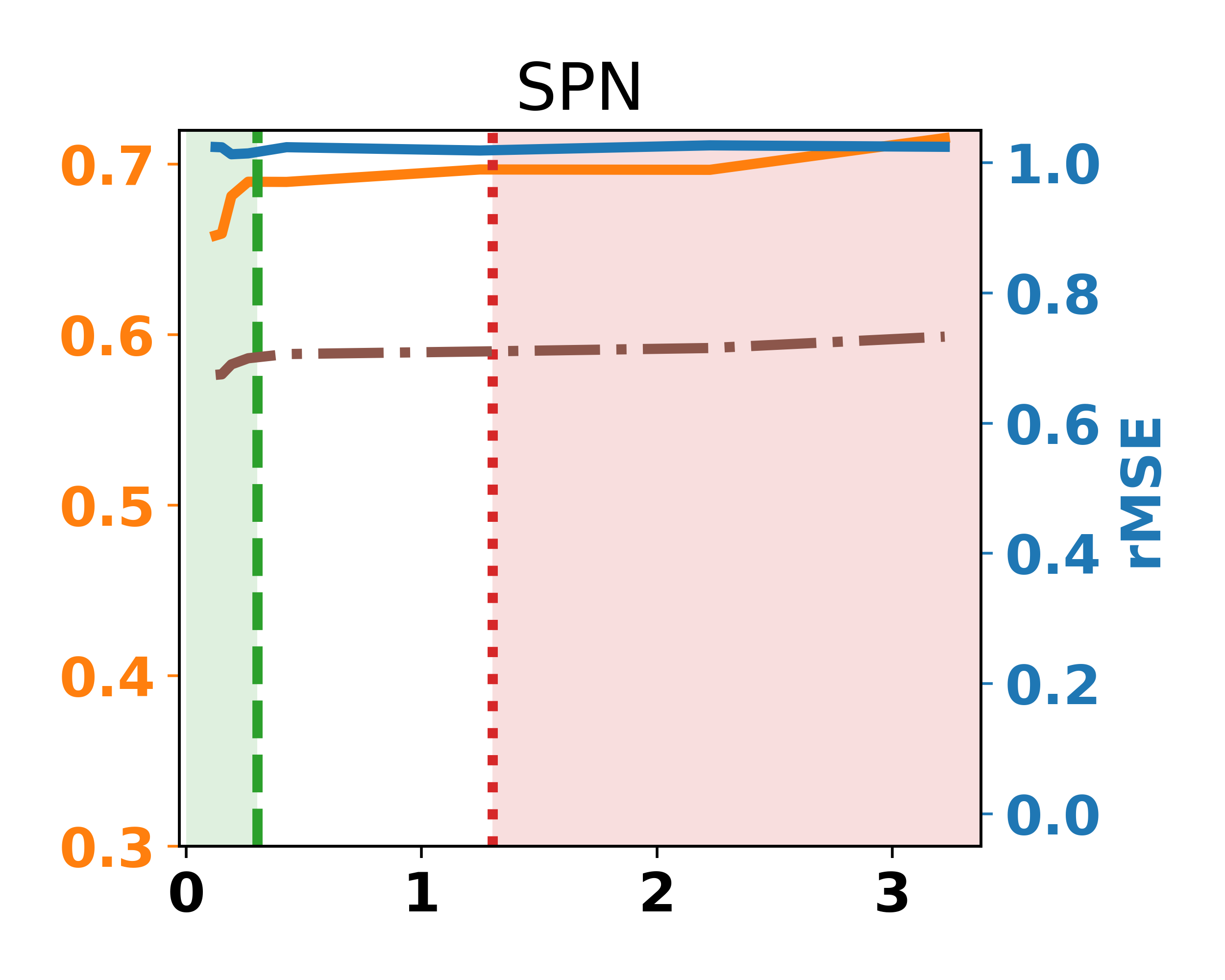}
			\caption{Reconstruction Attack}
		\end{subfigure}
		
		\begin{subfigure}{0.99\linewidth}
			\centering
			\includegraphics[width=0.24\linewidth]{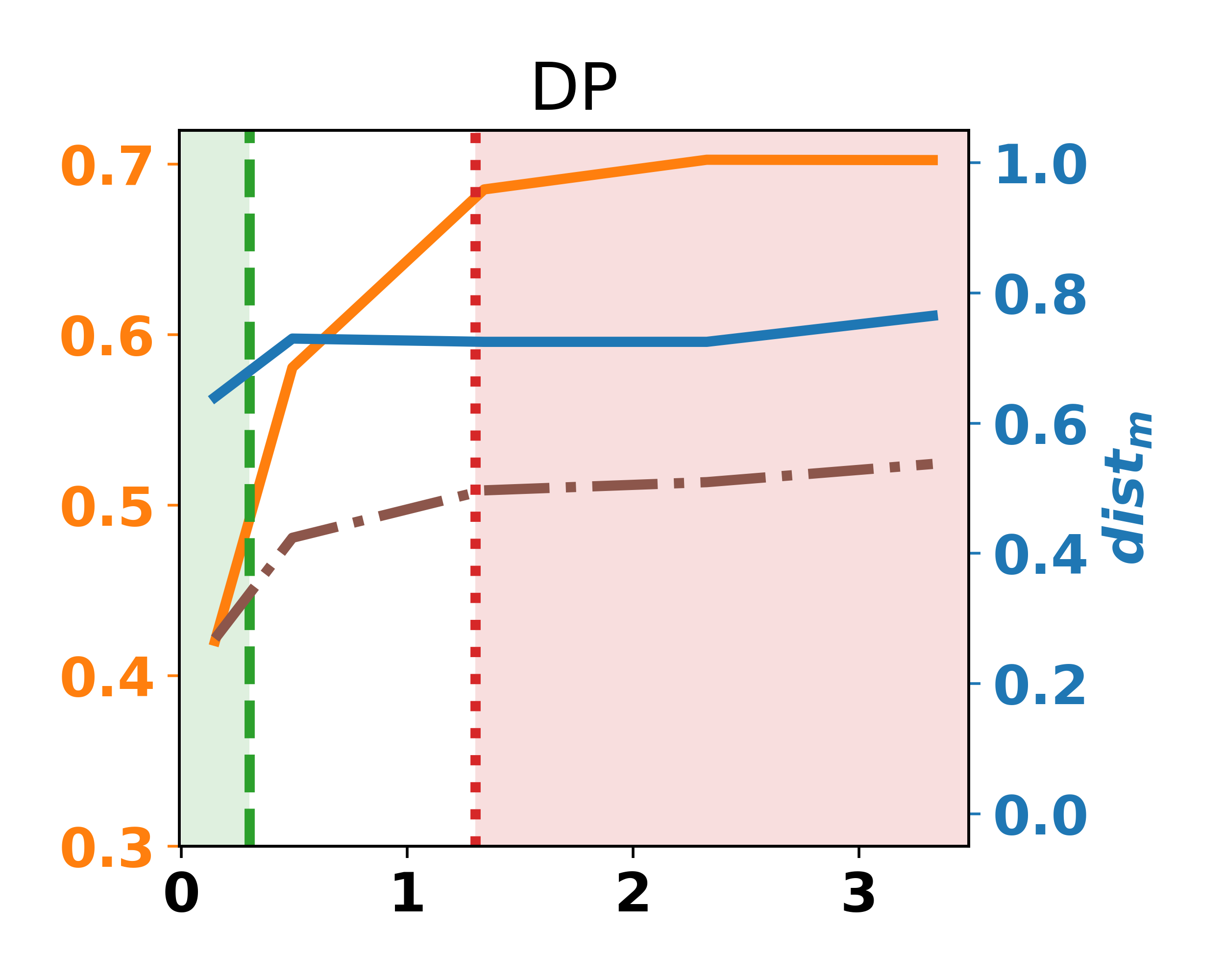}
			\includegraphics[width=0.24\linewidth]{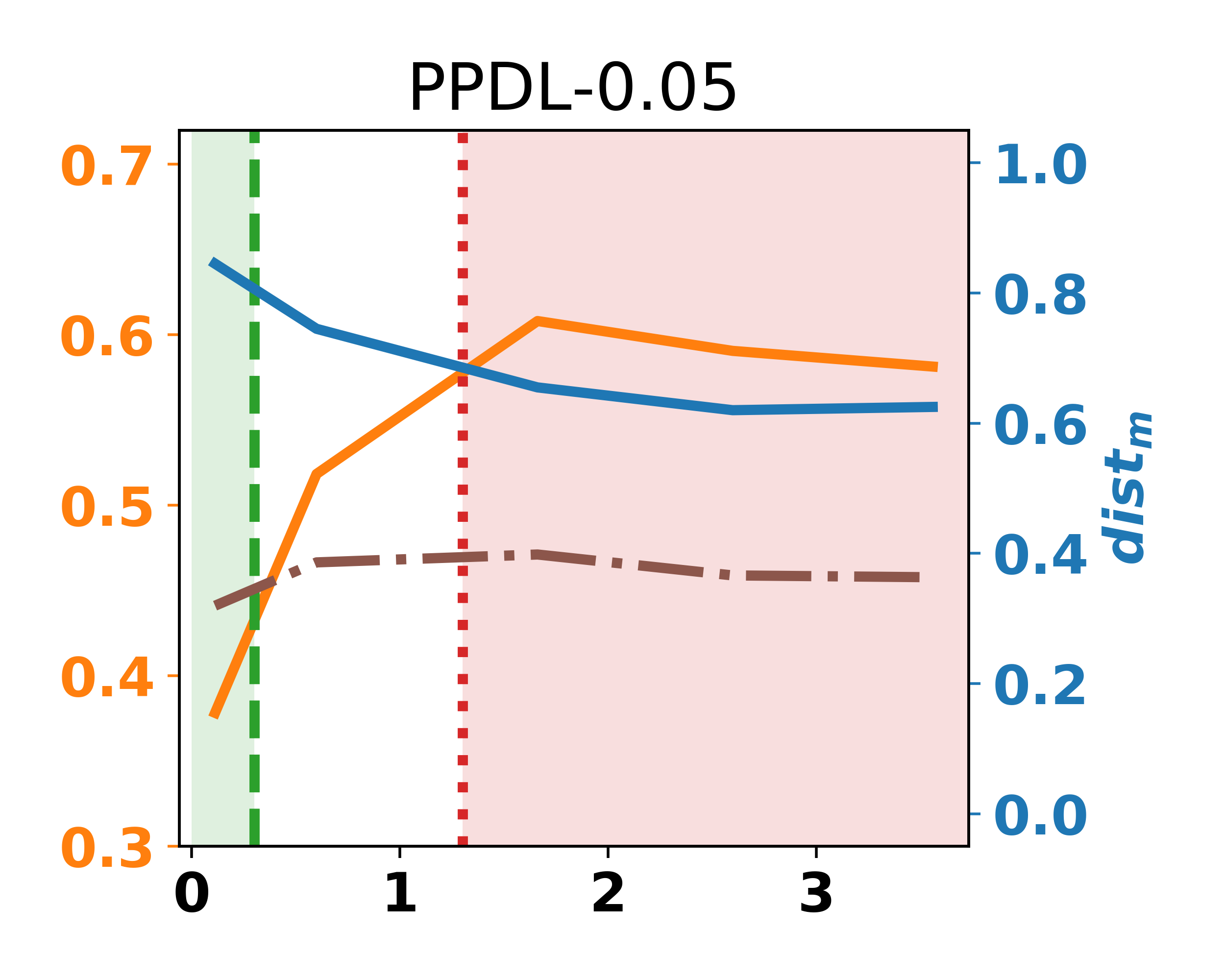}
			\includegraphics[width=0.24\linewidth]{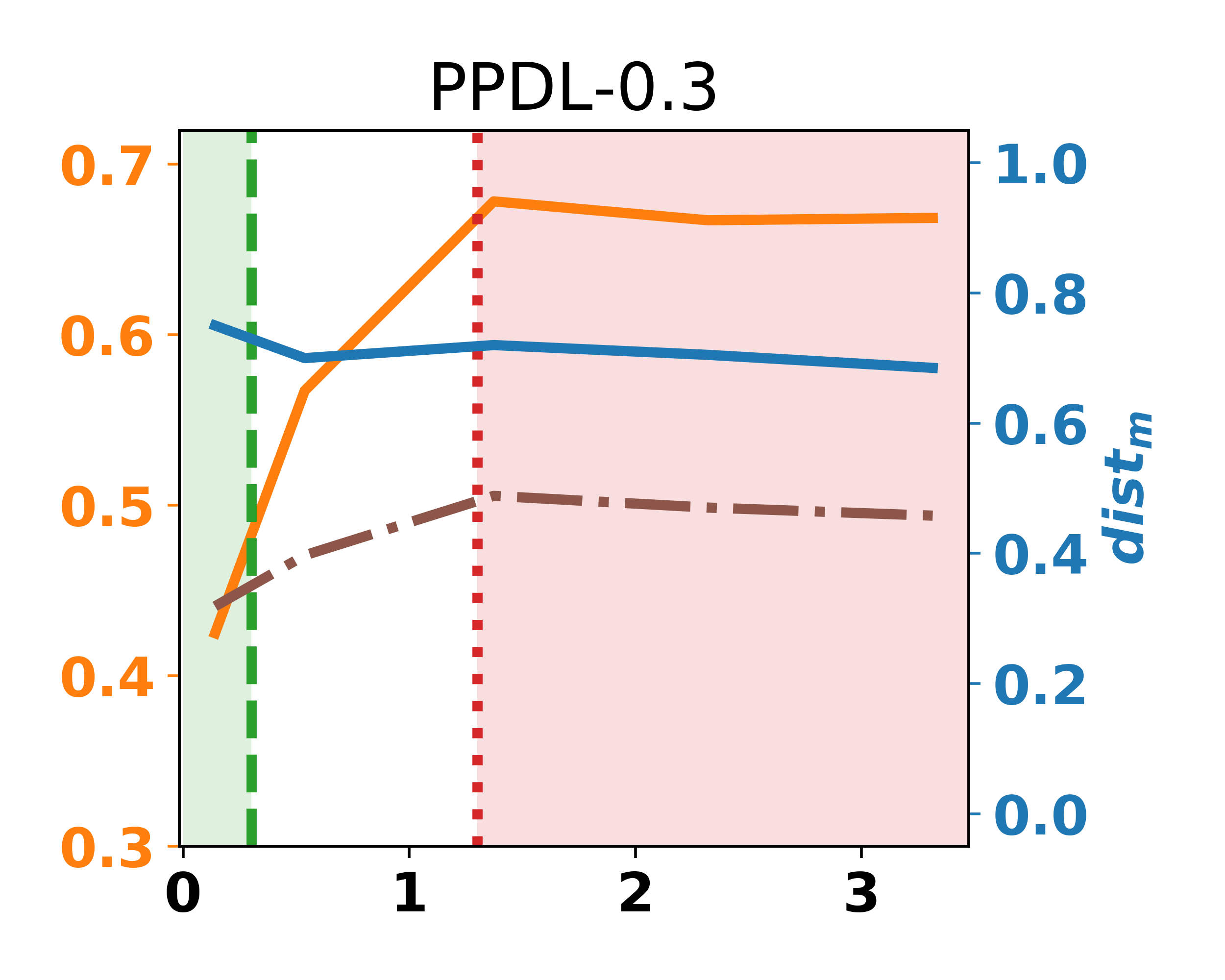}
			\includegraphics[width=0.24\linewidth]{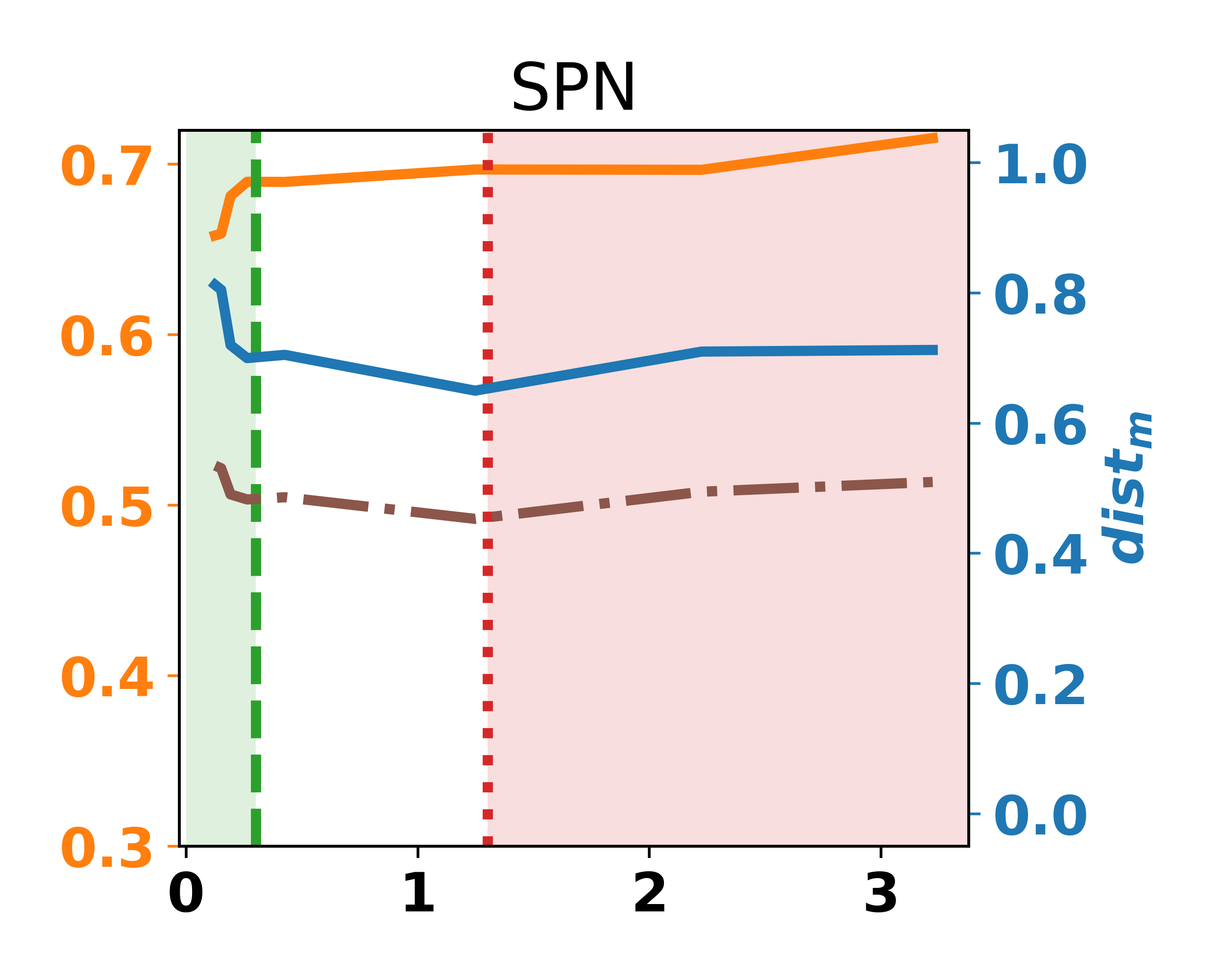}
			\caption{Membership Attack}
		\end{subfigure}
		
		\begin{subfigure}{0.99\linewidth}
			\centering
			\includegraphics[width=0.24\linewidth]{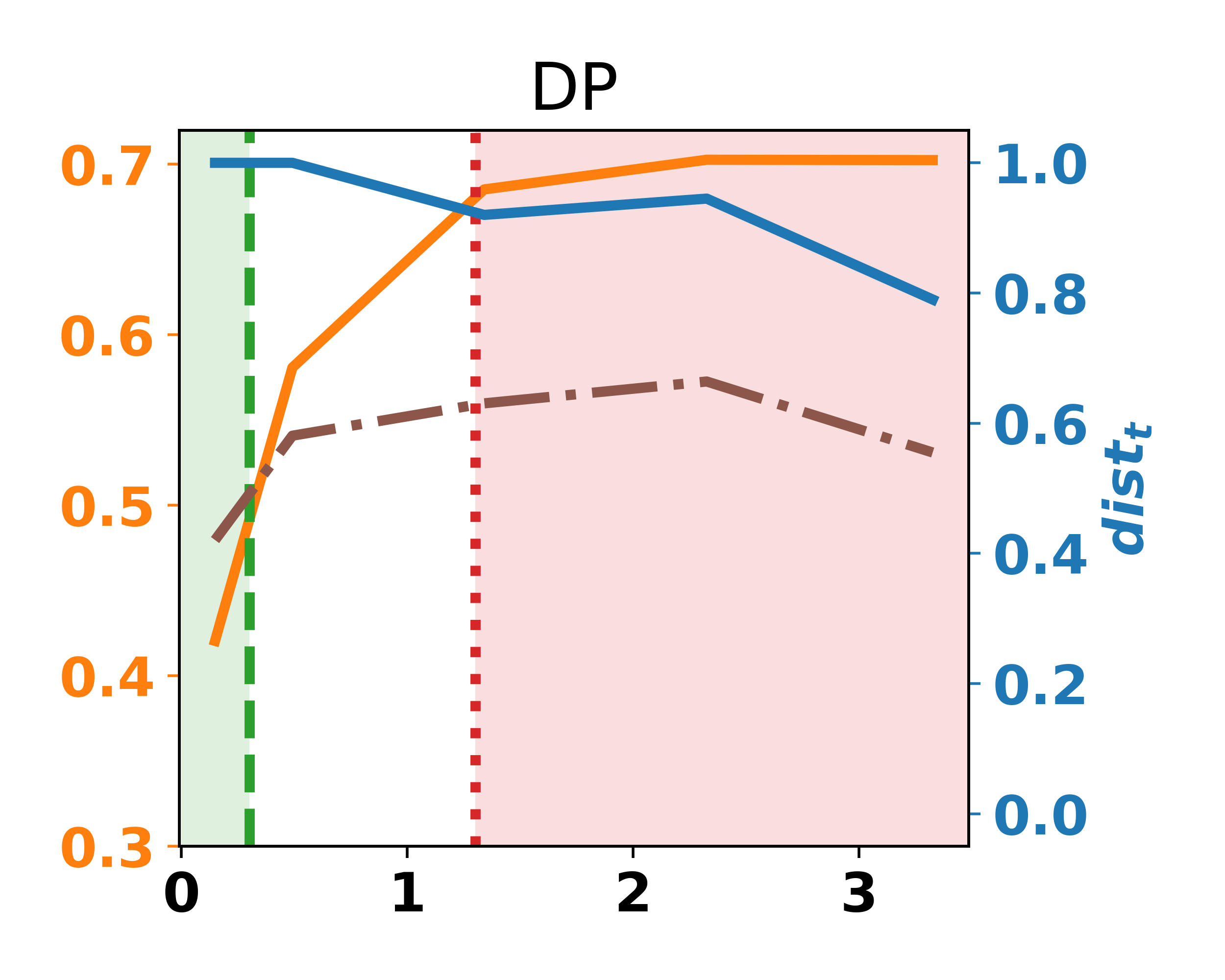}
			\includegraphics[width=0.24\linewidth]{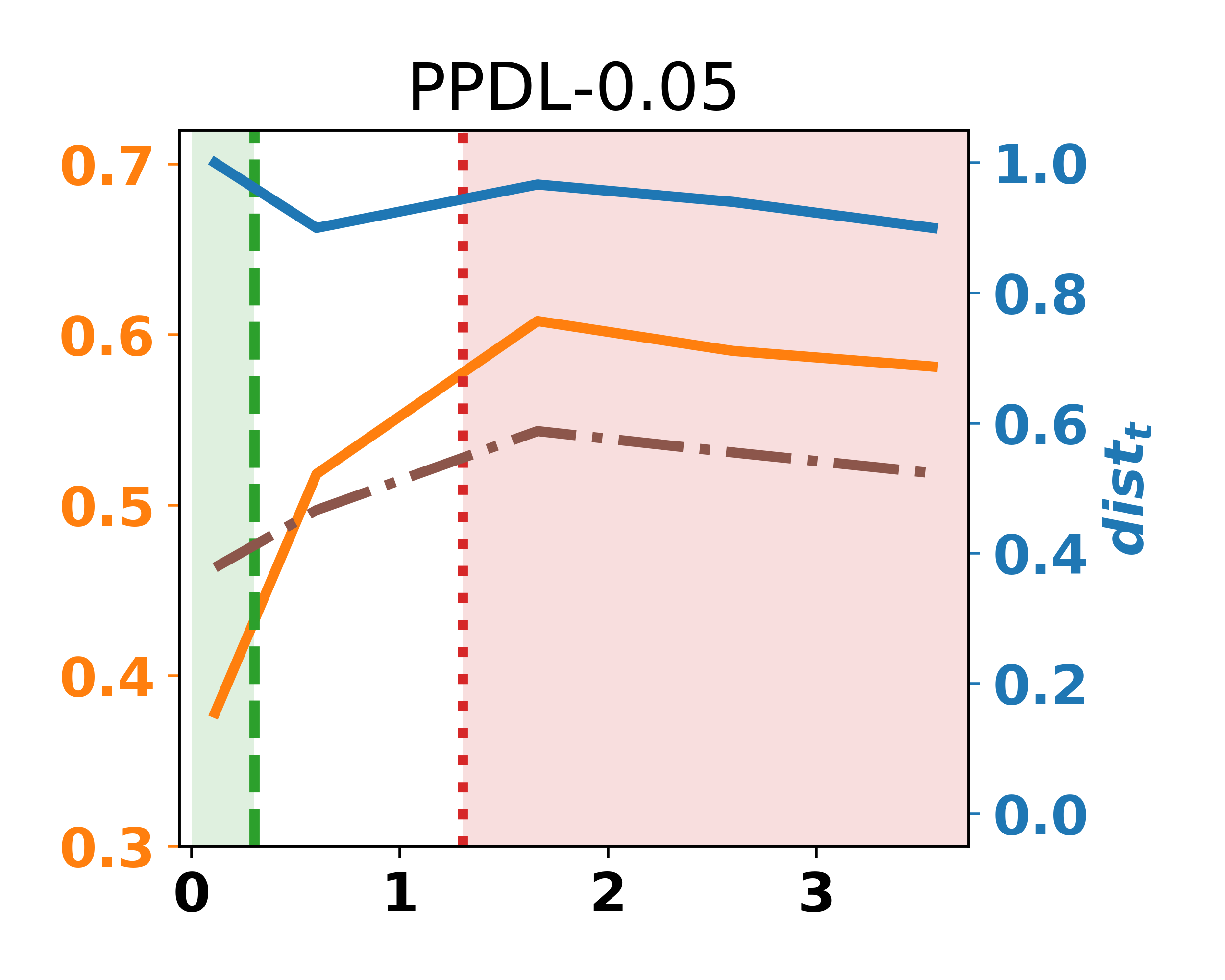}
			\includegraphics[width=0.24\linewidth]{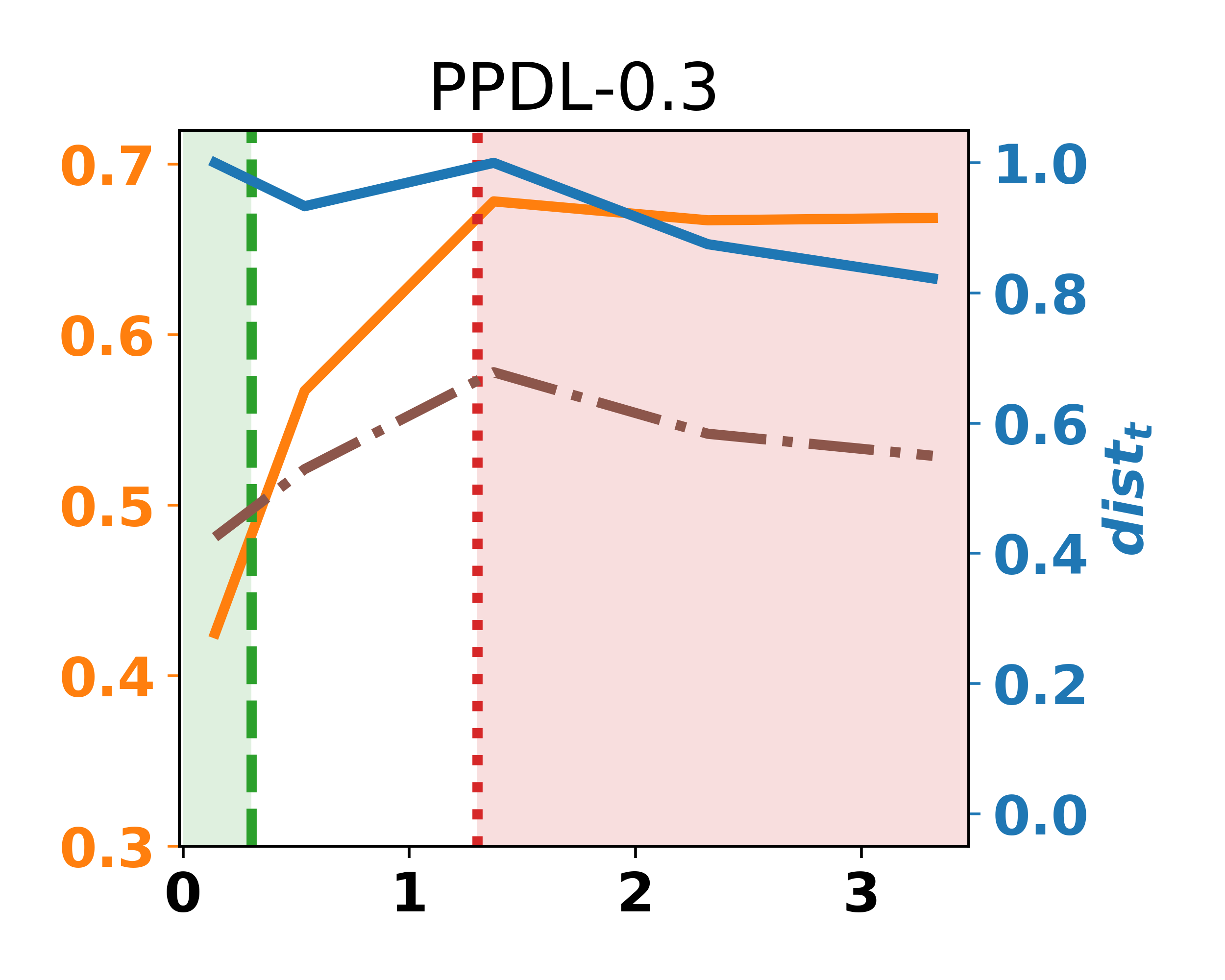}
			\includegraphics[width=0.24\linewidth]{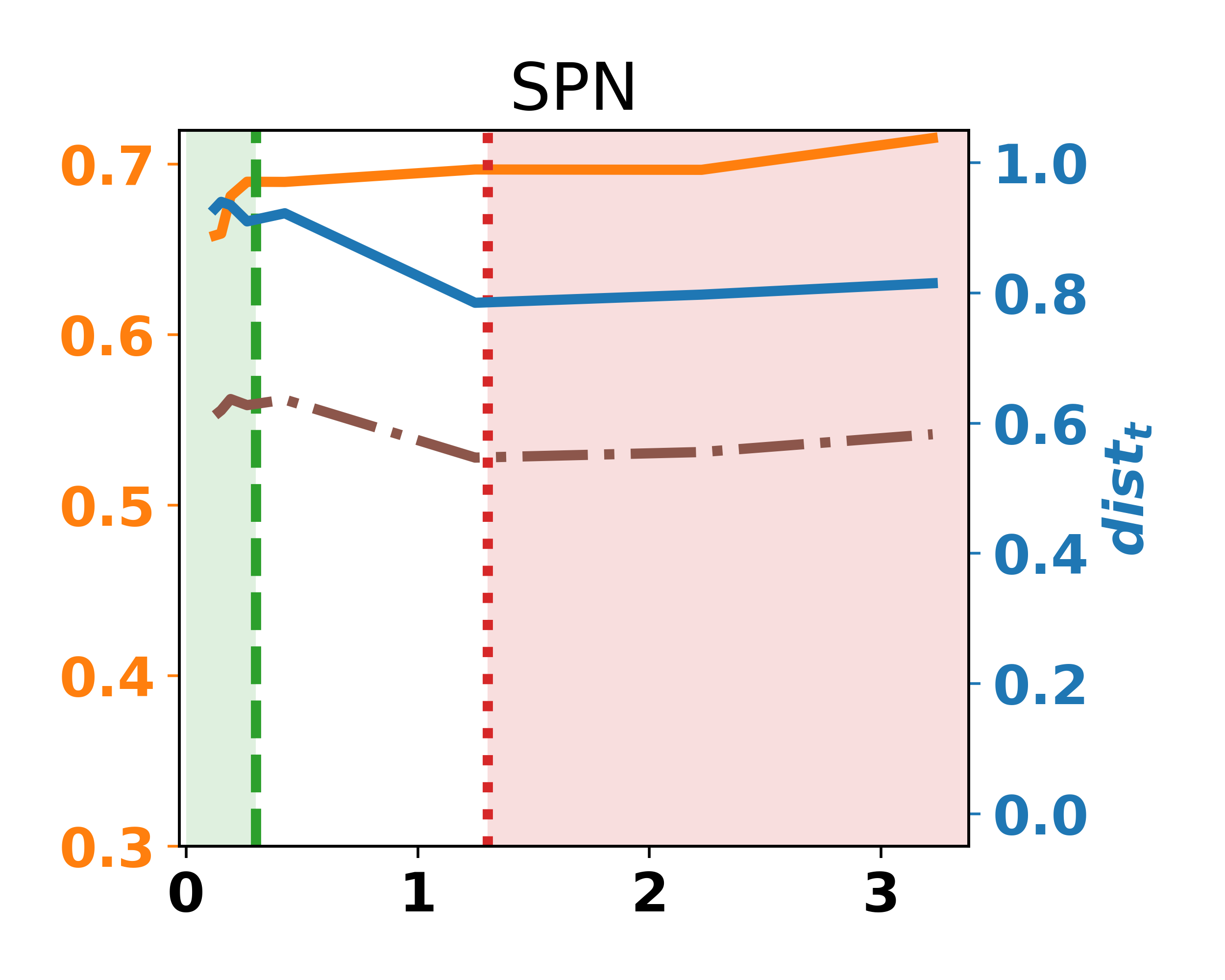}
			\caption{Tracing Attack}
		\end{subfigure}
		
		\caption{Attack with Batch Size 4}
		\label{fig:ppc-cifar10-bs4}
		%\vspace{-0.22cm}
	\end{figure}
	
	\begin{figure}[H]	
		%	\begin{subfigure}{0.95\linewidth}
		\centering
		\begin{subfigure}{0.99\linewidth}
			\centering
			\includegraphics[scale=0.7]{imgs/legends/legend_ppc_horizontal.png}
			\\
			\includegraphics[width=0.24\linewidth]{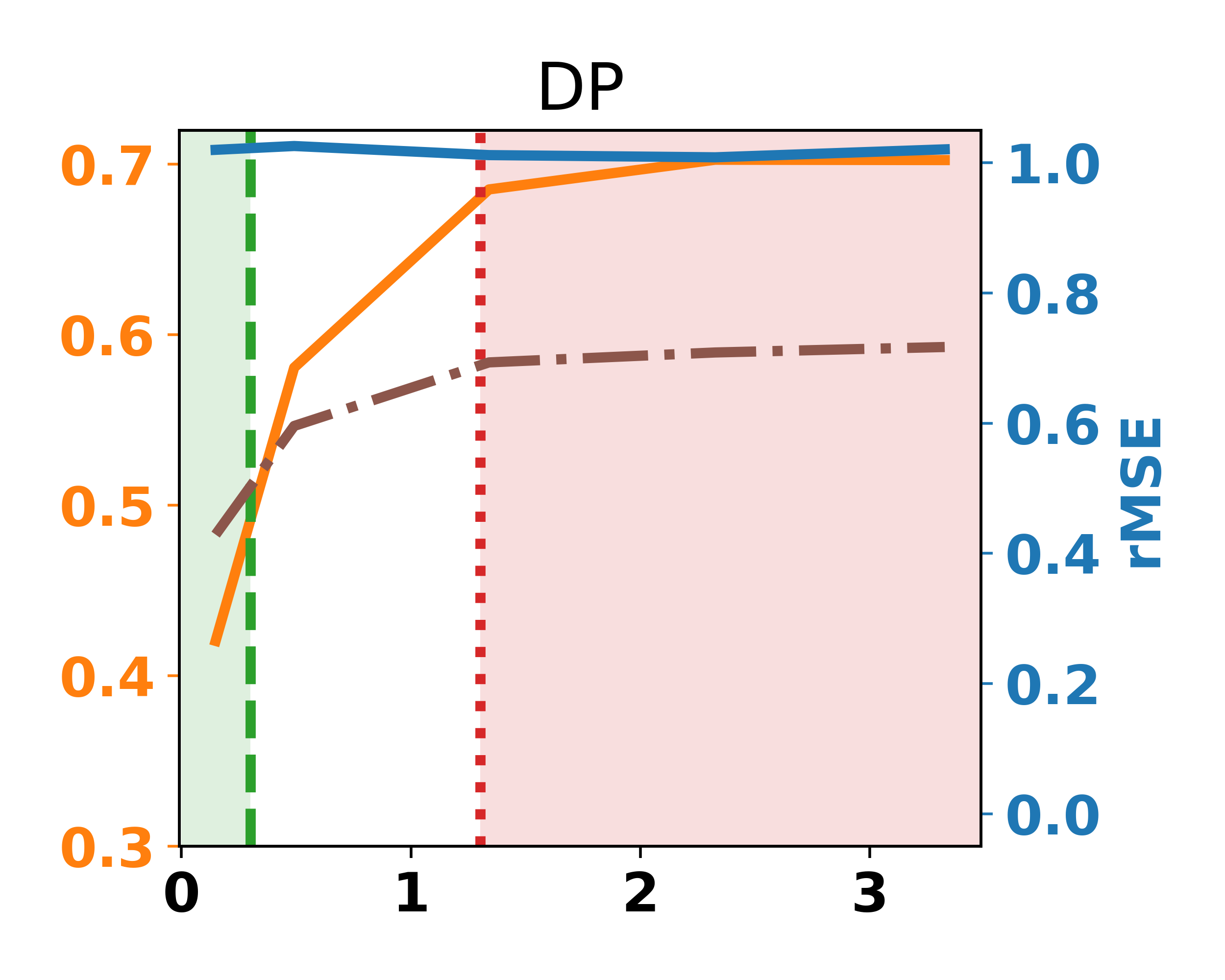}
			\includegraphics[width=0.24\linewidth]{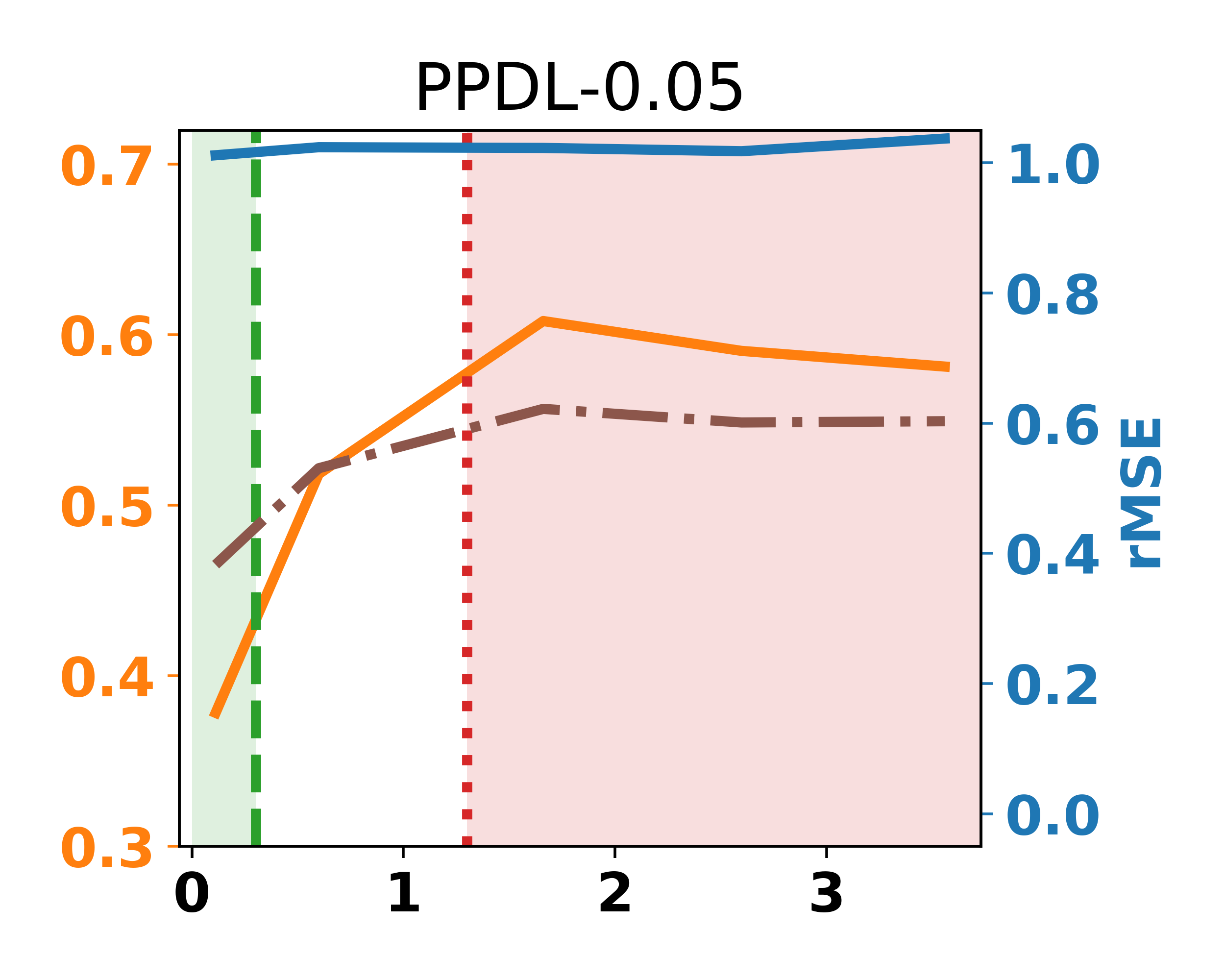}
			\includegraphics[width=0.24\linewidth]{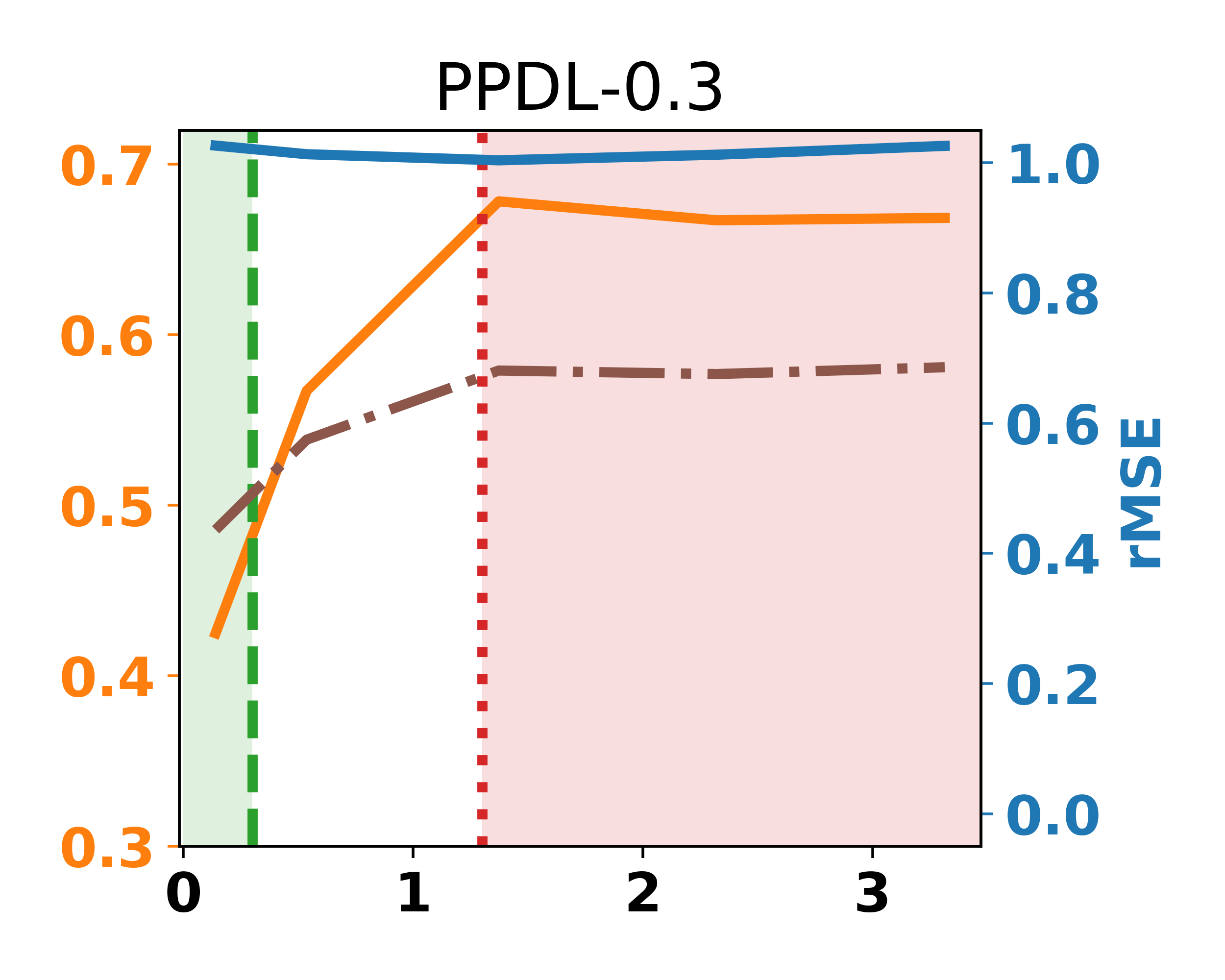}
			\includegraphics[width=0.24\linewidth]{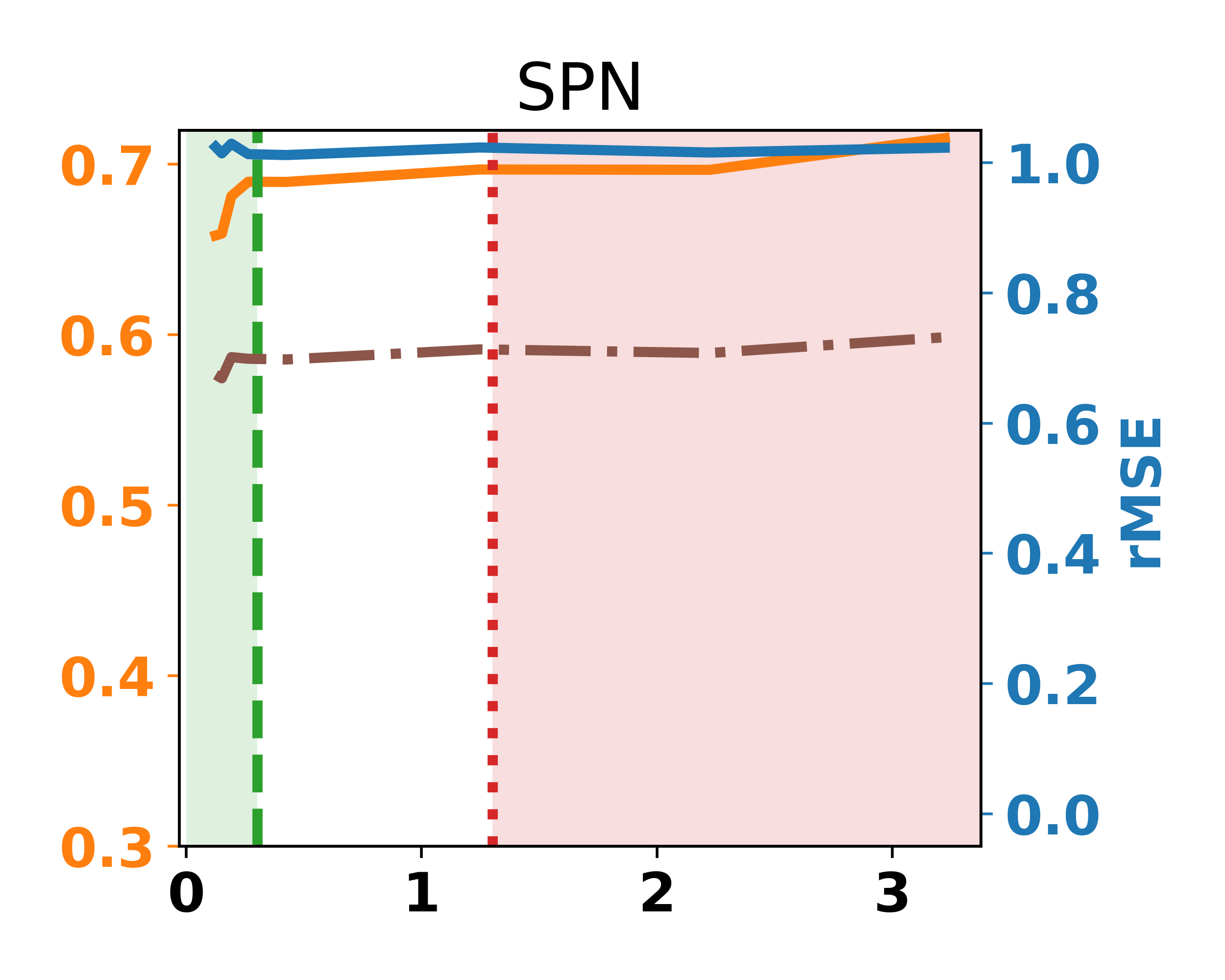}
			\caption{Reconstruction Attack}
		\end{subfigure}
		
		\begin{subfigure}{0.99\linewidth}
			\centering
			\includegraphics[width=0.24\linewidth]{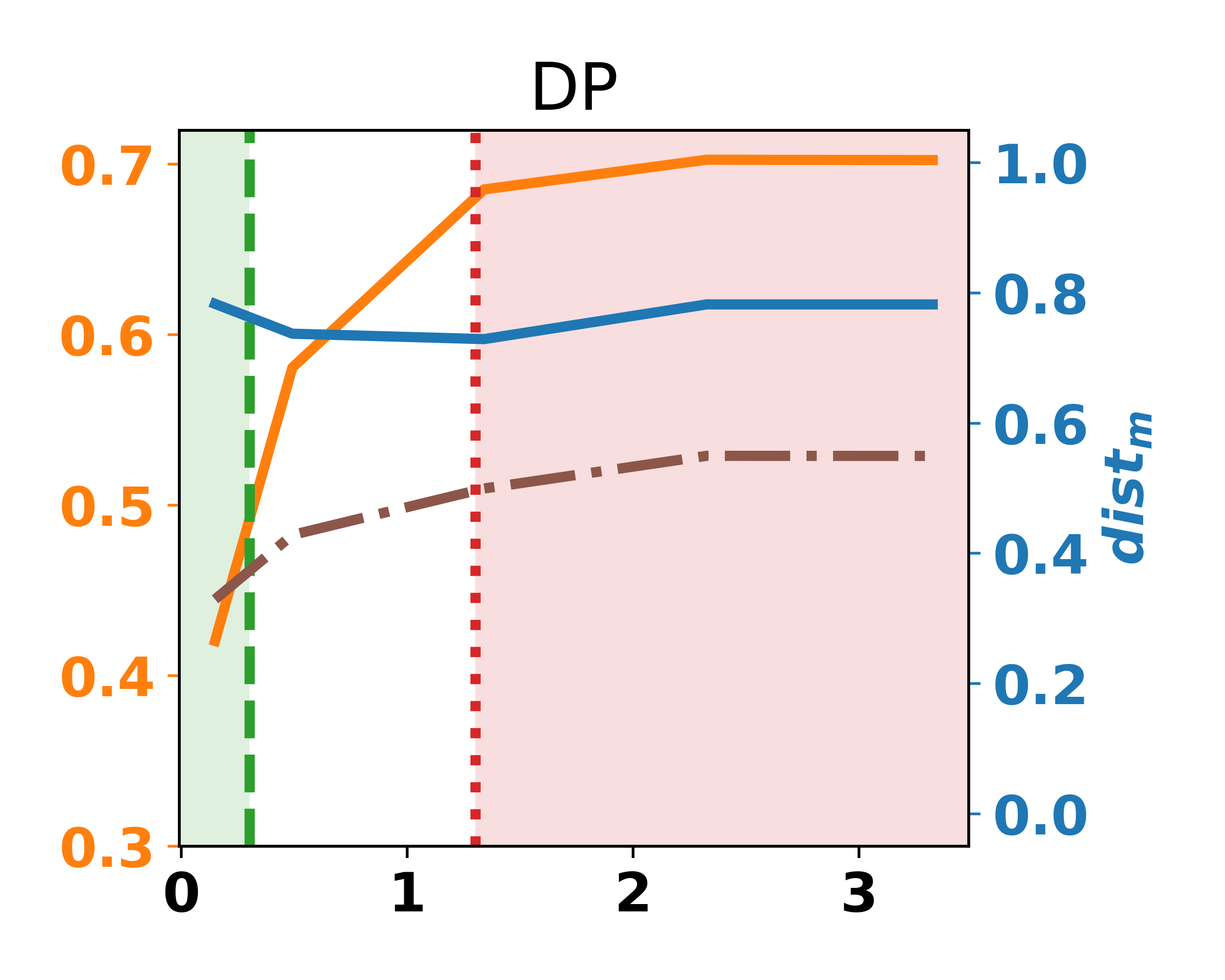}
			\includegraphics[width=0.24\linewidth]{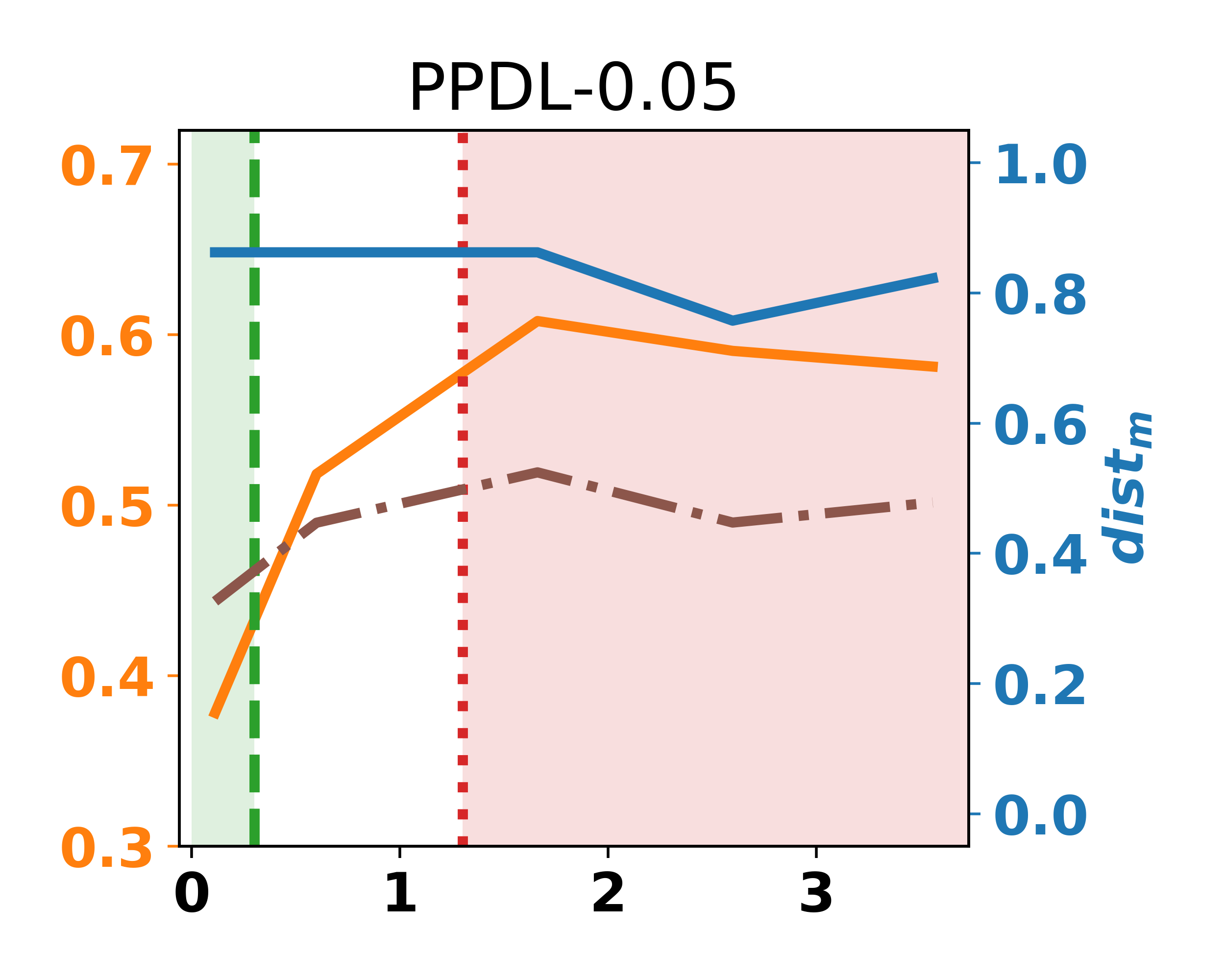}
			\includegraphics[width=0.24\linewidth]{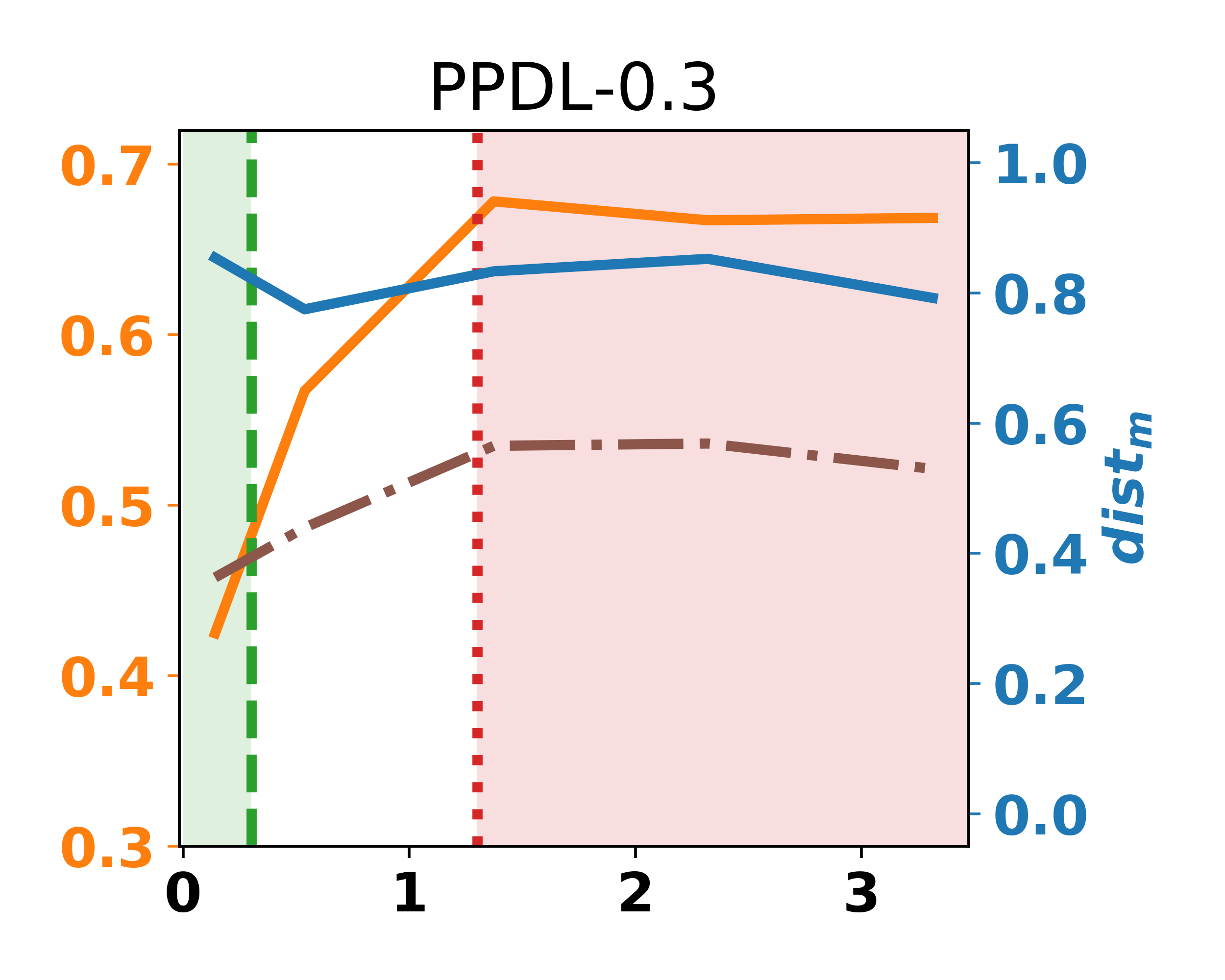}
			\includegraphics[width=0.24\linewidth]{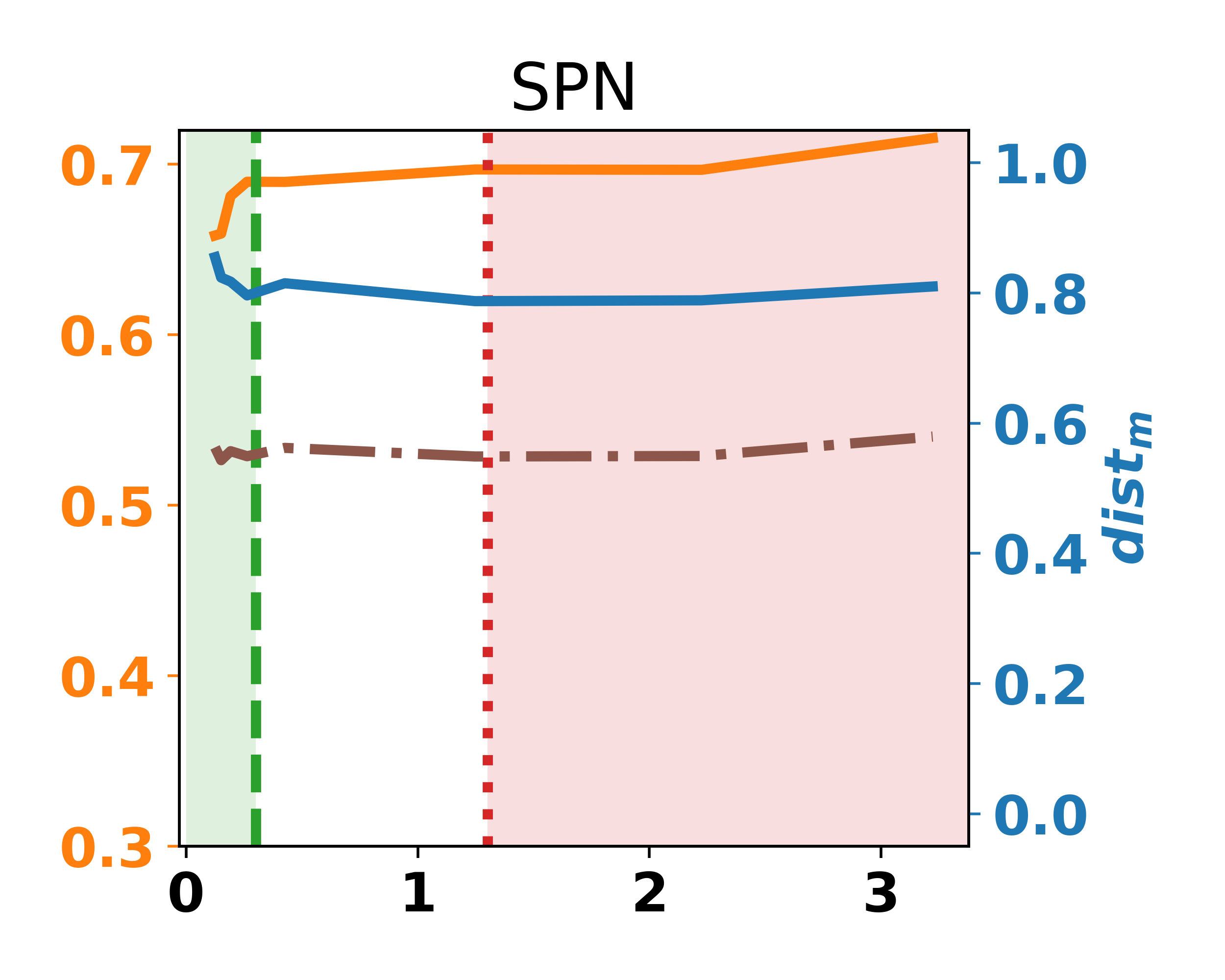}
			\caption{Membership Attack}
		\end{subfigure}
		
		\begin{subfigure}{0.99\linewidth}
			\centering
			\includegraphics[width=0.24\linewidth]{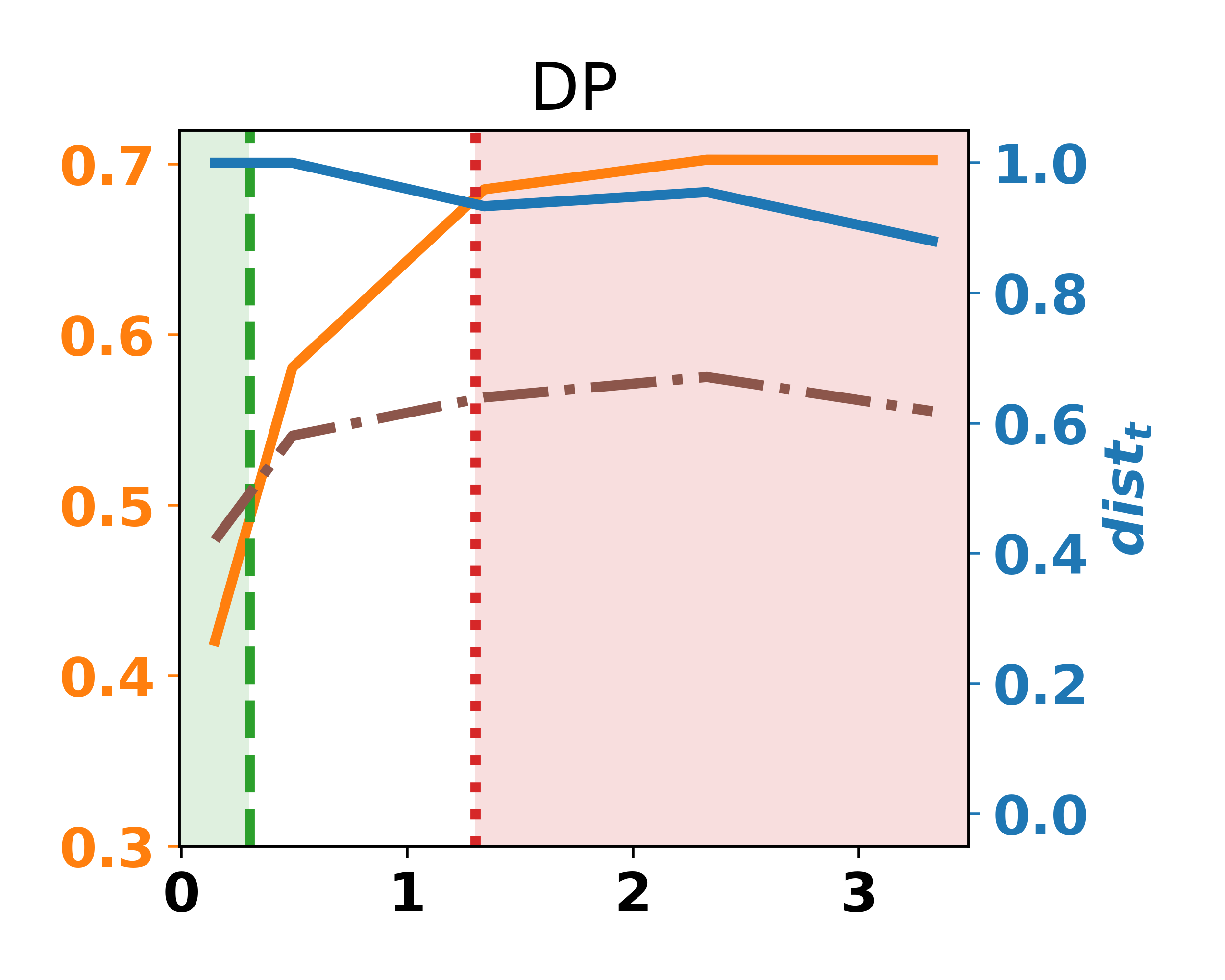}
			\includegraphics[width=0.24\linewidth]{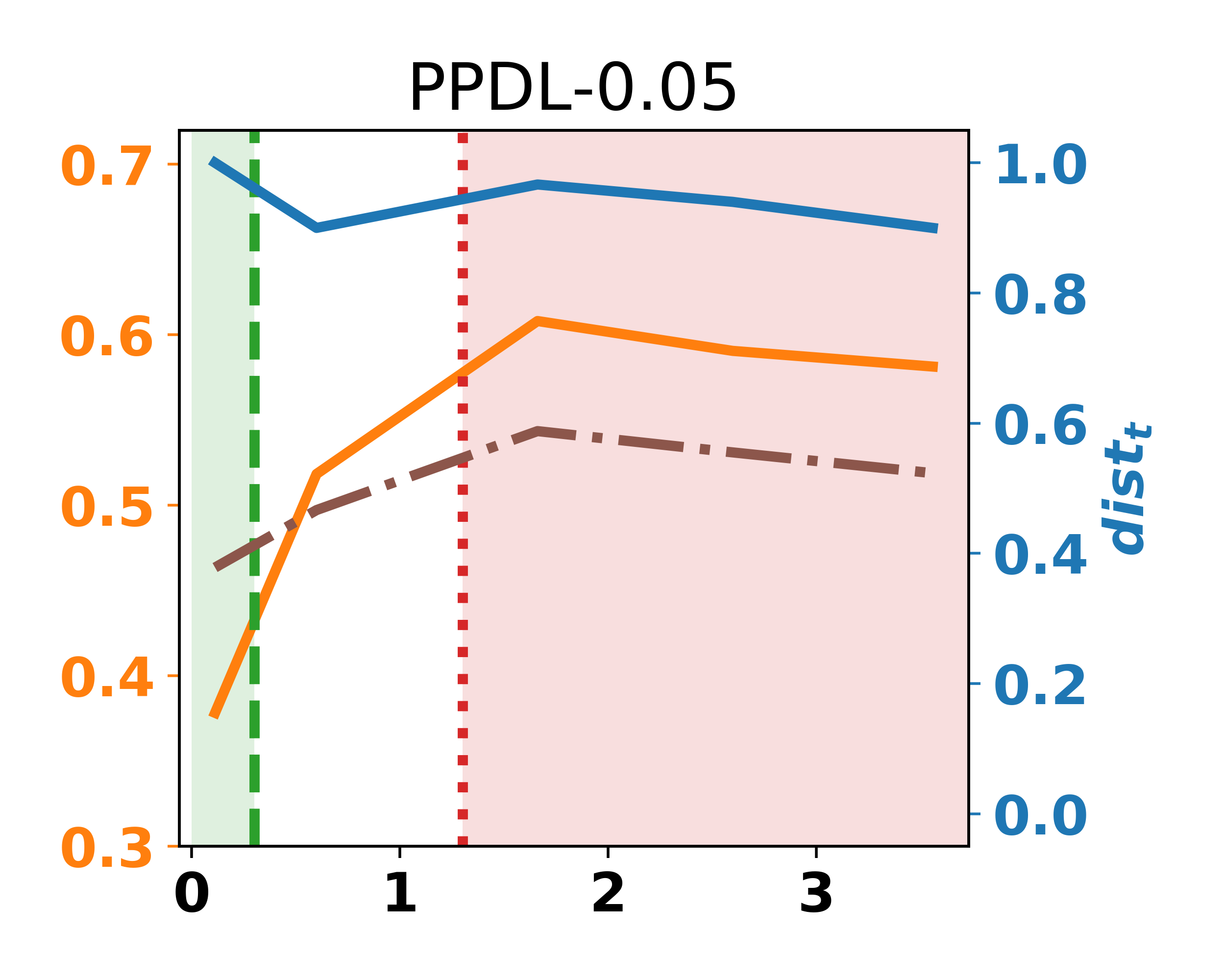}
			\includegraphics[width=0.24\linewidth]{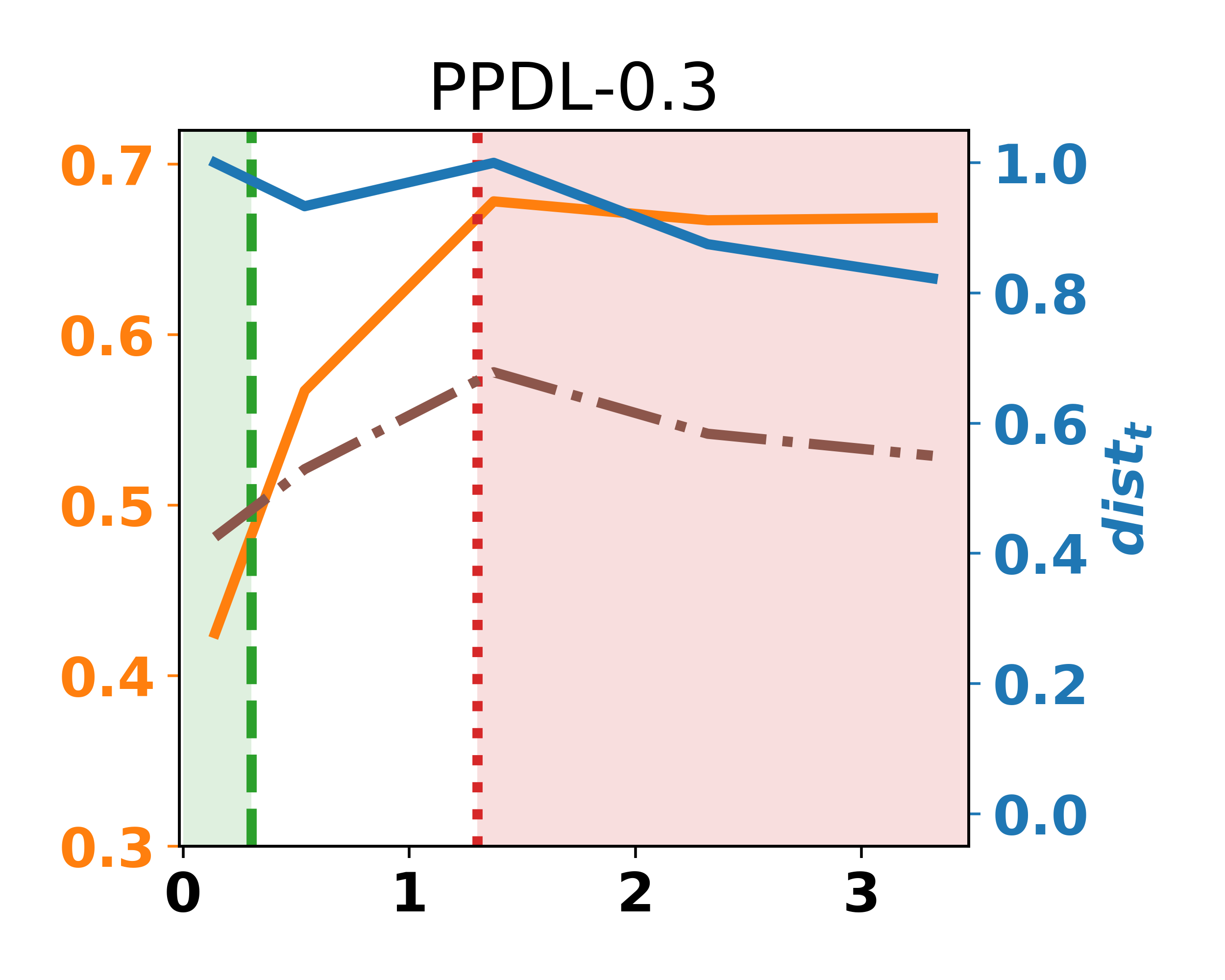}
			\includegraphics[width=0.24\linewidth]{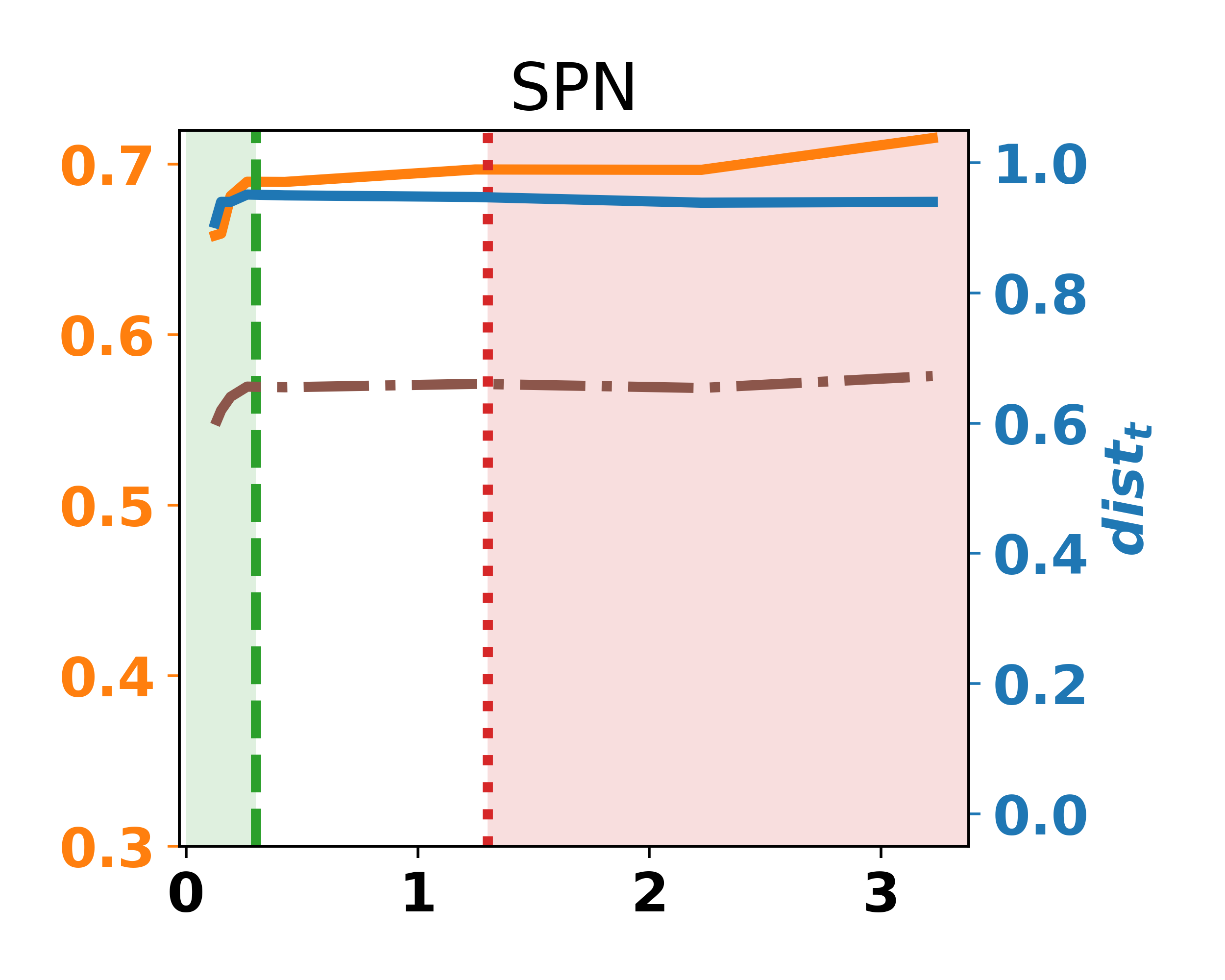}
			\caption{Tracing Attack}
		\end{subfigure}
		
		\caption{Attack with Batch Size 8}
		\label{fig:ppc-cifar10-bs8}
		%\vspace{-0.22cm}
	\end{figure}
	
	\subsubsection{Calibrated Averaged Performance (CAP)}
	
	\begin{table}[H]
		\centering
		\begin{tabular}{l|c|c|c|c|c|c|c|c|c}
			\toprule
			& \multicolumn{3}{c|}{Reconstruction} & \multicolumn{3}{c|}{Membership} & \multicolumn{3}{c}{Tracing} \\
			\midrule
			BS & 1 & 4 & 8 & 1 & 4 & 8 & 1 & 4 & 8 \\
			\midrule
			DP \cite{DLDP_Abadi16} & 0.57 & 0.63 & 0.63 & 0.00 & 0.45 & 0.47 & 0.42 & 0.57 & 0.58 \\
			PPDL-0.05 \cite{PPDL/shokri2015} & 0.55 & 0.55 & 0.55 & 0.00 & 0.37 & 0.44 & 0.50 & 0.50 & 0.50 \\
			PPDL-0.3 \cite{PPDL/shokri2015} & 0.57 & 0.61 & 0.61 & 0.00 & 0.43 & 0.49 & 0.54 & 0.54 & 0.54 \\
			SPN (ours) & \textbf{0.69} &\textbf{0.70} &\textbf{0.70} & \textbf{0.24} &\textbf{0.50} & \textbf{0.56}& \textbf{0.61} & \textbf{0.63} & \textbf{0.64}\\
			\bottomrule
		\end{tabular}
		\caption{CAP performance with different batch size on CIFAR10 for reconstruction, membership, and tracing attack. Higher better. BS = Attack Batch Size.}
		\label{tab:CAP-cifar10-supp}
	\end{table}
	
	\subsubsection{Reconstructed Images}
	
	\begin{figure}[H]
		\centering
		
		\begin{subfigure}{0.32\linewidth}
			\centering
			\includegraphics[scale=0.8]{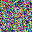}
			\hspace{1pt}
			\includegraphics[scale=0.8]{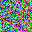} % + svhn
			\hspace{1pt}
			\includegraphics[scale=0.8]{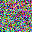}
			\hspace{1pt}
			\includegraphics[scale=0.8]{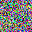}
			\\
			\vspace{2pt}
			\includegraphics[scale=0.8]{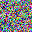}
			\hspace{1pt}
			\includegraphics[scale=0.8]{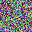} % + svhn
			\hspace{1pt}
			\includegraphics[scale=0.8]{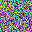}
			\hspace{1pt}
			\includegraphics[scale=0.8]{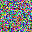}
			\\
			\vspace{2pt}
			\includegraphics[scale=0.8]{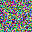}
			\hspace{1pt}
			\includegraphics[scale=0.8]{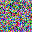} % + svhn
			\hspace{1pt}
			\includegraphics[scale=0.8]{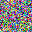}
			\hspace{1pt}
			\includegraphics[scale=0.8]{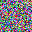}
			\\
			\vspace{2pt}
			\includegraphics[scale=0.8]{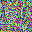}
			\hspace{1pt}
			\includegraphics[scale=0.8]{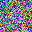} % + svhn
			\hspace{1pt}
			\includegraphics[scale=0.8]{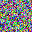}
			\hspace{1pt}
			\includegraphics[scale=0.8]{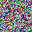}
			\caption{$1.05 (0.36)$ \label{fig:recon-images-green-cifar10-supp}}
		\end{subfigure}
		\hfill
		\begin{subfigure}{0.32\linewidth}
			\centering
			\includegraphics[scale=0.8]{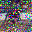}
			\hspace{1pt}
			\includegraphics[scale=0.8]{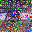} % + svhn
			\hspace{1pt}
			\includegraphics[scale=0.8]{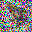}
			\hspace{1pt}
			\includegraphics[scale=0.8]{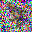}
			\\
			\vspace{2pt}
			\includegraphics[scale=0.8]{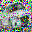}
			\hspace{1pt}
			\includegraphics[scale=0.8]{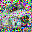} % + svhn
			\hspace{1pt}
			\includegraphics[scale=0.8]{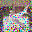}
			\hspace{1pt}
			\includegraphics[scale=0.8]{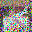}
			\\
			\vspace{2pt}
			\includegraphics[scale=0.8]{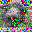}
			\hspace{1pt}
			\includegraphics[scale=0.8]{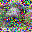} % + svhn
			\hspace{1pt}
			\includegraphics[scale=0.8]{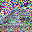}
			\hspace{1pt}
			\includegraphics[scale=0.8]{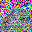}
			\\
			\vspace{2pt}
			\includegraphics[scale=0.8]{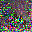}
			\hspace{1pt}
			\includegraphics[scale=0.8]{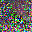} % + svhn
			\hspace{1pt}
			\includegraphics[scale=0.8]{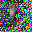}
			\hspace{1pt}
			\includegraphics[scale=0.8]{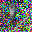}
			\caption{$1.05 (18.44)$ \label{fig:recon-images-white-cifar10-supp}}%; 1.10 (1.30)}
		\end{subfigure}
		\hfill
		\begin{subfigure}{0.32\linewidth}
			\centering
			\includegraphics[scale=0.8]{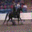}
			\hspace{1pt}
			\includegraphics[scale=0.8]{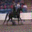} % + svhn
			\hspace{1pt}
			\includegraphics[scale=0.8]{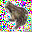}
			\hspace{1pt}
			\includegraphics[scale=0.8]{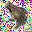}
			\\
			\vspace{2pt}
			\includegraphics[scale=0.8]{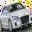}
			\hspace{1pt}
			\includegraphics[scale=0.8]{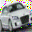} % + svhn
			\hspace{1pt}
			\includegraphics[scale=0.8]{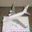}
			\hspace{1pt}
			\includegraphics[scale=0.8]{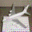}
			\\
			\vspace{2pt}
			\includegraphics[scale=0.8]{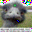}
			\hspace{1pt}
			\includegraphics[scale=0.8]{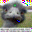} % + svhn
			\hspace{1pt}
			\includegraphics[scale=0.8]{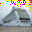}
			\hspace{1pt}
			\includegraphics[scale=0.8]{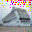}
			\\
			\vspace{2pt}
			\includegraphics[scale=0.8]{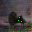}
			\hspace{1pt}
			\includegraphics[scale=0.8]{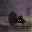} % + svhn
			\hspace{1pt}
			\includegraphics[scale=0.8]{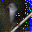}
			\hspace{1pt}
			\includegraphics[scale=0.8]{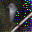}
			\caption{$0.48 (1106.17)$ \label{fig:recon-images-red-cifar10-supp}} %0.94 (1.59)}
		\end{subfigure}
		
		%\begin{subfigure}{0.2\linewidth}
		%	\centering
		%	\includegraphics[scale=0.8]{imgs/recimgs/0_0.0001_0.0151.png}
		%	\hspace{3pt}
		%	\includegraphics[scale=0.8]{imgs/recimgs/391_0.0001_0.0116.png}
		%	\caption{0.02 (3.34); 0.01 (3.28)}
		%\end{subfigure}
		\caption{Reconstructed images from different region mentioned in the main paper. \textbf{(a)} Green region \textbf{(b)} White region \textbf{(c)} Red region. 
			Values inside bracket are mean of $\frac{||B_I||}{||E_B||}$ and values outside are mean of rMSE of reconstructed w.r.t. original images. Noted that some images are having solid color pixels (e.g. the frog image), causing difficulty in reconstruction (e.g. noise pixel in the white solid color region), which is similar to MNIST dataset.
		}
		\label{fig:recon-images-cifar10-supp}
	\end{figure}
	
	\newpage
	\subsection{CIFAR100}
	
	\subsubsection{Privacy-Preserving Characteristics (PPC)}
	
	\begin{figure}[H]	
		%	\begin{subfigure}{0.95\linewidth}
		\centering
		\begin{subfigure}{0.99\linewidth}
			\centering
			\includegraphics[scale=0.7]{imgs/legends/legend_ppc_horizontal.png}
			\\
			\includegraphics[width=0.24\linewidth]{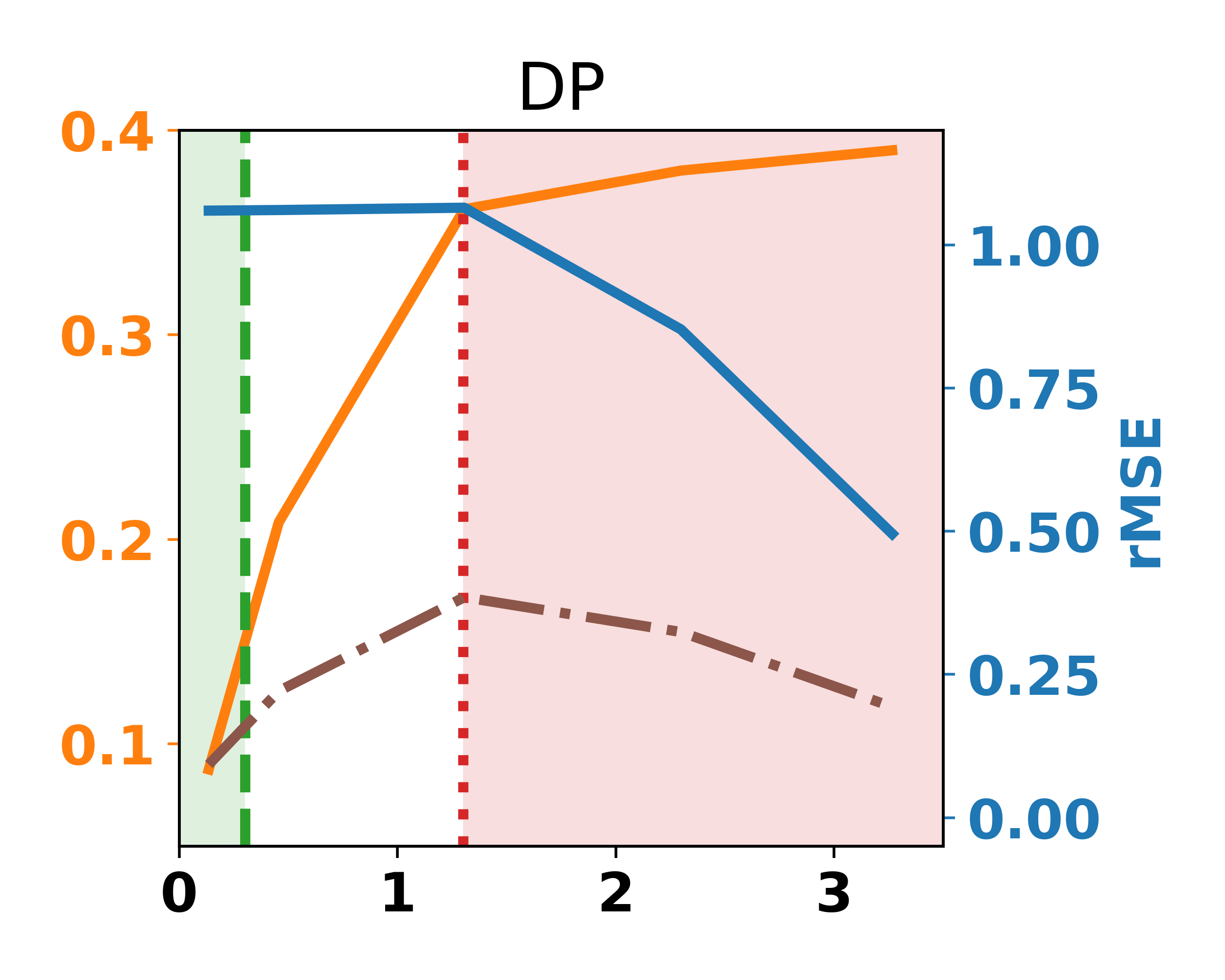}
			\includegraphics[width=0.24\linewidth]{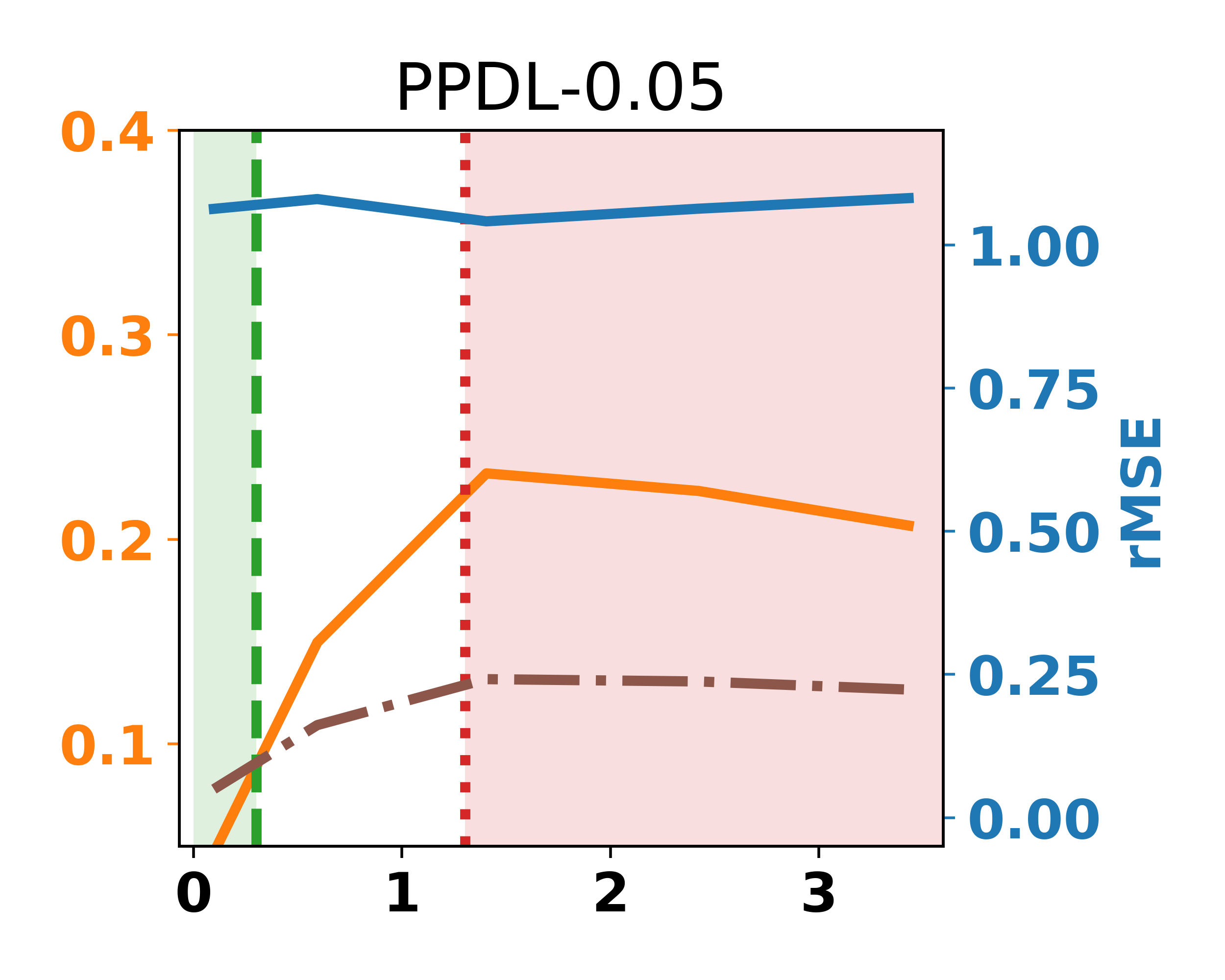}
			\includegraphics[width=0.24\linewidth]{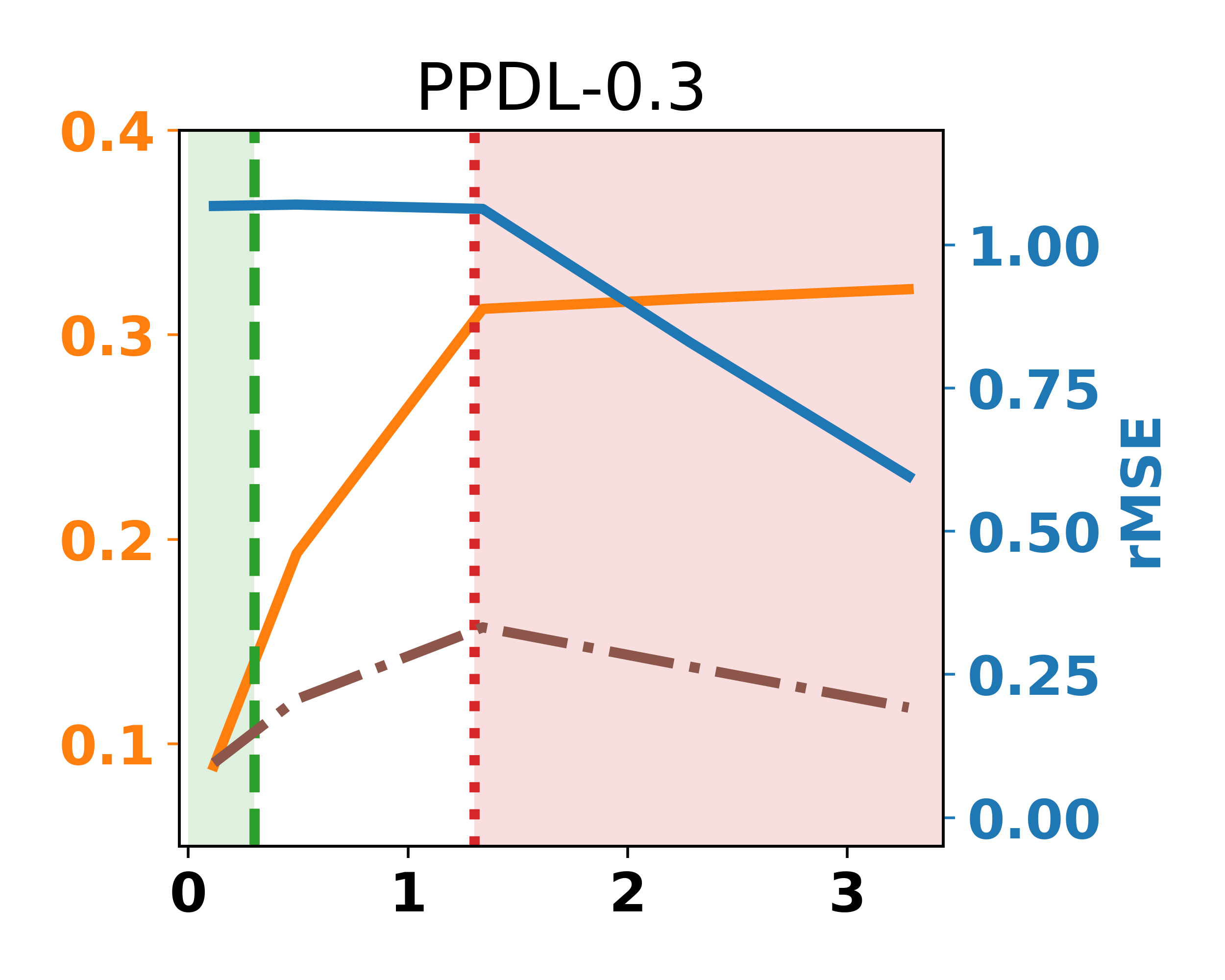}
			\includegraphics[width=0.24\linewidth]{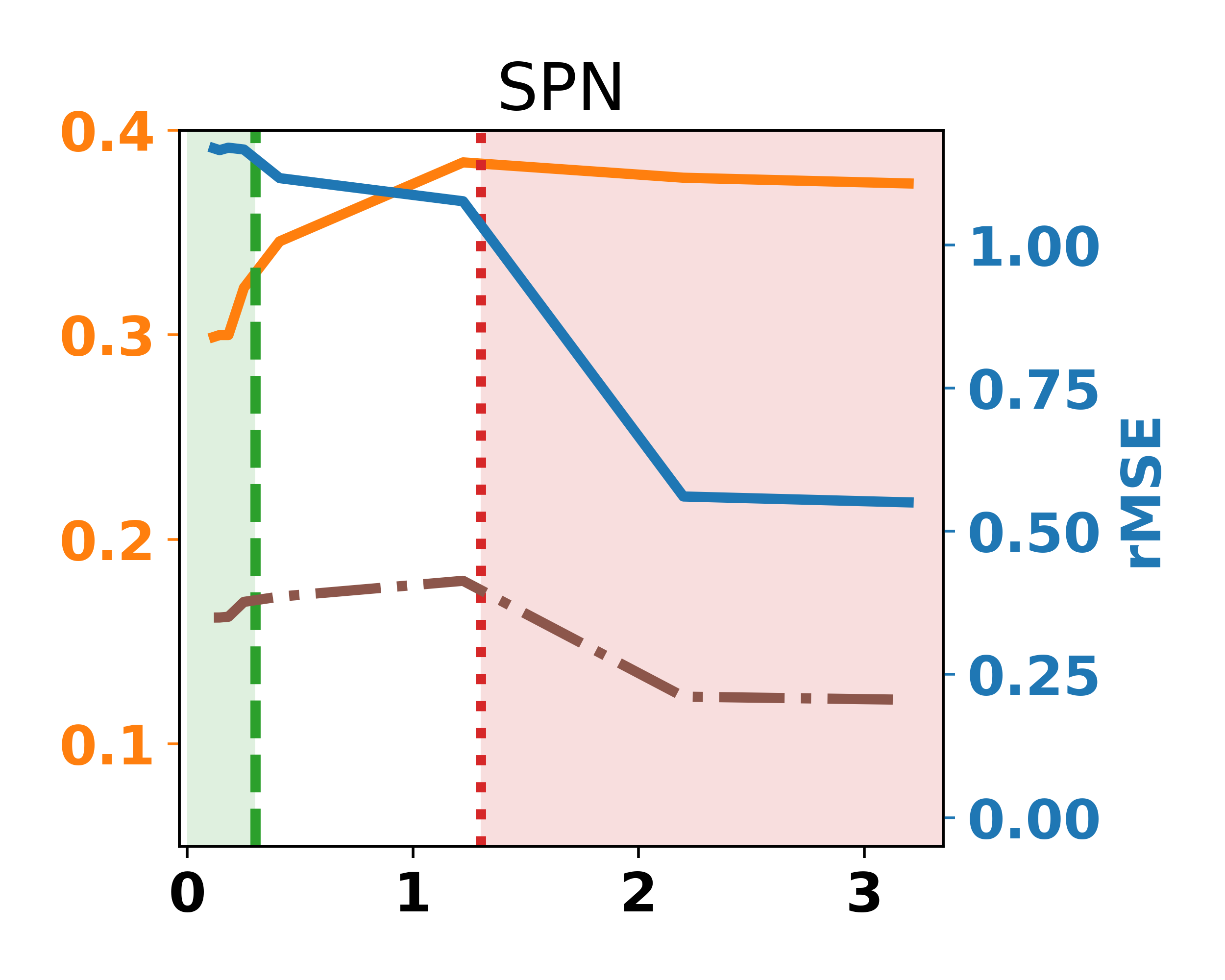}
			\caption{Reconstruction Attack}
		\end{subfigure}
		
		\begin{subfigure}{0.99\linewidth}
			\centering
			\includegraphics[width=0.24\linewidth]{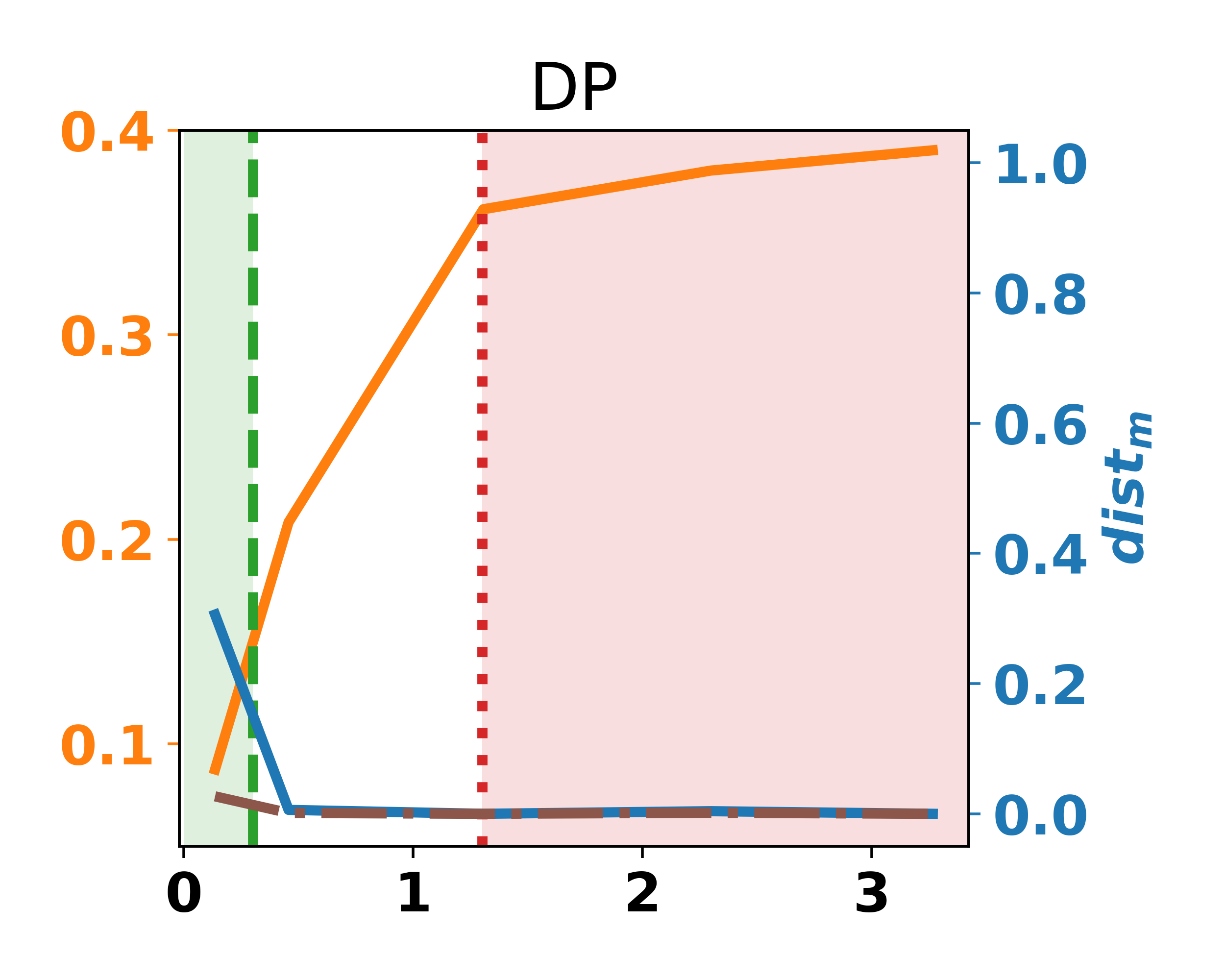}
			\includegraphics[width=0.24\linewidth]{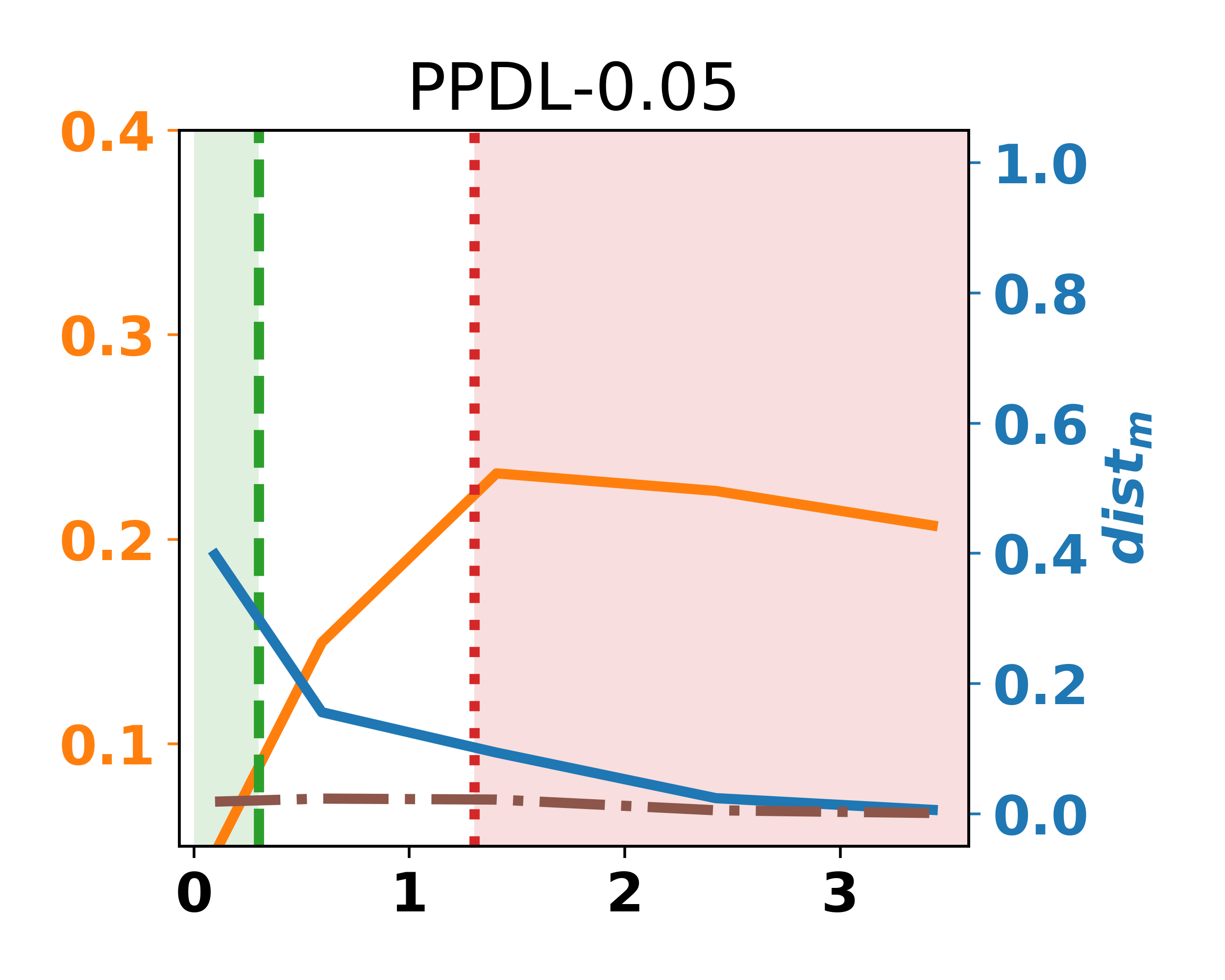}
			\includegraphics[width=0.24\linewidth]{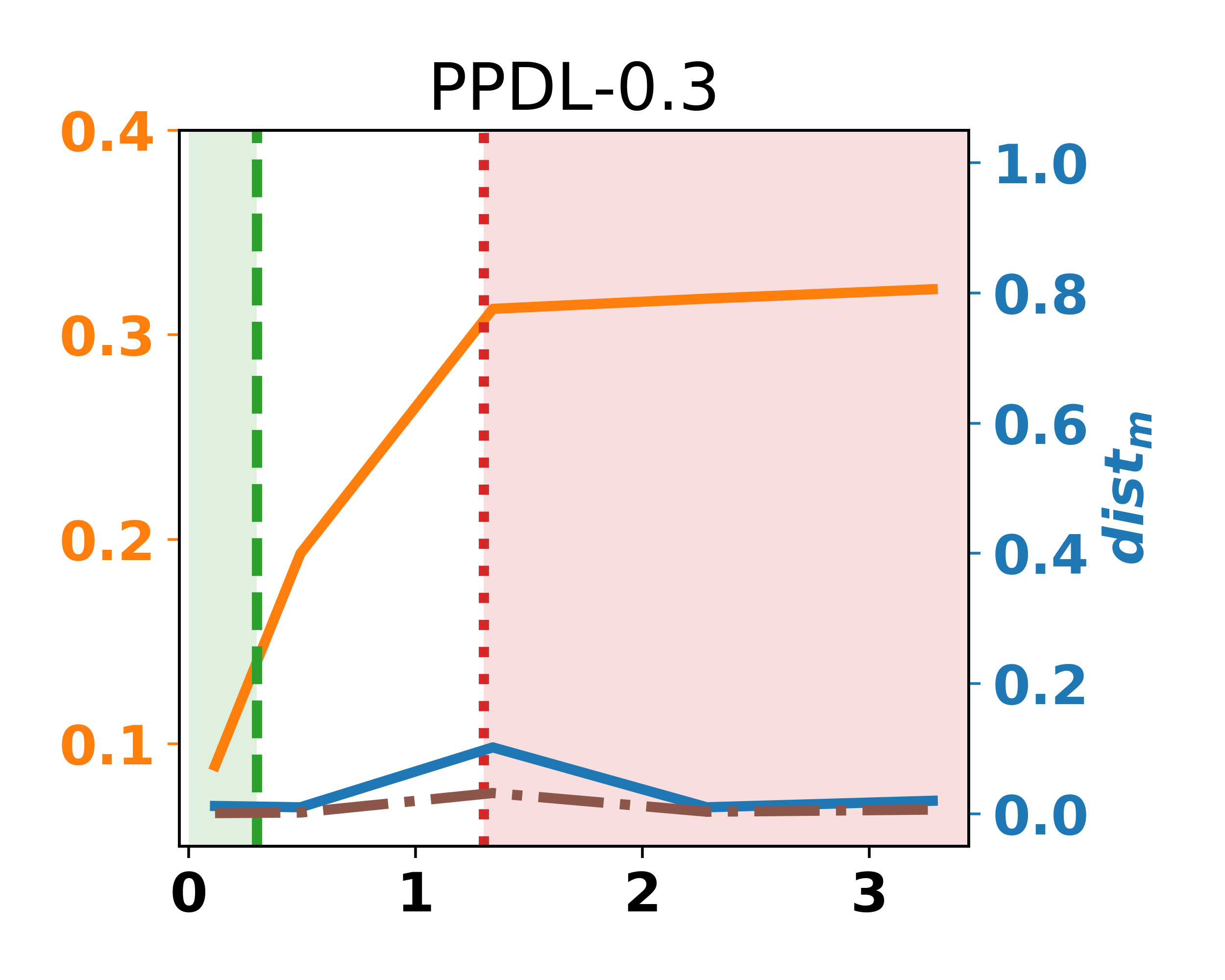}
			\includegraphics[width=0.24\linewidth]{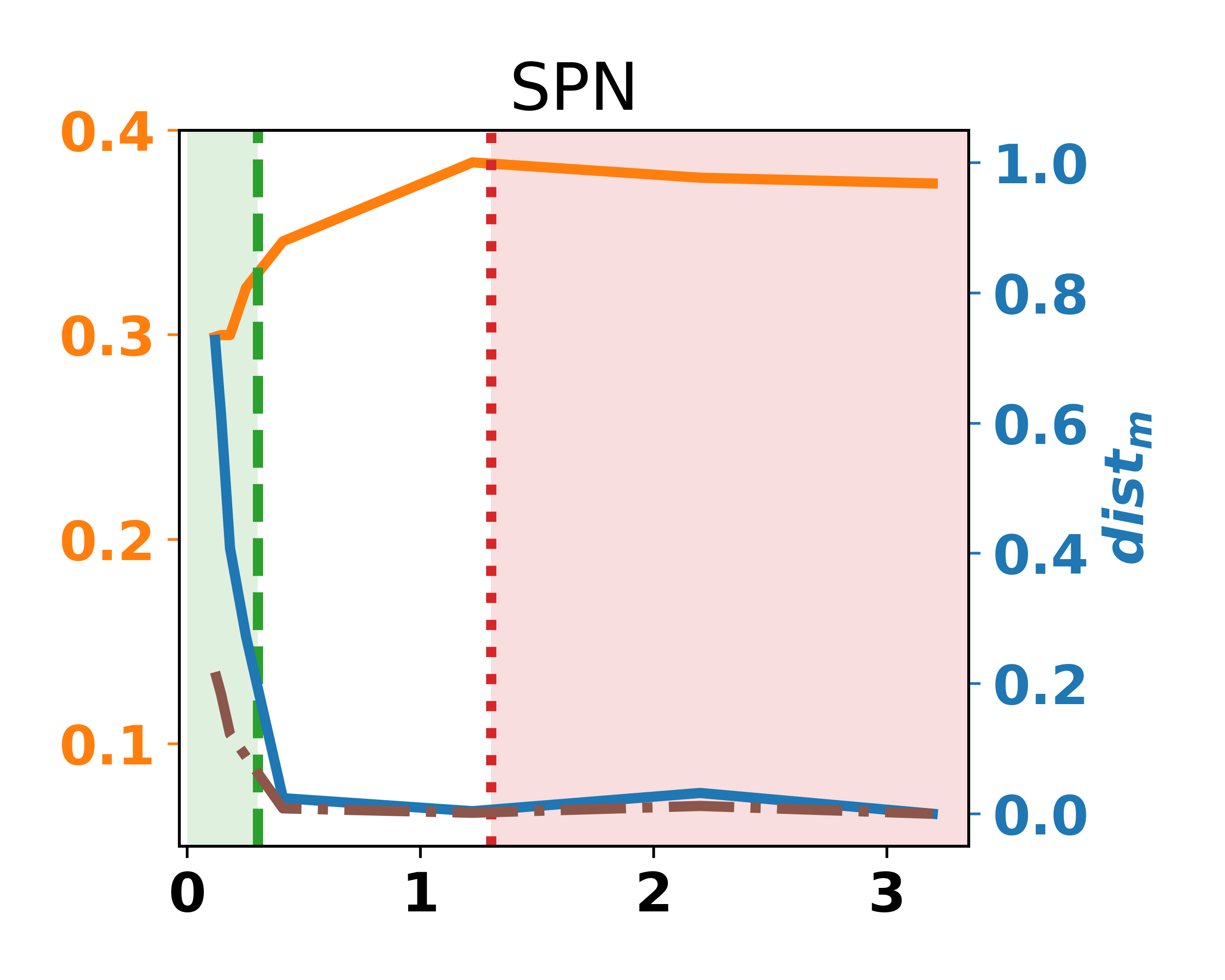}
			\caption{Membership Attack}
		\end{subfigure}
		
		\begin{subfigure}{0.99\linewidth}
			\centering
			\includegraphics[width=0.24\linewidth]{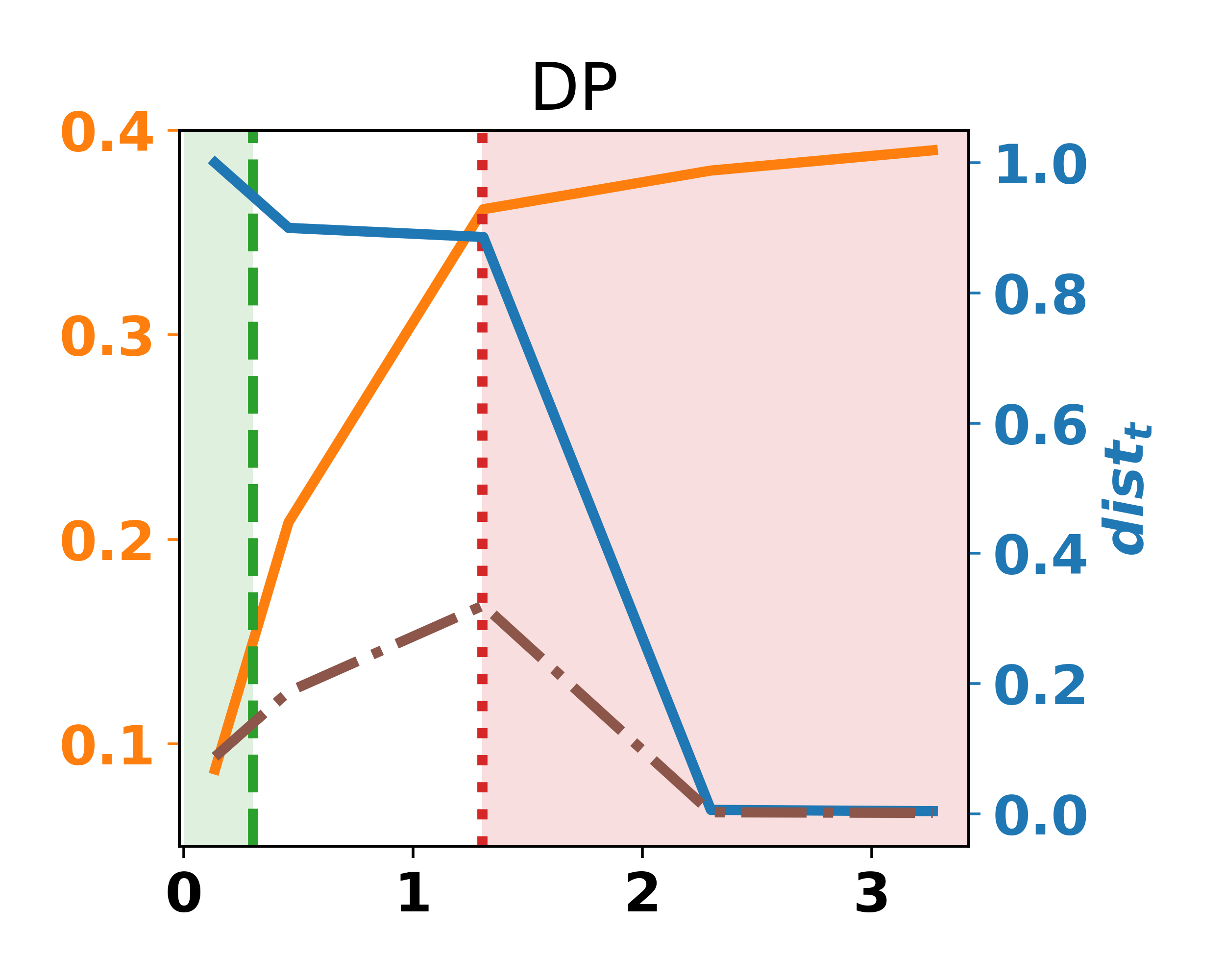}
			\includegraphics[width=0.24\linewidth]{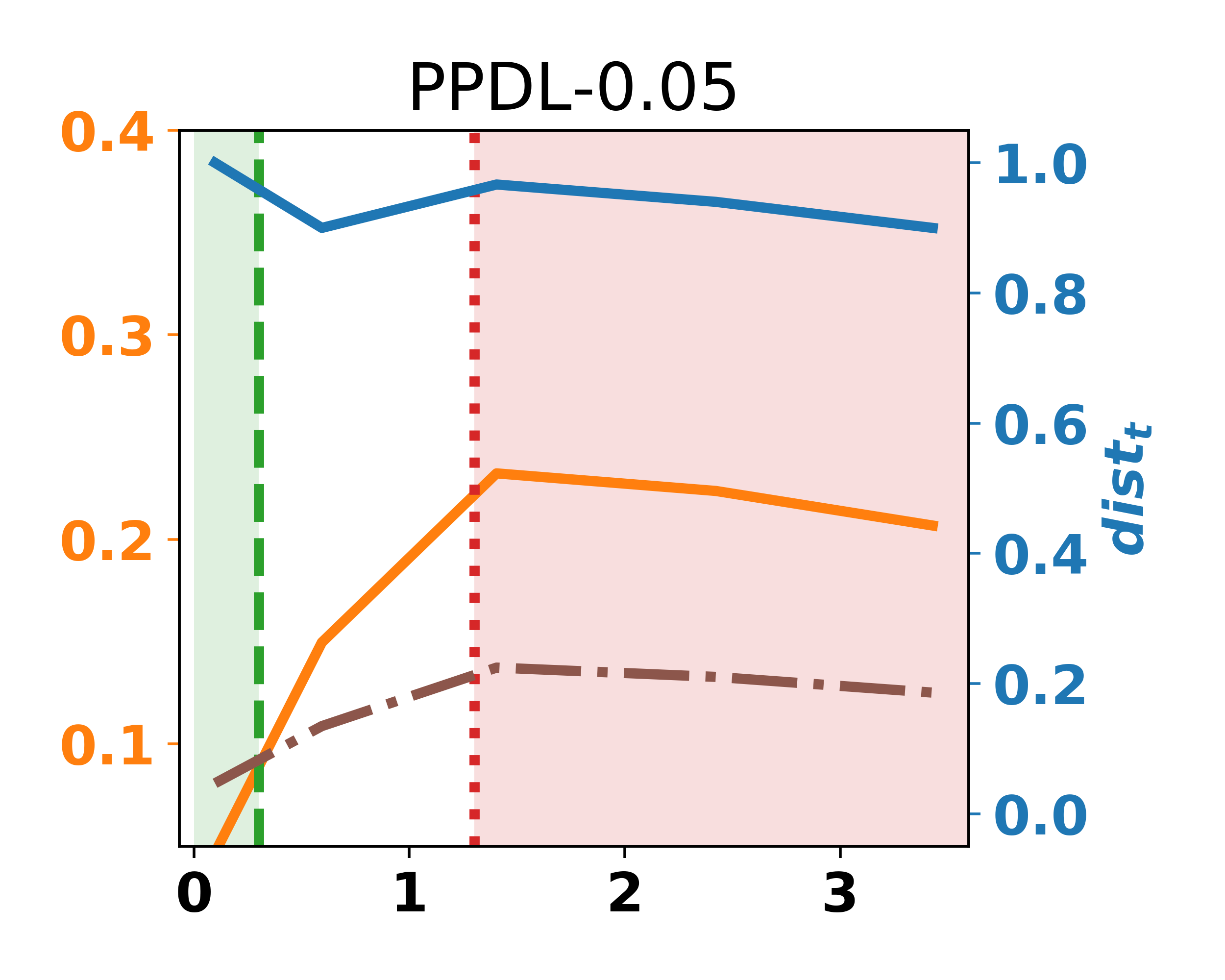}
			\includegraphics[width=0.24\linewidth]{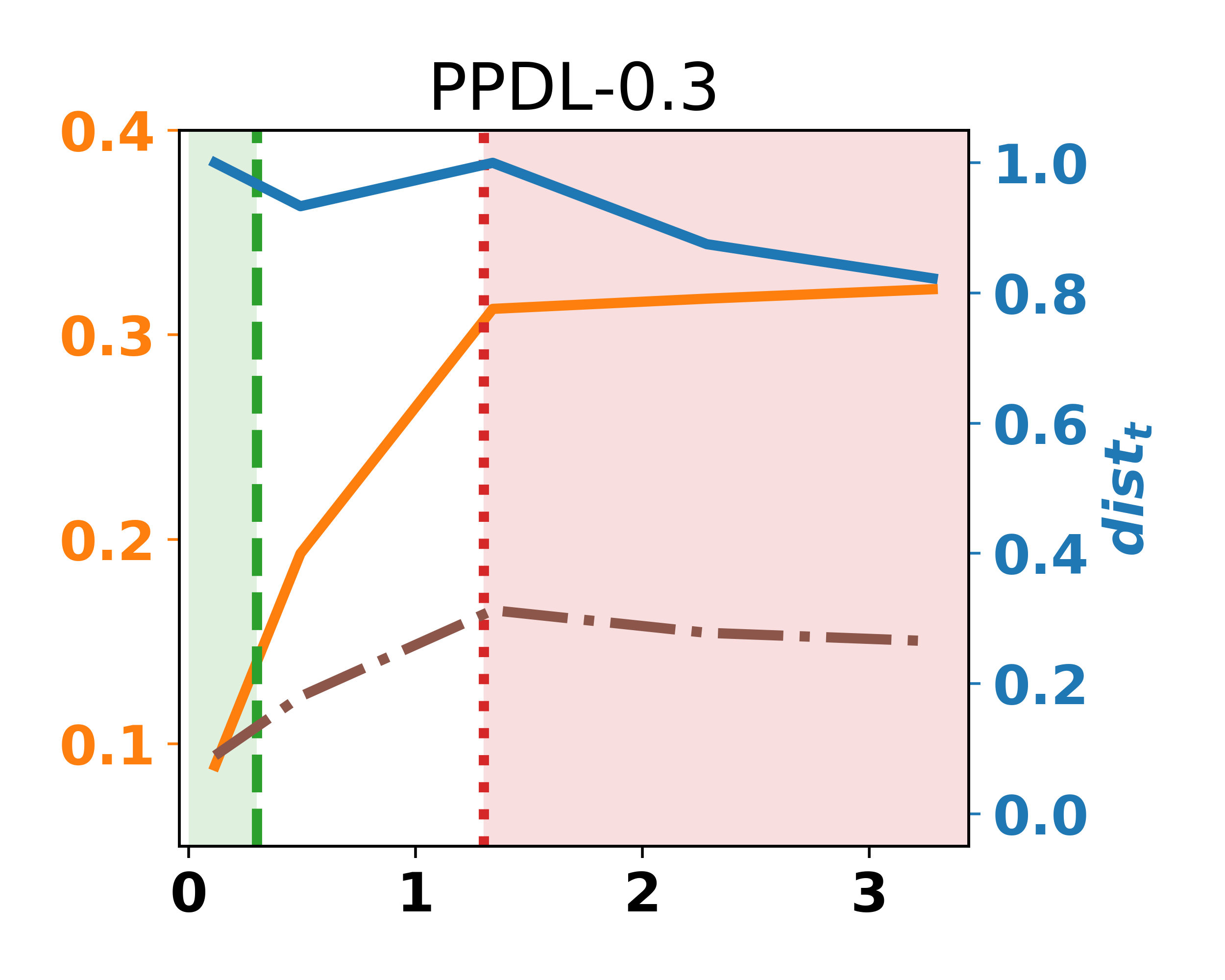}
			\includegraphics[width=0.24\linewidth]{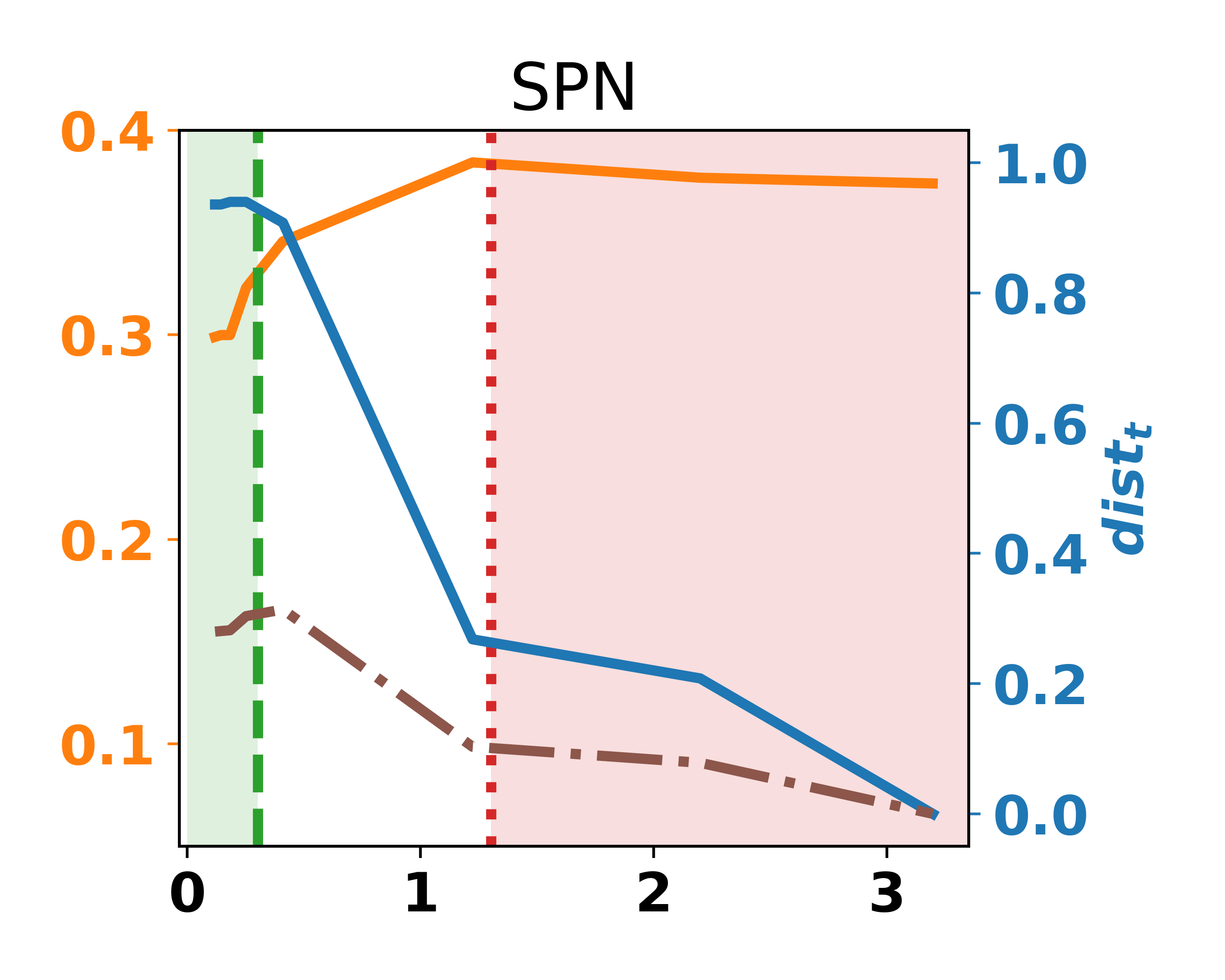}
			\caption{Tracing Attack}
		\end{subfigure}
		
		\caption{Attack Batch Size 1}
		\label{fig:ppc-cifar100-bs1}
		%\vspace{-0.22cm}
	\end{figure}

	\newpage
	
	\begin{figure}[H]	
		%	\begin{subfigure}{0.95\linewidth}
		\centering
		\begin{subfigure}{0.99\linewidth}
			\centering
			\includegraphics[scale=0.7]{imgs/legends/legend_ppc_horizontal.png}
			\\
			\includegraphics[width=0.24\linewidth]{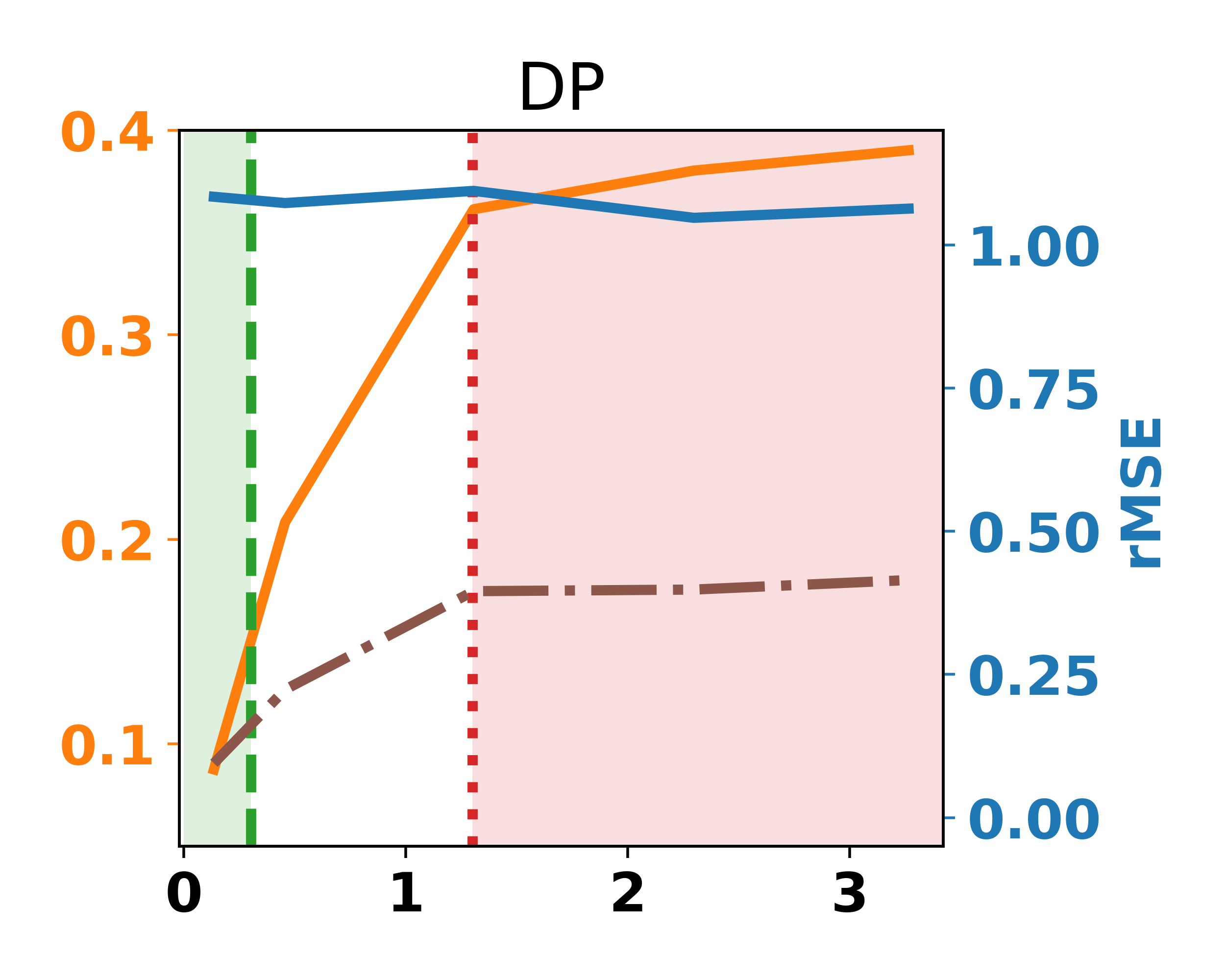}
			\includegraphics[width=0.24\linewidth]{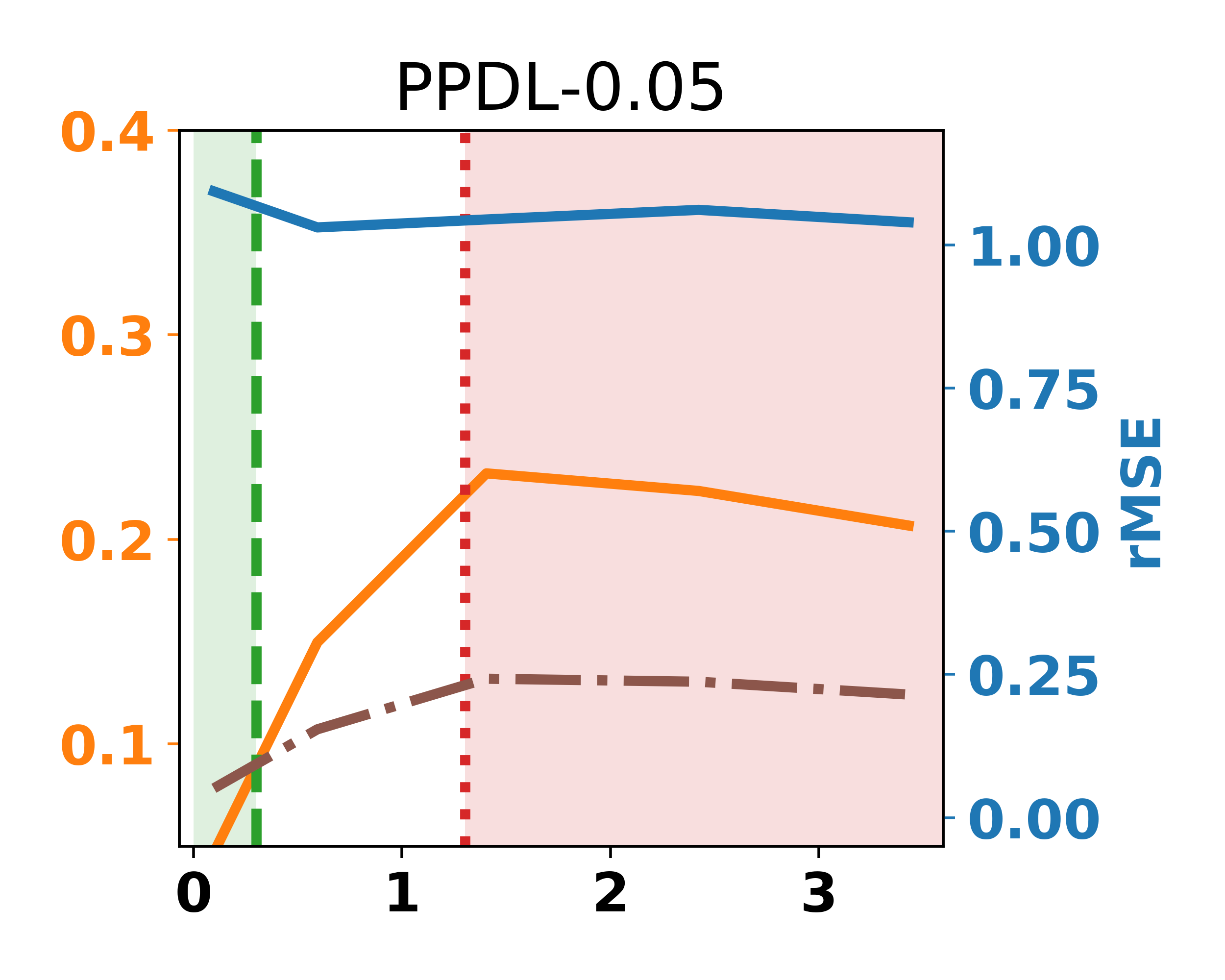}
			\includegraphics[width=0.24\linewidth]{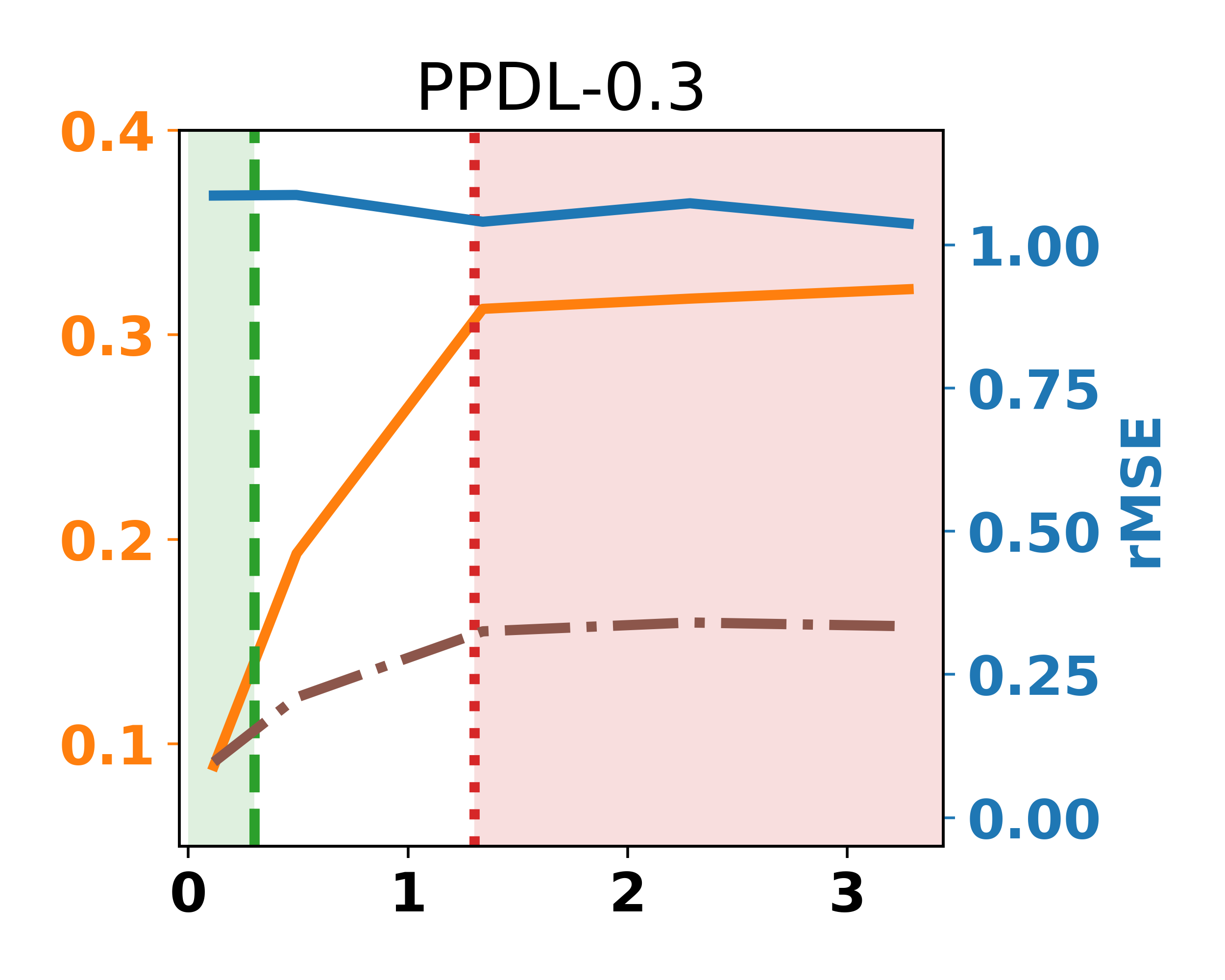}
			\includegraphics[width=0.24\linewidth]{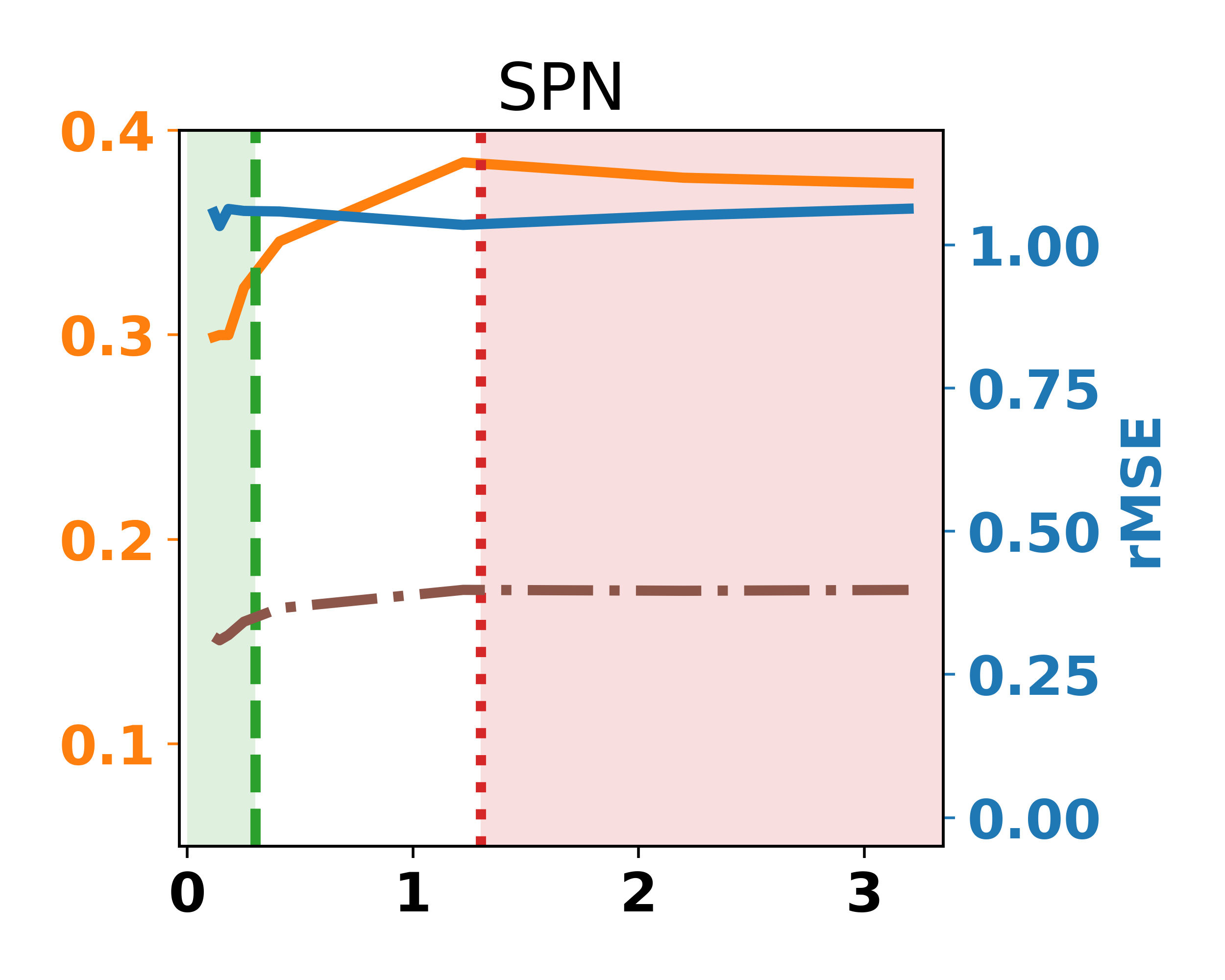}
			\caption{Reconstruction Attack}
		\end{subfigure}
		
		\begin{subfigure}{0.99\linewidth}
			\centering
			\includegraphics[width=0.24\linewidth]{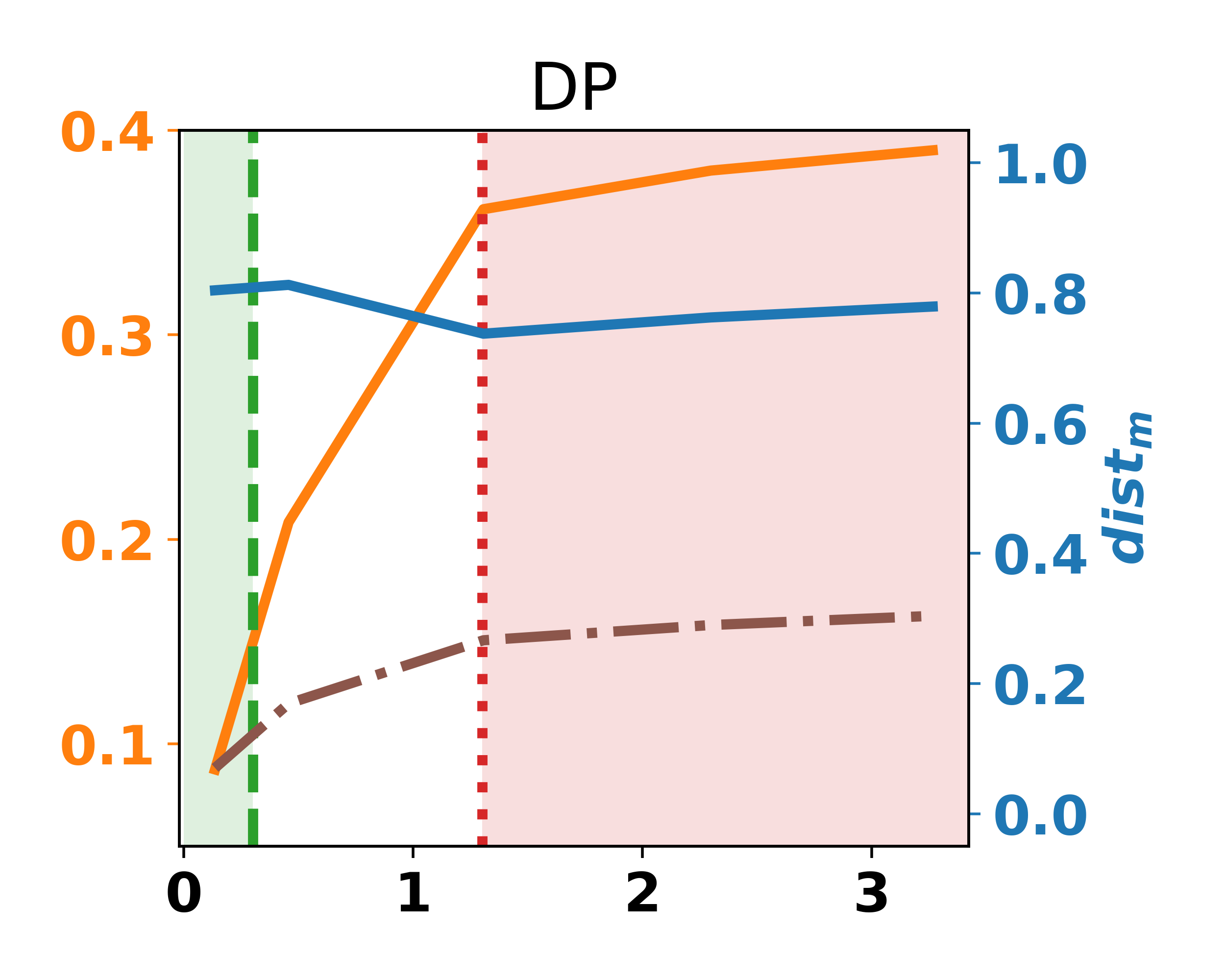}
			\includegraphics[width=0.24\linewidth]{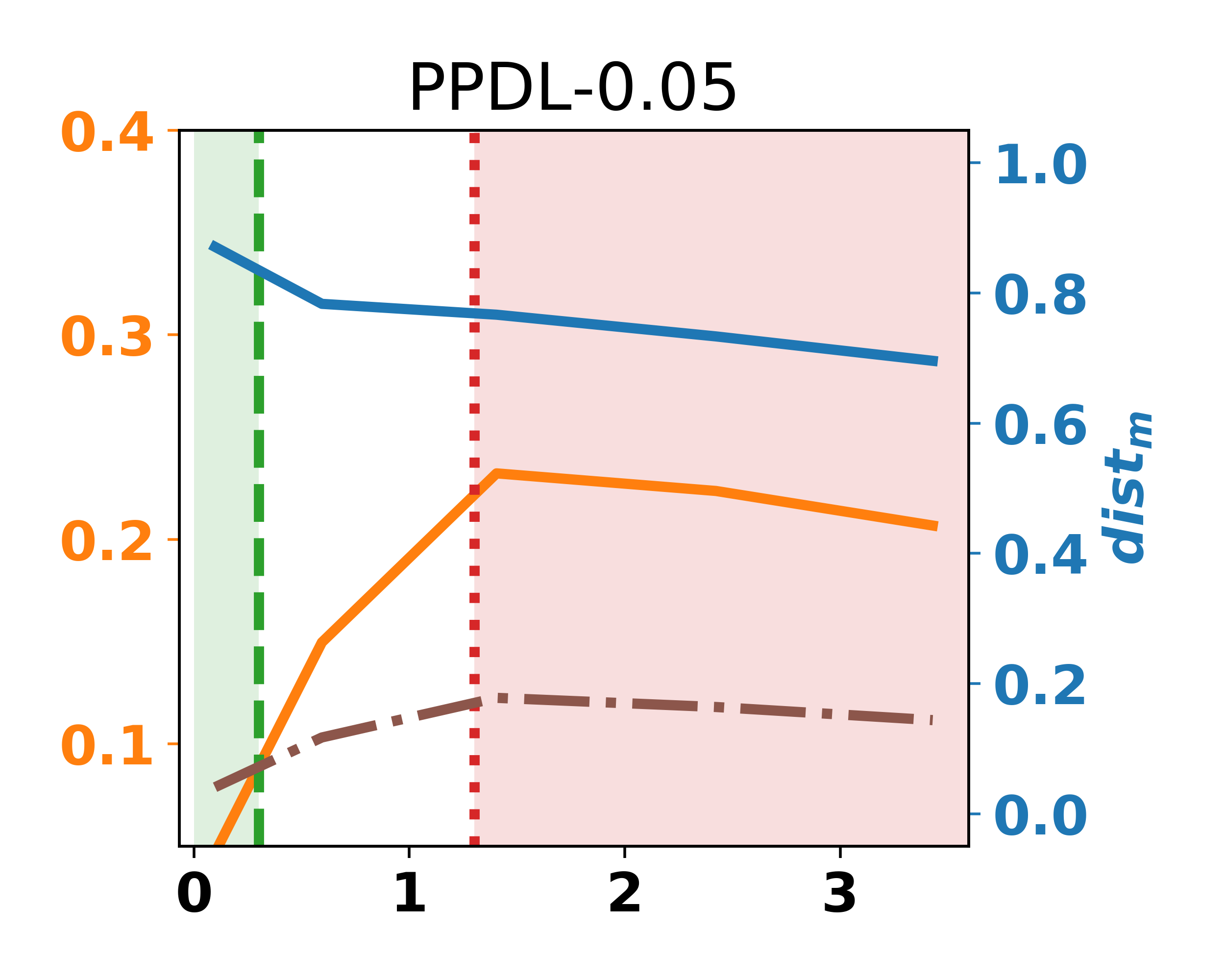}
			\includegraphics[width=0.24\linewidth]{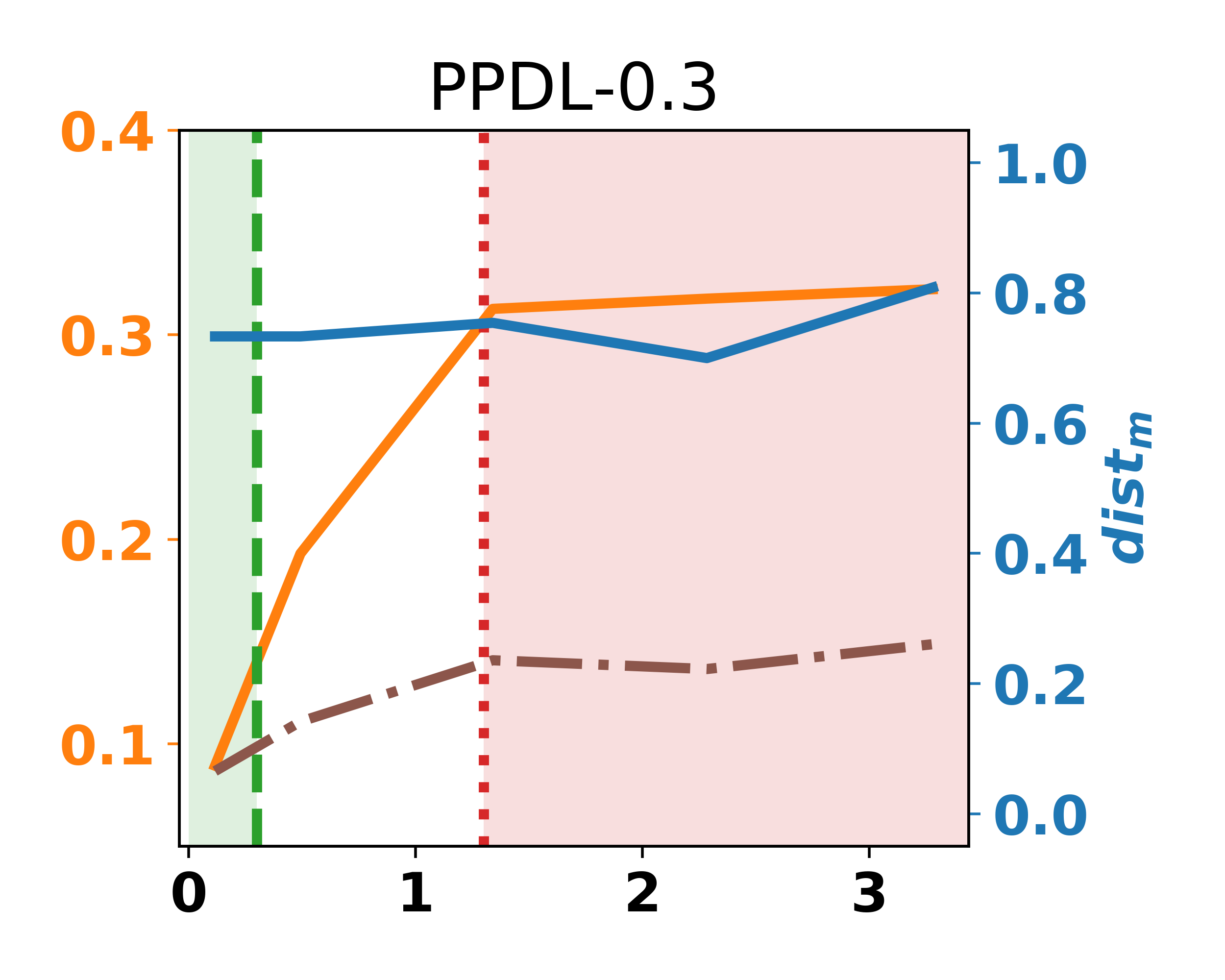}
			\includegraphics[width=0.24\linewidth]{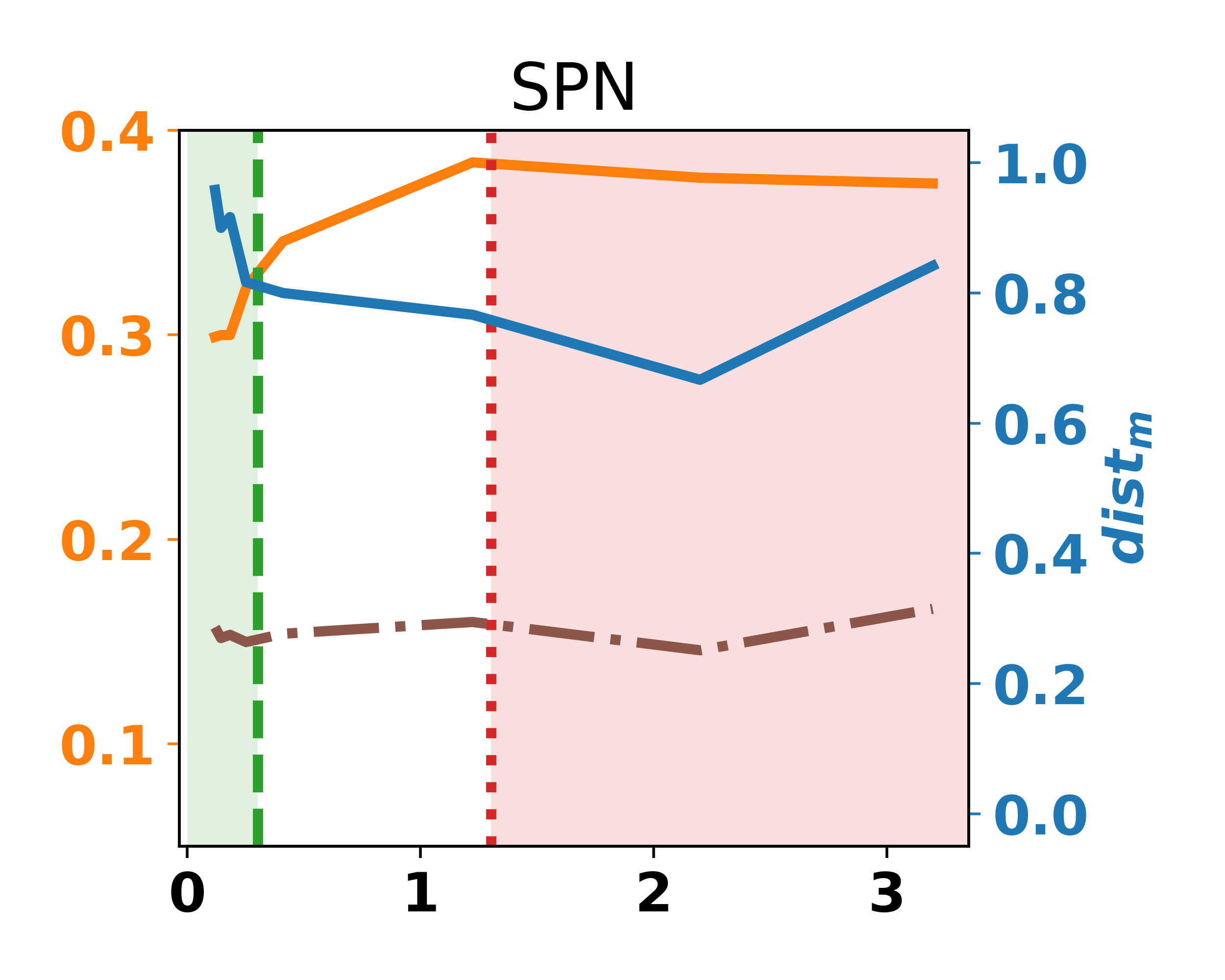}
			\caption{Membership Attack}
		\end{subfigure}
		
		\begin{subfigure}{0.99\linewidth}
			\centering
			\includegraphics[width=0.24\linewidth]{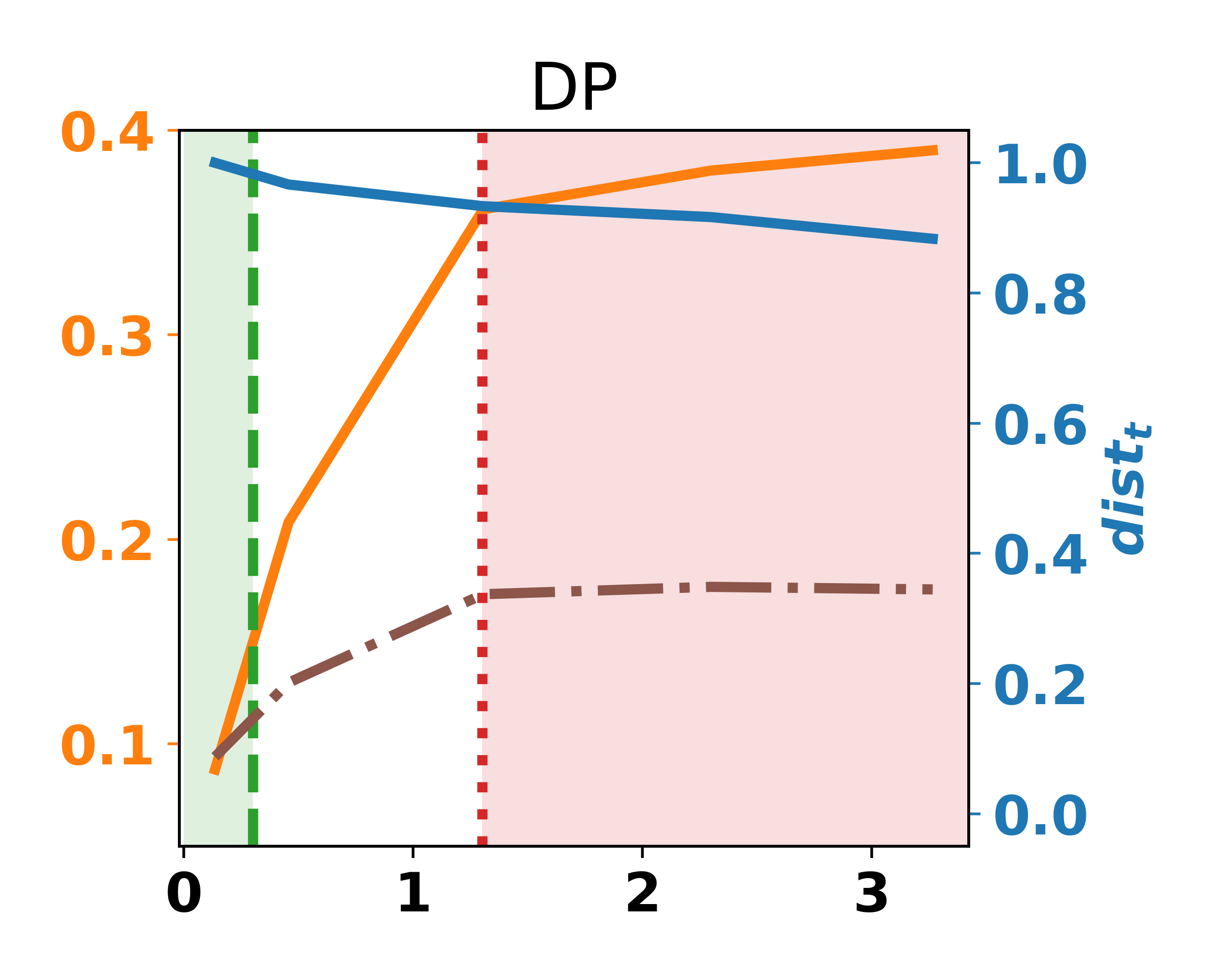}
			\includegraphics[width=0.24\linewidth]{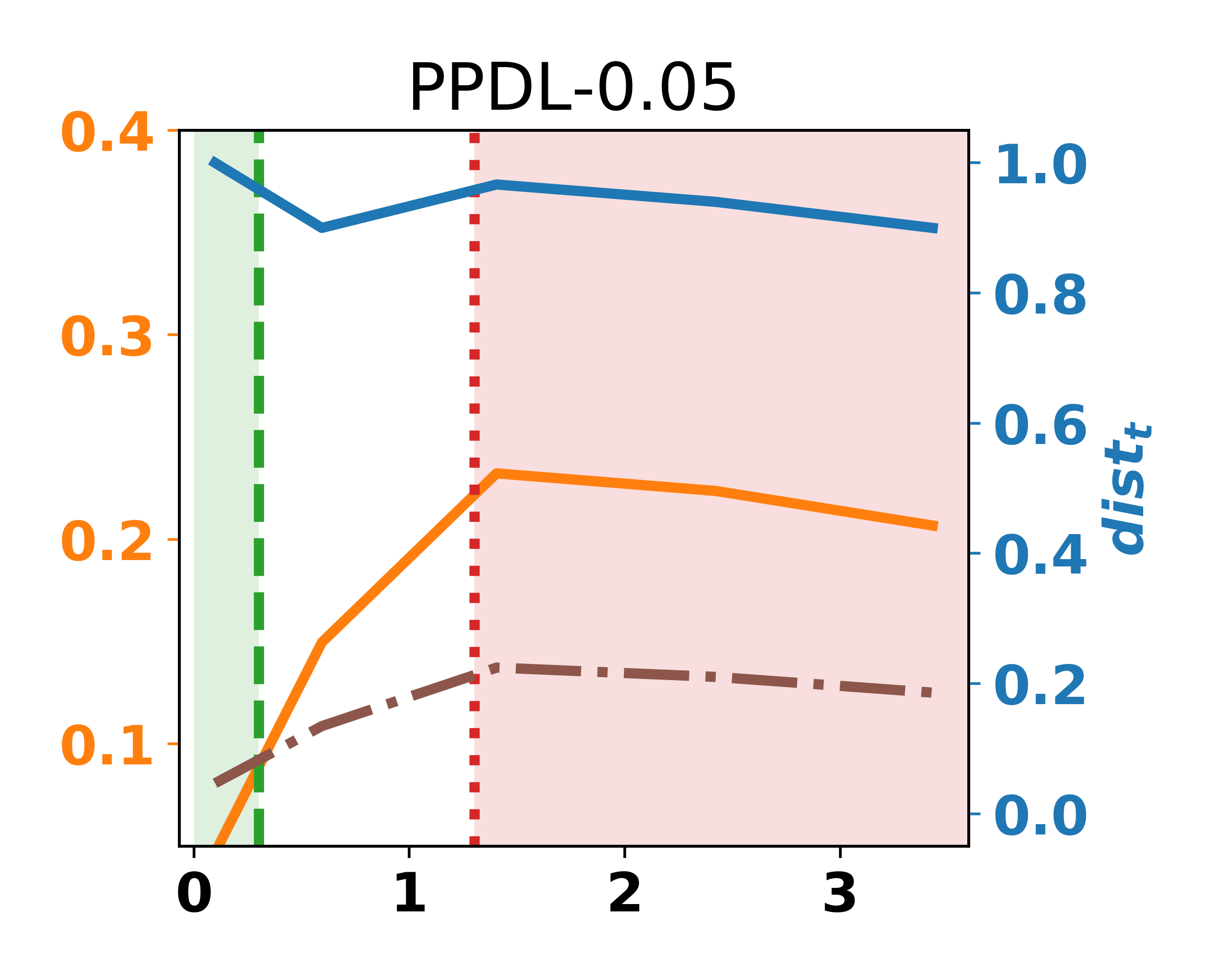}
			\includegraphics[width=0.24\linewidth]{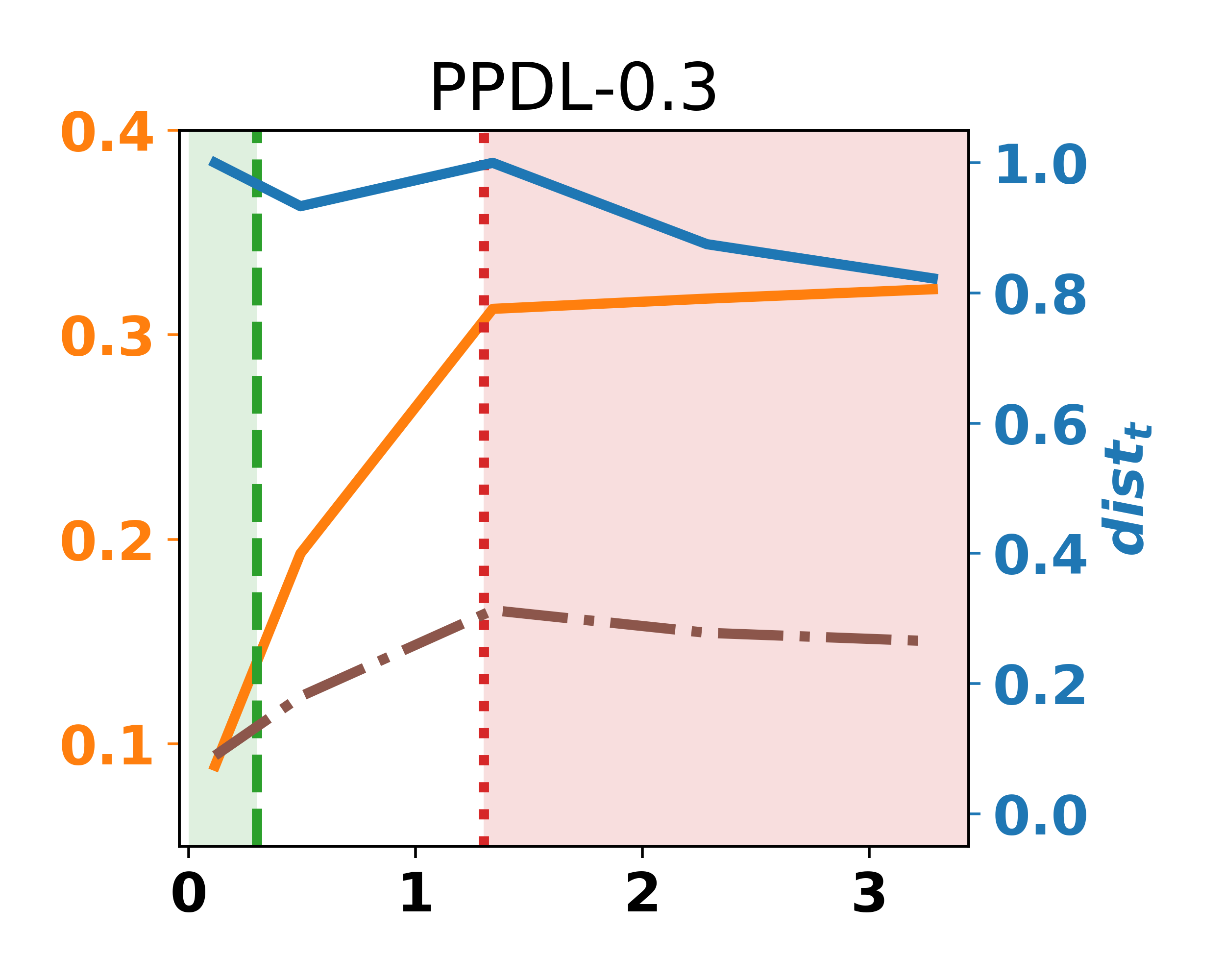}
			\includegraphics[width=0.24\linewidth]{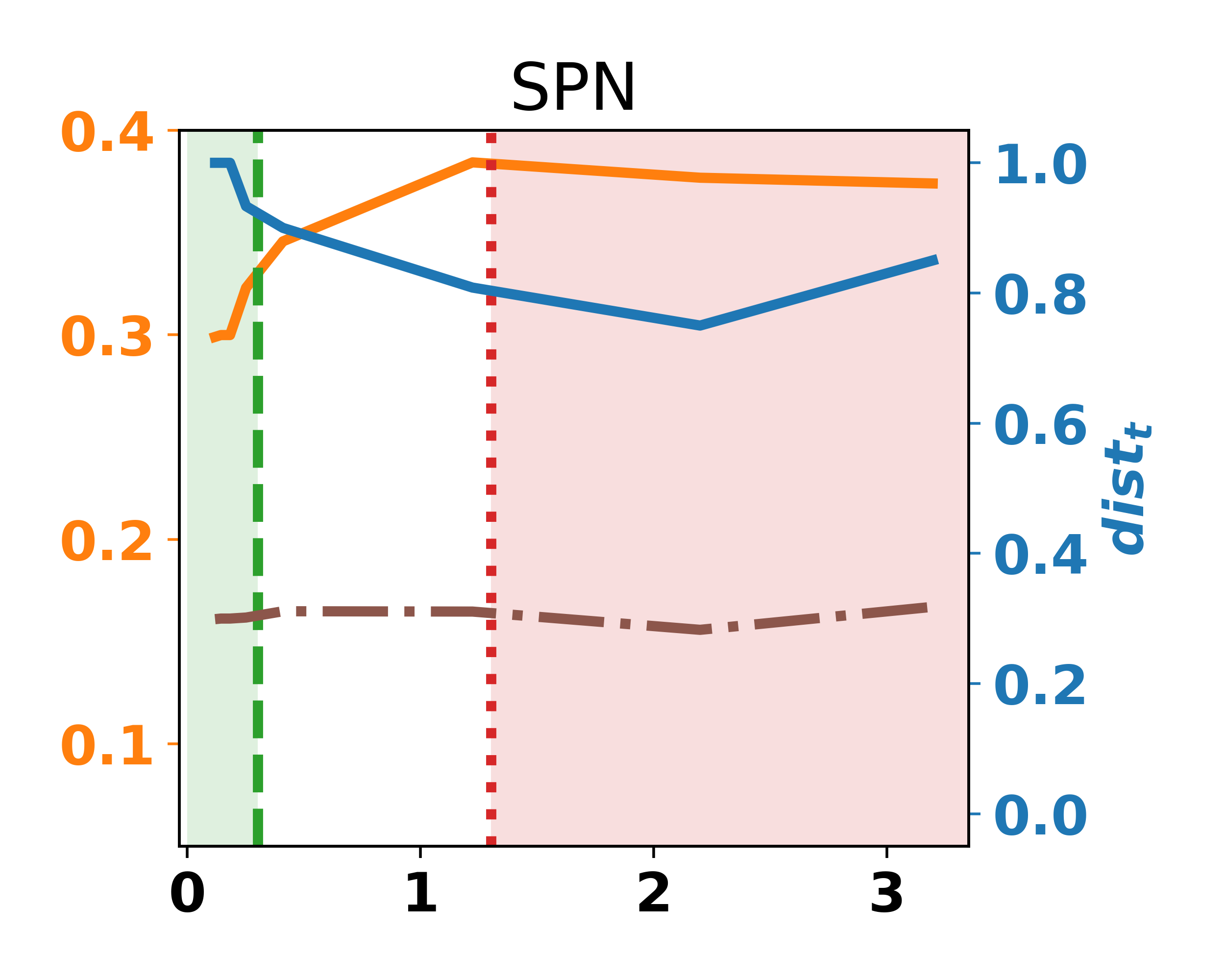}
			\caption{Tracing Attack}
		\end{subfigure}
		
		\caption{Attack with Batch Size 4}
		\label{fig:ppc-cifar100-bs4}
		%\vspace{-0.22cm}
	\end{figure}
	
	\begin{figure}[H]	
		%	\begin{subfigure}{0.95\linewidth}
		\centering
		\begin{subfigure}{0.99\linewidth}
			\centering
			\includegraphics[scale=0.7]{imgs/legends/legend_ppc_horizontal.png}
			\\
			\includegraphics[width=0.24\linewidth]{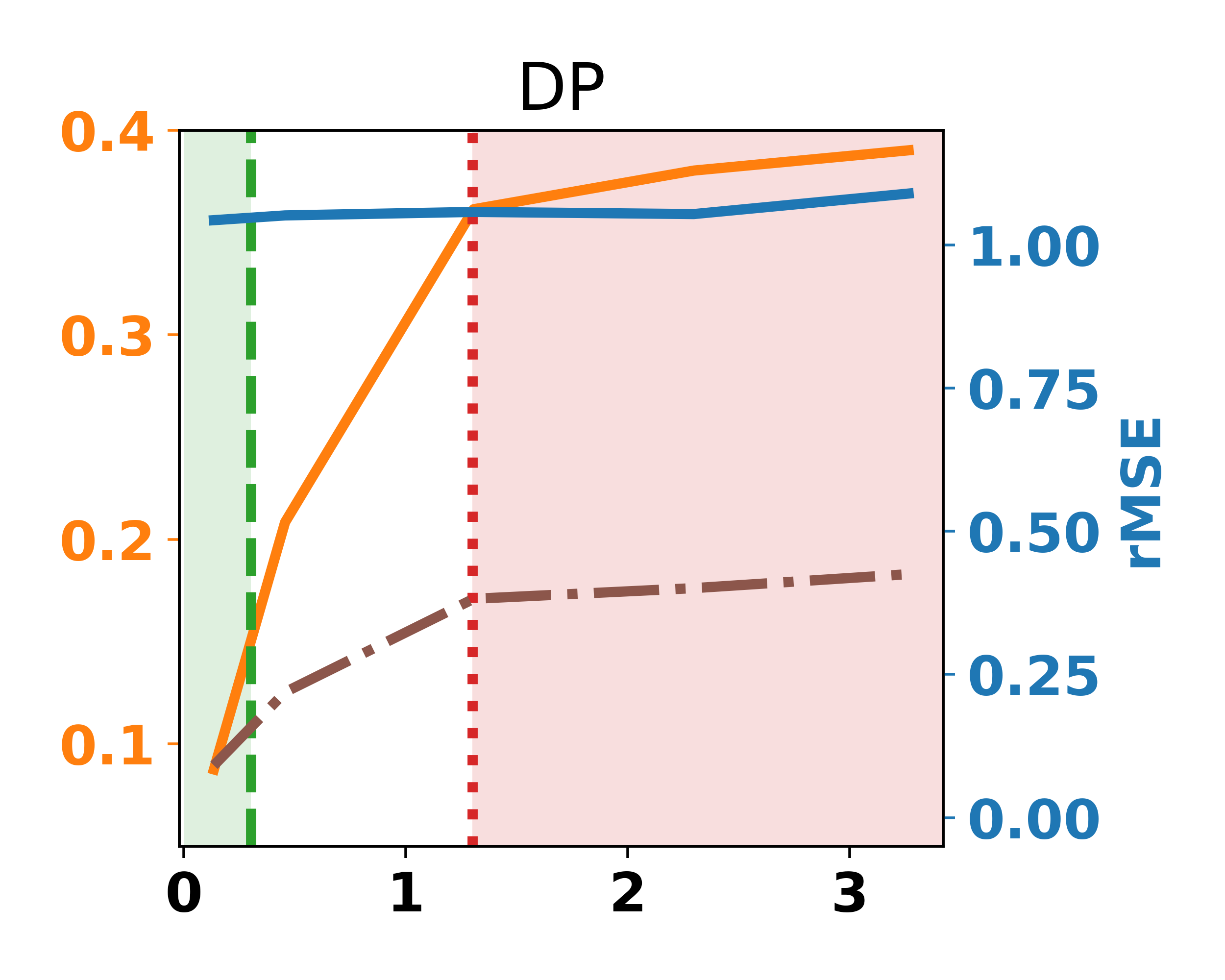}
			\includegraphics[width=0.24\linewidth]{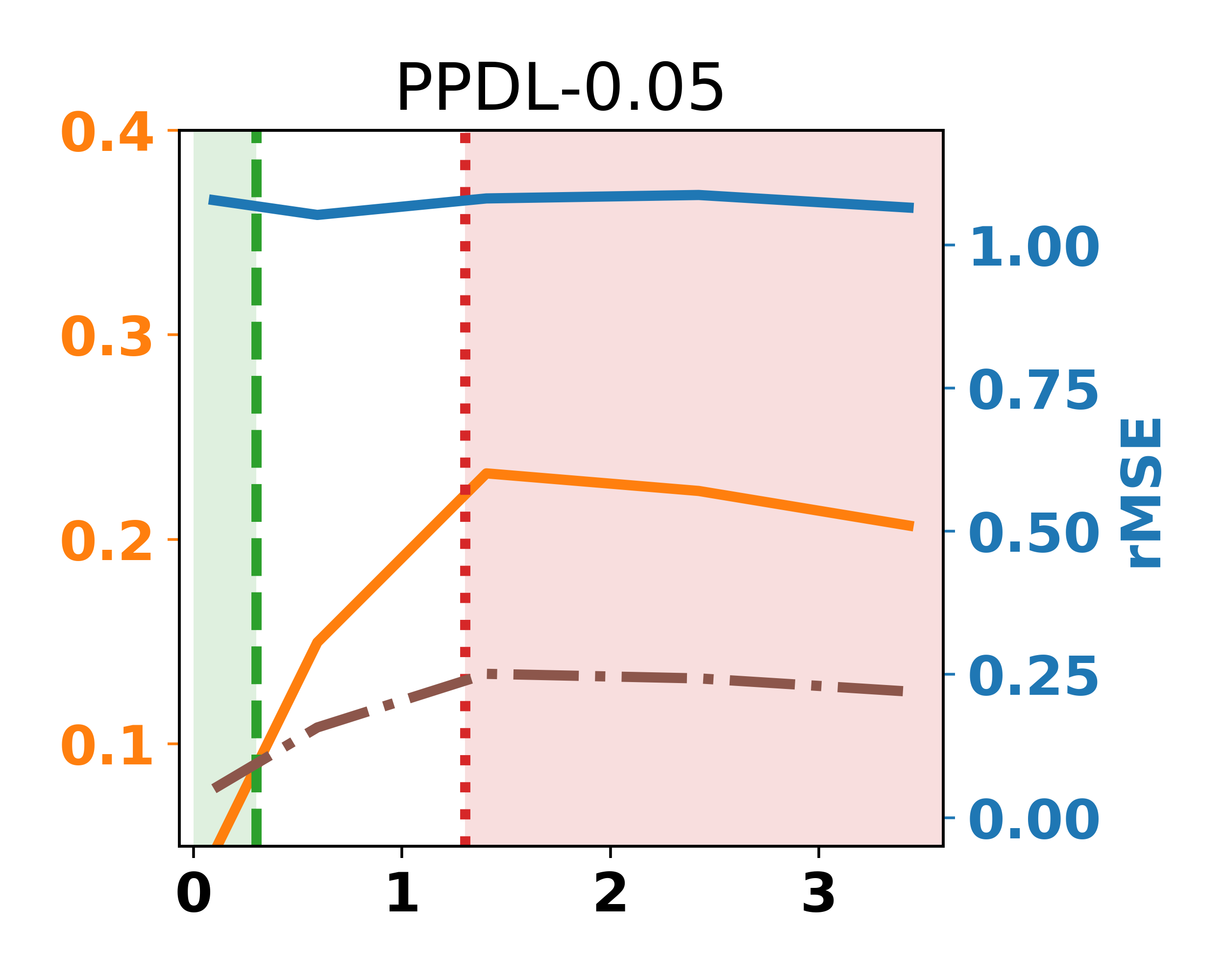}
			\includegraphics[width=0.24\linewidth]{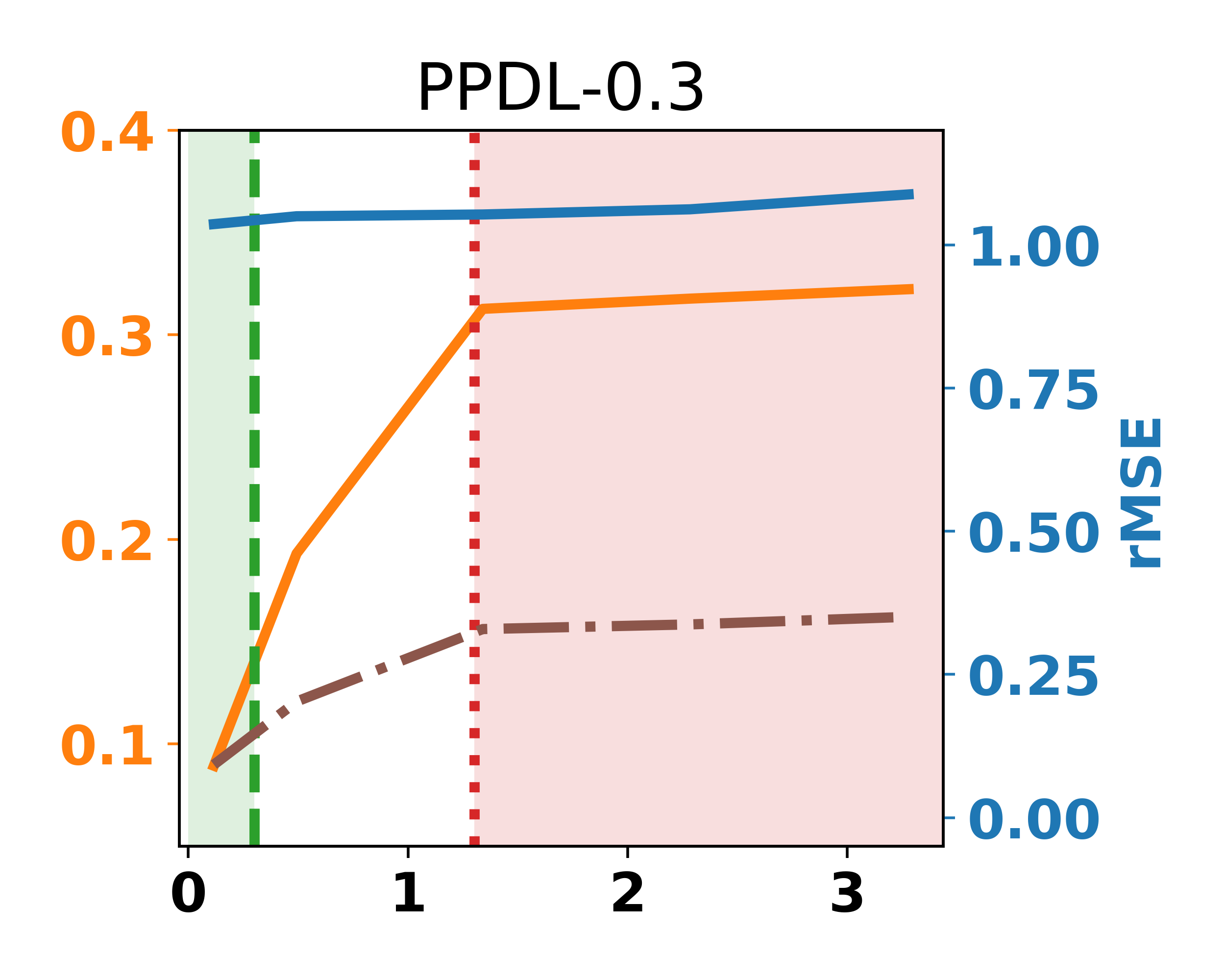}
			\includegraphics[width=0.24\linewidth]{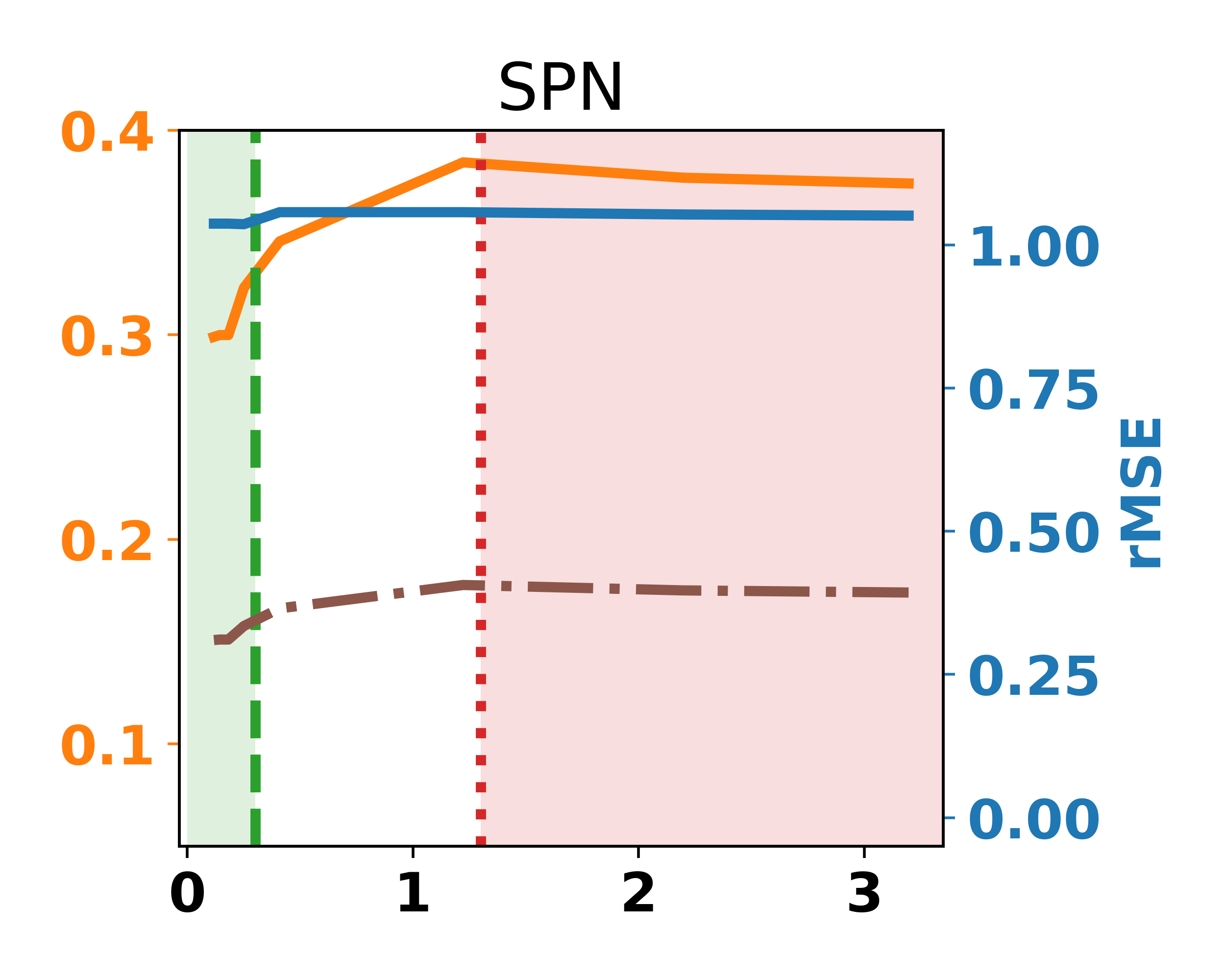}
			\caption{Reconstruction Attack}
		\end{subfigure}
		
		\begin{subfigure}{0.99\linewidth}
			\centering
			\includegraphics[width=0.24\linewidth]{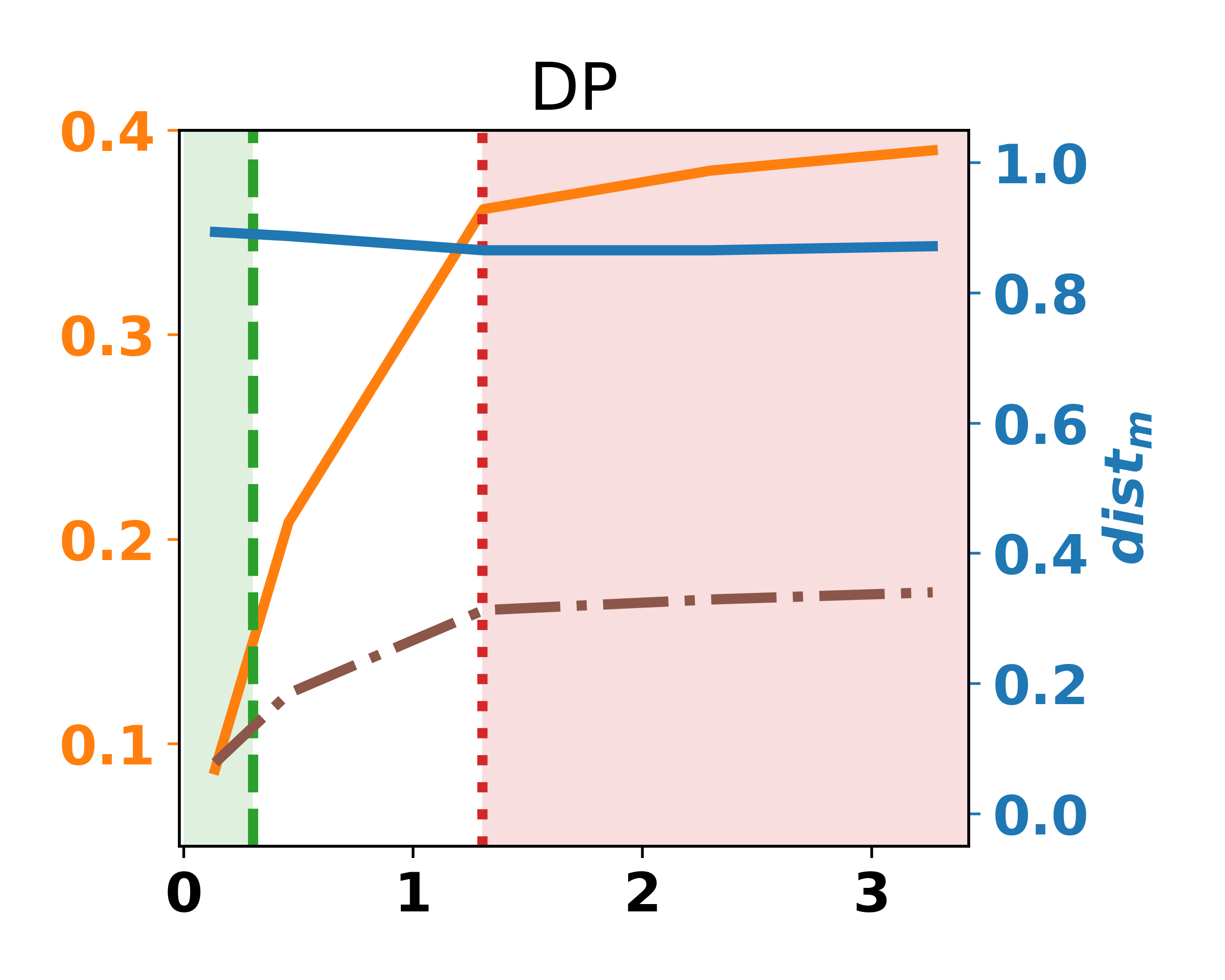}
			\includegraphics[width=0.24\linewidth]{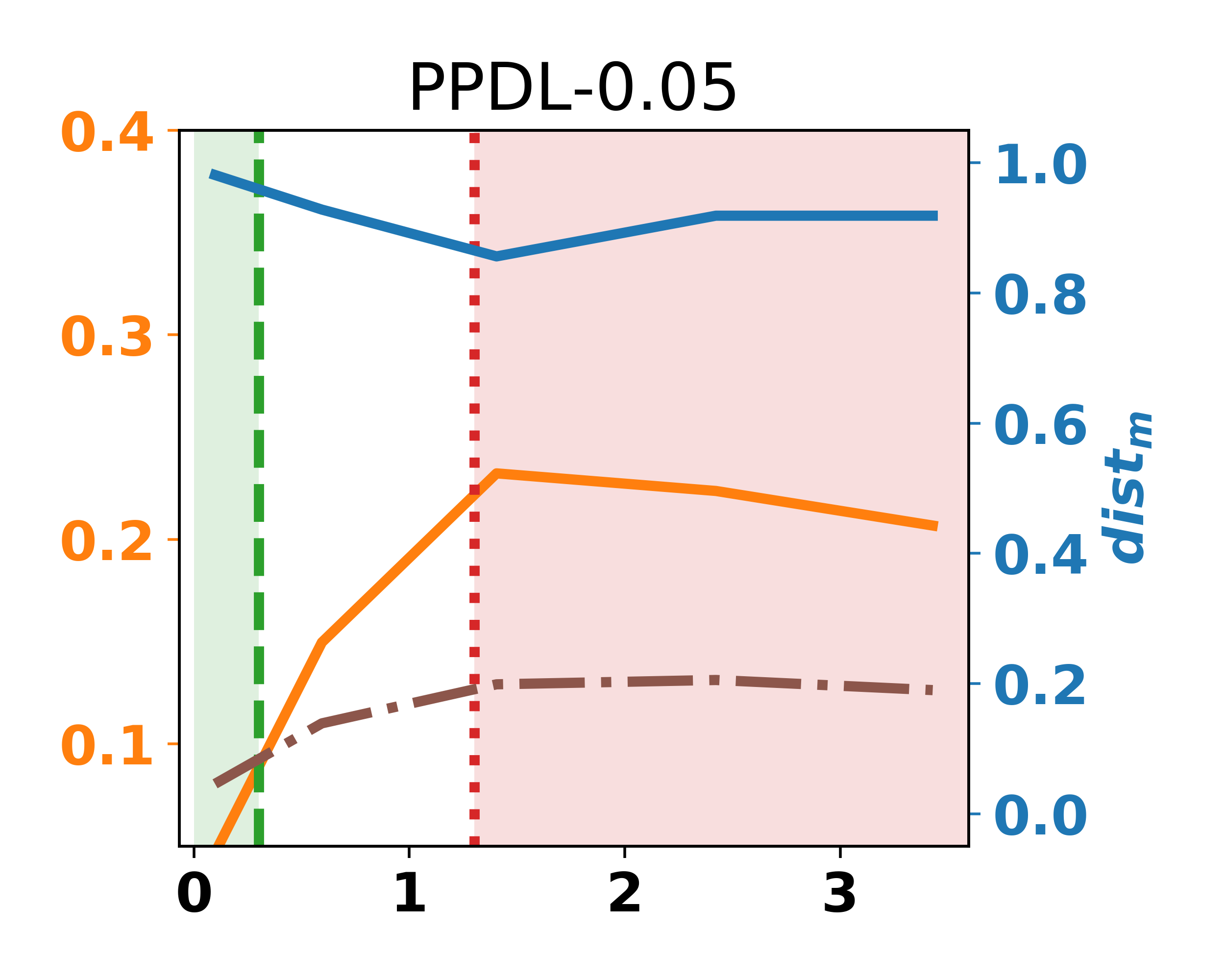}
			\includegraphics[width=0.24\linewidth]{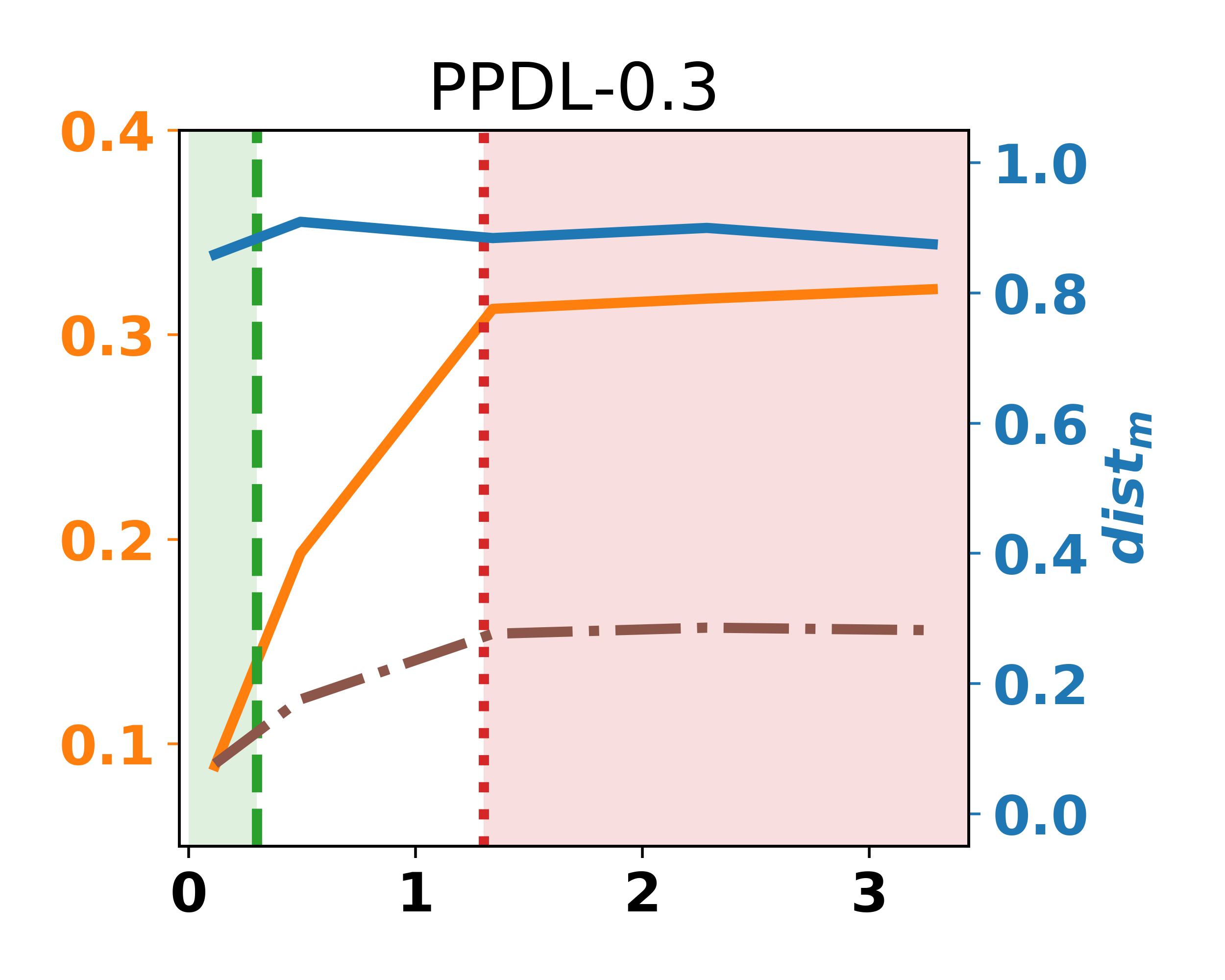}
			\includegraphics[width=0.24\linewidth]{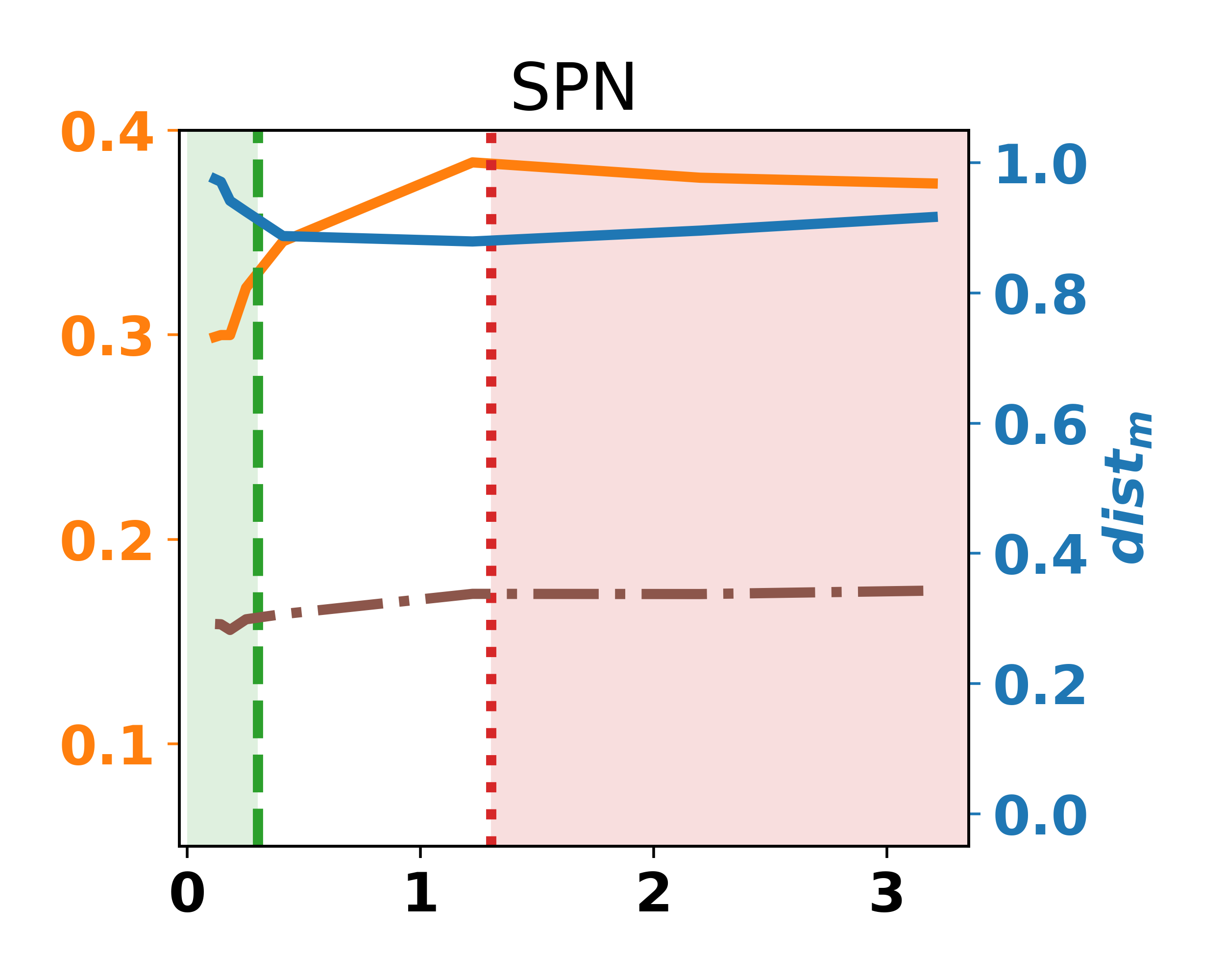}
			\caption{Membership Attack}
		\end{subfigure}
		
		\begin{subfigure}{0.99\linewidth}
			\centering
			\includegraphics[width=0.24\linewidth]{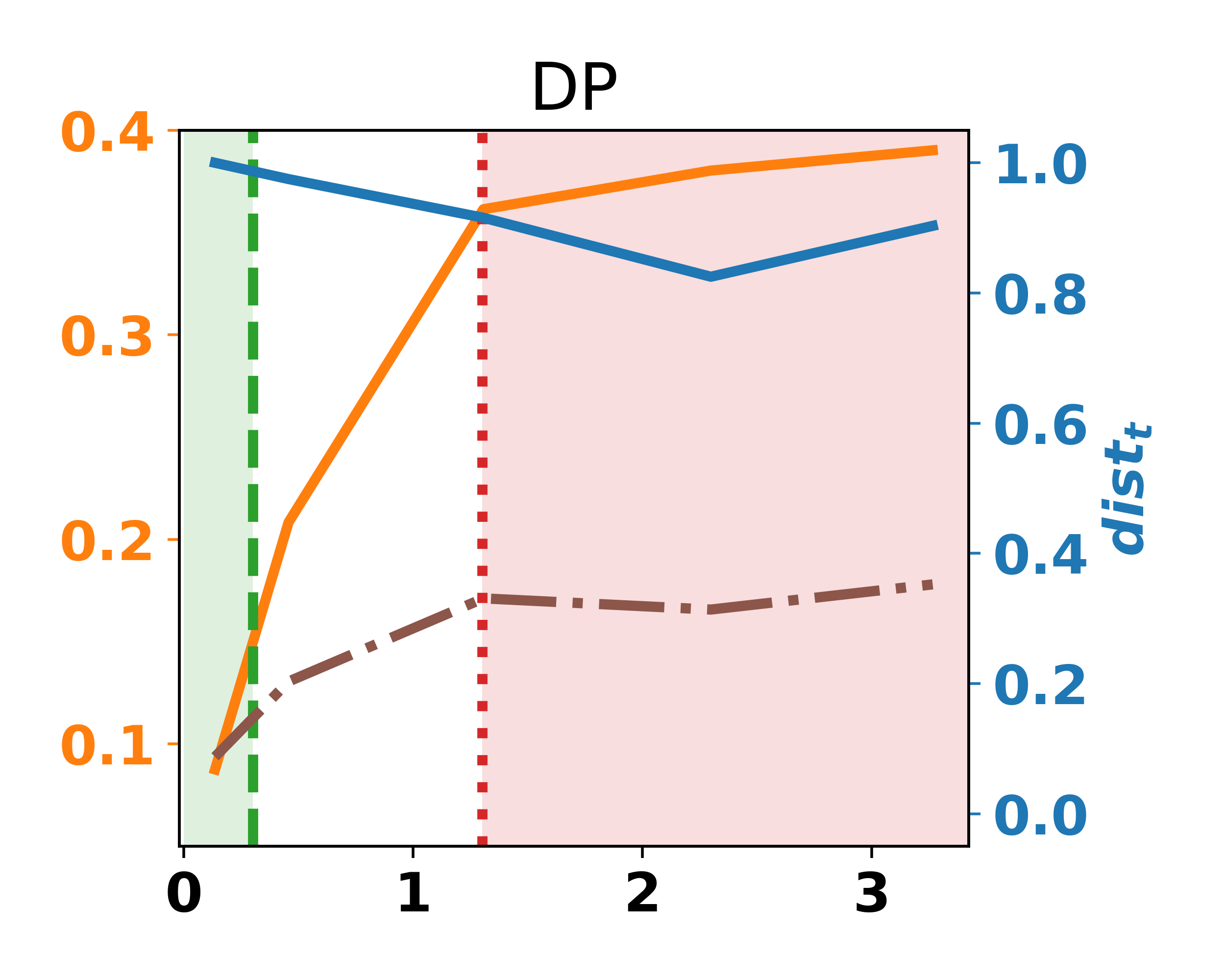}
			\includegraphics[width=0.24\linewidth]{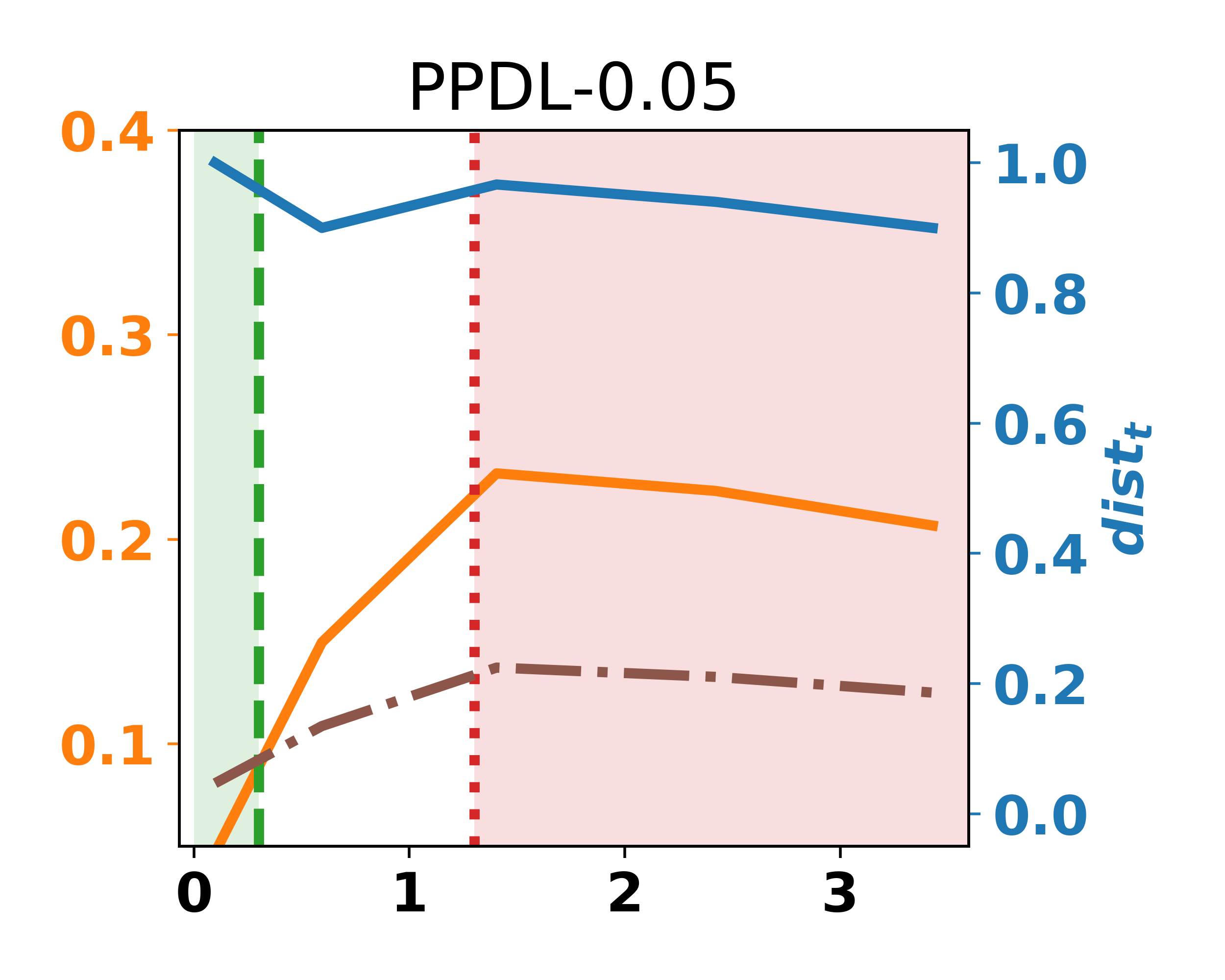}
			\includegraphics[width=0.24\linewidth]{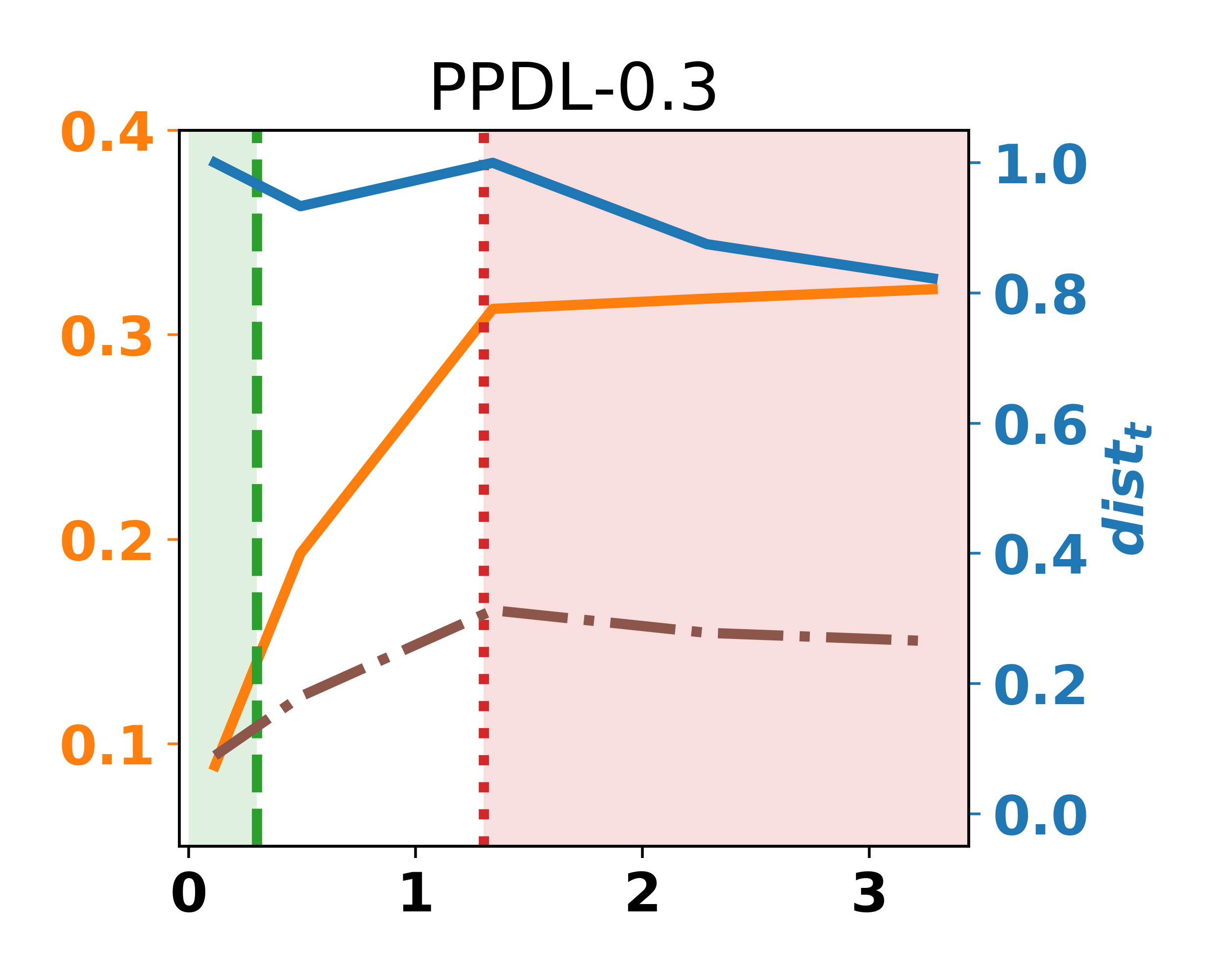}
			\includegraphics[width=0.24\linewidth]{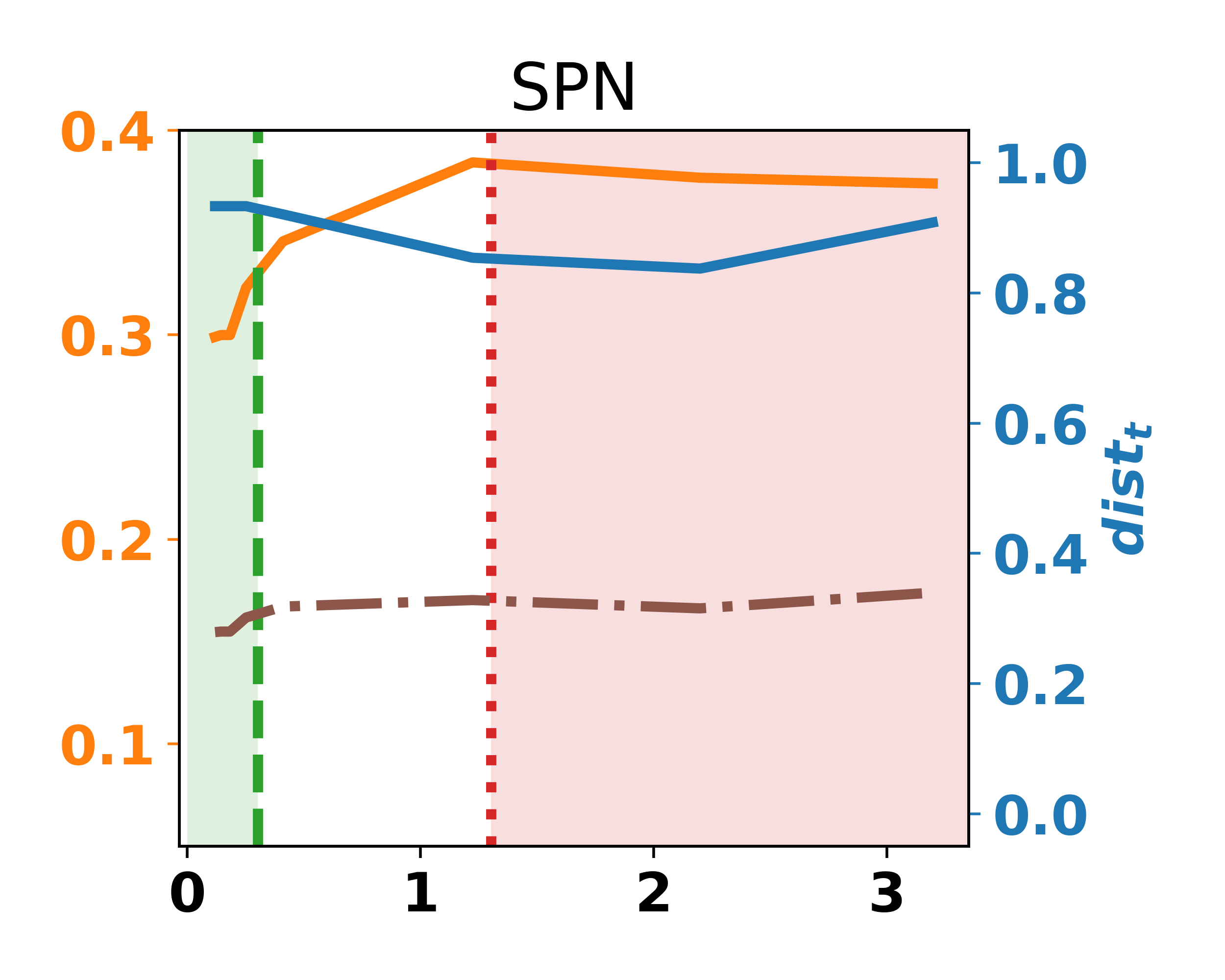}
			\caption{Tracing Attack}
		\end{subfigure}
		
		\caption{Attack with Batch Size 8}
		\label{fig:ppc-cifar100-bs8}
		%\vspace{-0.22cm}
	\end{figure}
	
	Figure \ref{fig:ppc-cifar10-bs1}, \ref{fig:ppc-cifar10-bs4}, \ref{fig:ppc-cifar10-bs8} are privacy-preserving characteristics (PPC) with different attacks on CIFAR100.
	
	\subsubsection{Calibrated Averaged Performance (CAP)}
	
	\begin{table}[H]
		\centering
		\adjustbox{max width=\textwidth}{
			\begin{tabular}{l|c|c|c|c|c|c|c|c|c}
				\toprule
				& \multicolumn{3}{c|}{Reconstruction} & \multicolumn{3}{c|}{Membership} & \multicolumn{3}{c|}{Tracing} \\
				\midrule
				BS & 1 & 4 & 8 & 1 & 4 & 8 & 1 & 4 & 8 \\
				\midrule
				DP \cite{DLDP_Abadi16} & 0.23 & 0.31 & 0.30 & 0.03 & 0.22 & 0.25 & 0.14 & 0.24 & 0.24 \\
				PPDL-0.05 \cite{PPDL/shokri2015} & 0.18 & 0.18 & 0.18 & 0.02 & 0.13 & 0.16 & 0.16 & 0.16 & 0.16 \\
				PPDL-0.3 \cite{PPDL/shokri2015} & 0.21 & 0.26 & 0.26 & 0.00 & 0.19 & 0.22 & 0.19 & 0.19 & 0.19 \\
				SPN (ours) & \textbf{0.37} & \textbf{0.36} & \textbf{0.35} & \textbf{0.17} & \textbf{0.28} & \textbf{0.31} & \textbf{0.29} & \textbf{0.30} & \textbf{0.30} \\
				\bottomrule
			\end{tabular}
		}
		\caption{CAP performance with different batch size on CIFAR100 for reconstruction, membership and tracing attack. Higher better. BS = Attack Batch Size.}
		\label{tab:CAP-cifar100-supp}
	\end{table}
	
	\subsubsection{Reconstructed Images}
	
	\begin{figure}[H]
		\centering
		
		\begin{subfigure}{0.32\linewidth}
			\centering
			\includegraphics[scale=0.8]{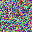}
			\hspace{1pt}
			\includegraphics[scale=0.8]{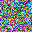} % + svhn
			\hspace{1pt}
			\includegraphics[scale=0.8]{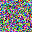}
			\hspace{1pt}
			\includegraphics[scale=0.8]{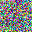}
			\\
			\vspace{2pt}
			\includegraphics[scale=0.8]{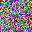}
			\hspace{1pt}
			\includegraphics[scale=0.8]{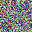} % + svhn
			\hspace{1pt}
			\includegraphics[scale=0.8]{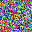}
			\hspace{1pt}
			\includegraphics[scale=0.8]{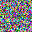}
			\\
			\vspace{2pt}
			\includegraphics[scale=0.8]{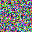}
			\hspace{1pt}
			\includegraphics[scale=0.8]{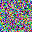} % + svhn
			\hspace{1pt}
			\includegraphics[scale=0.8]{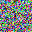}
			\hspace{1pt}
			\includegraphics[scale=0.8]{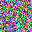}
			\\
			\vspace{2pt}
			\includegraphics[scale=0.8]{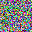}
			\hspace{1pt}
			\includegraphics[scale=0.8]{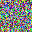} % + svhn
			\hspace{1pt}
			\includegraphics[scale=0.8]{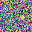}
			\hspace{1pt}
			\includegraphics[scale=0.8]{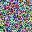}
			\caption{$1.12 (0.32)$ \label{fig:recon-images-green-cifar100-supp}}
		\end{subfigure}
		\hfill
		\begin{subfigure}{0.32\linewidth}
			\centering
			\includegraphics[scale=0.8]{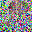}
			\hspace{1pt}
			\includegraphics[scale=0.8]{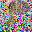} % + svhn
			\hspace{1pt}
			\includegraphics[scale=0.8]{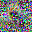}
			\hspace{1pt}
			\includegraphics[scale=0.8]{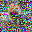}
			\\
			\vspace{2pt}
			\includegraphics[scale=0.8]{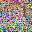}
			\hspace{1pt}
			\includegraphics[scale=0.8]{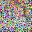} % + svhn
			\hspace{1pt}
			\includegraphics[scale=0.8]{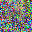}
			\hspace{1pt}
			\includegraphics[scale=0.8]{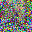}
			\\
			\vspace{2pt}
			\includegraphics[scale=0.8]{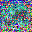}
			\hspace{1pt}
			\includegraphics[scale=0.8]{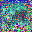} % + svhn
			\hspace{1pt}
			\includegraphics[scale=0.8]{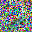}
			\hspace{1pt}
			\includegraphics[scale=0.8]{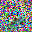}
			\\
			\vspace{2pt}
			\includegraphics[scale=0.8]{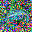}
			\hspace{1pt}
			\includegraphics[scale=0.8]{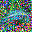} % + svhn
			\hspace{1pt}
			\includegraphics[scale=0.8]{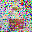}
			\hspace{1pt}
			\includegraphics[scale=0.8]{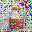}
			\caption{$1.11 (23.63)$ \label{fig:recon-images-white-cifar100-supp}}%; 1.10 (1.30)}
		\end{subfigure}
		\hfill
		\begin{subfigure}{0.32\linewidth}
			\centering
			\includegraphics[scale=0.8]{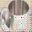}
			\hspace{1pt}
			\includegraphics[scale=0.8]{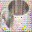} % + svhn
			\hspace{1pt}
			\includegraphics[scale=0.8]{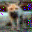}
			\hspace{1pt}
			\includegraphics[scale=0.8]{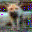}
			\\
			\vspace{2pt}
			\includegraphics[scale=0.8]{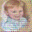}
			\hspace{1pt}
			\includegraphics[scale=0.8]{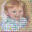} % + svhn
			\hspace{1pt}
			\includegraphics[scale=0.8]{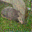}
			\hspace{1pt}
			\includegraphics[scale=0.8]{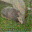}
			\\
			\vspace{2pt}
			\includegraphics[scale=0.8]{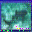}
			\hspace{1pt}
			\includegraphics[scale=0.8]{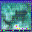} % + svhn
			\hspace{1pt}
			\includegraphics[scale=0.8]{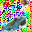}
			\hspace{1pt}
			\includegraphics[scale=0.8]{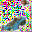}
			\\
			\vspace{2pt}
			\includegraphics[scale=0.8]{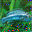}
			\hspace{1pt}
			\includegraphics[scale=0.8]{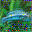} % + svhn
			\hspace{1pt}
			\includegraphics[scale=0.8]{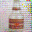}
			\hspace{1pt}
			\includegraphics[scale=0.8]{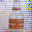}
			\caption{$0.77 (876.73)$ \label{fig:recon-images-red-cifar100-supp}} %0.94 (1.59)}
		\end{subfigure}
		
		%\begin{subfigure}{0.2\linewidth}
		%	\centering
		%	\includegraphics[scale=0.8]{imgs/recimgs/0_0.0001_0.0151.png}
		%	\hspace{3pt}
		%	\includegraphics[scale=0.8]{imgs/recimgs/391_0.0001_0.0116.png}
		%	\caption{0.02 (3.34); 0.01 (3.28)}
		%\end{subfigure}
		\caption{Reconstructed images from different region mentioned in the main paper. \textbf{(a)} Green region \textbf{(b)} White region \textbf{(c)} Red region. 
			Values inside bracket are mean of $\frac{||B_I||}{||E_B||}$ and values outside are mean of rMSE of reconstructed w.r.t. original images.
		}
		\label{fig:recon-images-cifar100-supp}
	\end{figure}
	
	\newpage
	\subsection{SVHN}
	
	\subsubsection{Privacy-Preserving Characteristics (PPC)}
	
	\begin{figure}[H]	
		%	\begin{subfigure}{0.95\linewidth}
		\centering
		\begin{subfigure}{0.99\linewidth}
			\centering
			\includegraphics[scale=0.7]{imgs/legends/legend_ppc_horizontal.png}
			\\
			\includegraphics[width=0.24\linewidth]{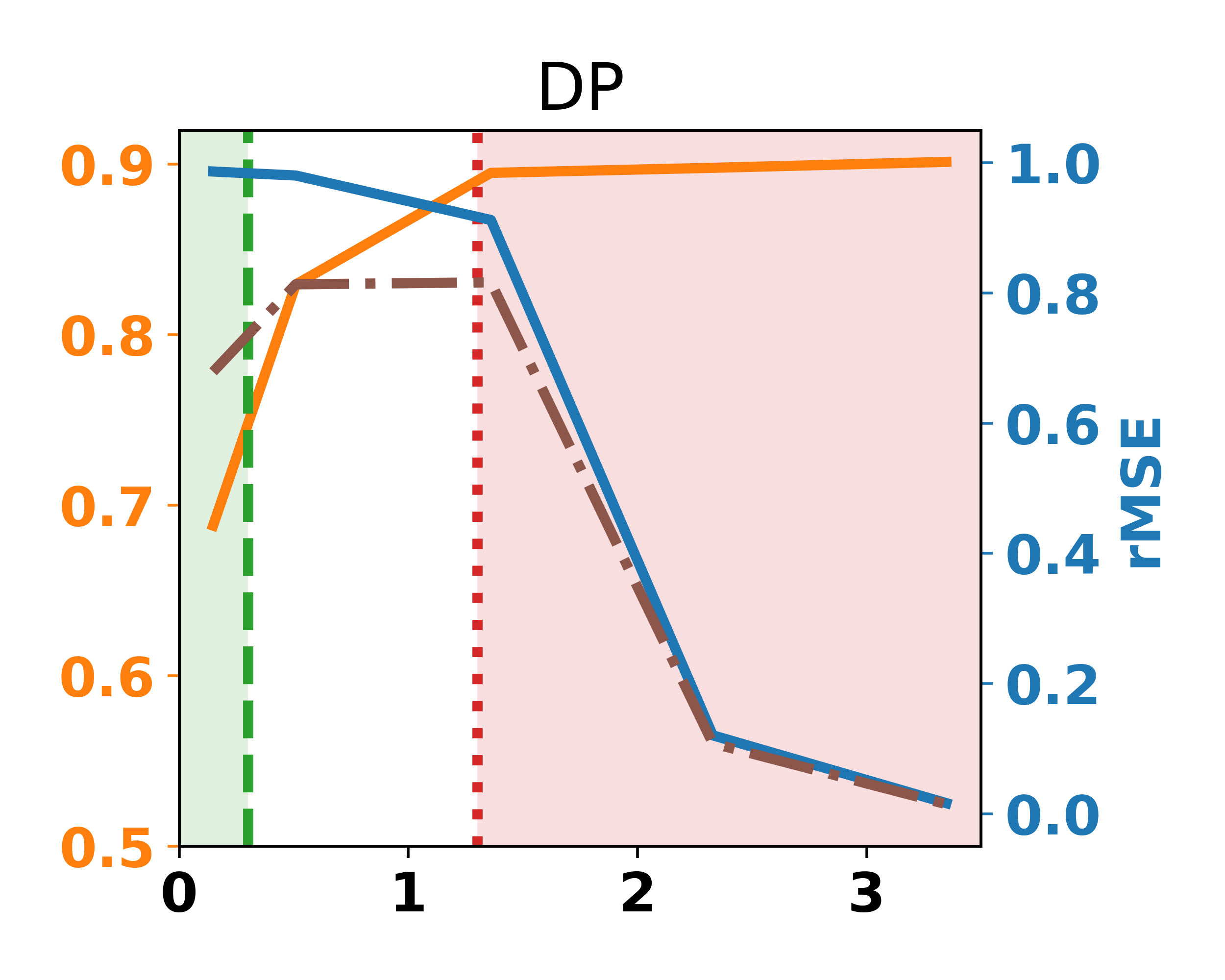}
			\includegraphics[width=0.24\linewidth]{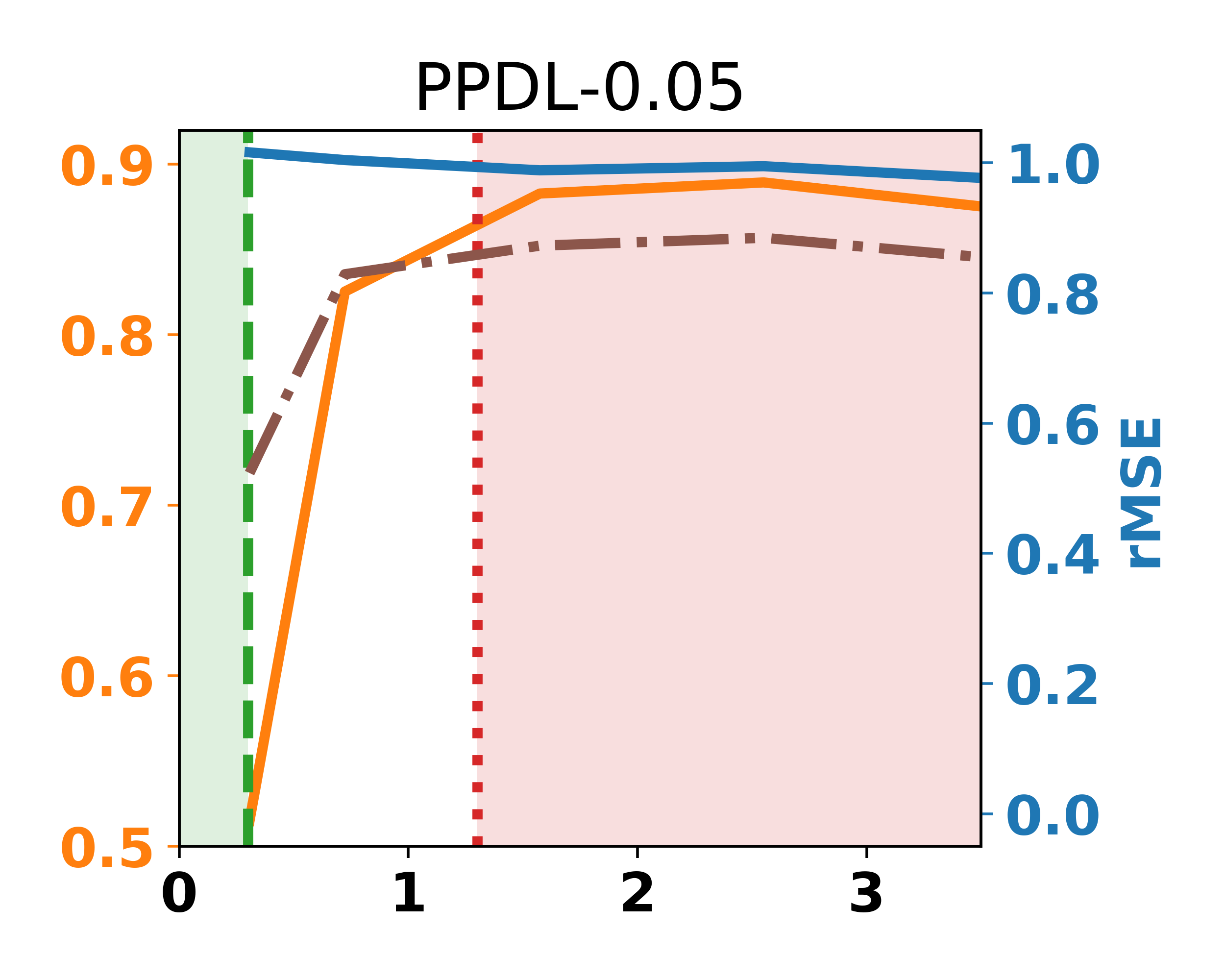}
			\includegraphics[width=0.24\linewidth]{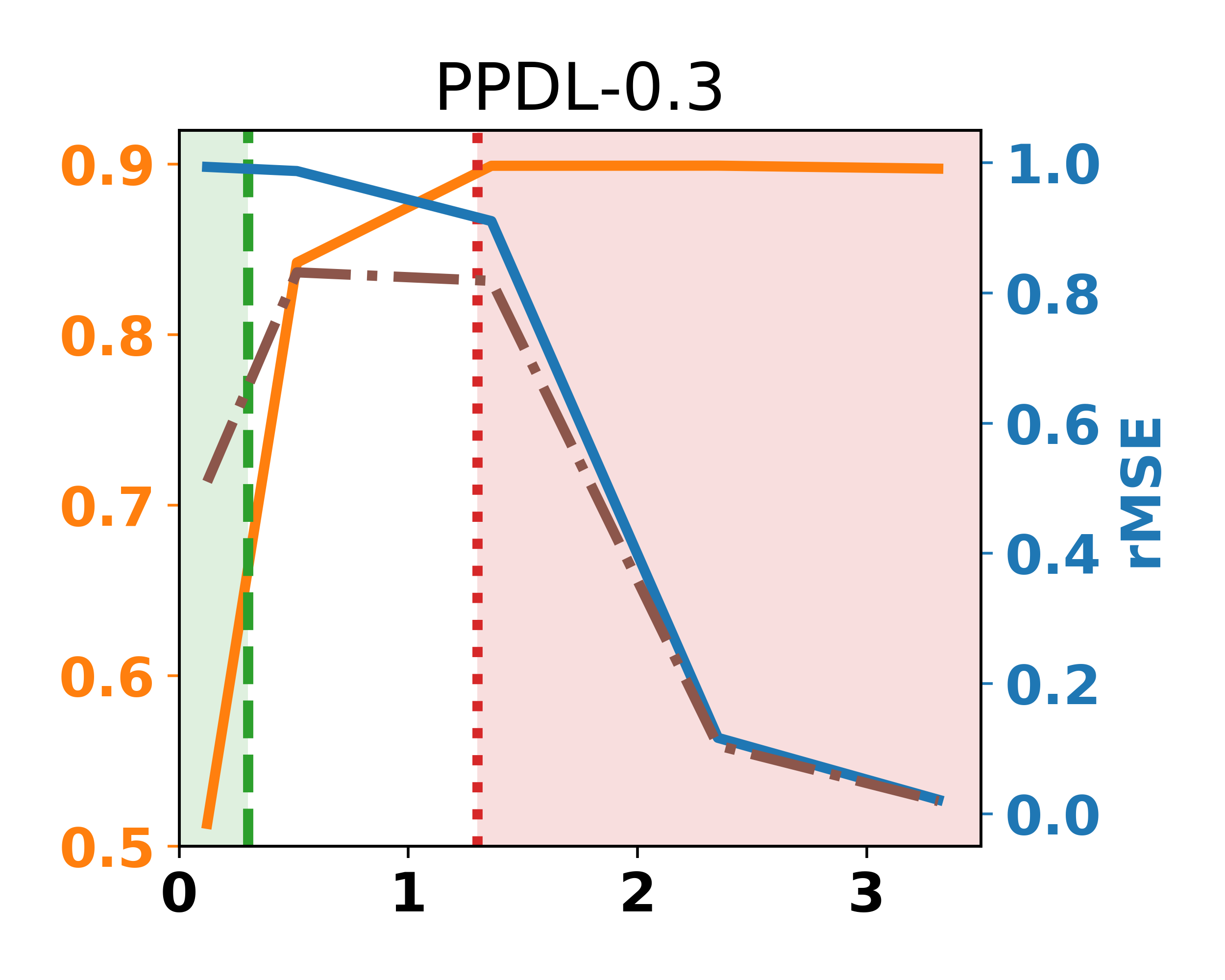}
			\includegraphics[width=0.24\linewidth]{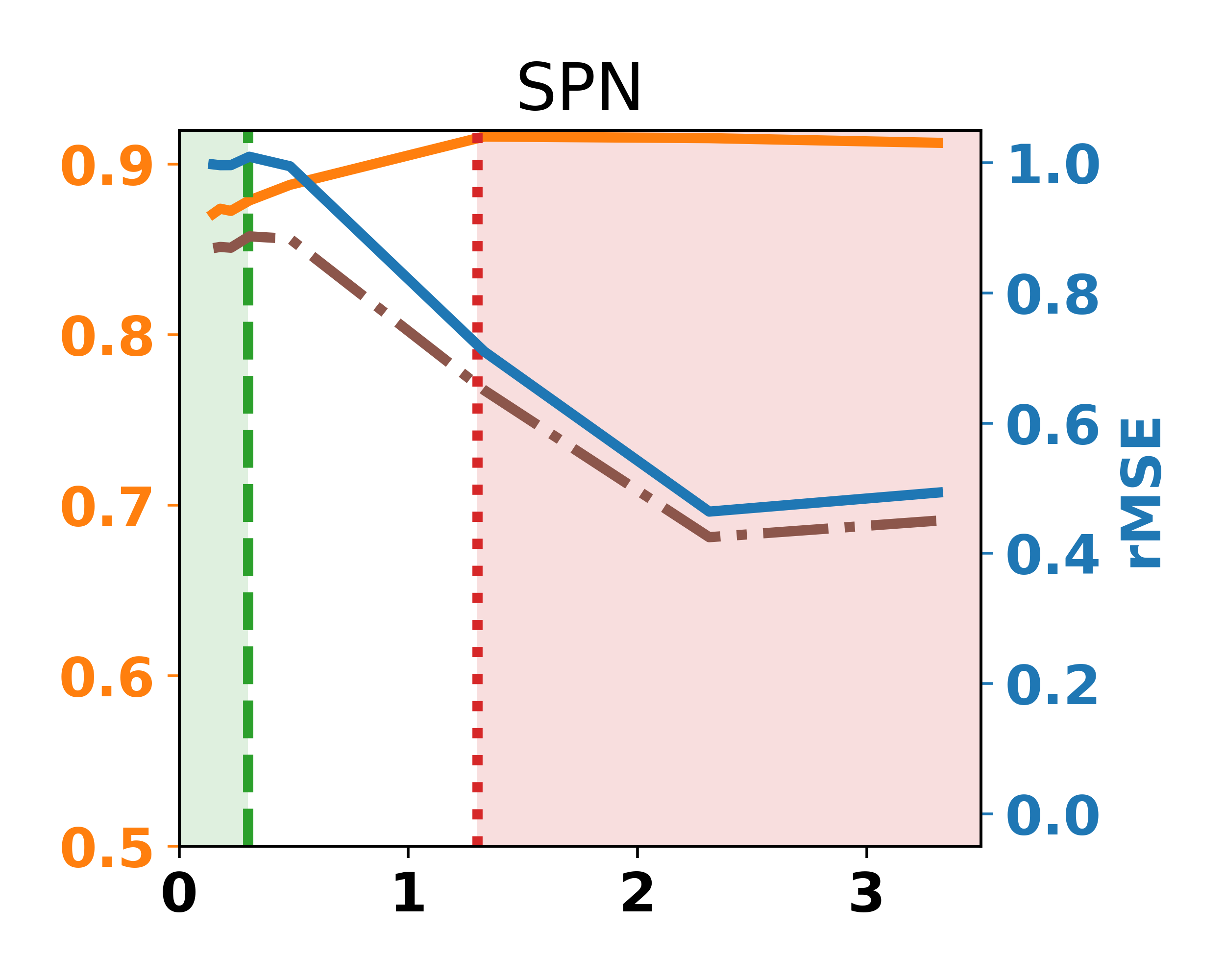}
			\caption{Reconstruction Attack}
		\end{subfigure}
		
		\begin{subfigure}{0.99\linewidth}
			\centering
			\includegraphics[width=0.24\linewidth]{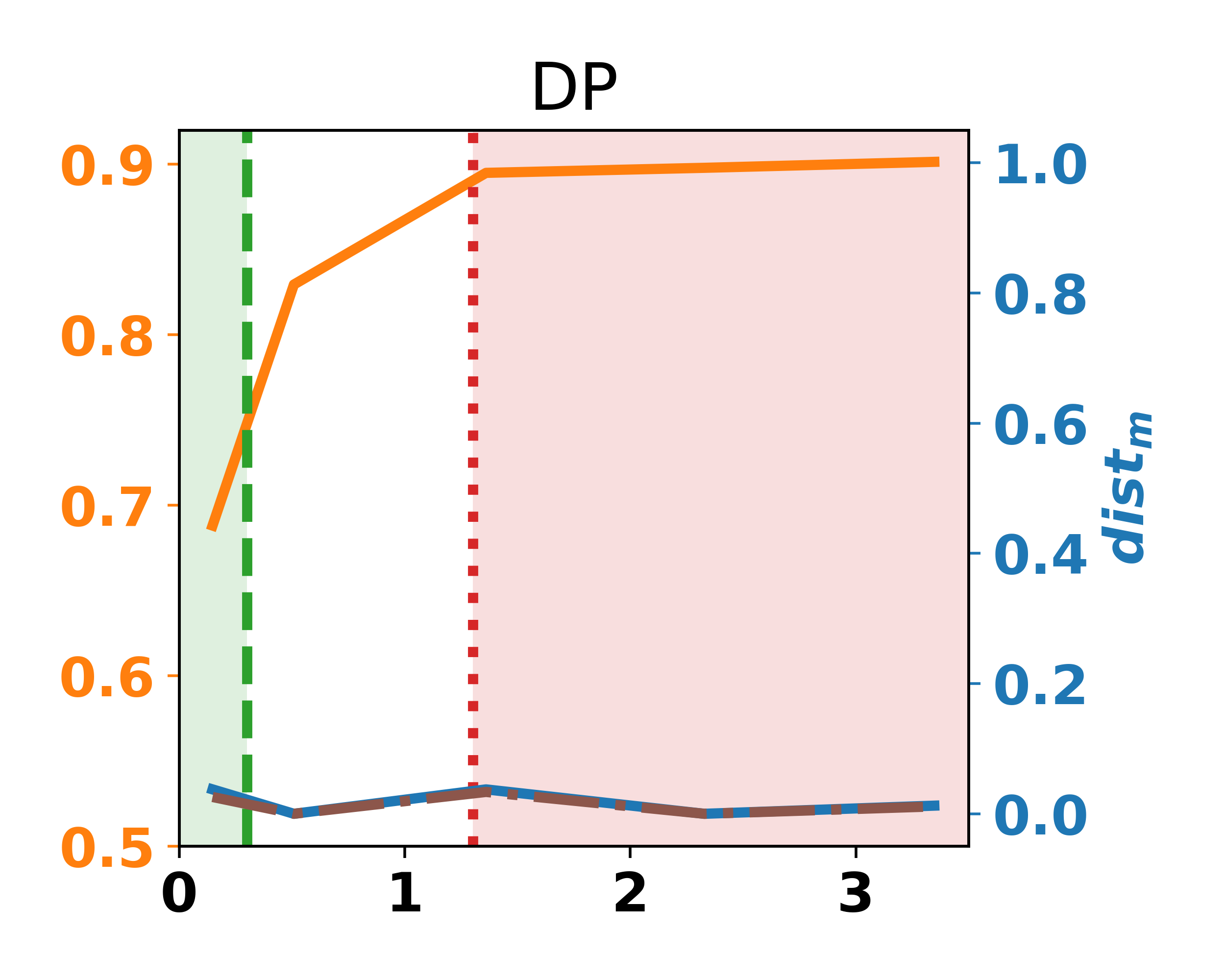}
			\includegraphics[width=0.24\linewidth]{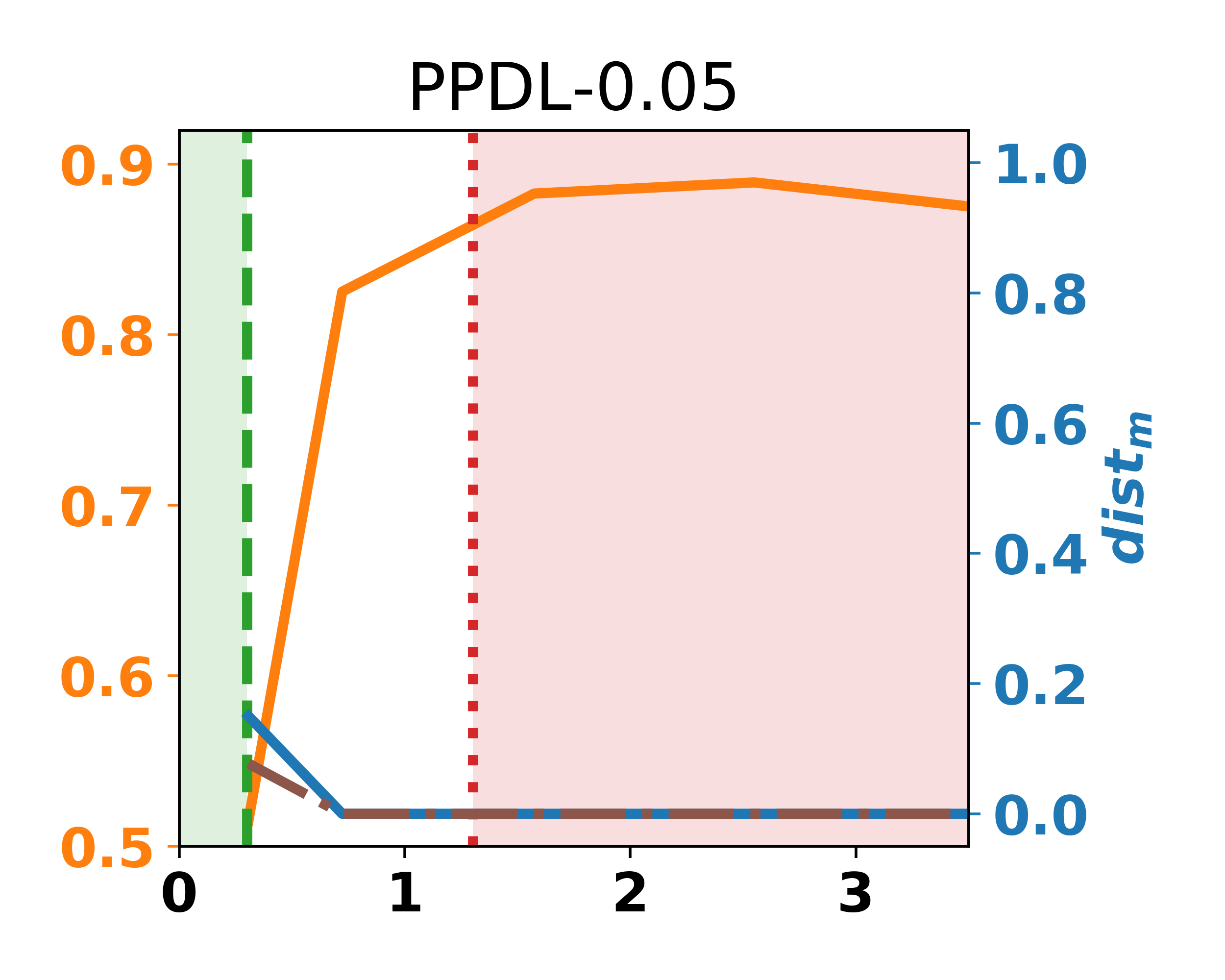}
			\includegraphics[width=0.24\linewidth]{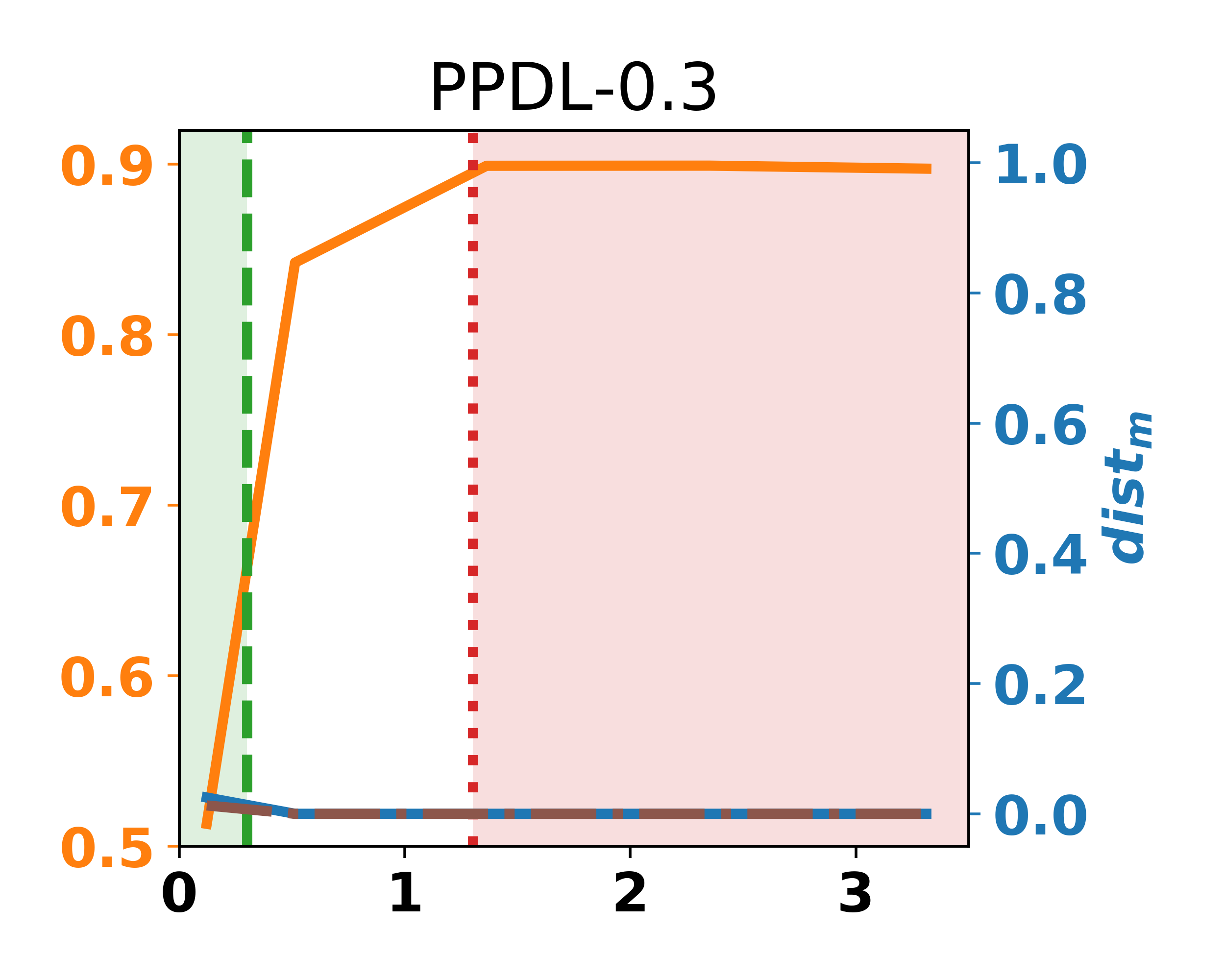}
			\includegraphics[width=0.24\linewidth]{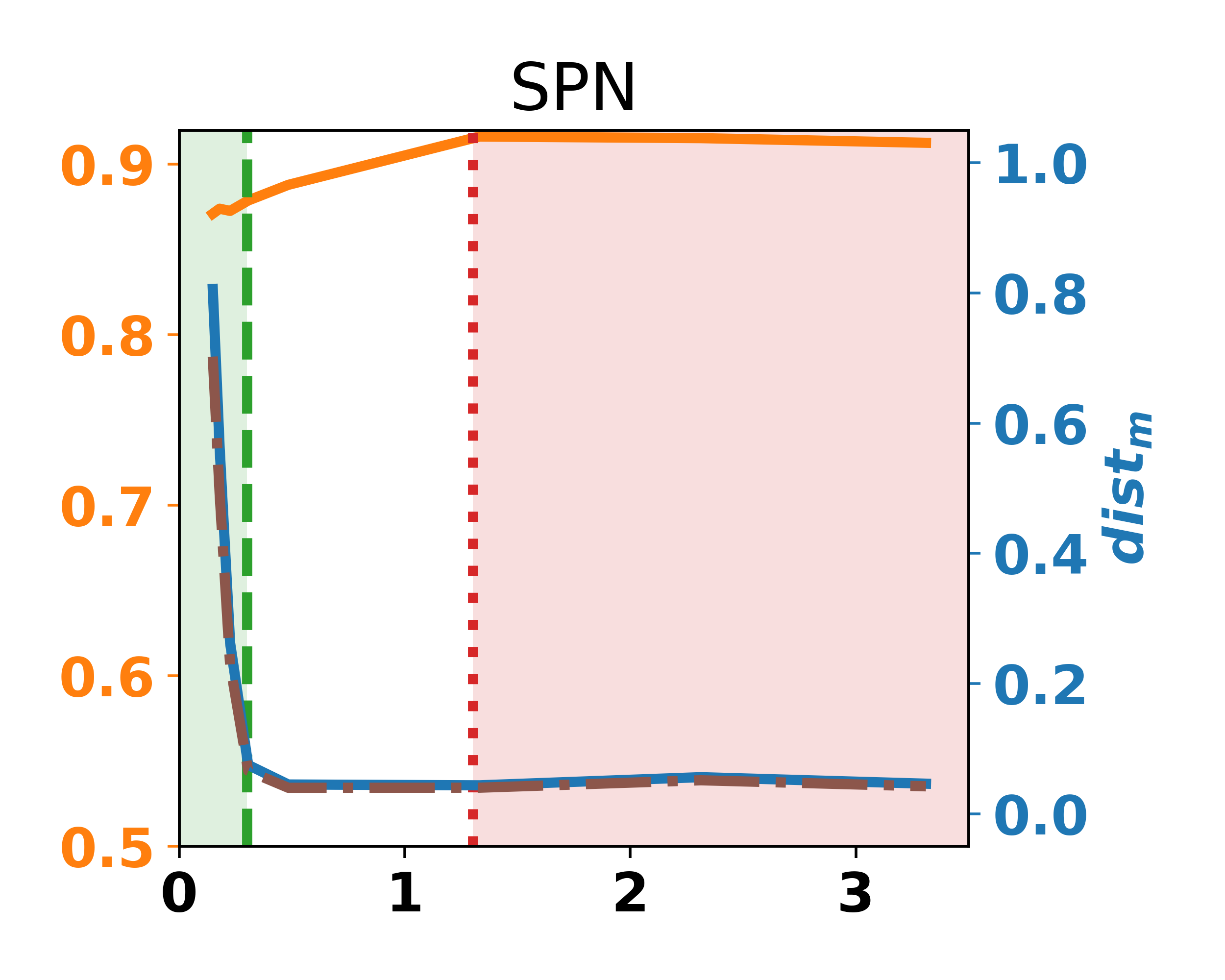}
			\caption{Membership Attack}
		\end{subfigure}
		
		\begin{subfigure}{0.99\linewidth}
			\centering
			\includegraphics[width=0.24\linewidth]{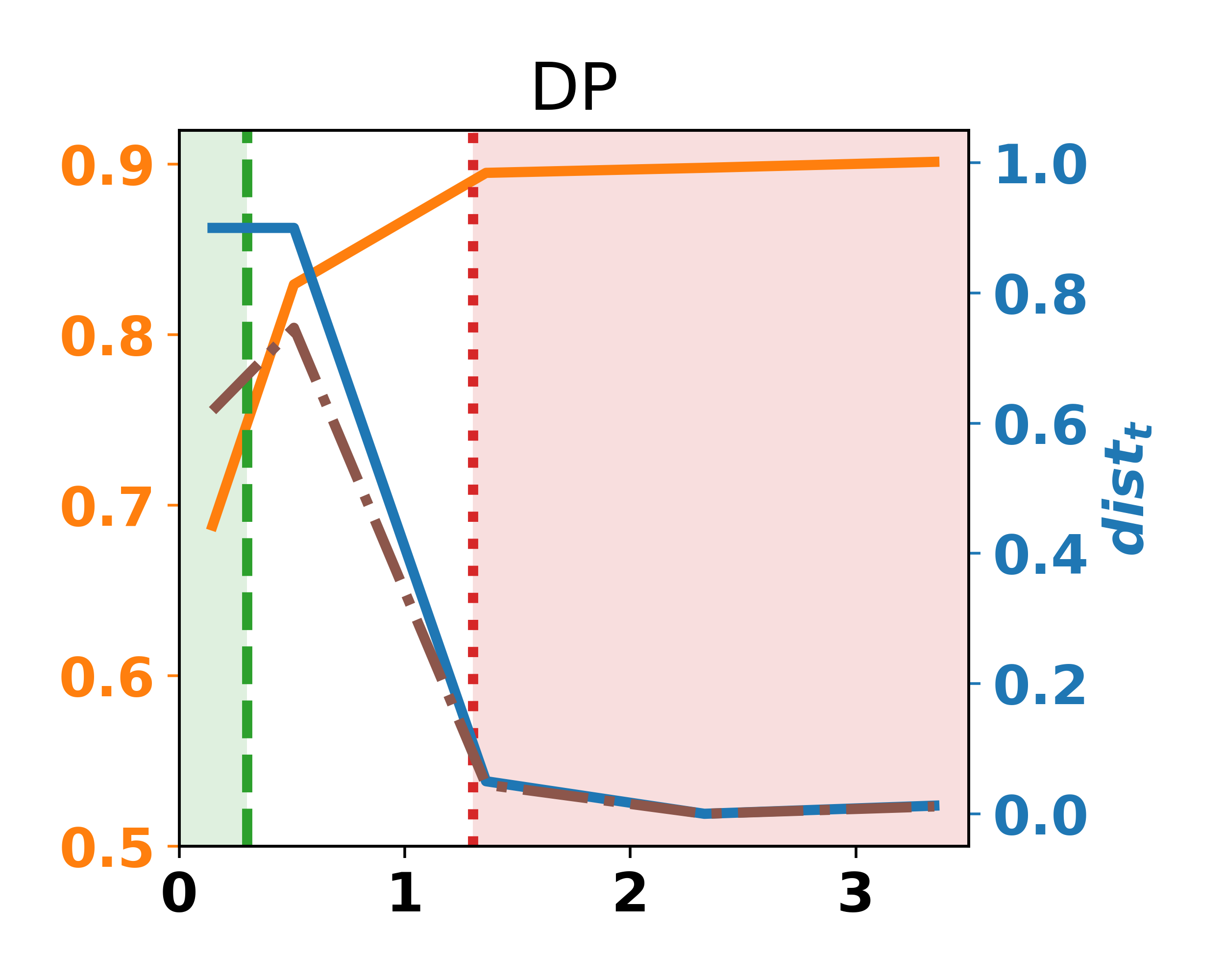}
			\includegraphics[width=0.24\linewidth]{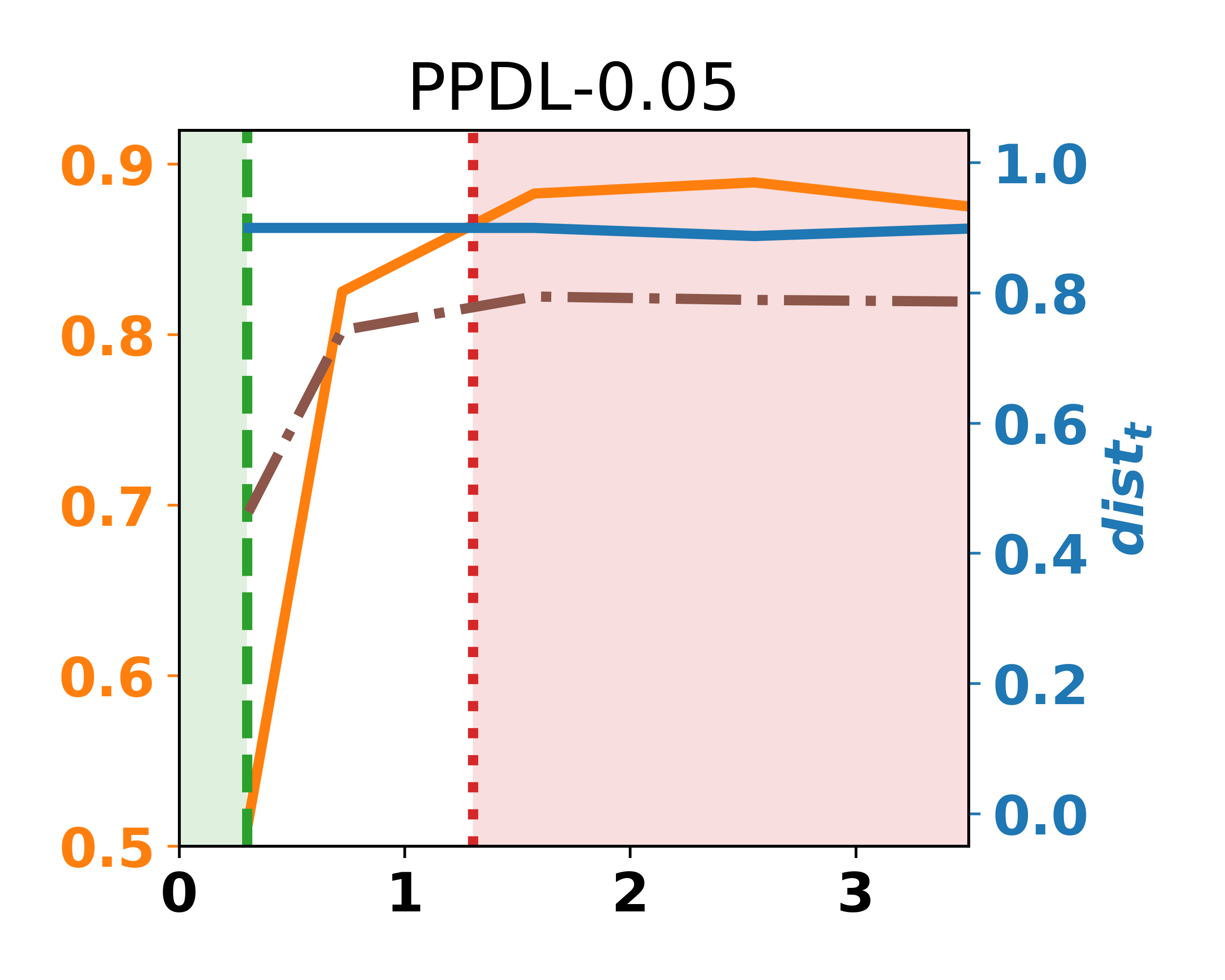}
			\includegraphics[width=0.24\linewidth]{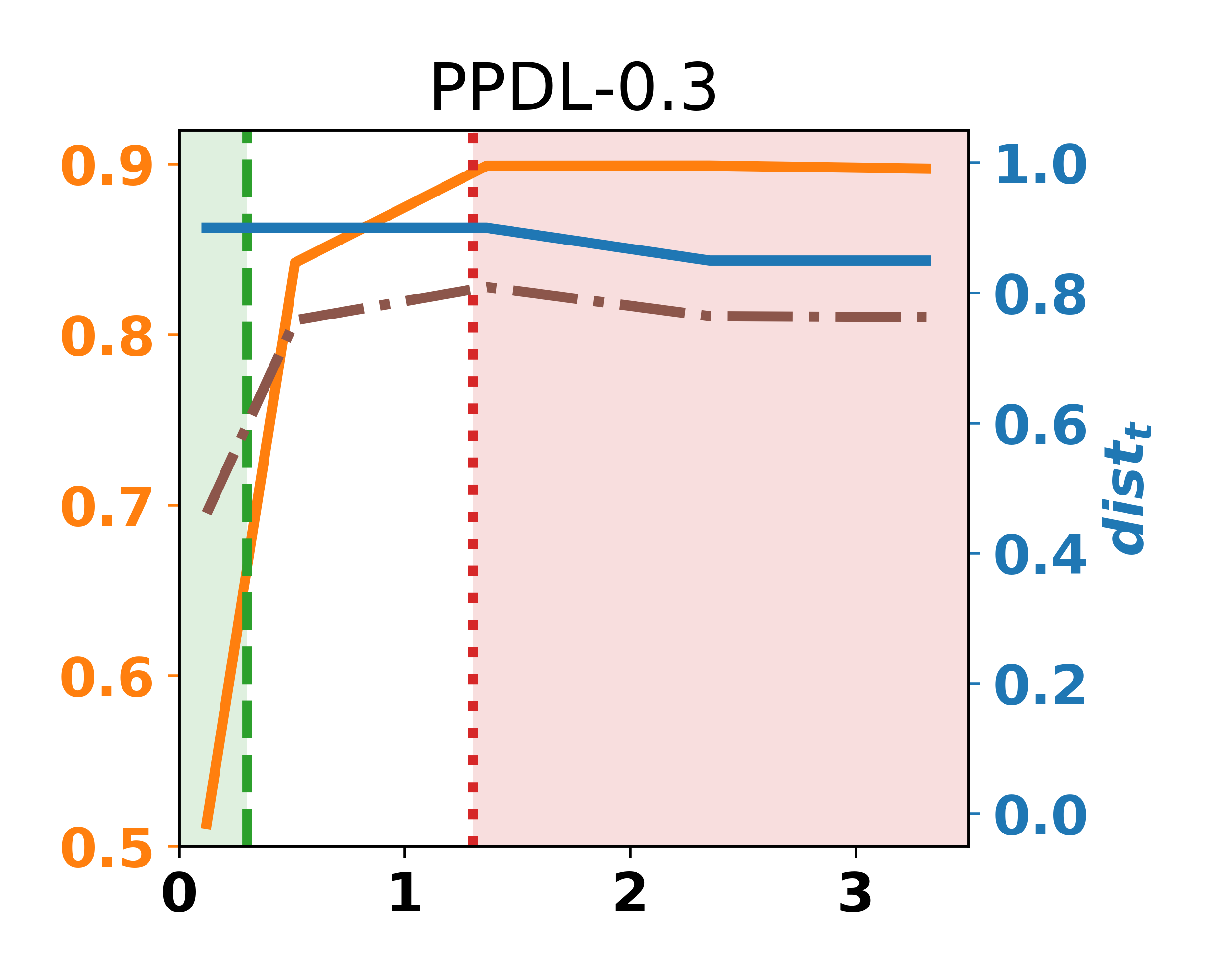}
			\includegraphics[width=0.24\linewidth]{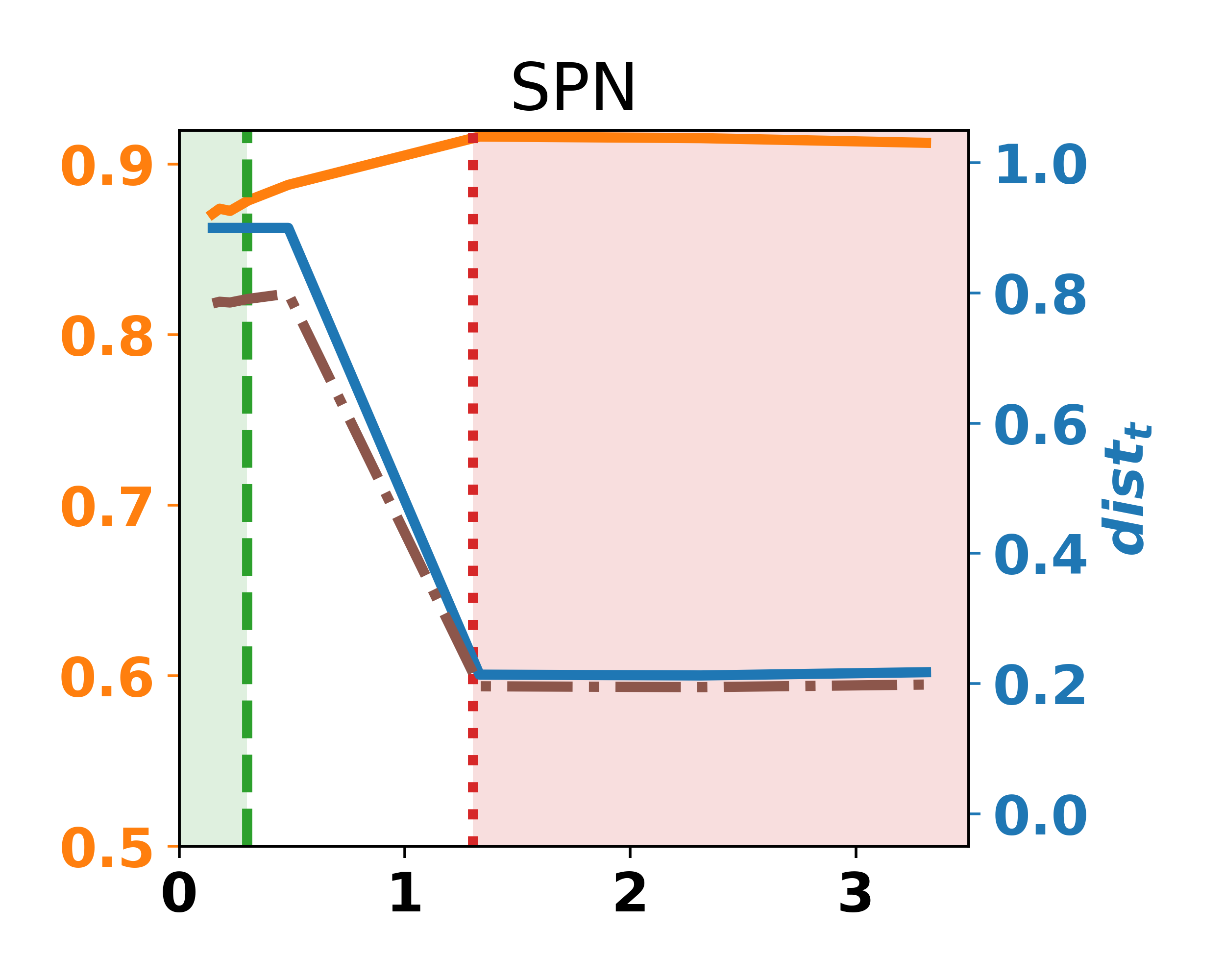}
			\caption{Tracing Attack}
		\end{subfigure}
		
		\caption{Attack Batch Size 1}
		\label{fig:ppc-svhn-bs1}
		%\vspace{-0.22cm}
	\end{figure}

	\newpage
	
	\begin{figure}[H]	
		%	\begin{subfigure}{0.95\linewidth}
		\centering
		\begin{subfigure}{0.99\linewidth}
			\centering
			\includegraphics[scale=0.7]{imgs/legends/legend_ppc_horizontal.png}
			\\
			\includegraphics[width=0.24\linewidth]{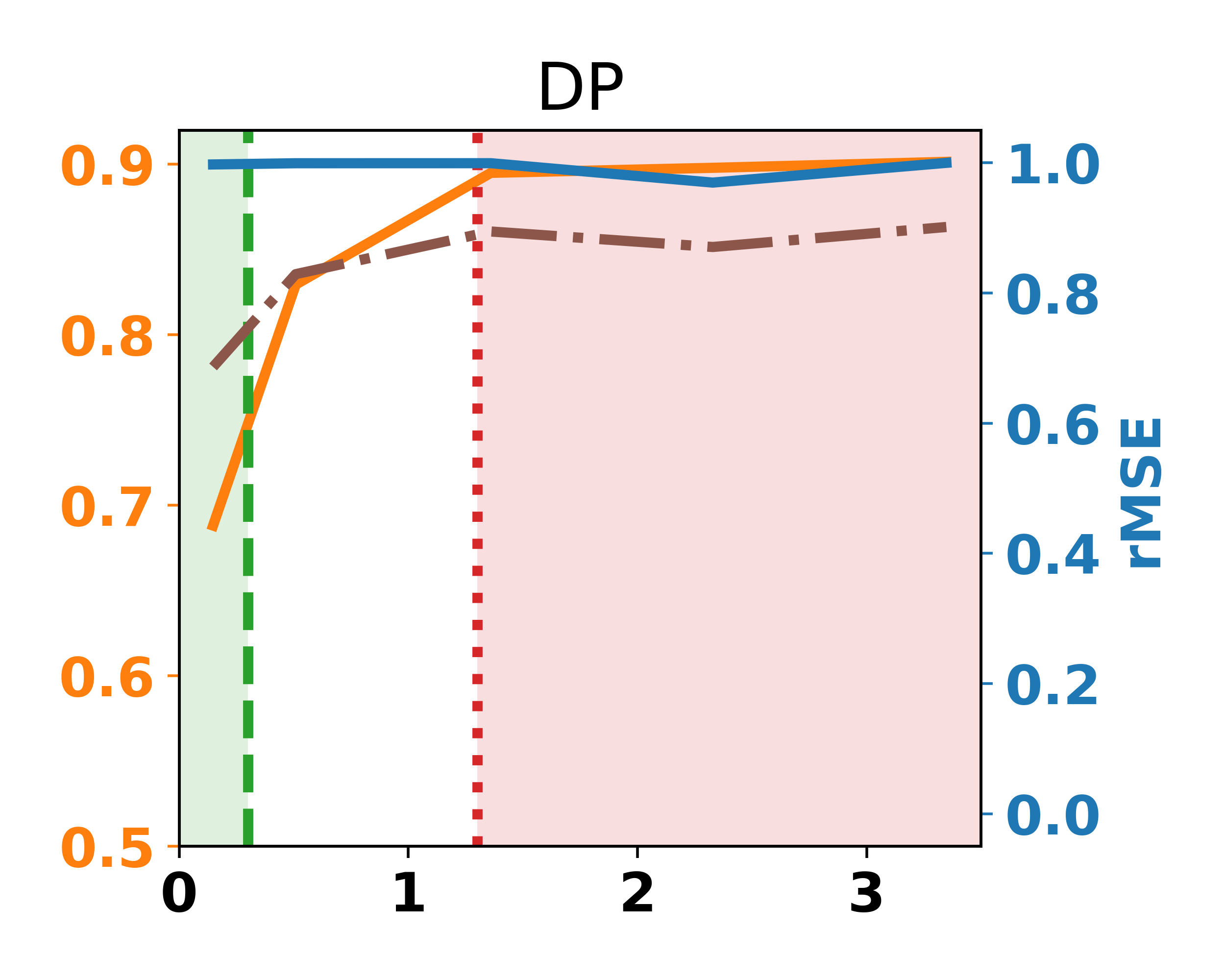}
			\includegraphics[width=0.24\linewidth]{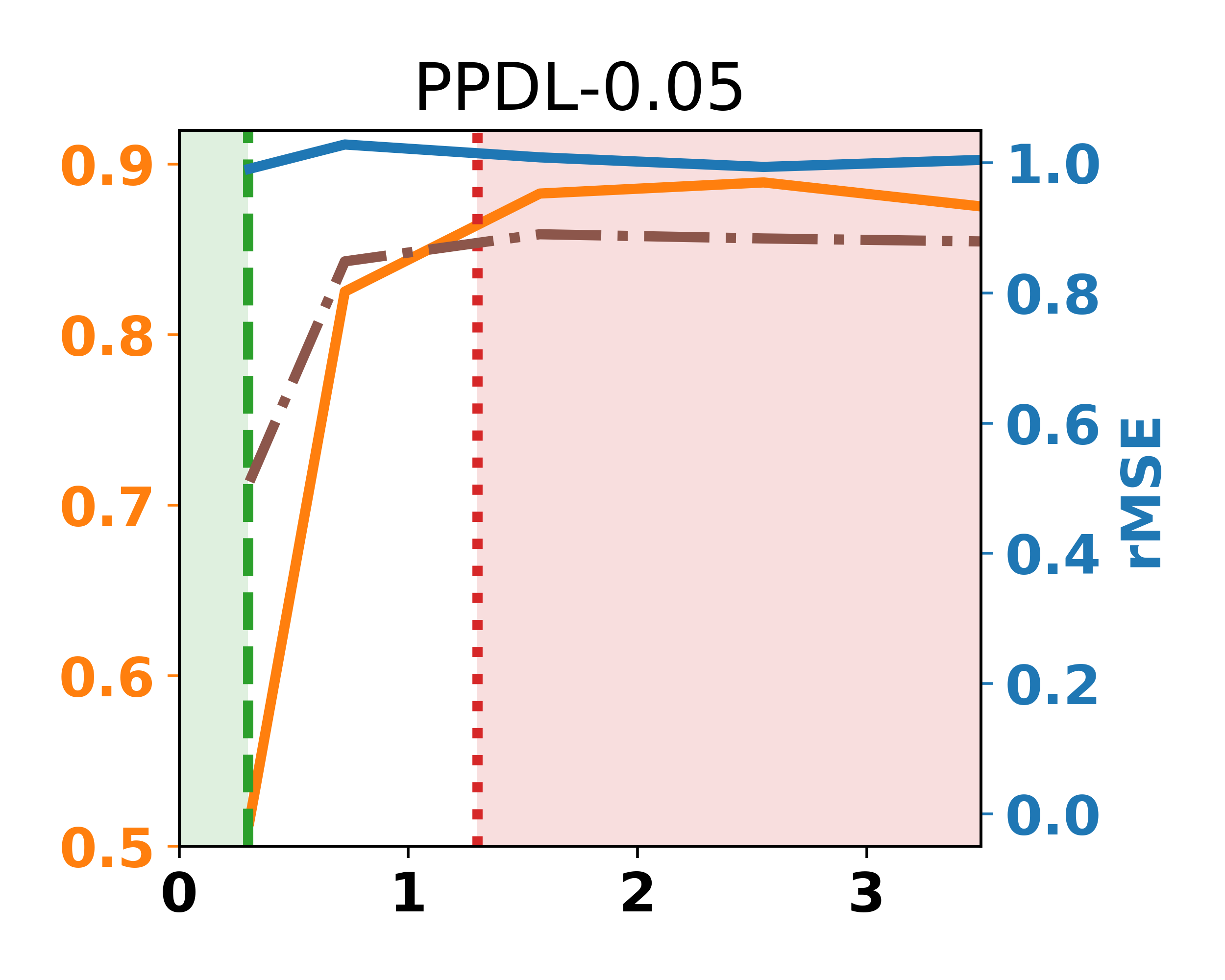}
			\includegraphics[width=0.24\linewidth]{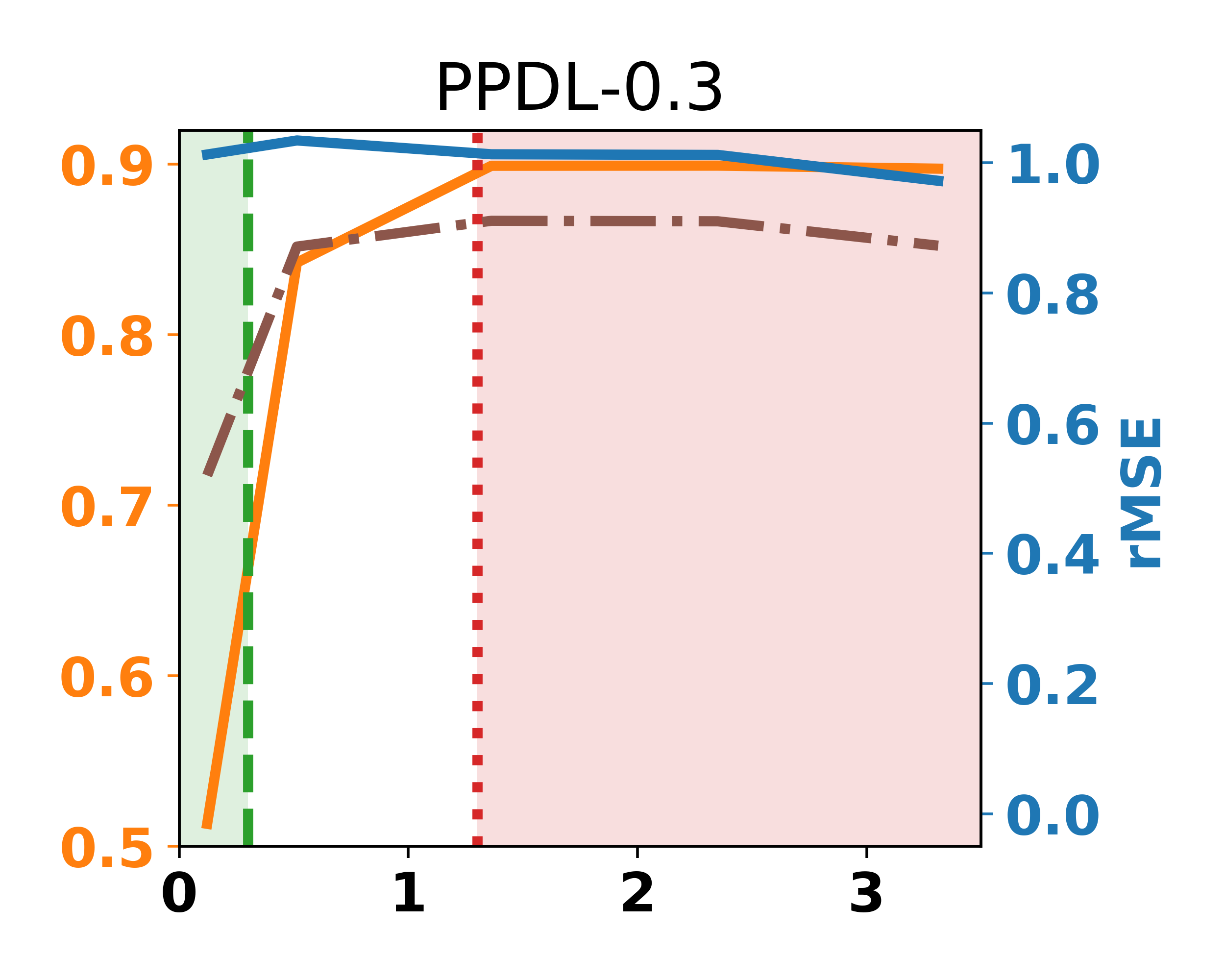}
			\includegraphics[width=0.24\linewidth]{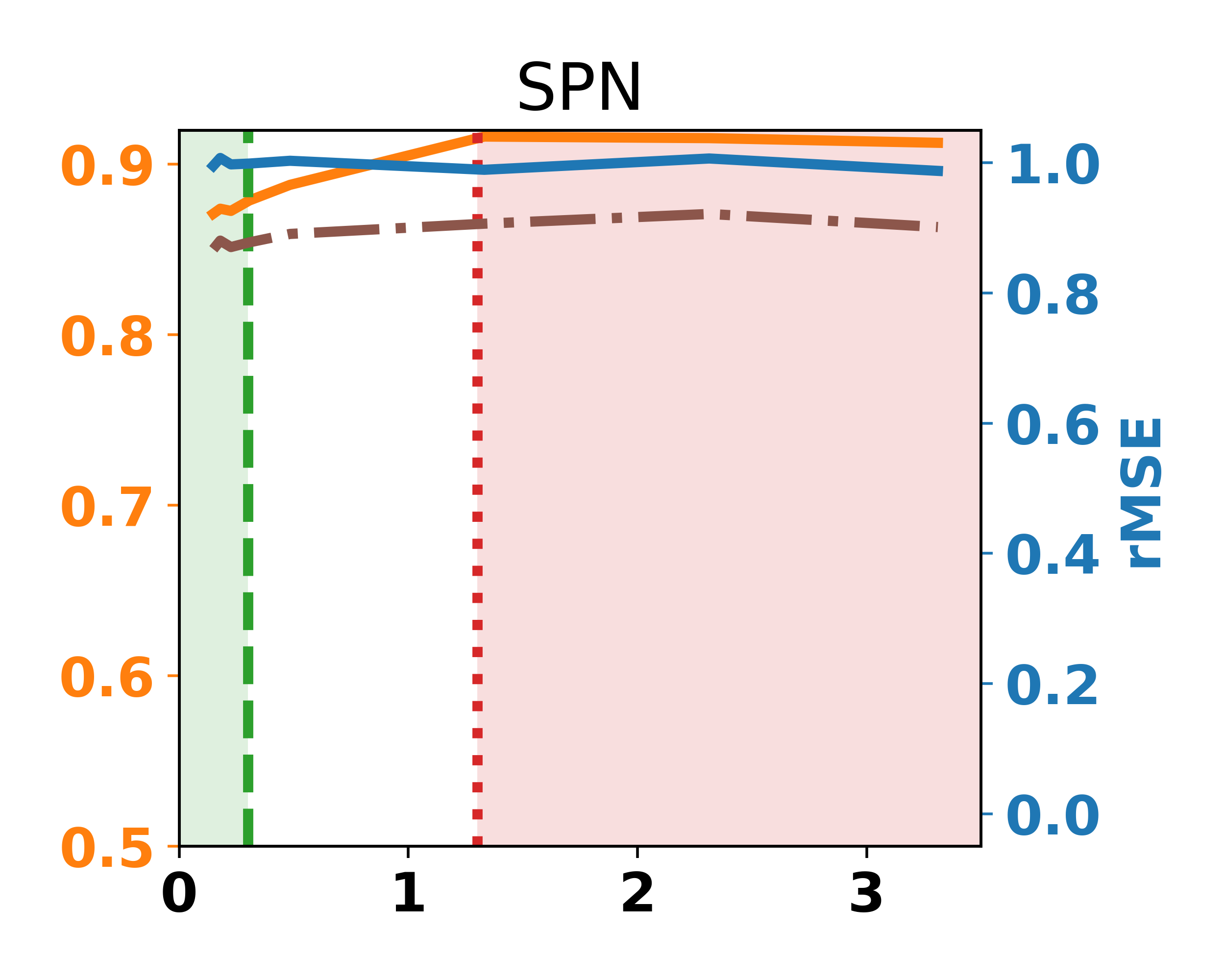}
			\caption{Reconstruction Attack}
		\end{subfigure}
		
		\begin{subfigure}{0.99\linewidth}
			\centering
			\includegraphics[width=0.24\linewidth]{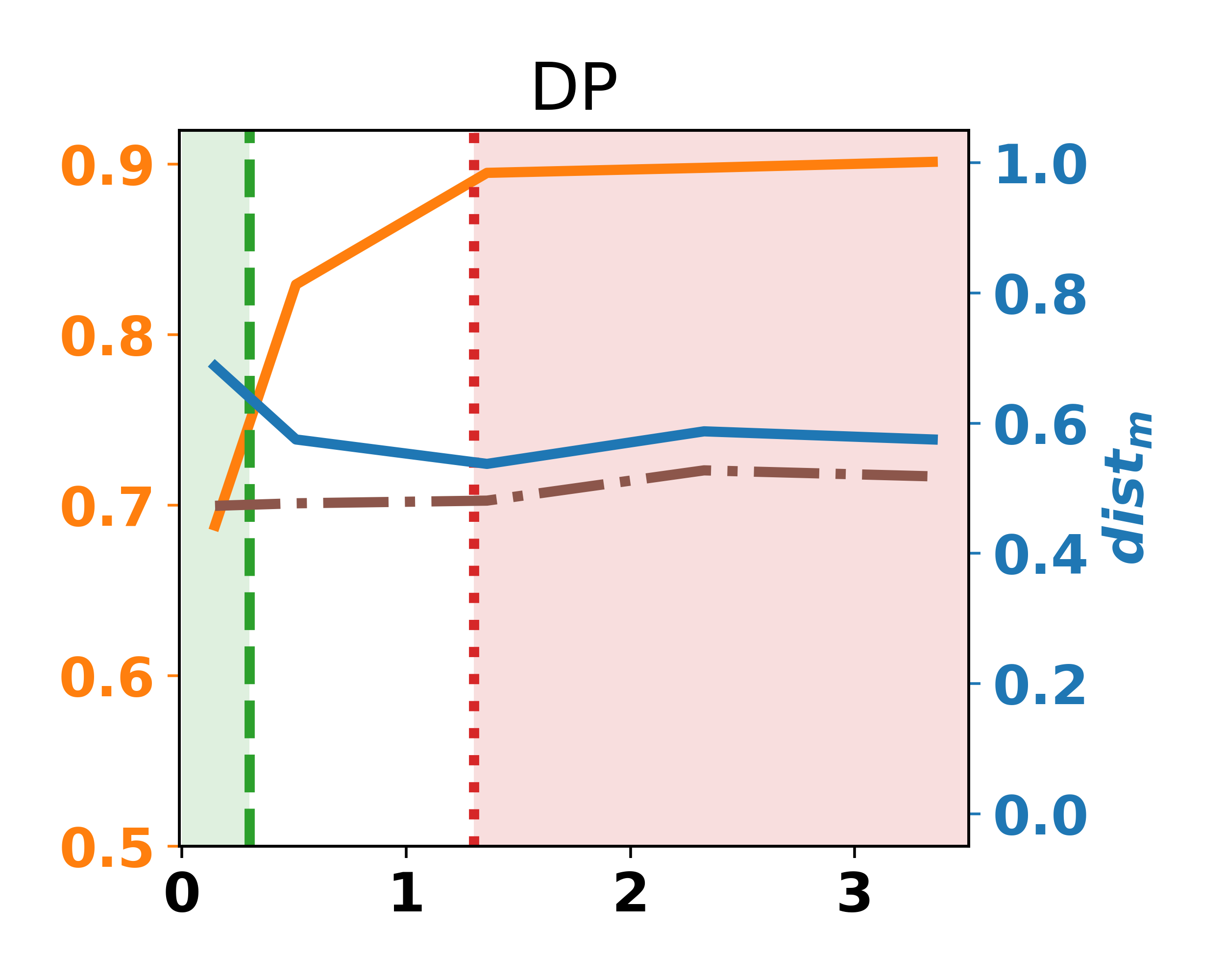}
			\includegraphics[width=0.24\linewidth]{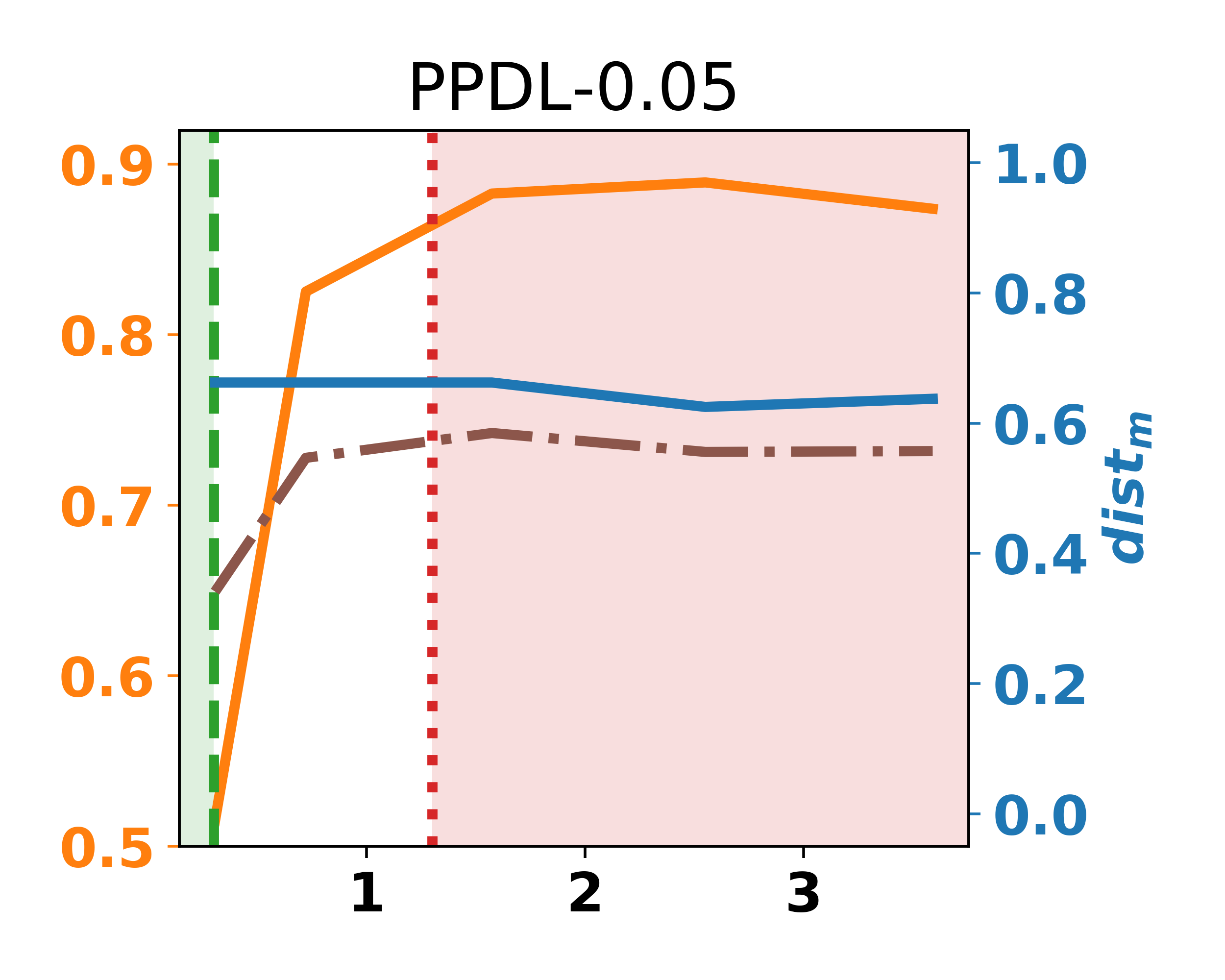}
			\includegraphics[width=0.24\linewidth]{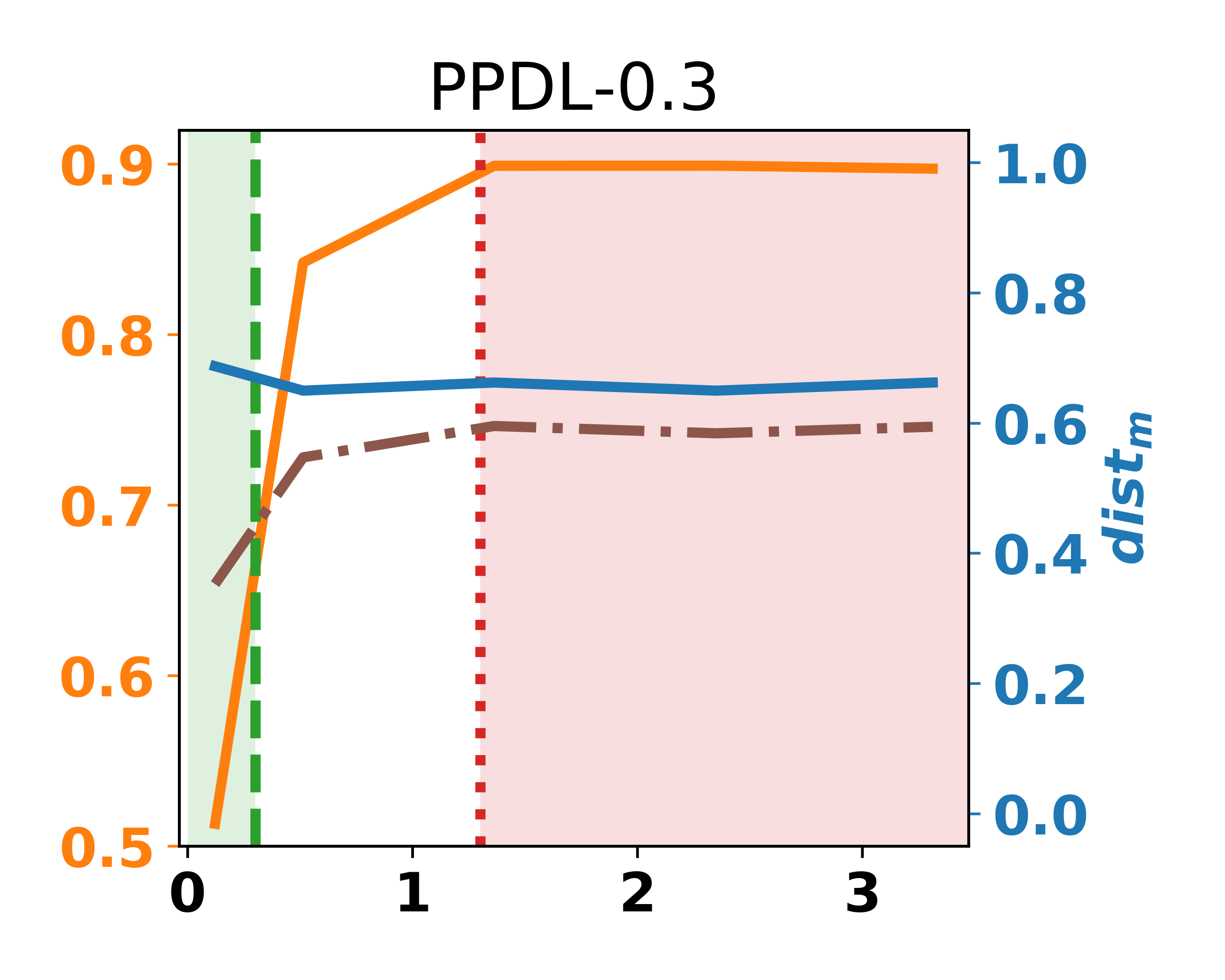}
			\includegraphics[width=0.24\linewidth]{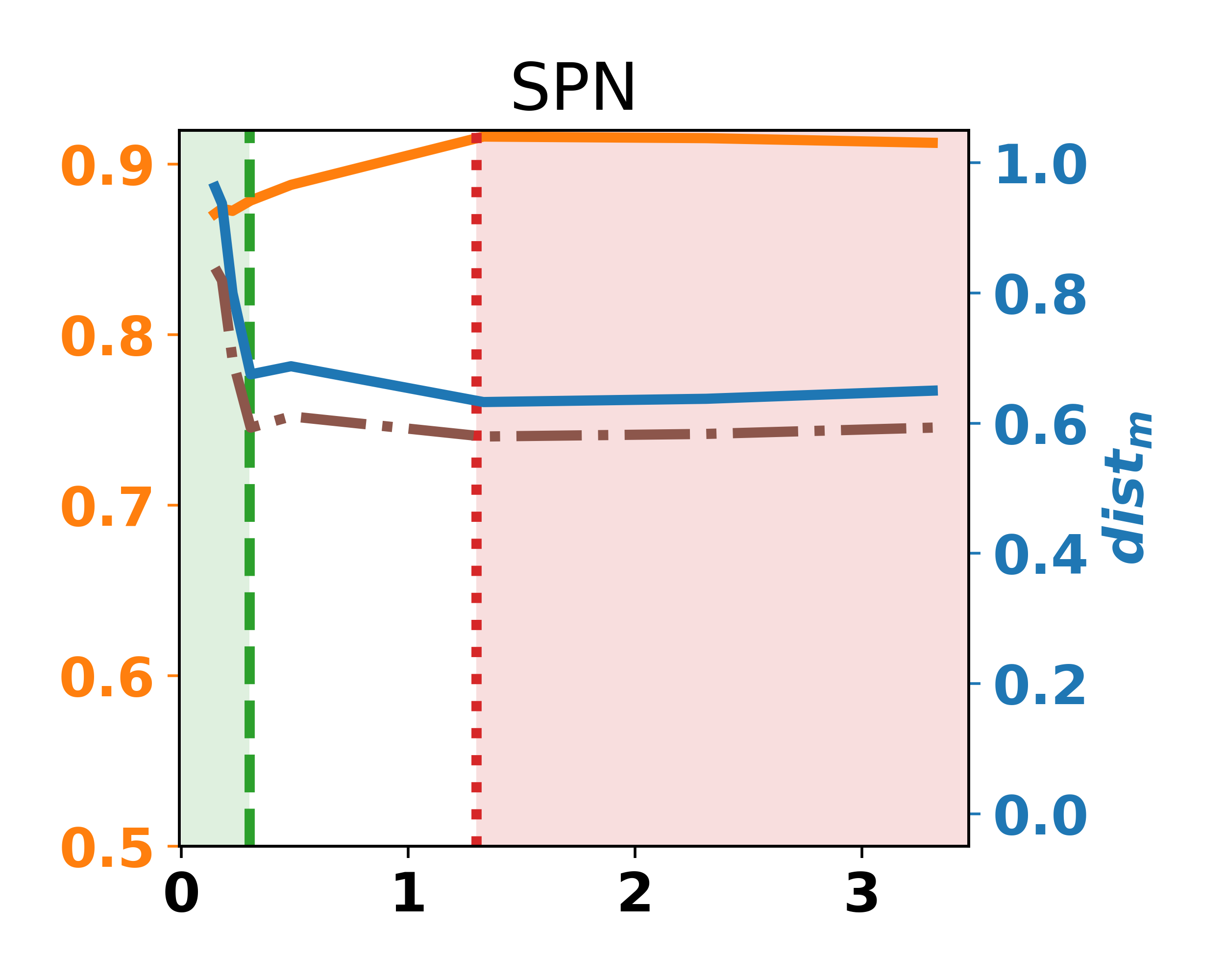}
			\caption{Membership Attack}
		\end{subfigure}
		
		\begin{subfigure}{0.99\linewidth}
			\centering
			\includegraphics[width=0.24\linewidth]{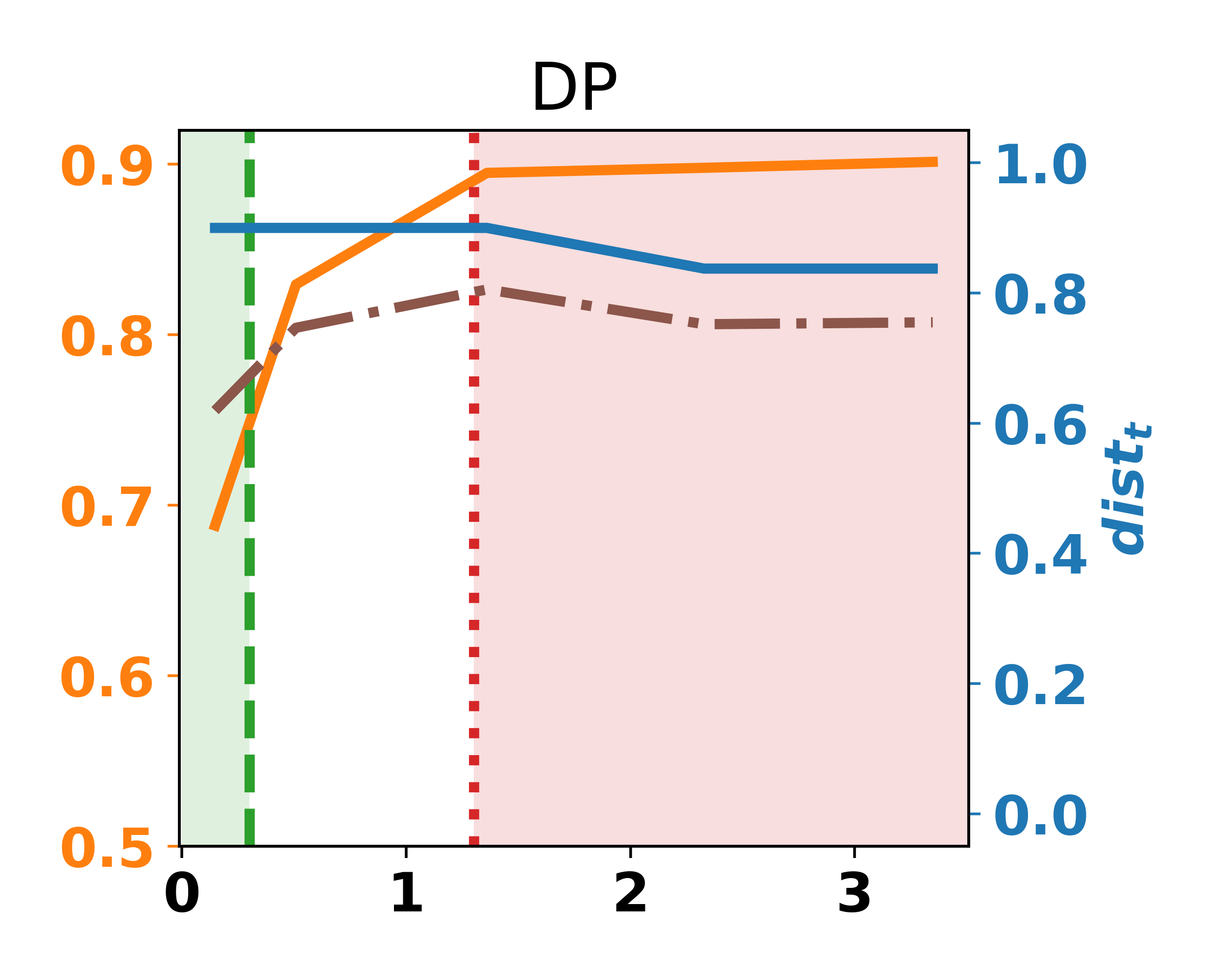}
			\includegraphics[width=0.24\linewidth]{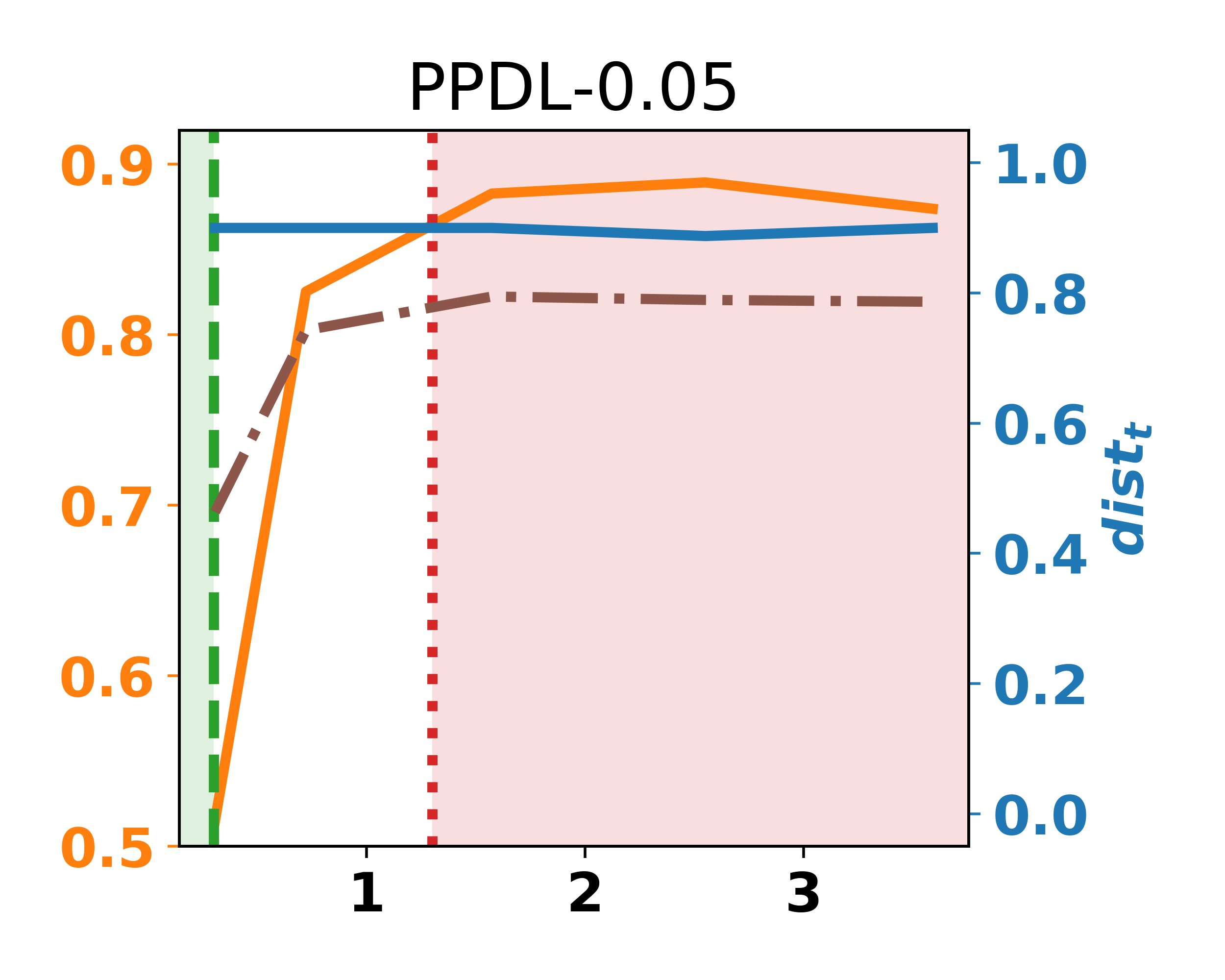}
			\includegraphics[width=0.24\linewidth]{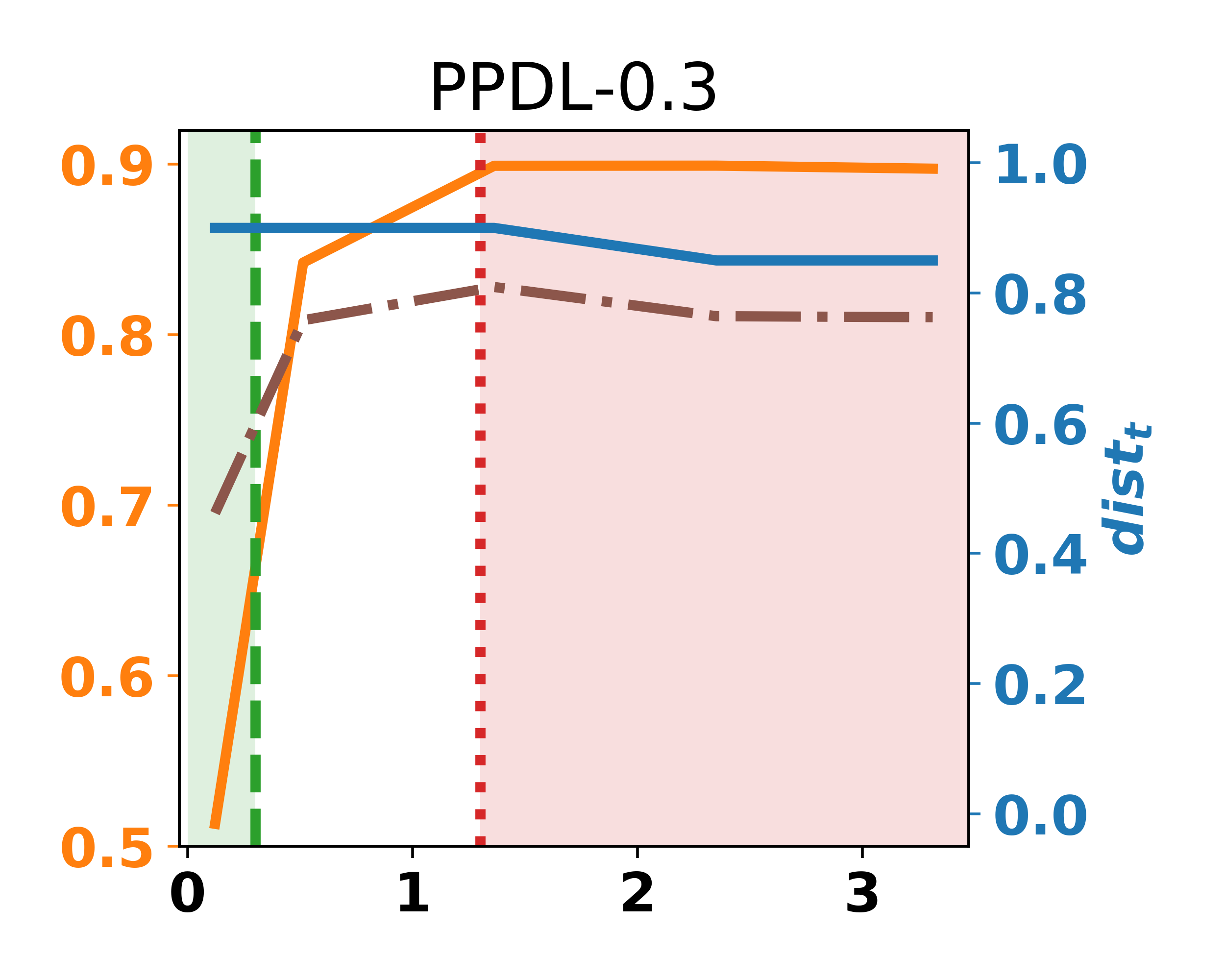}
			\includegraphics[width=0.24\linewidth]{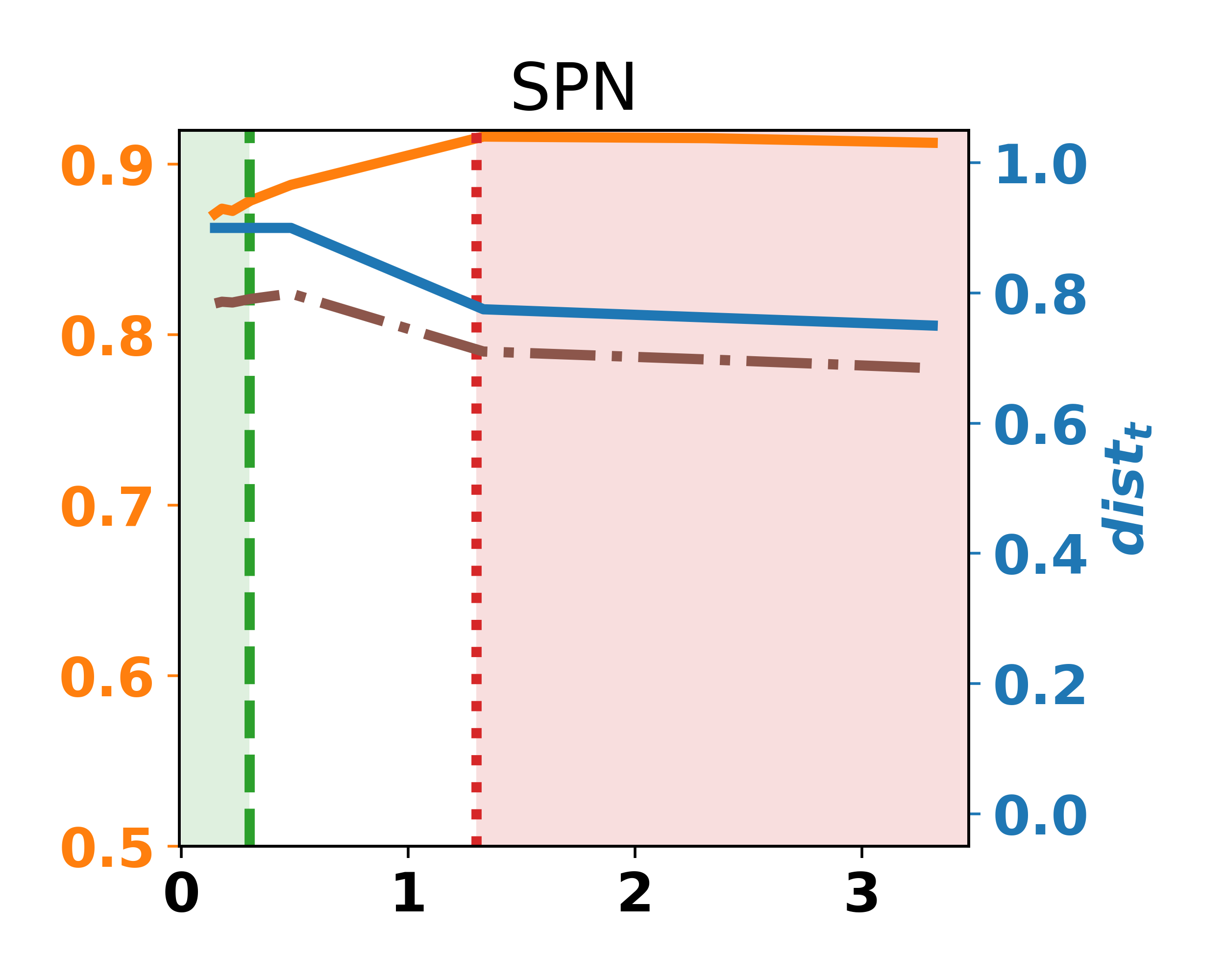}
			\caption{Tracing Attack}
		\end{subfigure}
		
		\caption{Attack with Batch Size 4}
		\label{fig:ppc-svhn-bs4}
		%\vspace{-0.22cm}
	\end{figure}
	
	\begin{figure}[H]	
		%	\begin{subfigure}{0.95\linewidth}
		\centering
		\begin{subfigure}{0.99\linewidth}
			\centering
			\includegraphics[scale=0.7]{imgs/legends/legend_ppc_horizontal.png}
			\\
			\includegraphics[width=0.24\linewidth]{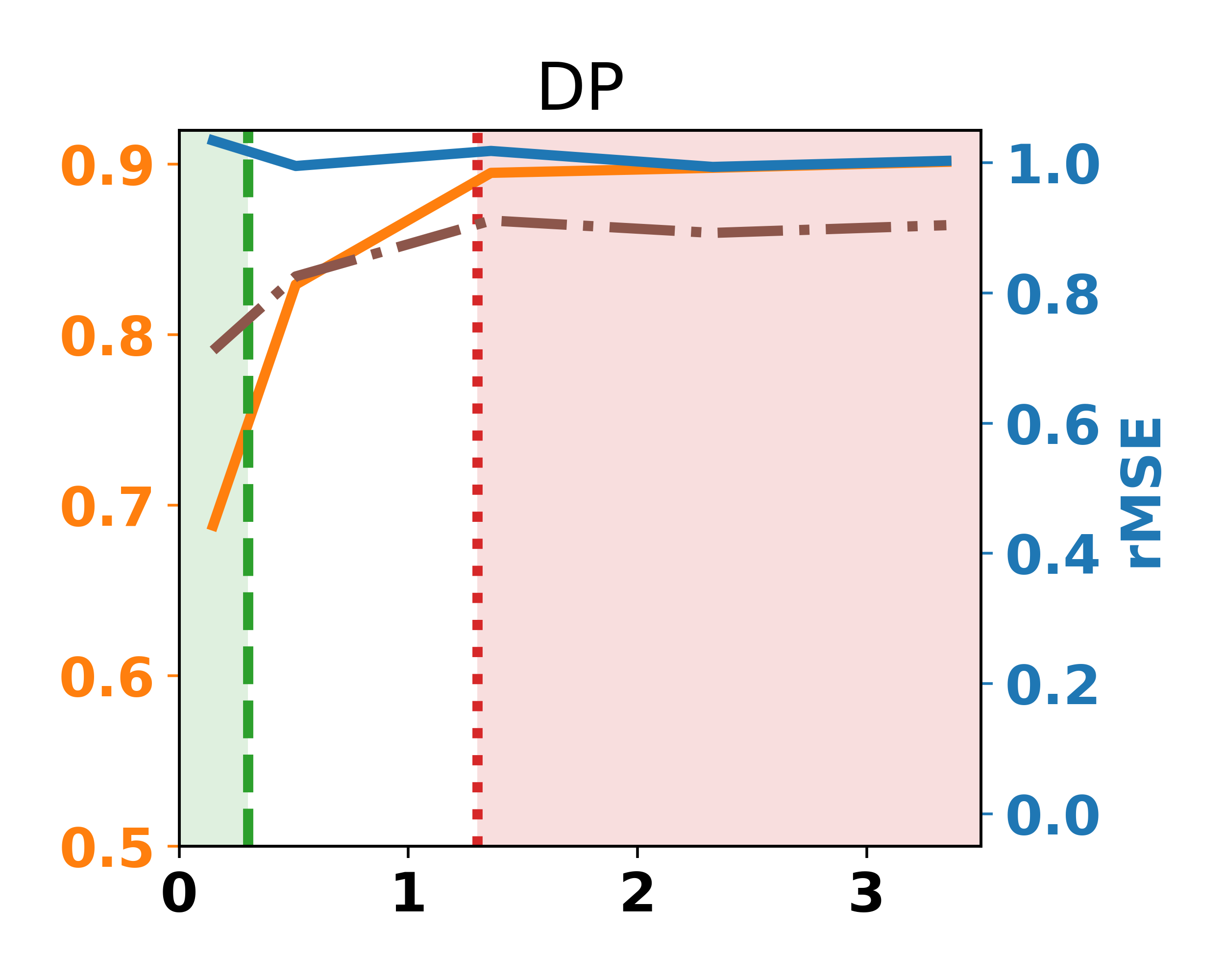}
			\includegraphics[width=0.24\linewidth]{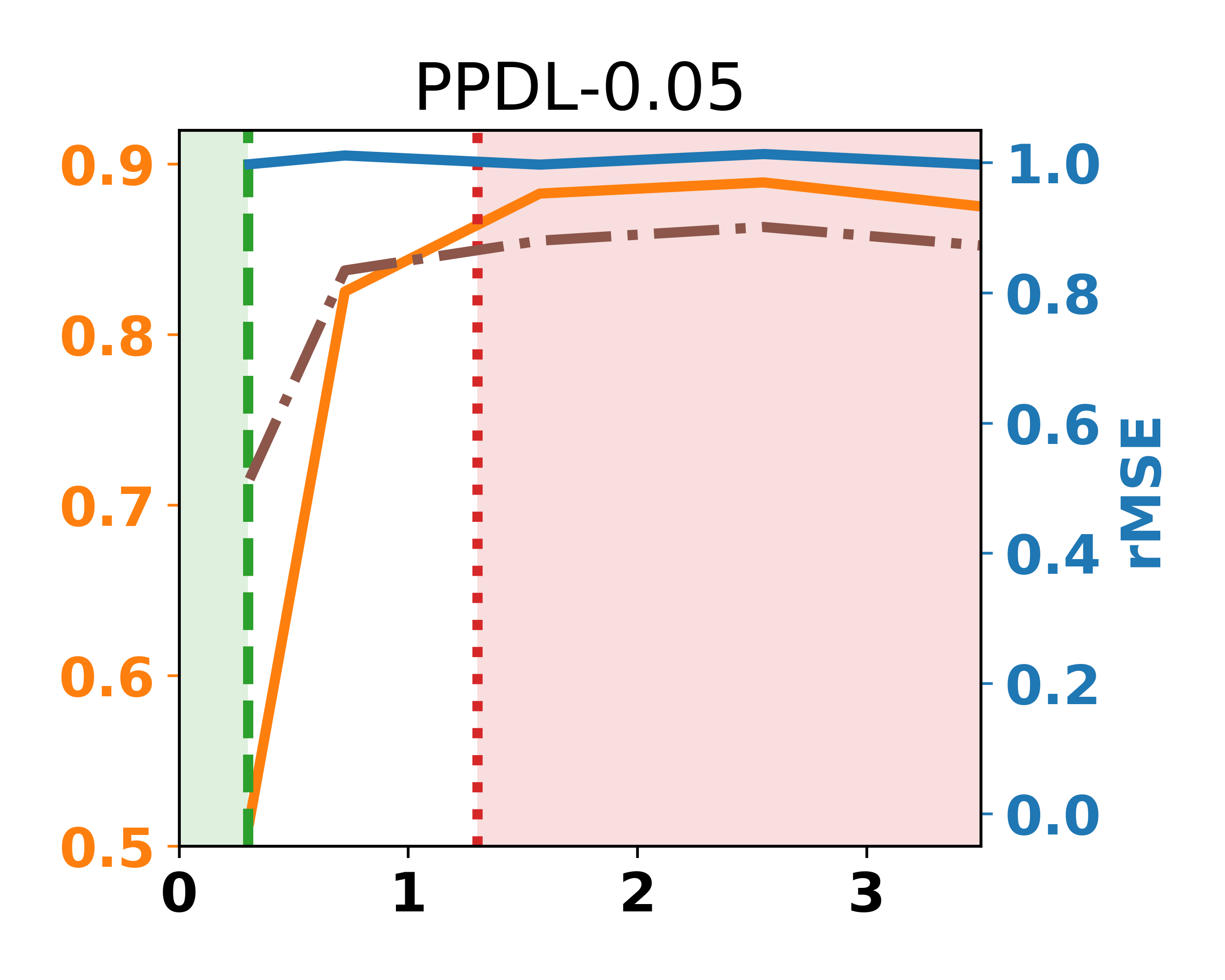}
			\includegraphics[width=0.24\linewidth]{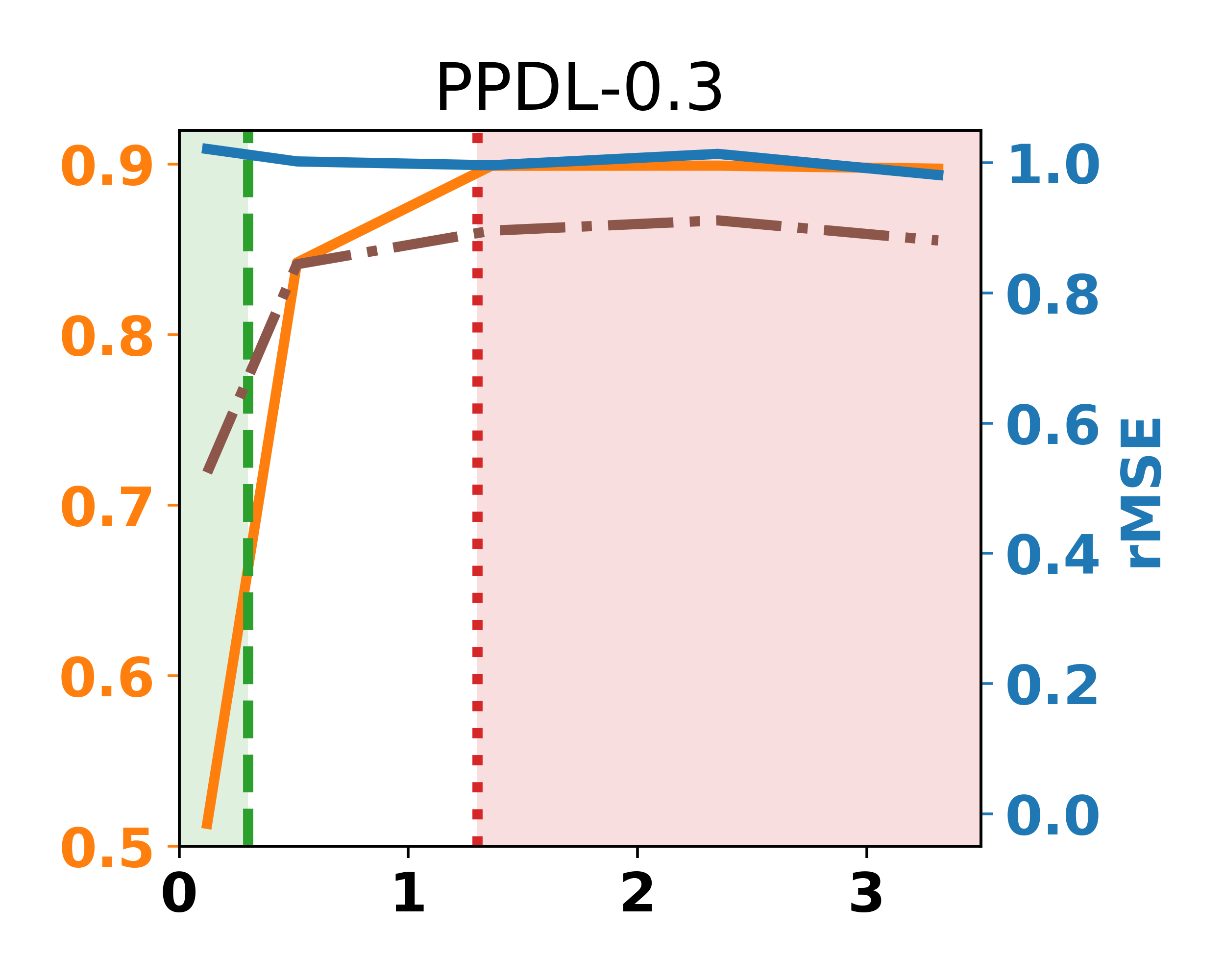}
			\includegraphics[width=0.24\linewidth]{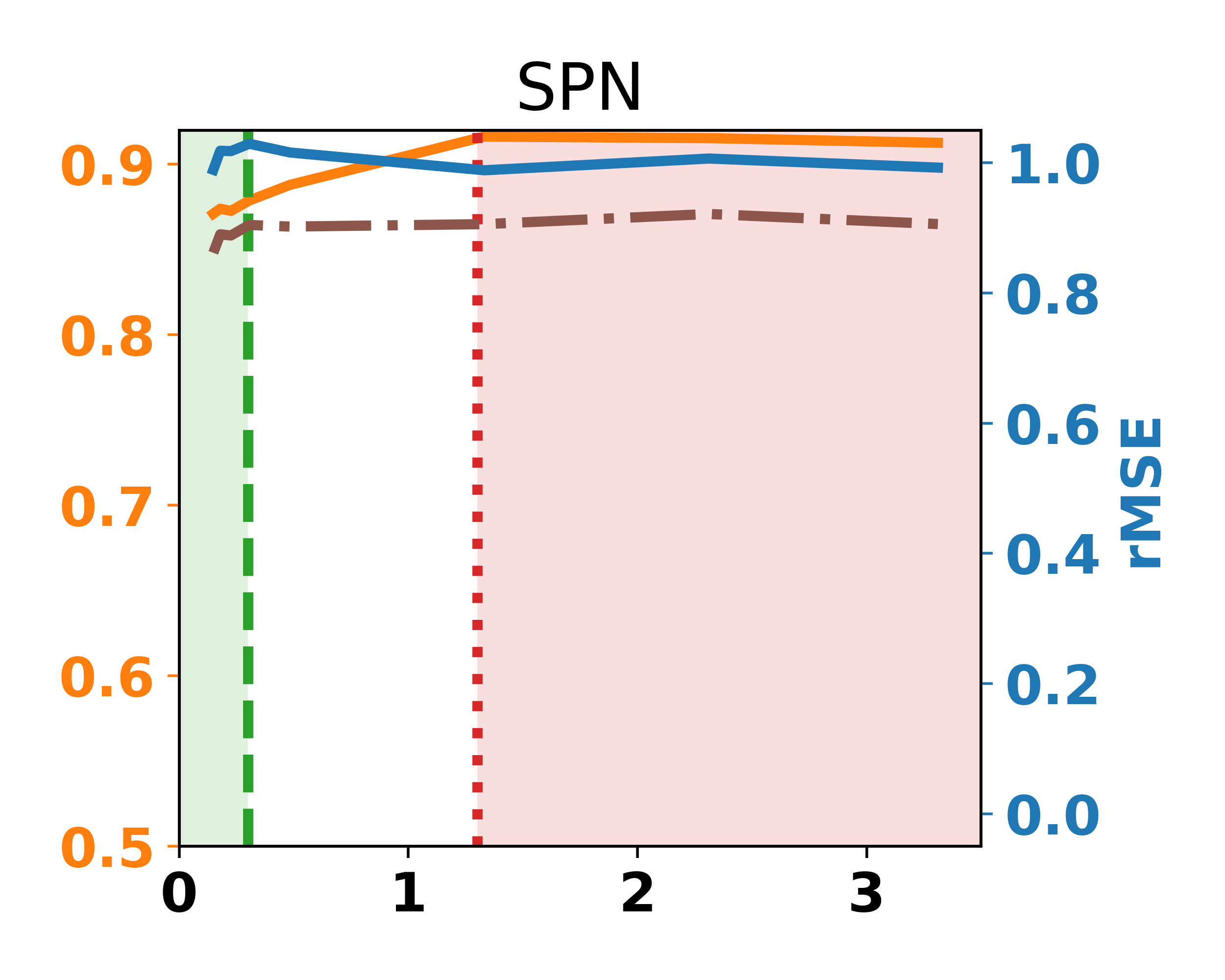}
			\caption{Reconstruction Attack}
		\end{subfigure}
		
		\begin{subfigure}{0.99\linewidth}
			\centering
			\includegraphics[width=0.24\linewidth]{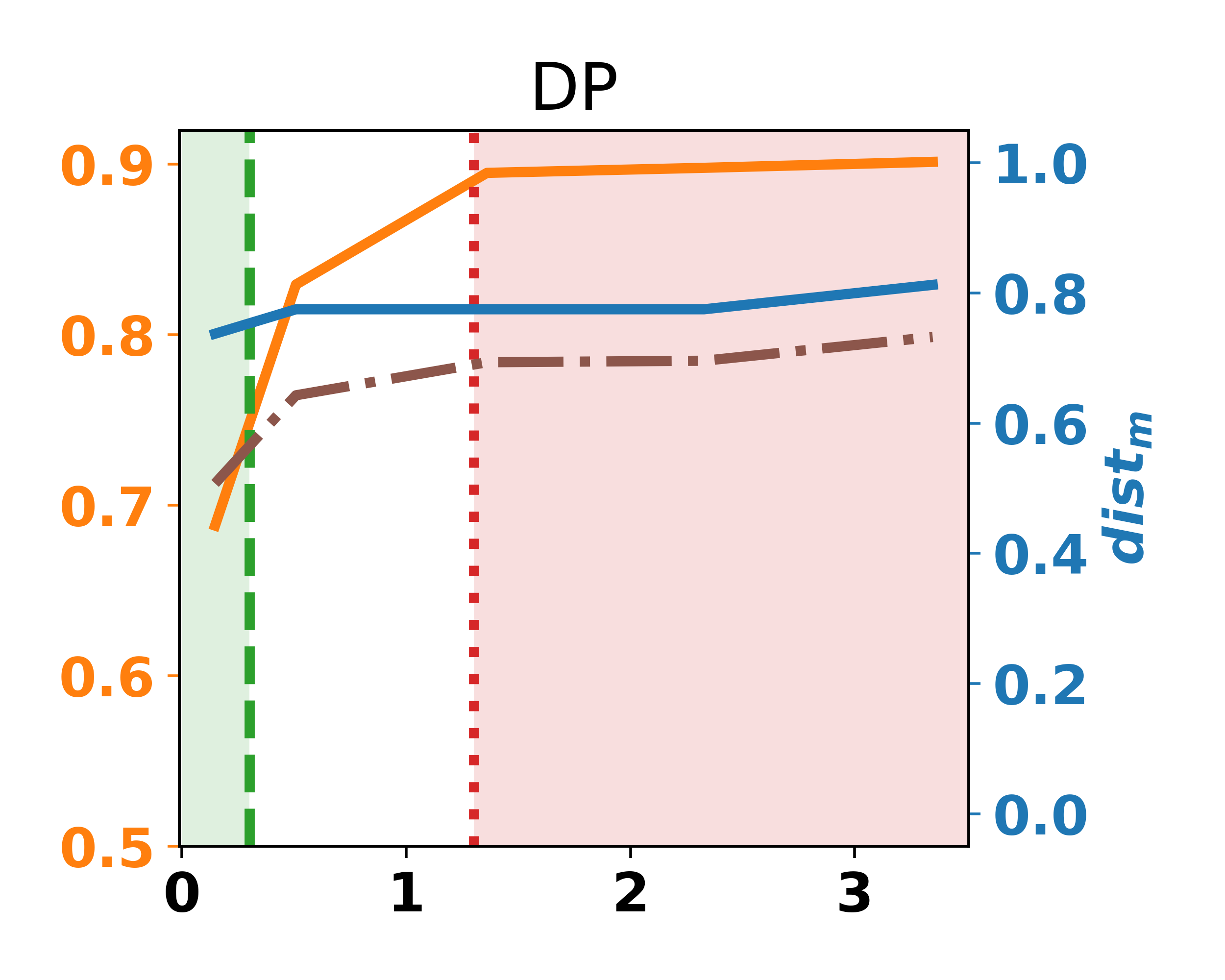}
			\includegraphics[width=0.24\linewidth]{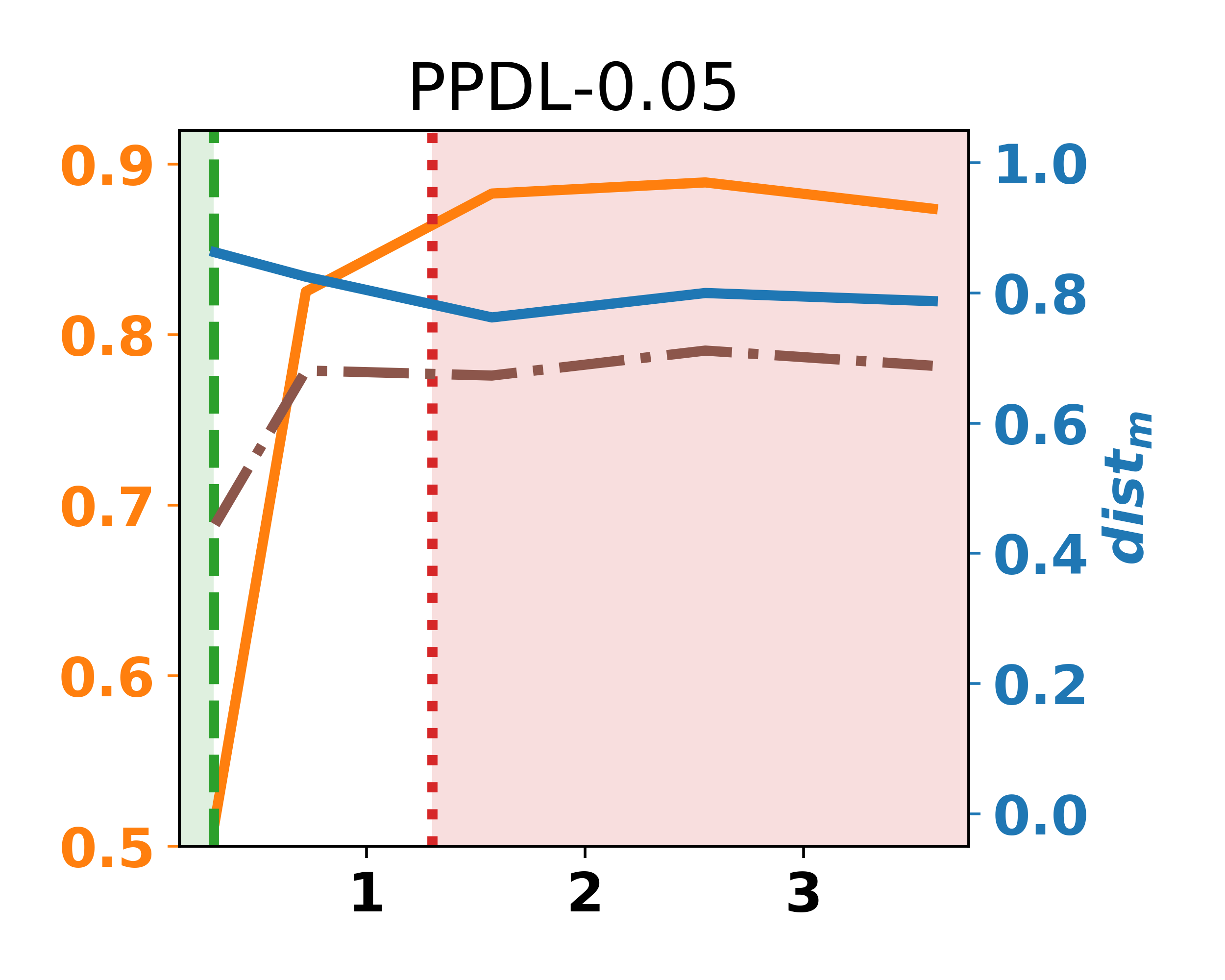}
			\includegraphics[width=0.24\linewidth]{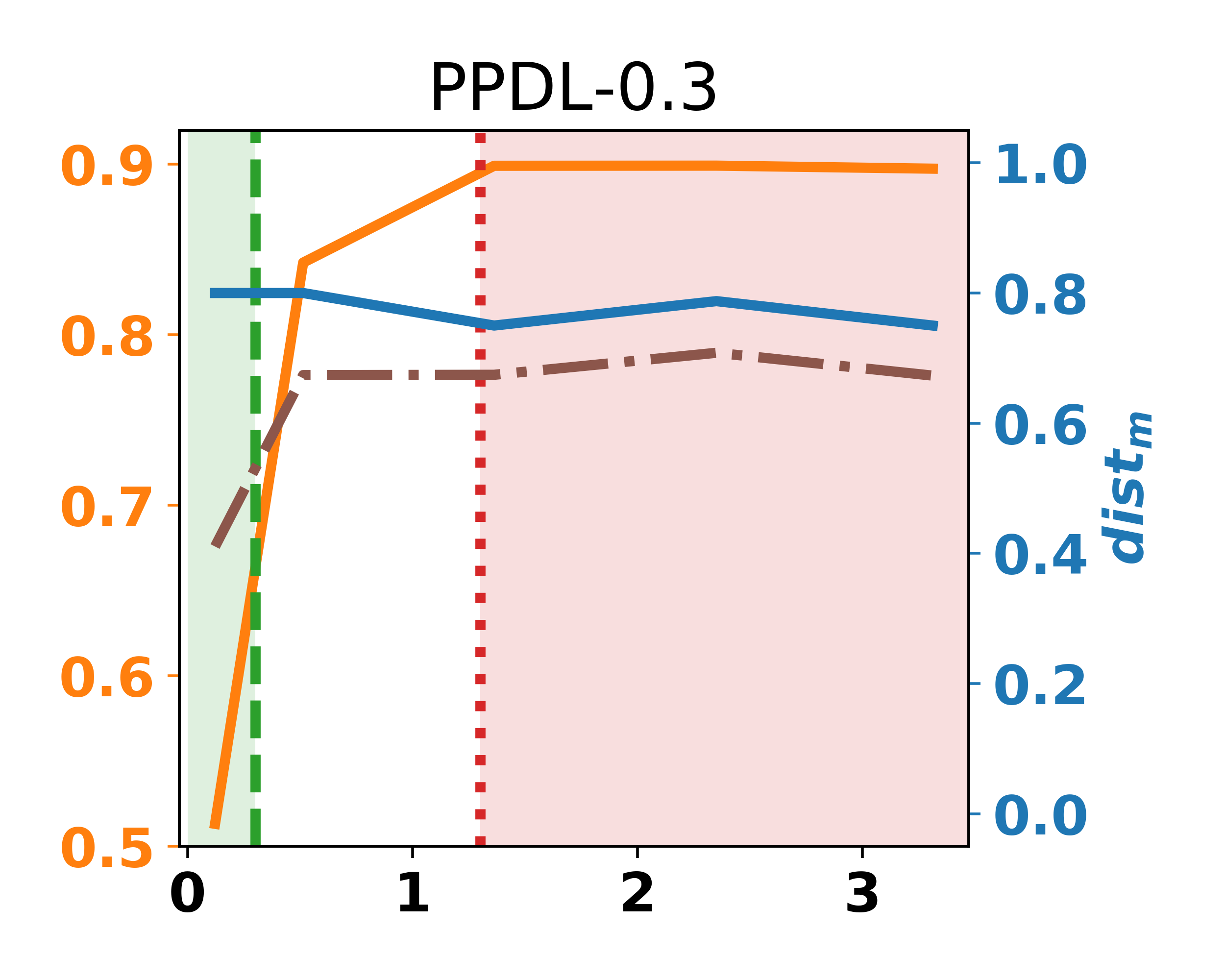}
			\includegraphics[width=0.24\linewidth]{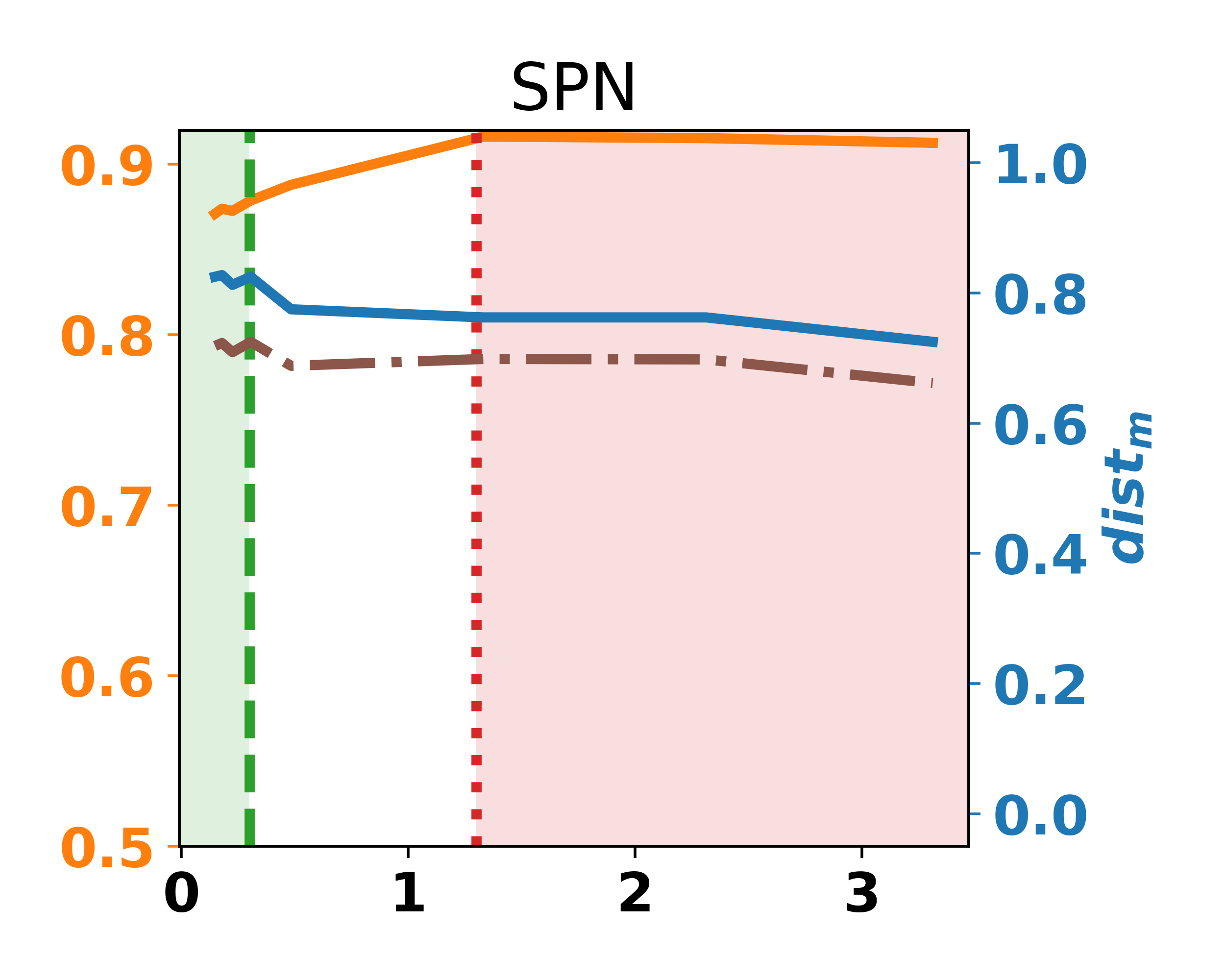}
			\caption{Membership Attack}
		\end{subfigure}
		
		\begin{subfigure}{0.99\linewidth}
			\centering
			\includegraphics[width=0.24\linewidth]{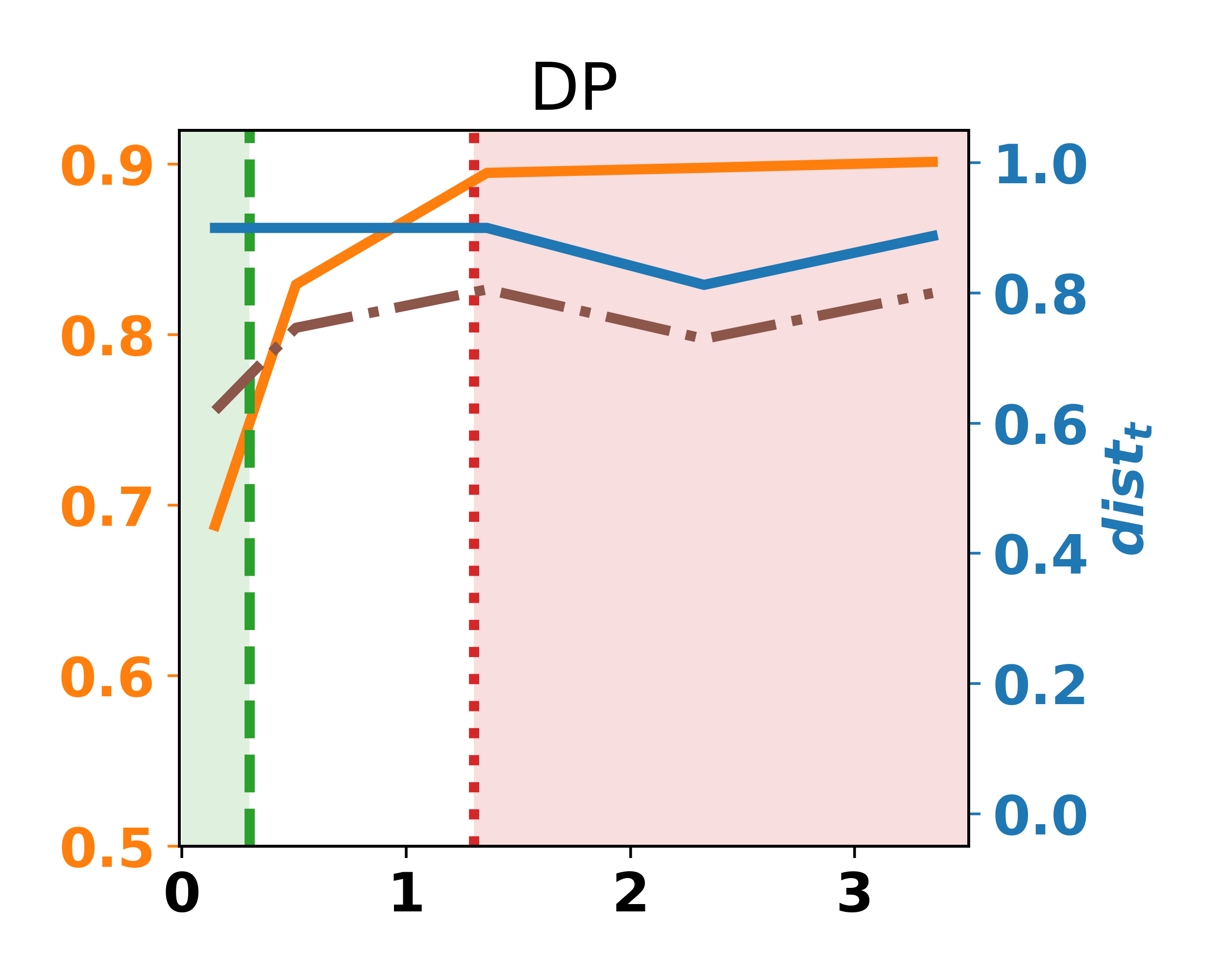}
			\includegraphics[width=0.24\linewidth]{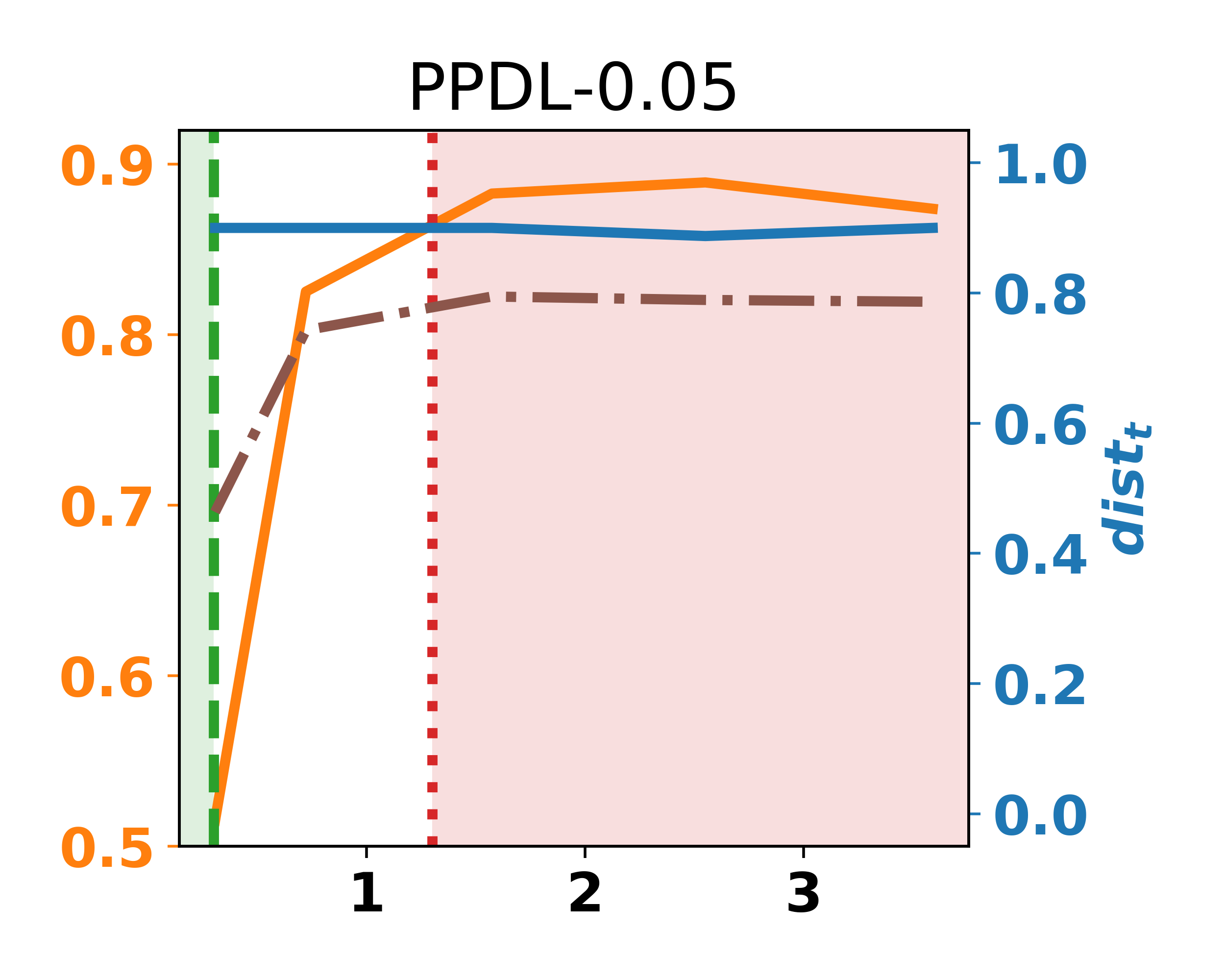}
			\includegraphics[width=0.24\linewidth]{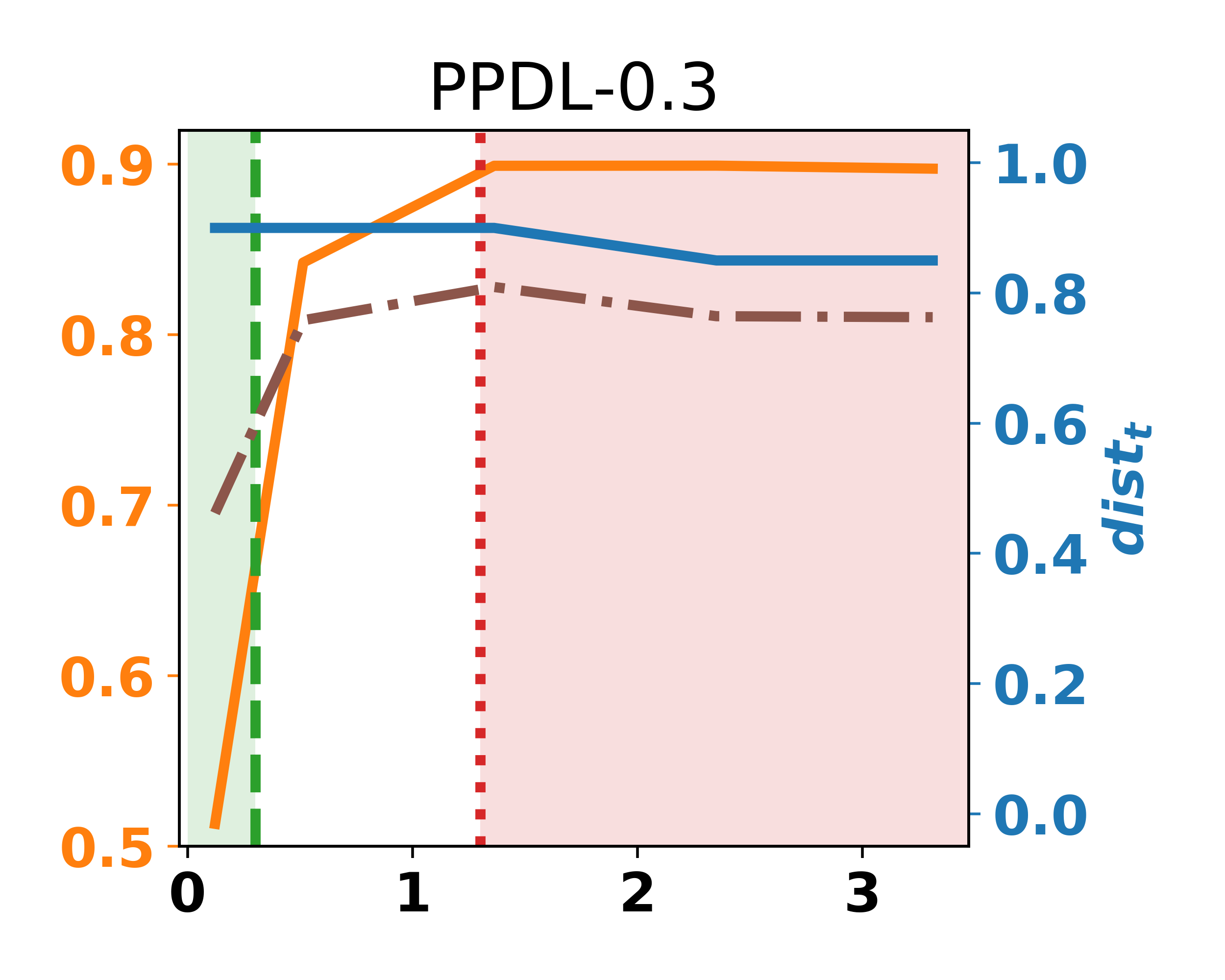}
			\includegraphics[width=0.24\linewidth]{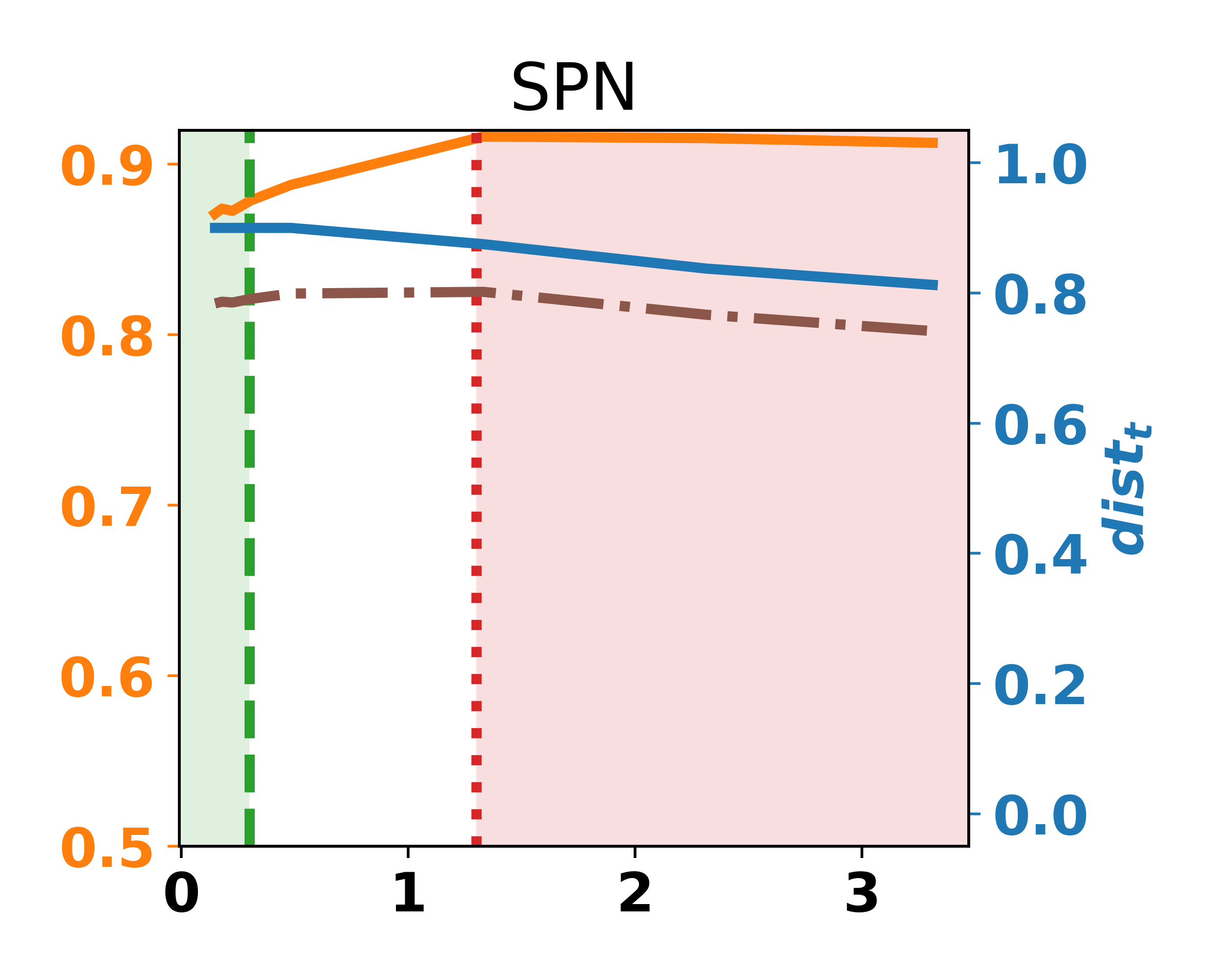}
			\caption{Tracing Attack}
		\end{subfigure}
		
		\caption{Attack with Batch Size 8}
		\label{fig:ppc-svhn-bs8}
		%\vspace{-0.22cm}
	\end{figure}

	\subsubsection{Calibrated Averaged Performance (CAP)}
	
	\begin{table}[H]
		\centering
		\adjustbox{max width=\textwidth}{
			\begin{tabular}{l|c|c|c|c|c|c|c|c|c}
				\toprule
				& \multicolumn{3}{c|}{Reconstruction} & \multicolumn{3}{c|}{Membership} & \multicolumn{3}{c}{Tracing} \\
				\midrule
				BS & 1 & 4 & 8 & 1 & 4 & 8 & 1 & 4 & 8 \\
				\midrule
				DP \cite{DLDP_Abadi16} & 0.77 & 0.84 & 0.85 & 0.00 & 0.50 & 0.65 & 0.68 & 0.72 & 0.72 \\
				PPDL-0.05 \cite{PPDL/shokri2015} & 0.79 & 0.80 & 0.80 & 0.00 & 0.52 & 0.64 &\textbf{0.70} & 0.70 & 0.70 \\
				PPDL-0.3 \cite{PPDL/shokri2015} & 0.72 & 0.82 & 0.81 & 0.00 & 0.54 & 0.63 & 0.68 & 0.68 & 0.68 \\
				SPN (ours) & \textbf{0.88} & \textbf{0.89} & \textbf{0.90} &\textbf{0.60} & \textbf{0.66} & \textbf{0.70} & 0.66 & \textbf{0.79} & \textbf{0.79} \\
				\bottomrule
			\end{tabular}
		}
		\caption{CAP performance with different batch size on SVHN for reconstruction, membership and tracing attack. Higher better. BS = Attack Batch Size.}
		\label{tab:CAP-svhn-supp}
	\end{table}
	
	\subsubsection{Reconstructed Images}
	
	\begin{figure}[H]
		\centering
		
		\begin{subfigure}{0.32\linewidth}
			\centering
			\includegraphics[scale=0.8]{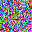}
			\hspace{1pt}
			\includegraphics[scale=0.8]{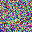} % + svhn
			\hspace{1pt}
			\includegraphics[scale=0.8]{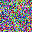}
			\hspace{1pt}
			\includegraphics[scale=0.8]{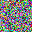}
			\\
			\vspace{2pt}
			\includegraphics[scale=0.8]{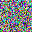}
			\hspace{1pt}
			\includegraphics[scale=0.8]{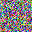} % + svhn
			\hspace{1pt}
			\includegraphics[scale=0.8]{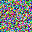}
			\hspace{1pt}
			\includegraphics[scale=0.8]{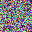}
			\\
			\vspace{2pt}
			\includegraphics[scale=0.8]{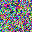}
			\hspace{1pt}
			\includegraphics[scale=0.8]{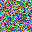} % + svhn
			\hspace{1pt}
			\includegraphics[scale=0.8]{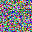}
			\hspace{1pt}
			\includegraphics[scale=0.8]{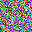}
			\\
			\vspace{2pt}
			\includegraphics[scale=0.8]{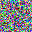}
			\hspace{1pt}
			\includegraphics[scale=0.8]{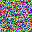} % + svhn
			\hspace{1pt}
			\includegraphics[scale=0.8]{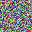}
			\hspace{1pt}
			\includegraphics[scale=0.8]{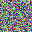}
			\caption{$0.99 (0.39)$ \label{fig:recon-images-green-svhn-supp}}
		\end{subfigure}
		\hfill
		\begin{subfigure}{0.32\linewidth}
			\centering
			\includegraphics[scale=0.8]{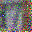}
			\hspace{1pt}
			\includegraphics[scale=0.8]{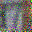} % + svhn
			\hspace{1pt}
			\includegraphics[scale=0.8]{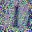}
			\hspace{1pt}
			\includegraphics[scale=0.8]{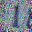}
			\\
			\vspace{2pt}
			\includegraphics[scale=0.8]{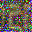}
			\hspace{1pt}
			\includegraphics[scale=0.8]{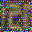} % + svhn
			\hspace{1pt}
			\includegraphics[scale=0.8]{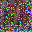}
			\hspace{1pt}
			\includegraphics[scale=0.8]{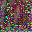}
			\\
			\vspace{2pt}
			\includegraphics[scale=0.8]{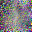}
			\hspace{1pt}
			\includegraphics[scale=0.8]{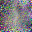} % + svhn
			\hspace{1pt}
			\includegraphics[scale=0.8]{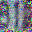}
			\hspace{1pt}
			\includegraphics[scale=0.8]{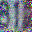}
			\\
			\vspace{2pt}
			\includegraphics[scale=0.8]{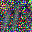}
			\hspace{1pt}
			\includegraphics[scale=0.8]{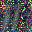} % + svhn
			\hspace{1pt}
			\includegraphics[scale=0.8]{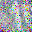}
			\hspace{1pt}
			\includegraphics[scale=0.8]{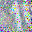}
			\caption{$0.93 (20.86)$ \label{fig:recon-images-white-svhn-supp}}%; 1.10 (1.30)}
		\end{subfigure}
		\hfill
		\begin{subfigure}{0.32\linewidth}
			\centering
			\includegraphics[scale=0.8]{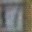}
			\hspace{1pt}
			\includegraphics[scale=0.8]{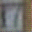} % + svhn
			\hspace{1pt}
			\includegraphics[scale=0.8]{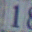}
			\hspace{1pt}
			\includegraphics[scale=0.8]{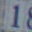}
			\\
			\vspace{2pt}
			\includegraphics[scale=0.8]{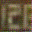}
			\hspace{1pt}
			\includegraphics[scale=0.8]{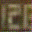} % + svhn
			\hspace{1pt}
			\includegraphics[scale=0.8]{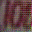}
			\hspace{1pt}
			\includegraphics[scale=0.8]{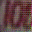}
			\\
			\vspace{2pt}
			\includegraphics[scale=0.8]{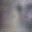}
			\hspace{1pt}
			\includegraphics[scale=0.8]{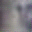} % + svhn
			\hspace{1pt}
			\includegraphics[scale=0.8]{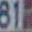}
			\hspace{1pt}
			\includegraphics[scale=0.8]{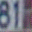}
			\\
			\vspace{2pt}
			\includegraphics[scale=0.8]{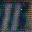}
			\hspace{1pt}
			\includegraphics[scale=0.8]{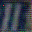} % + svhn
			\hspace{1pt}
			\includegraphics[scale=0.8]{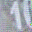}
			\hspace{1pt}
			\includegraphics[scale=0.8]{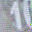}
			\caption{$0.35 (1251.48)$ \label{fig:recon-images-red-svhn-supp}} %0.94 (1.59)}
		\end{subfigure}
		
		%\begin{subfigure}{0.2\linewidth}
		%	\centering
		%	\includegraphics[scale=0.8]{imgs/recimgs/0_0.0001_0.0151.png}
		%	\hspace{3pt}
		%	\includegraphics[scale=0.8]{imgs/recimgs/391_0.0001_0.0116.png}
		%	\caption{0.02 (3.34); 0.01 (3.28)}
		%\end{subfigure}
		\caption{Reconstructed images from different region mentioned in the main paper. \textbf{(a)} Green region \textbf{(b)} White region \textbf{(c)} Red region. 
			Values inside bracket are mean of $\frac{||B_I||}{||E_B||}$ and values outside are mean of rMSE of reconstructed w.r.t. original images.
		}
		\label{fig:recon-images-svhn-supp}
	\end{figure}
	
	\newpage
	\subsection{Summary of Calibrated Averaged Performance}
		
	\begin{figure}[H]
		%\centering
		%\begin{minipage}[t]{0.25\textwidth}
		\centering
		\includegraphics[scale=0.48]{imgs/legends/legend_capbar_horizontal.png}	
		\\
		\begin{subfigure}{0.99\linewidth}
			\includegraphics[width=0.24\linewidth]{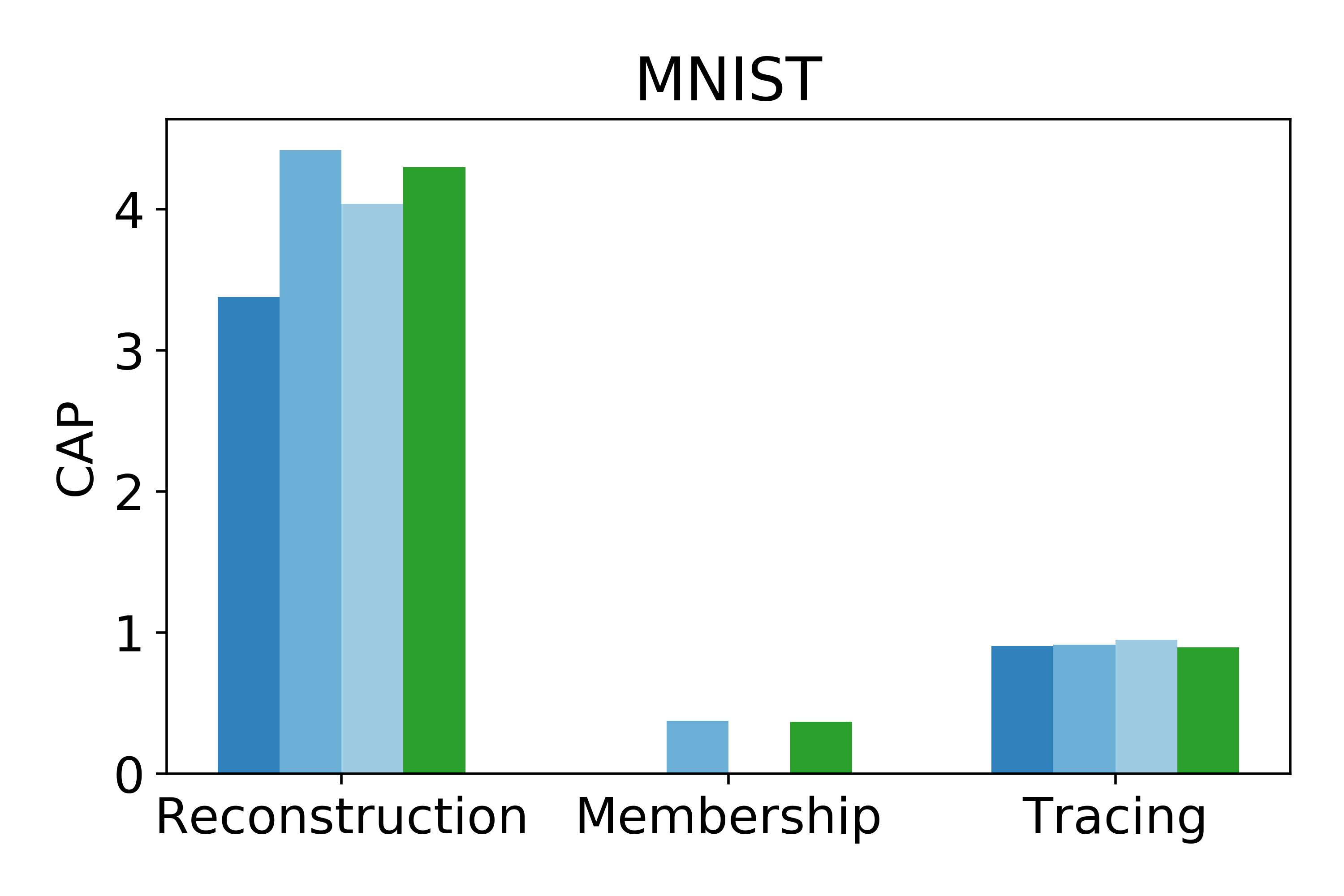}
			\includegraphics[width=0.24\linewidth]{imgs/capbar/cifar10_capsummary_sbs1.png}
			\includegraphics[width=0.24\linewidth]{imgs/capbar/cifar100_capsummary_sbs1.png}
			\includegraphics[width=0.24\linewidth]{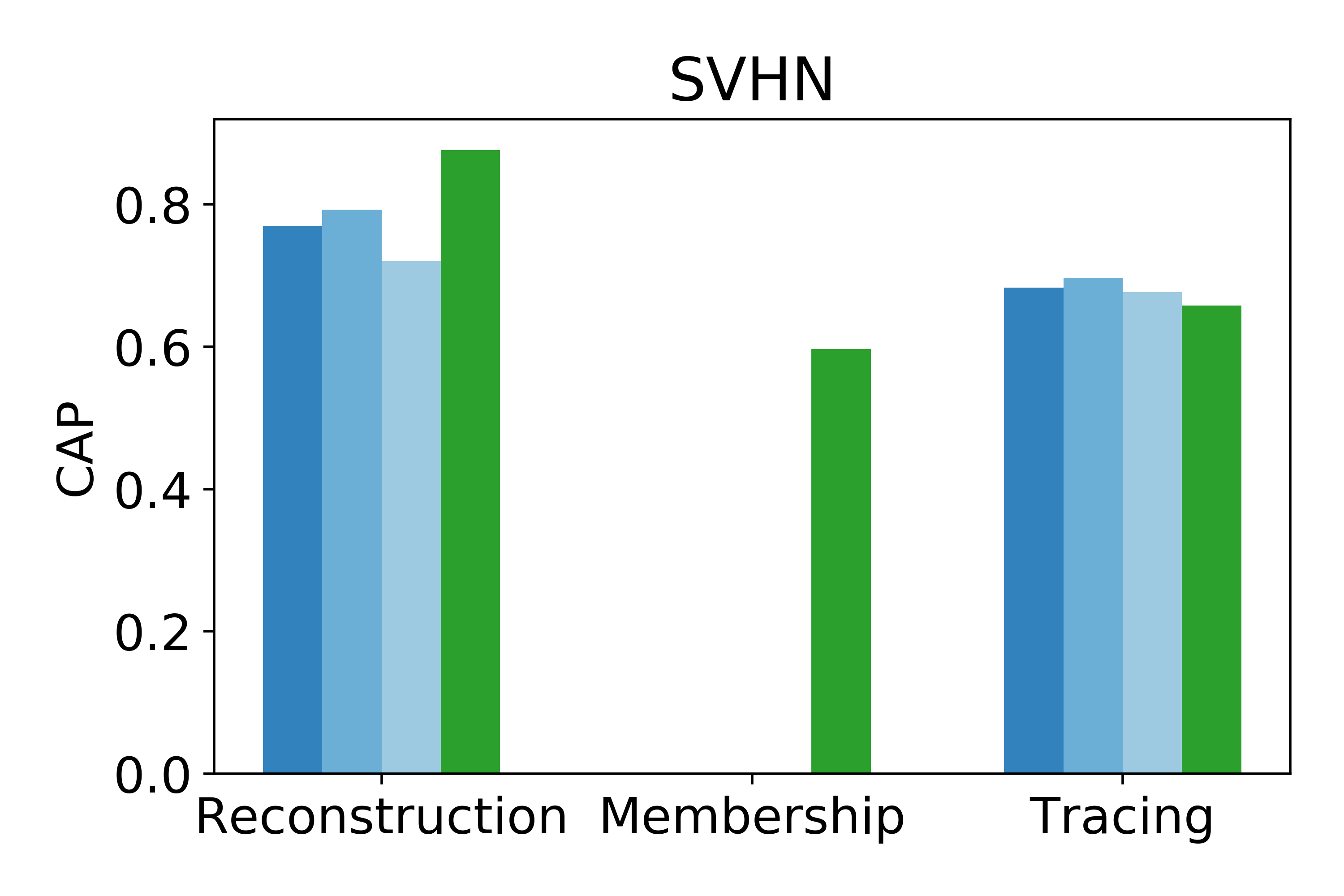}
			\caption{Attack Batch Size 1. Left to Right: MNIST, CIFAR10, CIFAR100 and SVHN}
		\end{subfigure}
		
		\begin{subfigure}{0.99\linewidth}
			\includegraphics[width=0.24\linewidth]{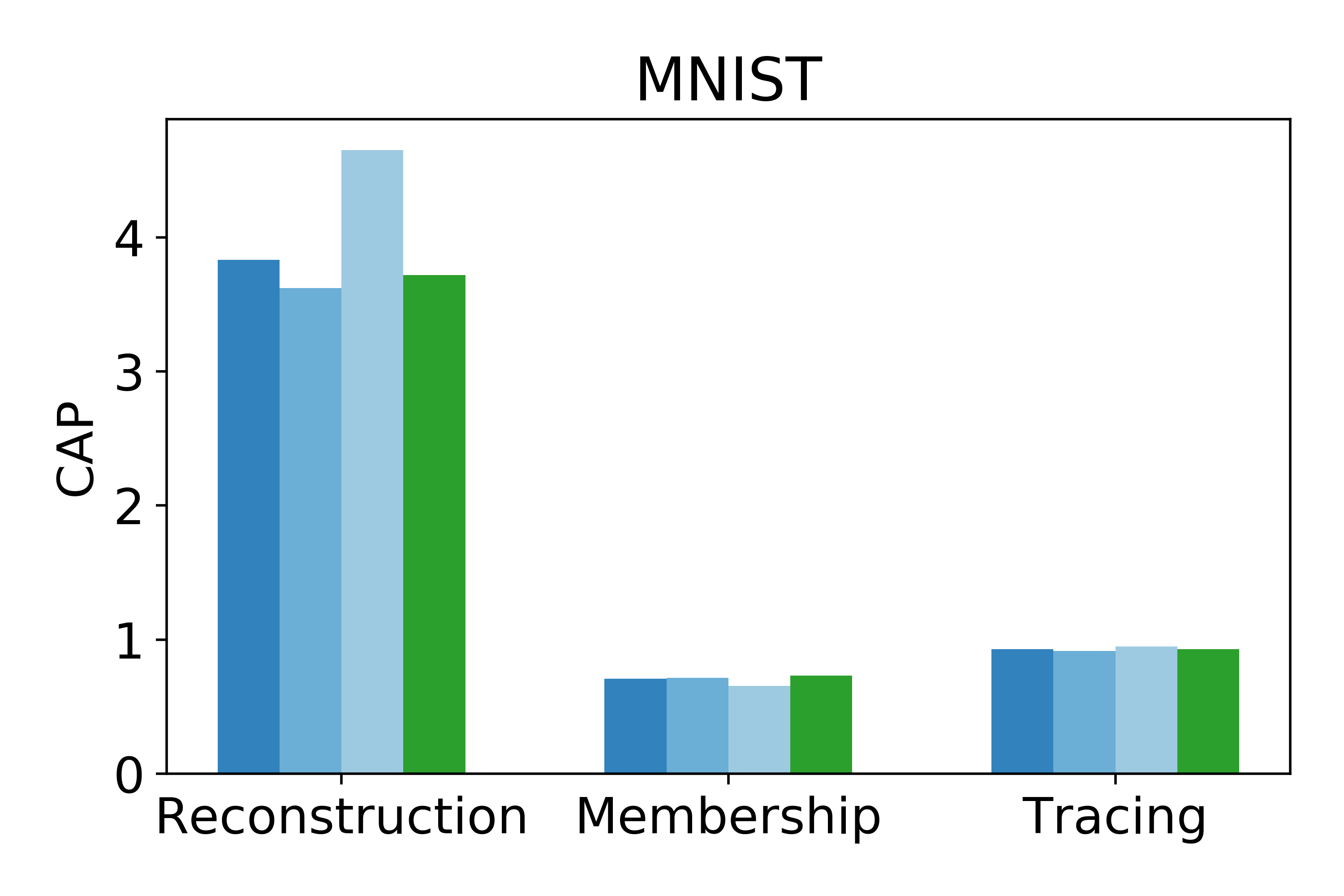}
			\includegraphics[width=0.24\linewidth]{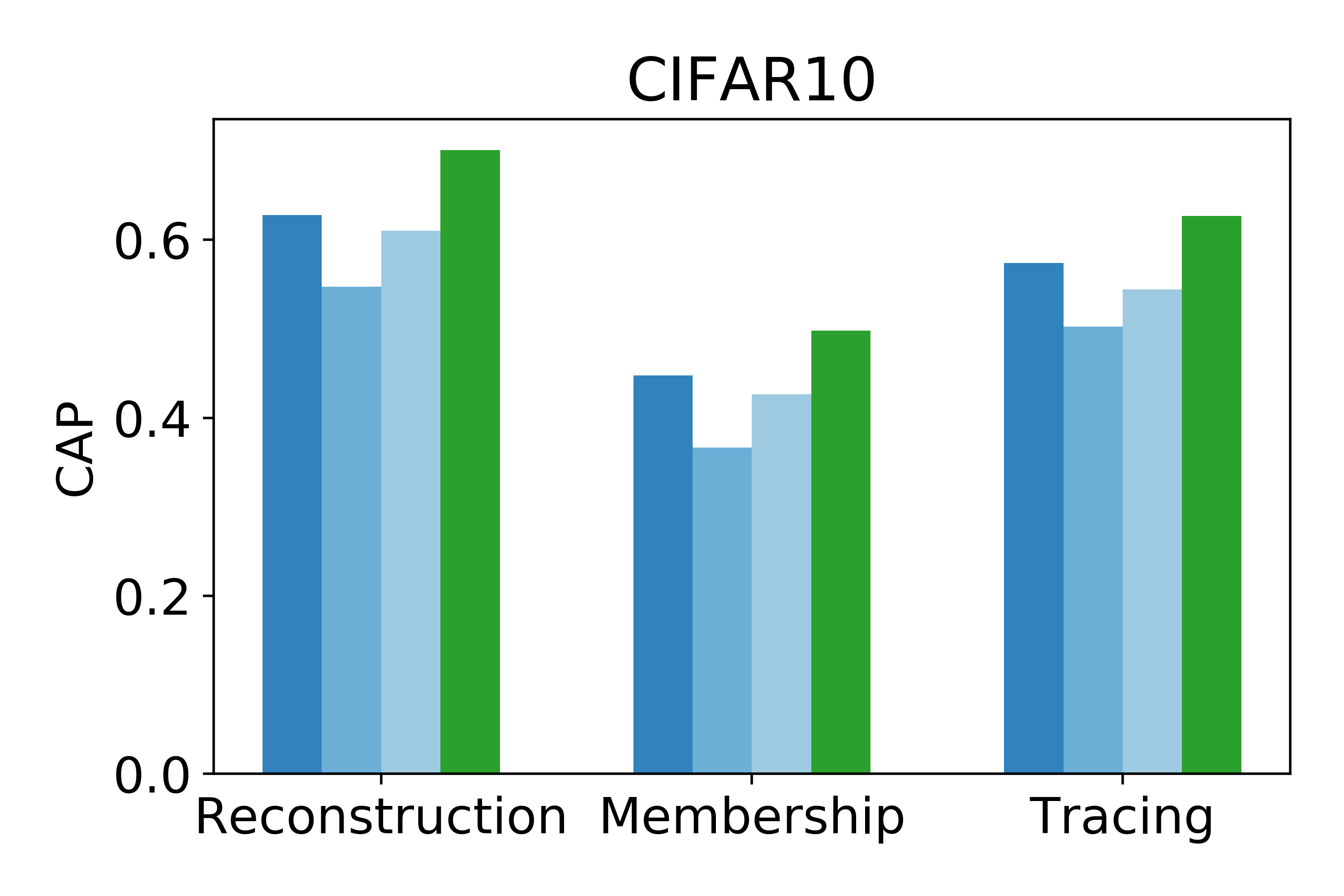}
			\includegraphics[width=0.24\linewidth]{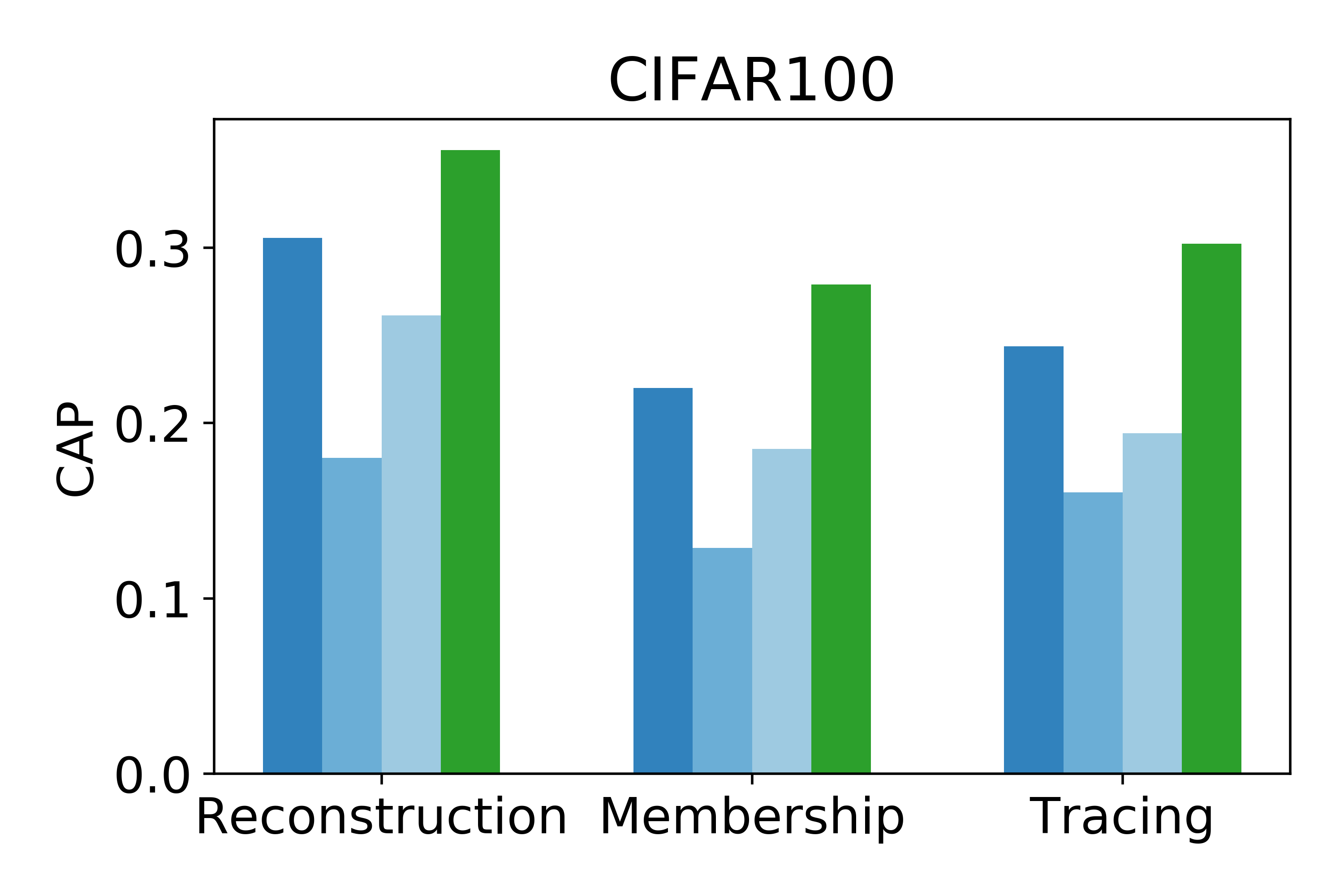}
			\includegraphics[width=0.24\linewidth]{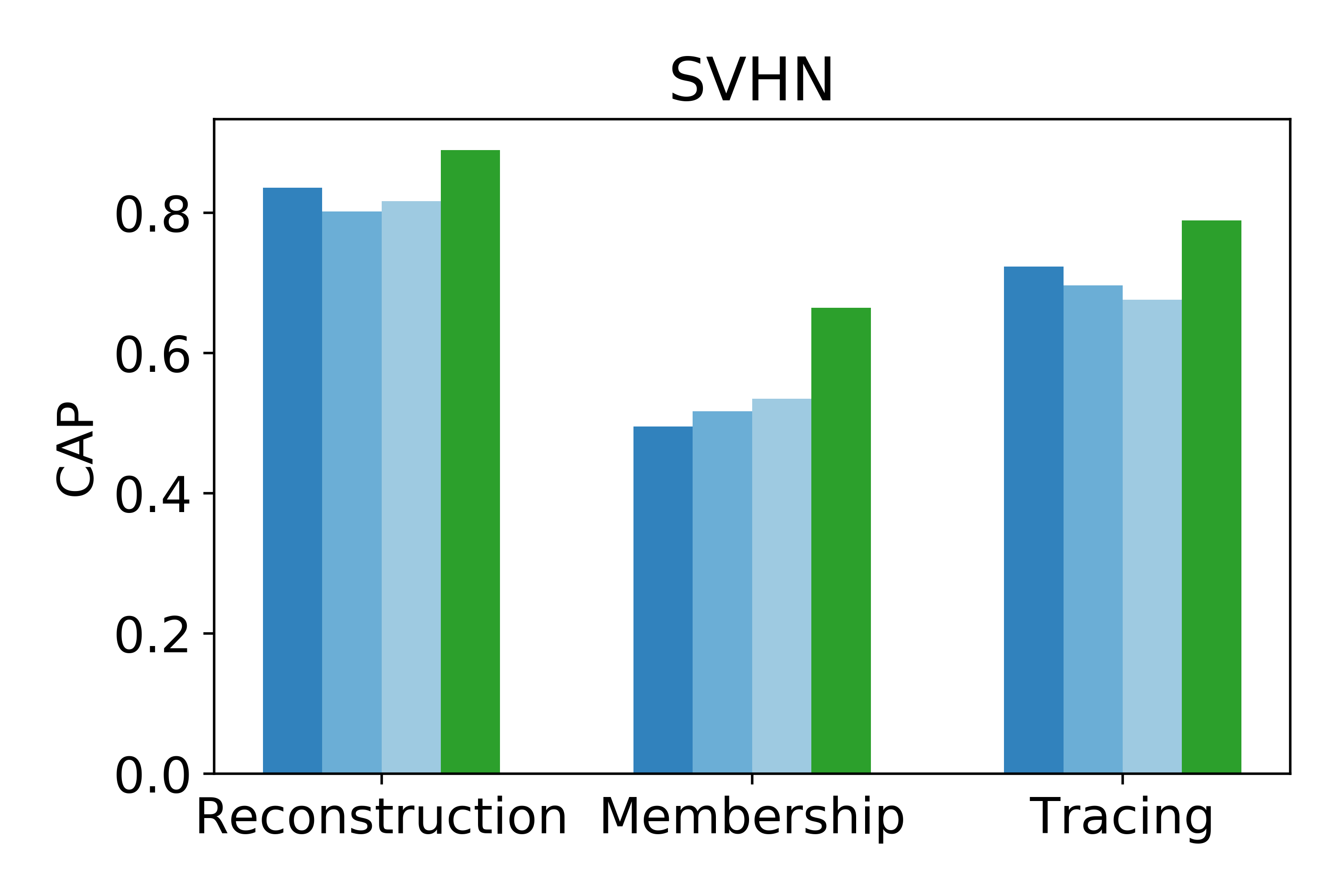}
			\caption{Attack Batch Size 4. Left to Right: MNIST, CIFAR10, CIFAR100 and SVHN}
		\end{subfigure}
		
		\begin{subfigure}{0.99\linewidth}
			\includegraphics[width=0.24\linewidth]{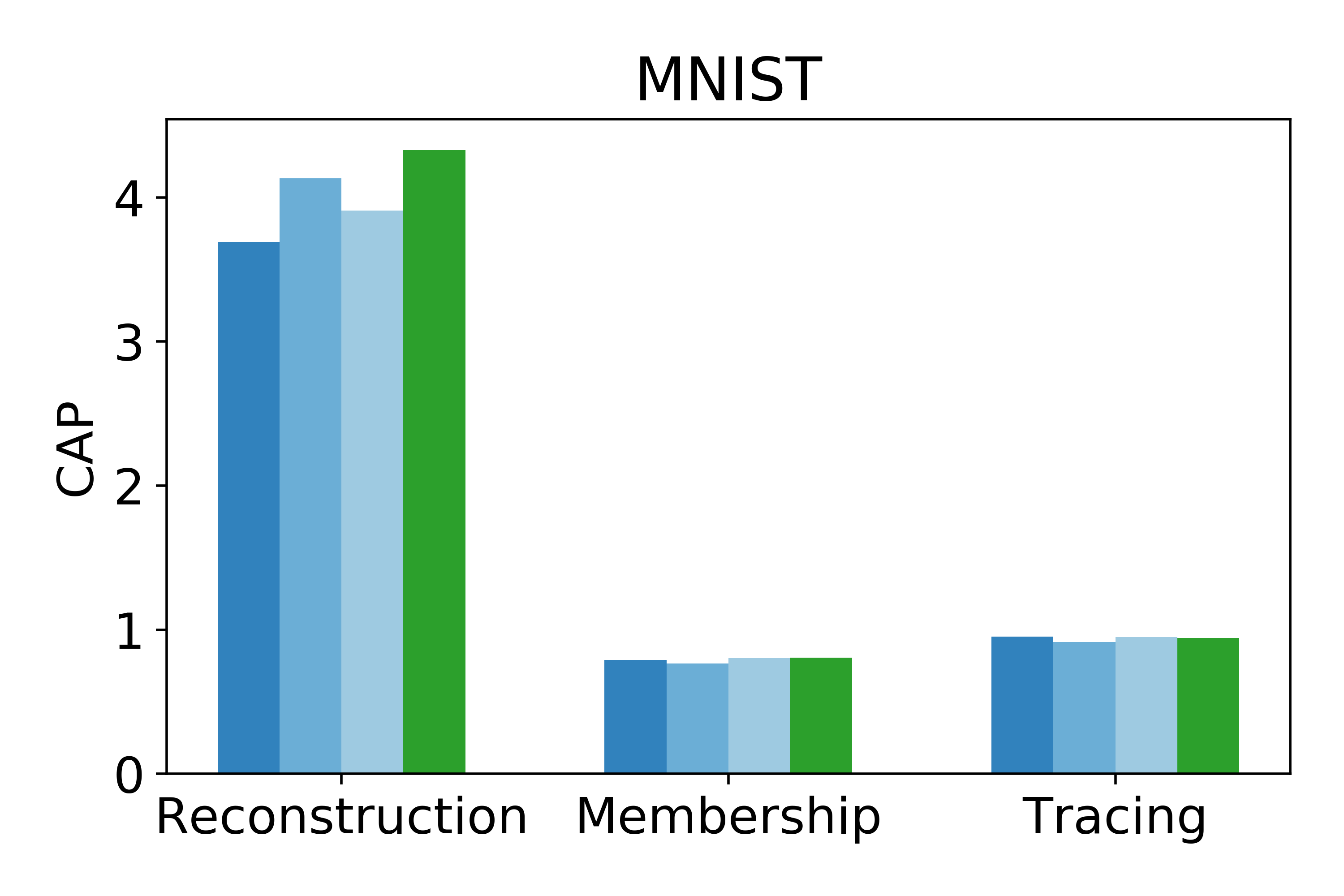}
			\includegraphics[width=0.24\linewidth]{imgs/capbar/cifar10_capsummary_sbs8.png}
			\includegraphics[width=0.24\linewidth]{imgs/capbar/cifar100_capsummary_sbs8.png}
			\includegraphics[width=0.24\linewidth]{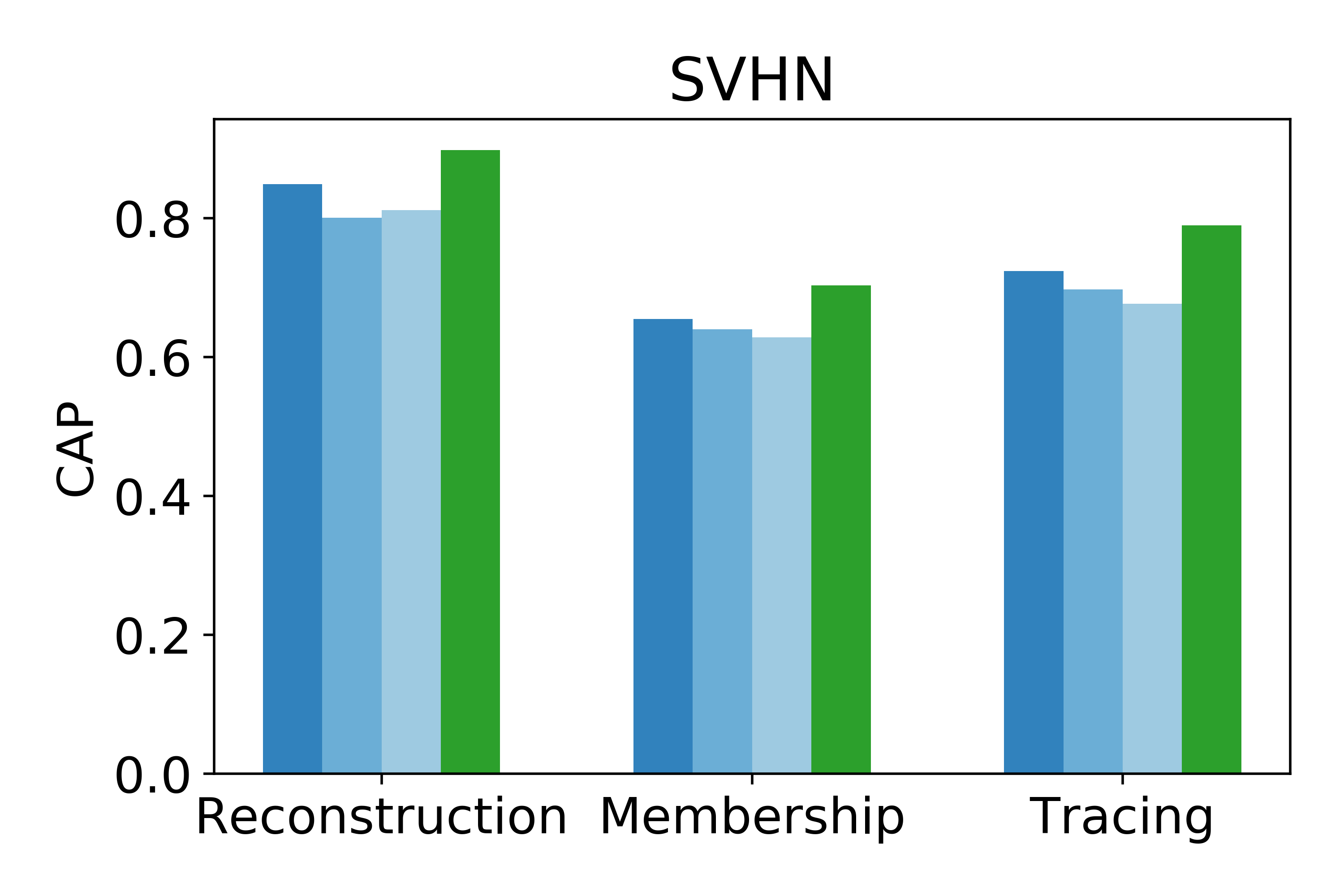}
			\caption{Attack Batch Size 8. Left to Right: MNIST, CIFAR10, CIFAR100 and SVHN.}
		\end{subfigure}
		
		%\end{minipage}
		\caption{Comparison of Calibrated %Maximum/
			Averaged Performances (%CMPs/
			CAPs) for the proposed SPN, PPDL \cite{PPDL/shokri2015} and DP \cite{DLDP_Abadi16} methods, against reconstruction, membership and tracing attacks (%CMP/
			CAP the higher the better, see threat model and evaluation protocol in main paper).}
		%\caption{Comparison of SPN, PPDL and DP privacy-preserving mechanisms. \label{fig:barchart-summary}}
	\end{figure}
	
	\newpage
	\section*{Appendix E: Ablation Studies}

	\subsection{Replace Gaussian Noise with Laplacian Noise}
	
	In this section, we replace Gaussian Noise with Laplacian Noise. For Laplcian noise and Gaussian noise, the scales we used are \{$0.5, 0.1, 0.01, 0.001, 0.0001$\}. 
	
	In Figure \ref{fig:laplacian}, Laplacian noise and Gaussian noise with the same scale are having almost identical $\frac{||B_I||}{||E_B||}$ and protection strength (i.e. rMSEs at different $\frac{||B_I||}{||E_B||}$ are almost the same).
	
	\begin{figure}[H]
		\centering
		\includegraphics[width=0.3\linewidth,height=3cm]{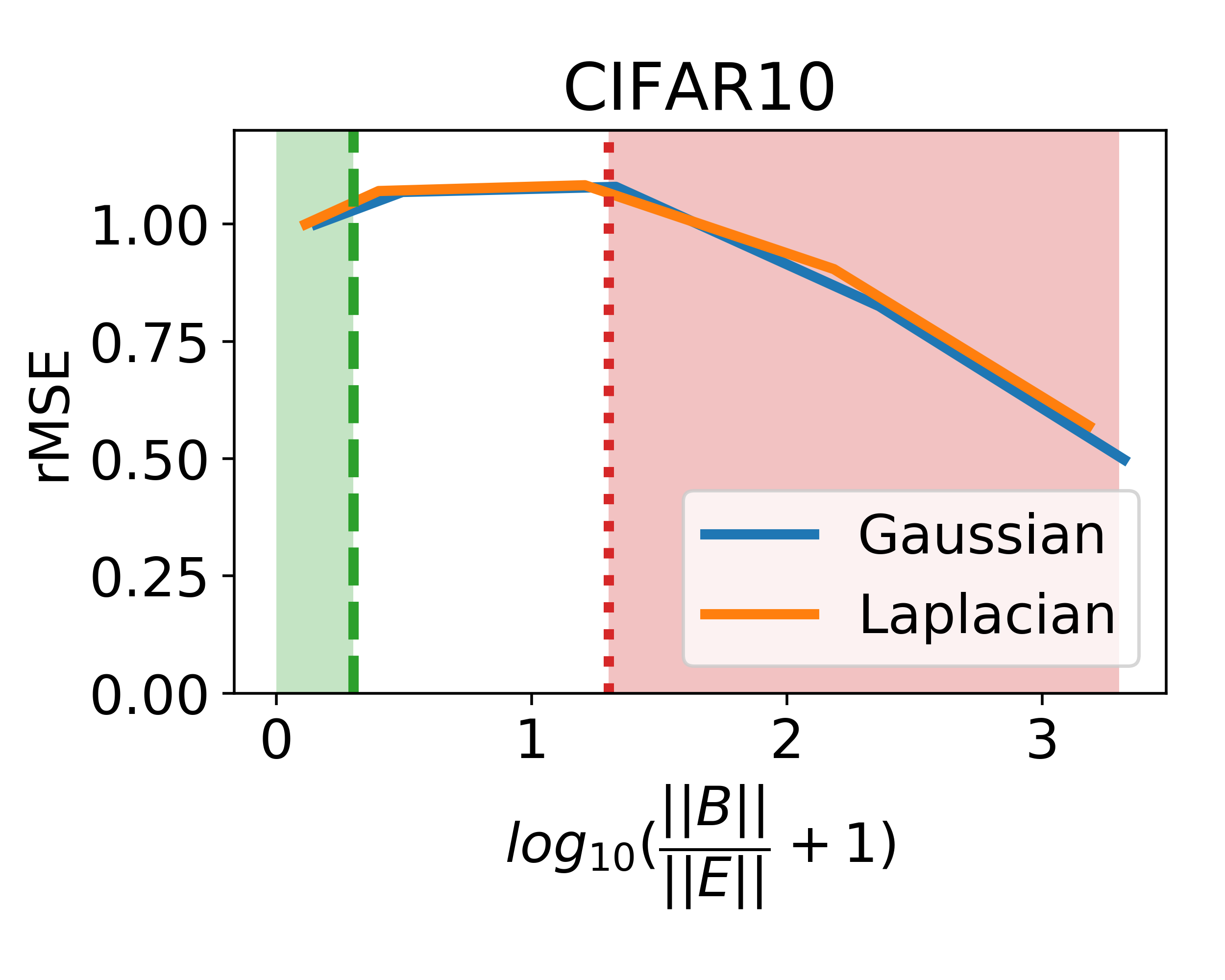}
		\includegraphics[width=0.3\linewidth,height=3cm]{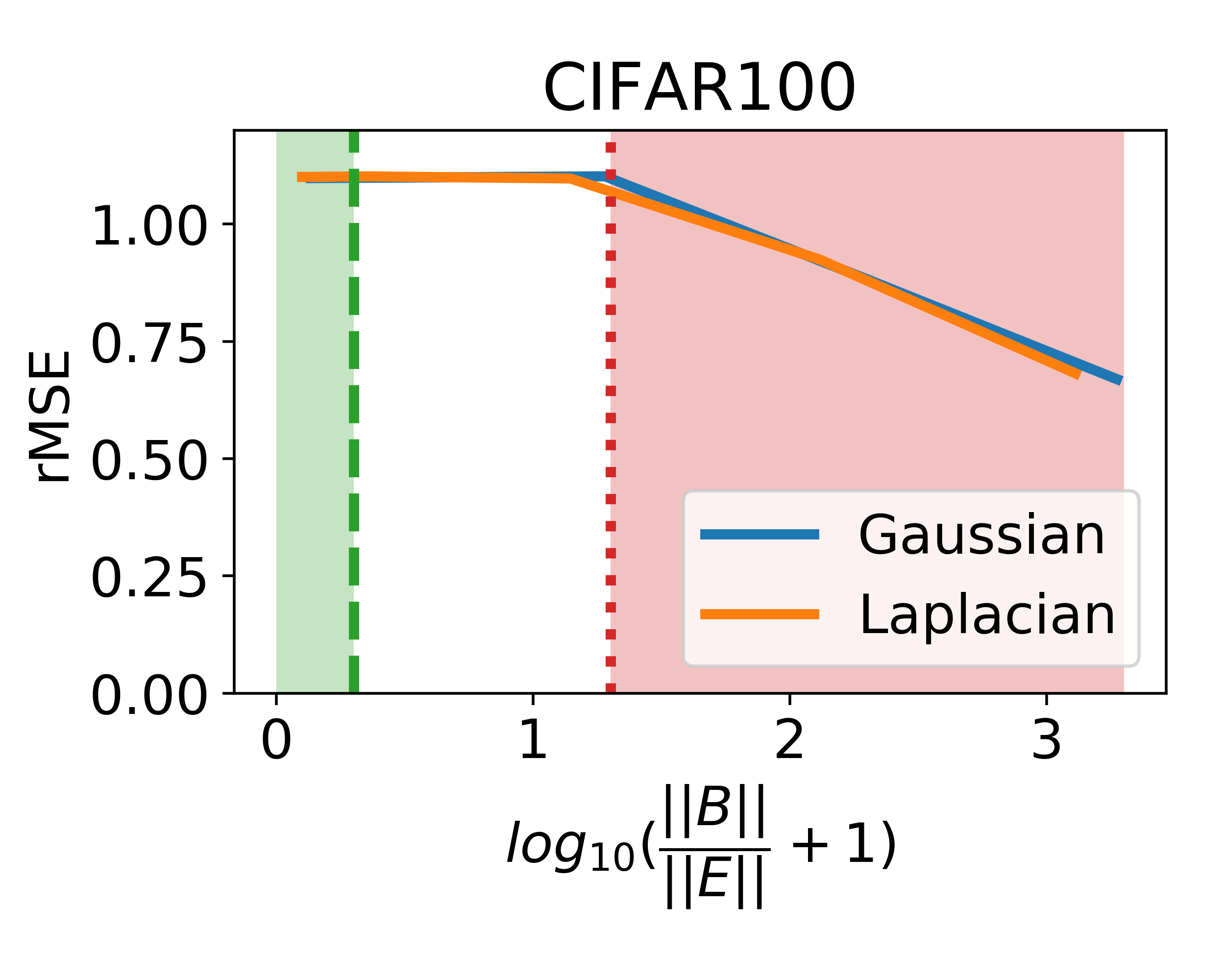}
		\caption{\textbf{Left}: CIFAR10; \textbf{Right}: CIFAR100.}
		\label{fig:laplacian}
	\end{figure}
	
	\subsection{Effect of Number of Bits in SPN}
	
	In this section, we shows that with number of bits in SPN will affect $\frac{||B_I||}{||E_B||}$ and hence improves the protection against reconstruction attack.
	
	From 32-bit to 128-bit, $\frac{||B_I||}{||E_B||}$ increased as shown in Figure \ref{fig:numbits}, protection strength (i.e. rMSE) is also increased.
	
	\begin{figure}[H]
		\centering
		\includegraphics[width=0.3\linewidth,height=3cm]{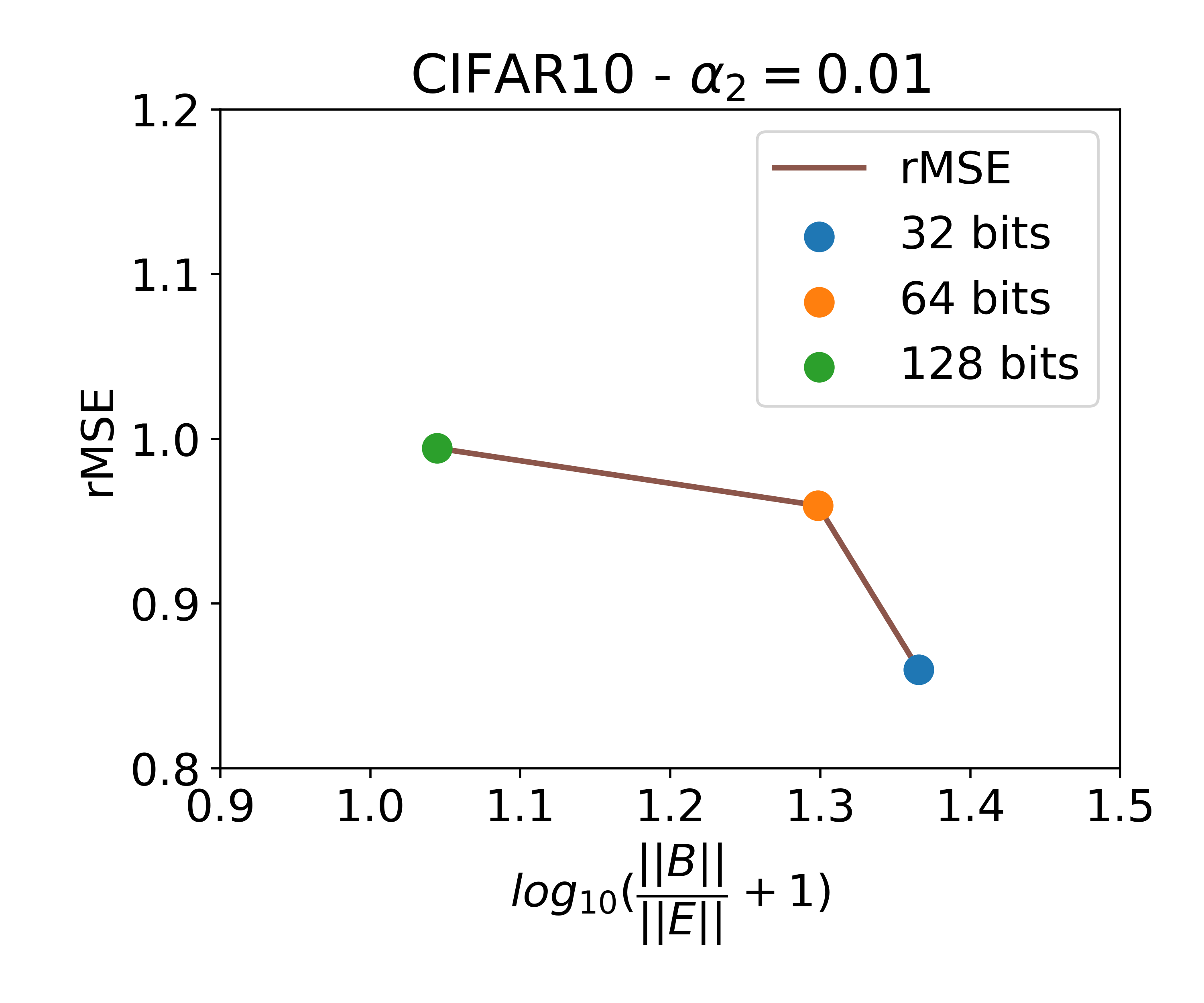}
		\includegraphics[width=0.3\linewidth,height=3cm]{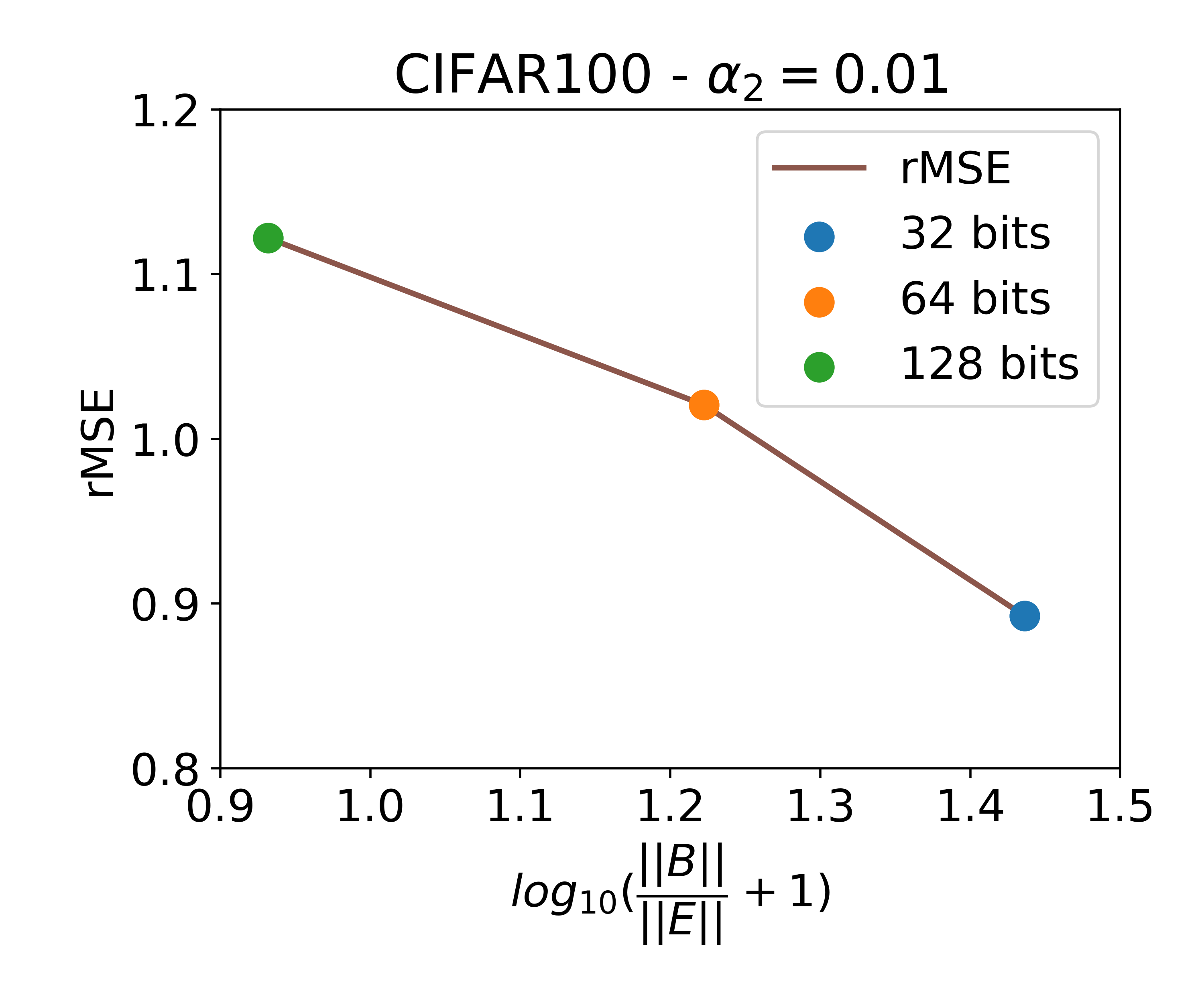}
		\caption{\textbf{Left}: CIFAR10; \textbf{Right}: CIFAR100.}
		\label{fig:numbits}
	\end{figure}

	\newpage
	\section*{Appendix F: Federated Learning}
	
	\subsection{Accuracies for Privacy Attack Analysis}
	
	Table \ref{tab:lenet-acc-supp} shows accuracies of different privacy-preserving mechanisms using DLNet as network architecture and using round robin for model aggregation. \textit{Accuracies are measured using test dataset on server model.} For SPN, we are using 64-bit.
	
	\begin{table}[H]
		\centering
		\adjustbox{max width=\textwidth}{
			\begin{tabular}{l|c|c|c|c}
				\toprule
				& MNIST & CIFAR10 & CIFAR100 & SVHN \\
				\midrule
				DP-0.5 & 0.9466 & 0.4205 & 0.0876 & 0.6882 \\
				DP-0.1 & 0.9757 & 0.5808 & 0.2084 & 0.8295 \\
				DP-0.01 & 0.9922 & 0.6853 & 0.3614 & 0.8950 \\
				DP-0.001 & 0.9942 & 0.7027 & 0.3803 & 0.8980 \\
				DP-0.0001 & 0.9925 & 0.7025 & \textbf{0.3902} & 0.9015 \\
				\midrule
				PPDL-0.05, DP-0.5 & 0.9305 & 0.3783 & 0.0469 & 0.5151 \\
				PPDL-0.05, DP-0.1 & 0.9708 & 0.5184 & 0.1497 & 0.8254 \\
				PPDL-0.05, DP-0.01 & 0.9881 & 0.6081 & 0.2323 & 0.8829 \\
				PPDL-0.05, DP-0.001 & 0.9886 & 0.5906 & 0.2237 & 0.8894 \\
				PPDL-0.05, DP-0.0001 & 0.9897 & 0.5814 & 0.2068 & 0.8740 \\
				\midrule
				PPDL-0.3, DP-0.5 & 0.9466 & 0.4250 & 0.0894 & 0.5132 \\
				PPDL-0.3, DP-0.1 & 0.9775 & 0.5672 & 0.1931 & 0.8424 \\
				PPDL-0.3, DP-0.01 & 0.9910 & 0.6783 & 0.3127 & 0.8992 \\
				PPDL-0.3, DP-0.001 & 0.9926 & 0.6672 & 0.3177 & 0.8992 \\
				PPDL-0.3, DP-0.0001 & 0.9934 & 0.6686 & 0.3223 & 0.8975 \\
				\midrule
				SPN-0.5 & 0.9936 & 0.6583 & 0.2990 & 0.8711 \\
				SPN-0.4 & 0.9939 & 0.6594 & 0.2999 & 0.8740 \\
				SPN-0.3 & 0.9933 & 0.6814 & 0.2999 & 0.8727 \\
				SPN-0.2 & 0.9937 & 0.6898 & 0.3230 & 0.8788 \\
				SPN-0.1 & 0.9927 & 0.6897 & 0.3457 & 0.8880 \\
				SPN-0.01 & 0.9928 & 0.6970 & 0.3843 & \textbf{0.9163} \\
				SPN-0.001 & 0.9938 & 0.6968 & 0.3768 & 0.9154 \\
				SPN-0.0001 & \textbf{0.9944} & \textbf{0.7154} & 0.3740 & 0.9126 \\
				\bottomrule
			\end{tabular}
		}
		\caption{Accuracies of different dataset on DLNet with different privacy-preserving mechanisms and their hyper-parameters. Bold values are highest accuracy among different mechanisms in the dataset.}
		\label{tab:lenet-acc-supp}
		\vspace{-10pt}
	\end{table}
	
	\newpage
	\subsection{Accuracies on I.I.D dataset}
	
		\begin{figure}[H]	
			%	\begin{subfigure}{0.95\linewidth}
			\centering
%			\includegraphics[scale=0.6]{imgs/legends/legend_flacc_v2_horizontal.png}
%			\\
%			\begin{subfigure}{0.99\linewidth}
%				\centering
%				\includegraphics[width=0.3\linewidth]{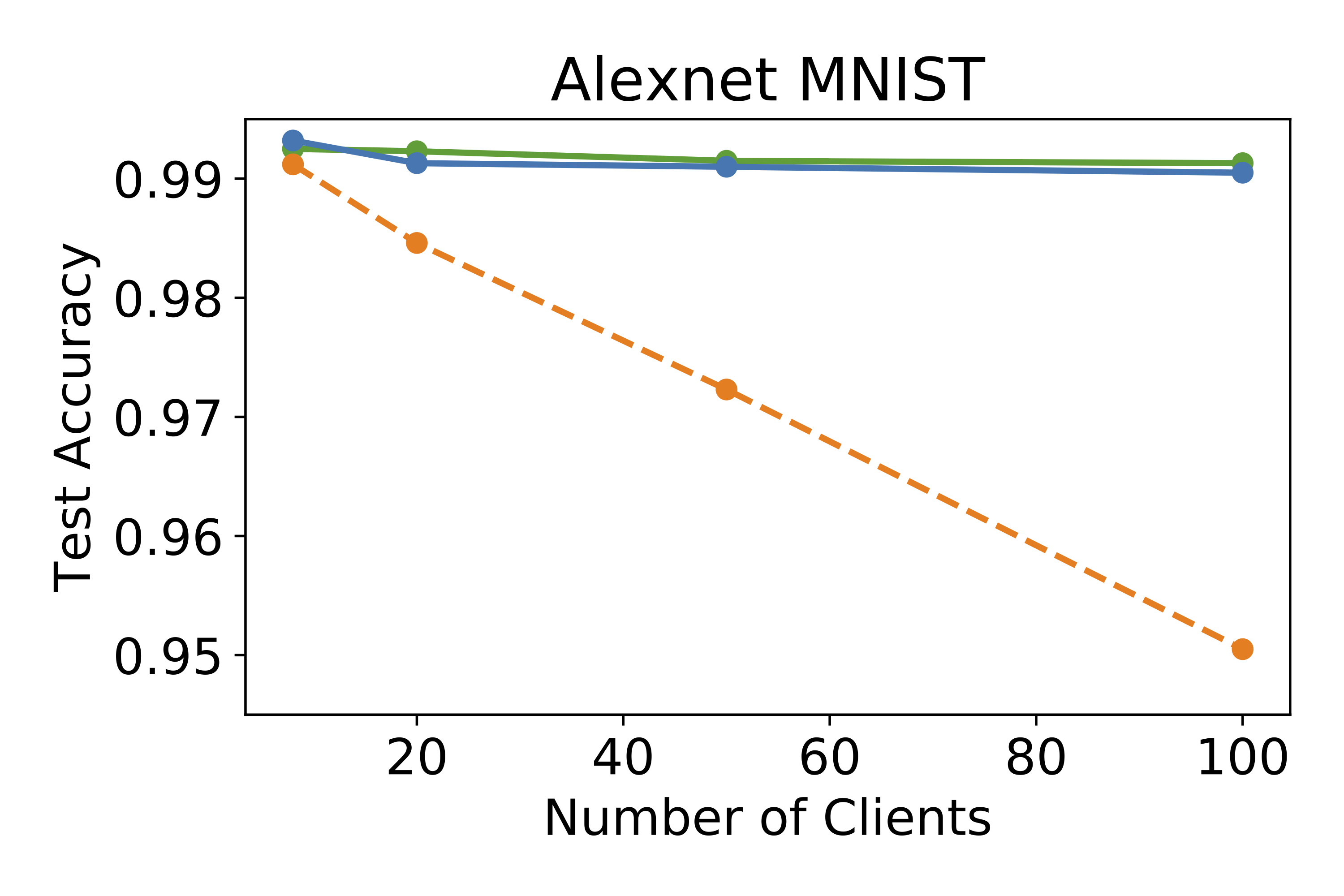}
%				\includegraphics[width=0.3\linewidth]{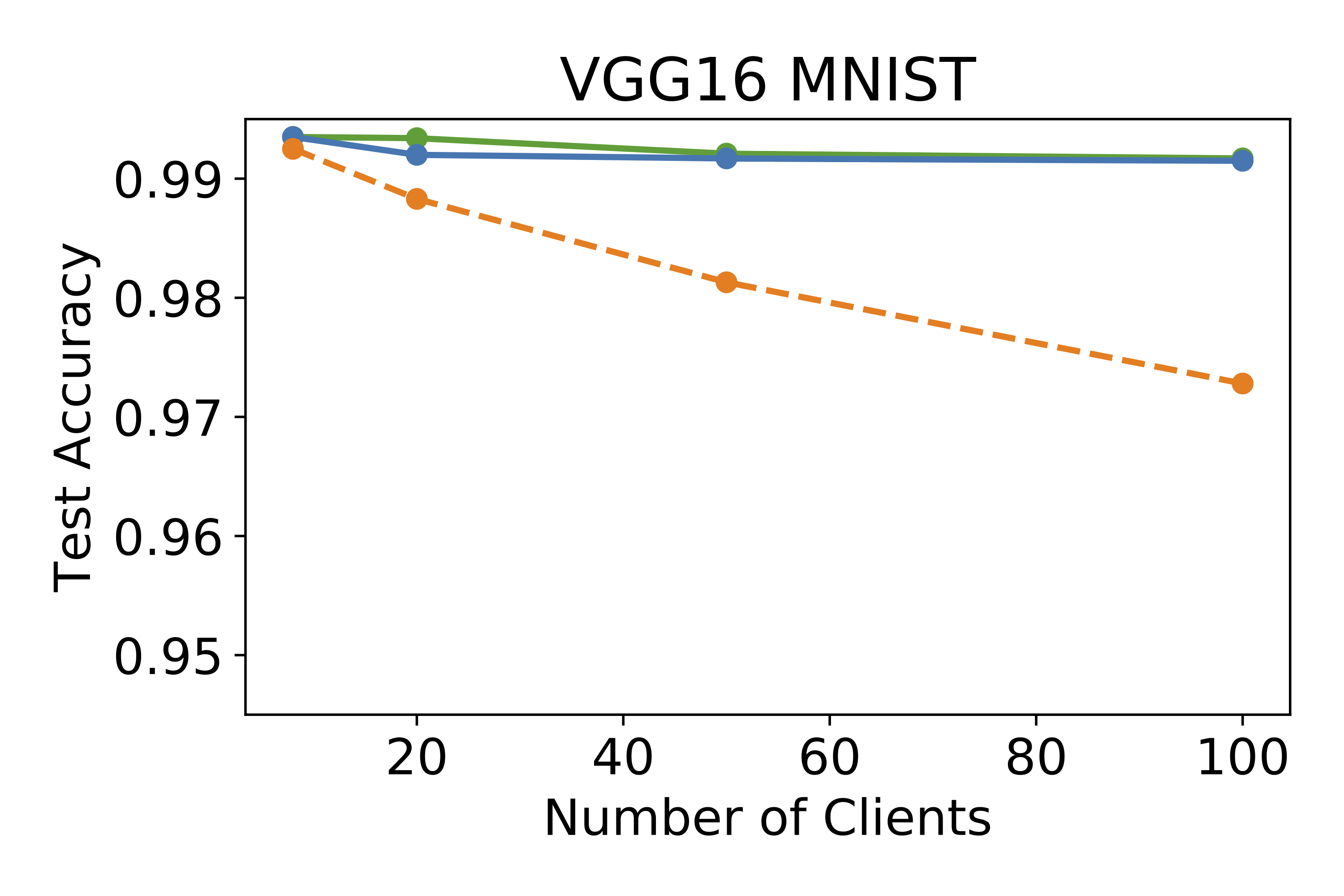}
%				\caption{MNIST}
%			\end{subfigure}
%			\\
			\includegraphics[scale=0.6]{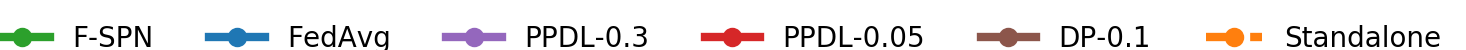}
			\begin{subfigure}{0.99\linewidth}
				\centering
				\includegraphics[width=0.3\linewidth]{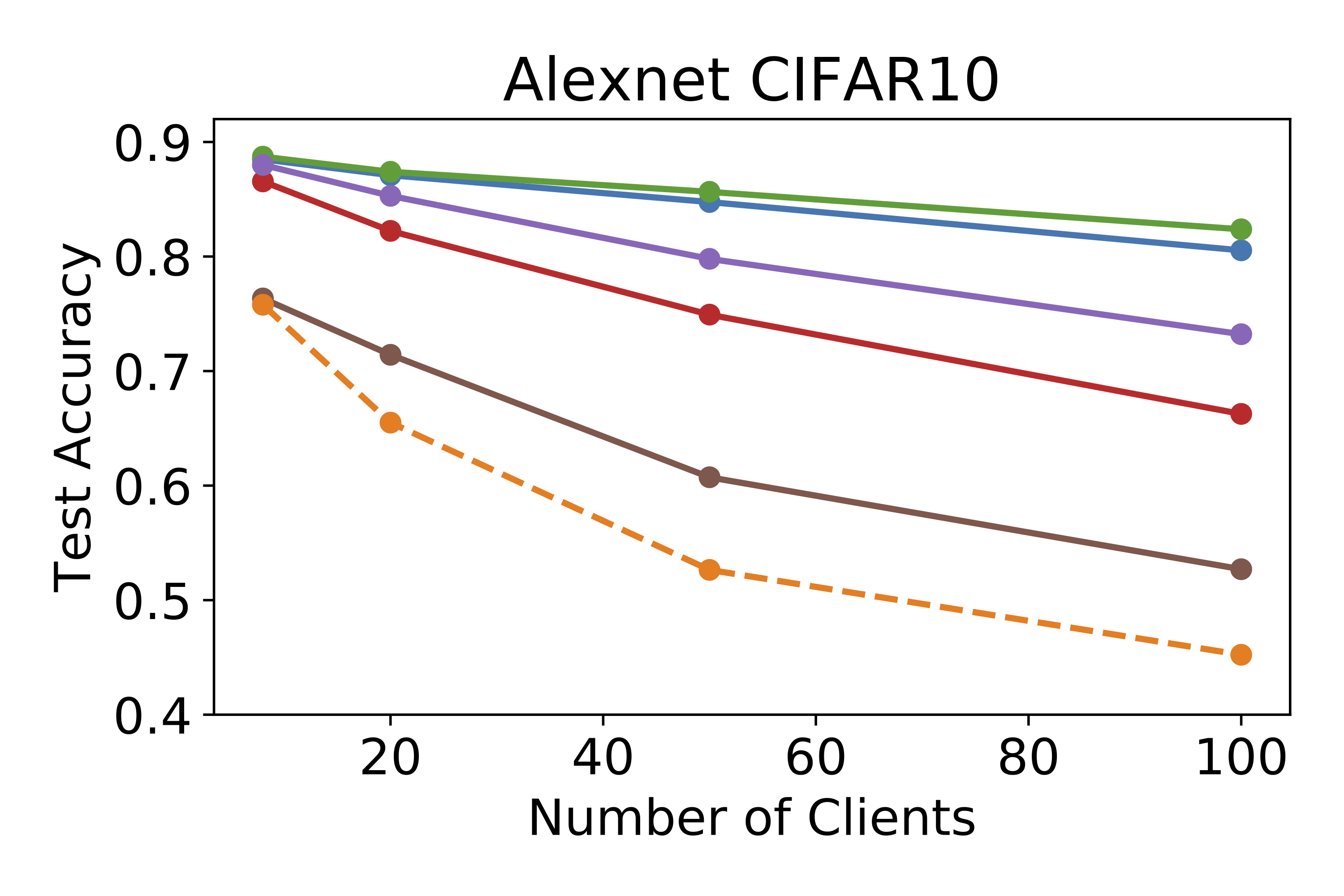}
				\includegraphics[width=0.3\linewidth]{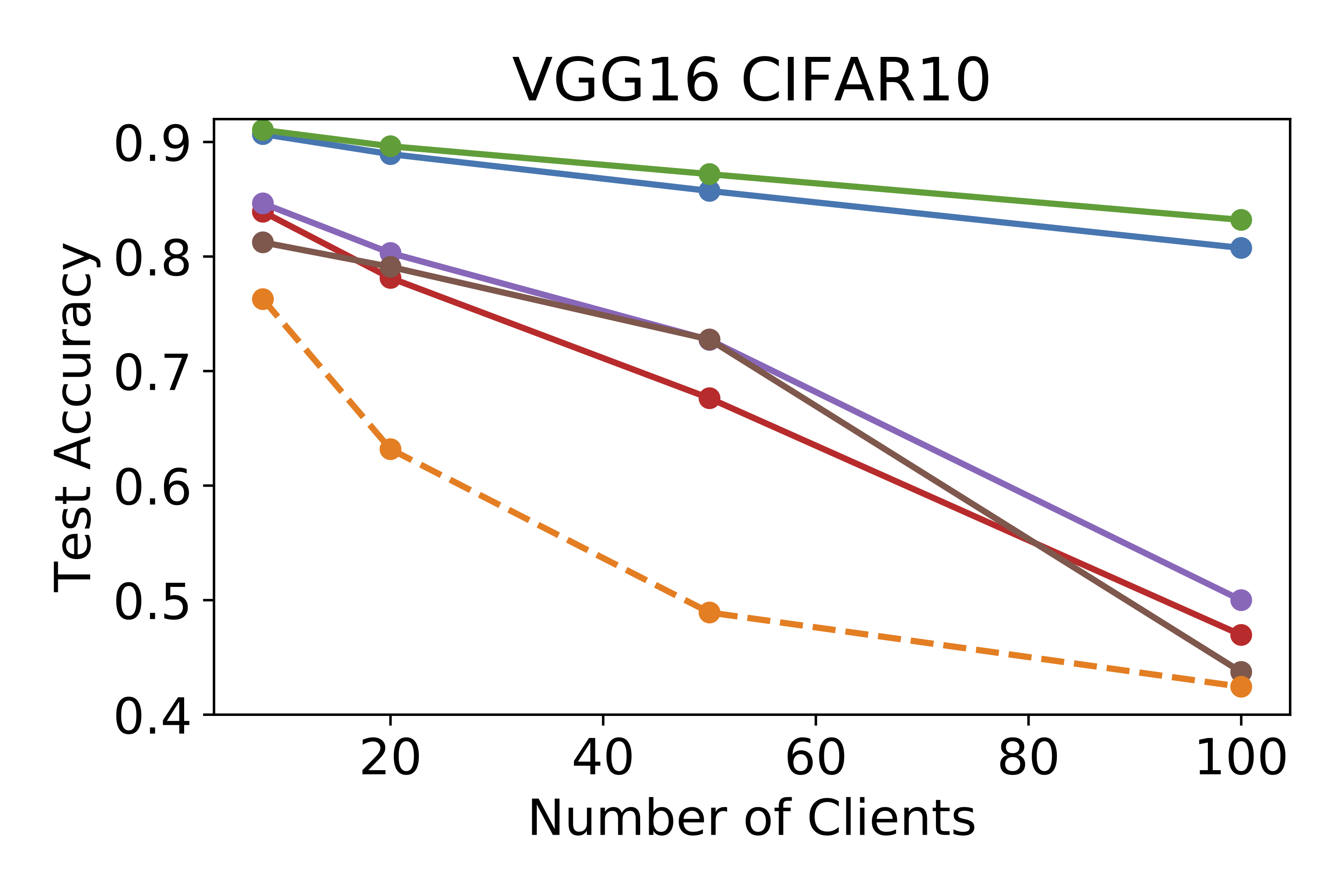}
				\caption{CIFAR10}
			\end{subfigure}
			\\
			\begin{subfigure}{0.99\linewidth}
				\centering
				\includegraphics[width=0.3\linewidth]{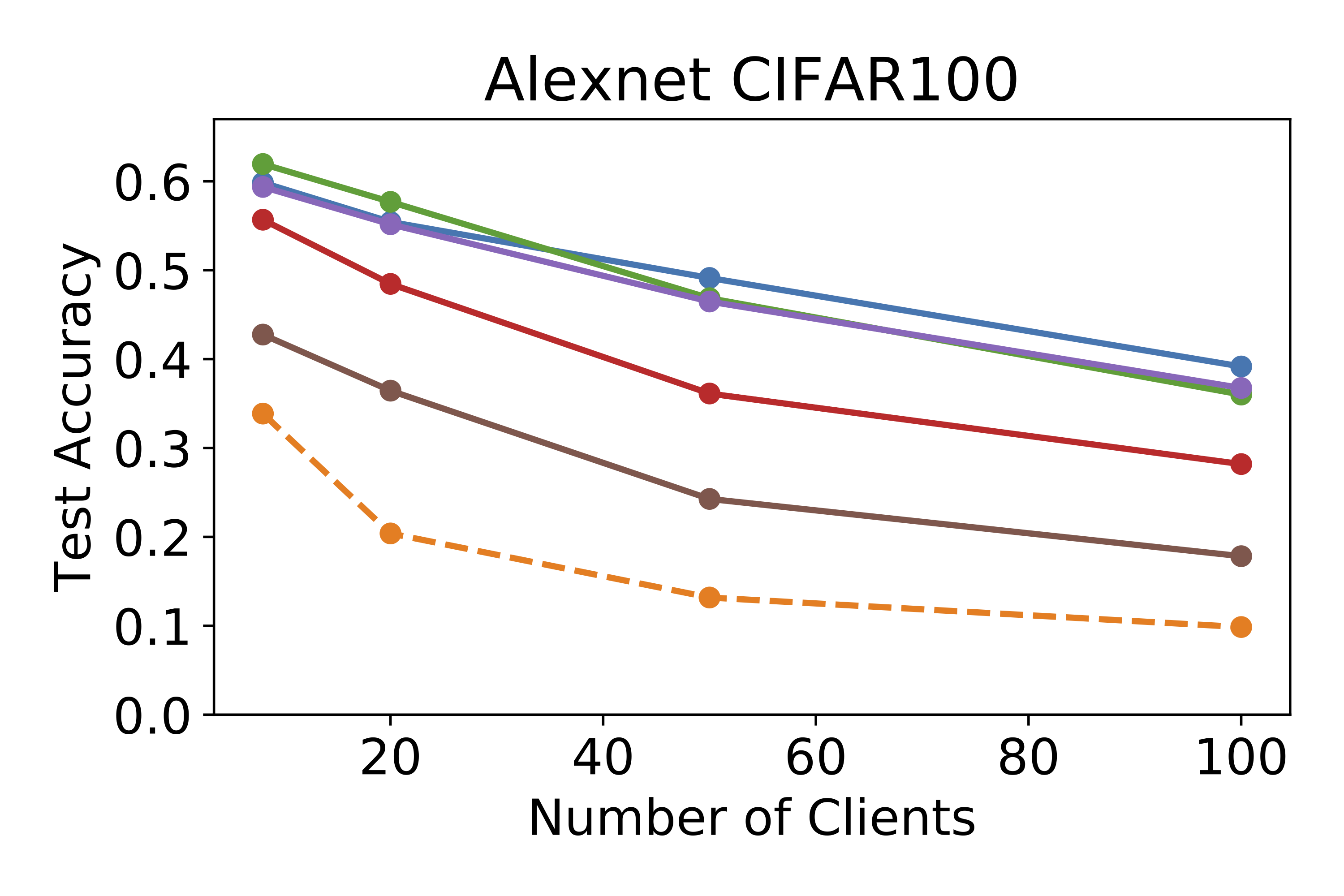}
				\includegraphics[width=0.3\linewidth]{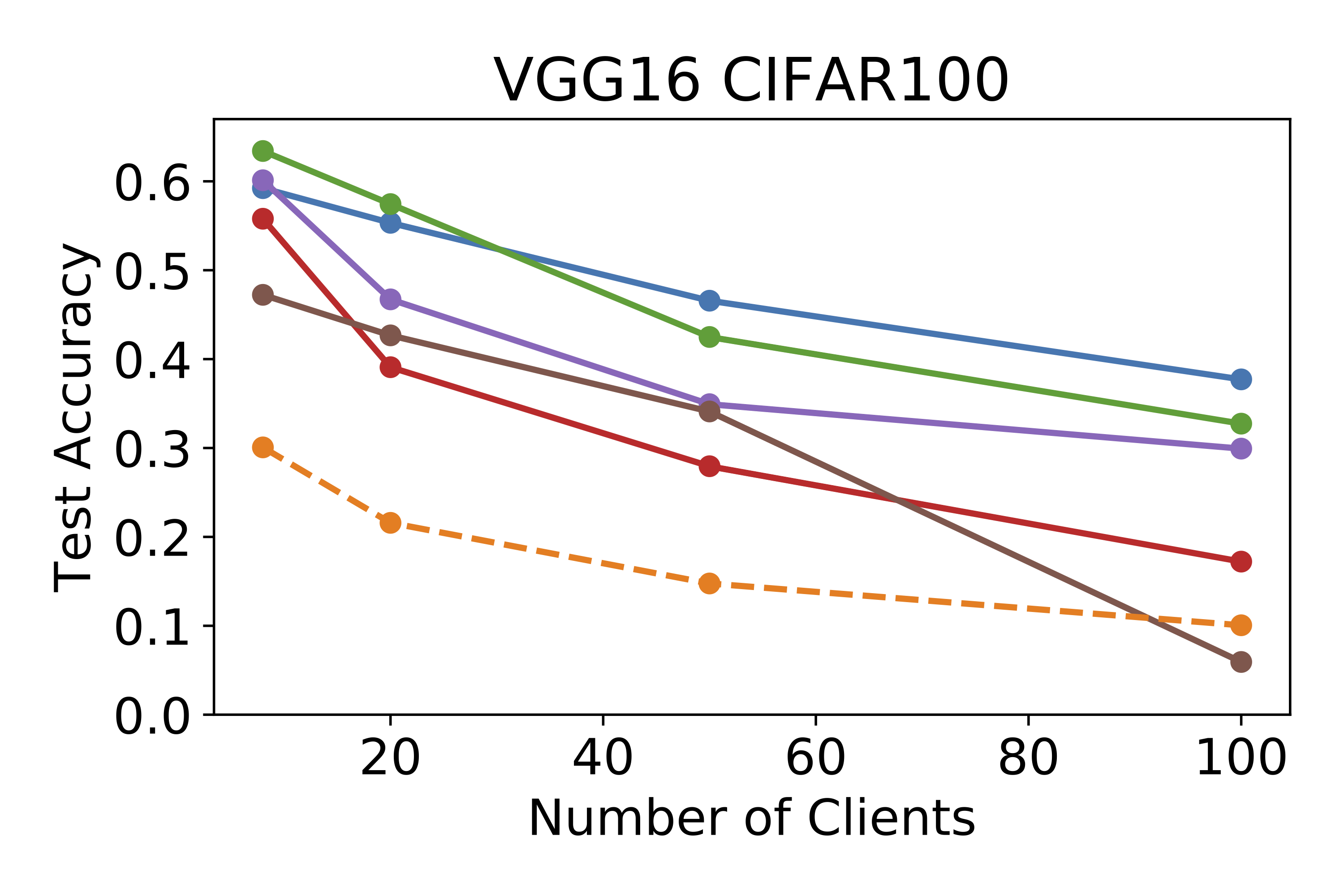}
				\caption{CIFAR100}
			\end{subfigure}
			\\
			\begin{subfigure}{0.99\linewidth}
				\centering
				\includegraphics[width=0.3\linewidth]{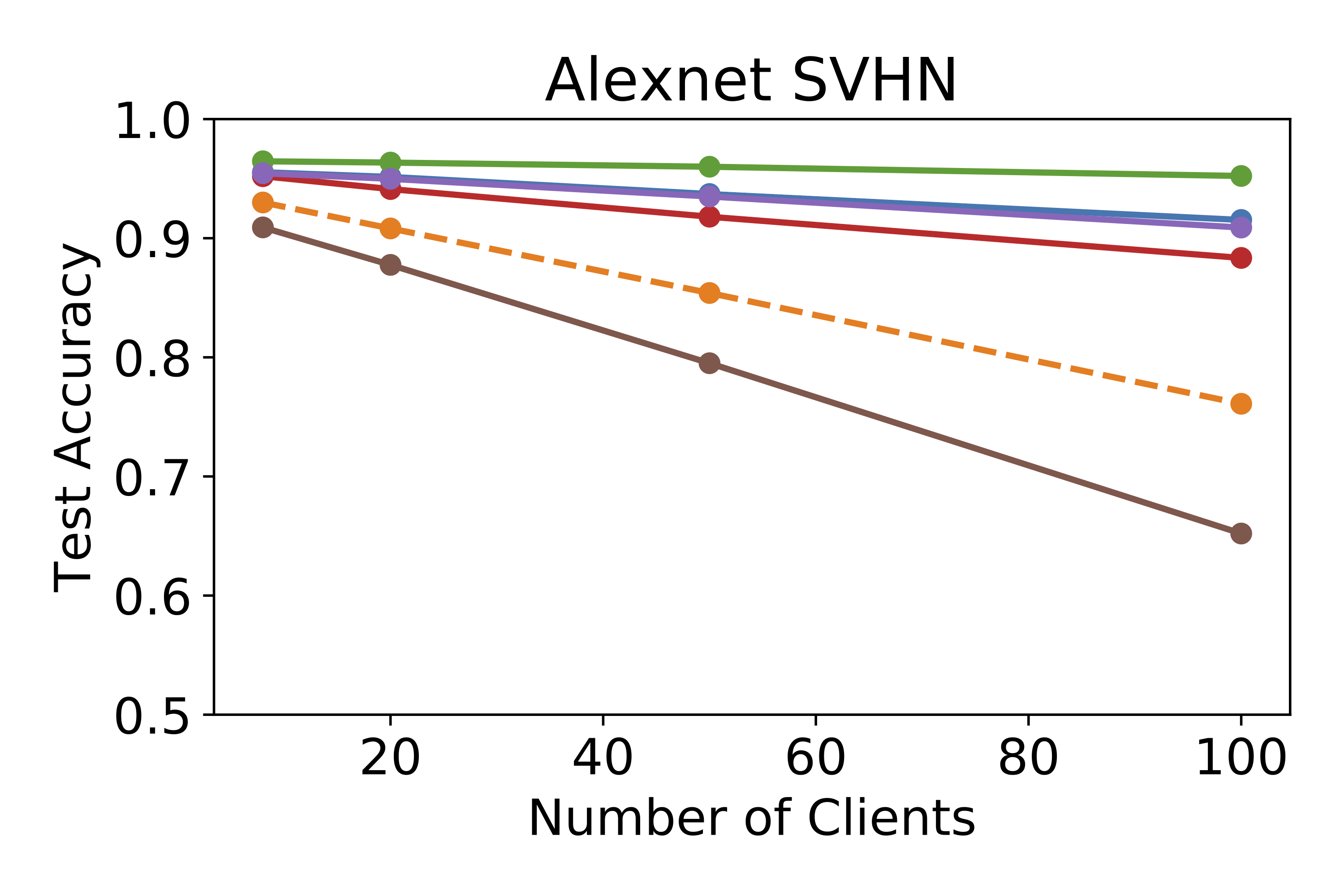}
				\includegraphics[width=0.3\linewidth]{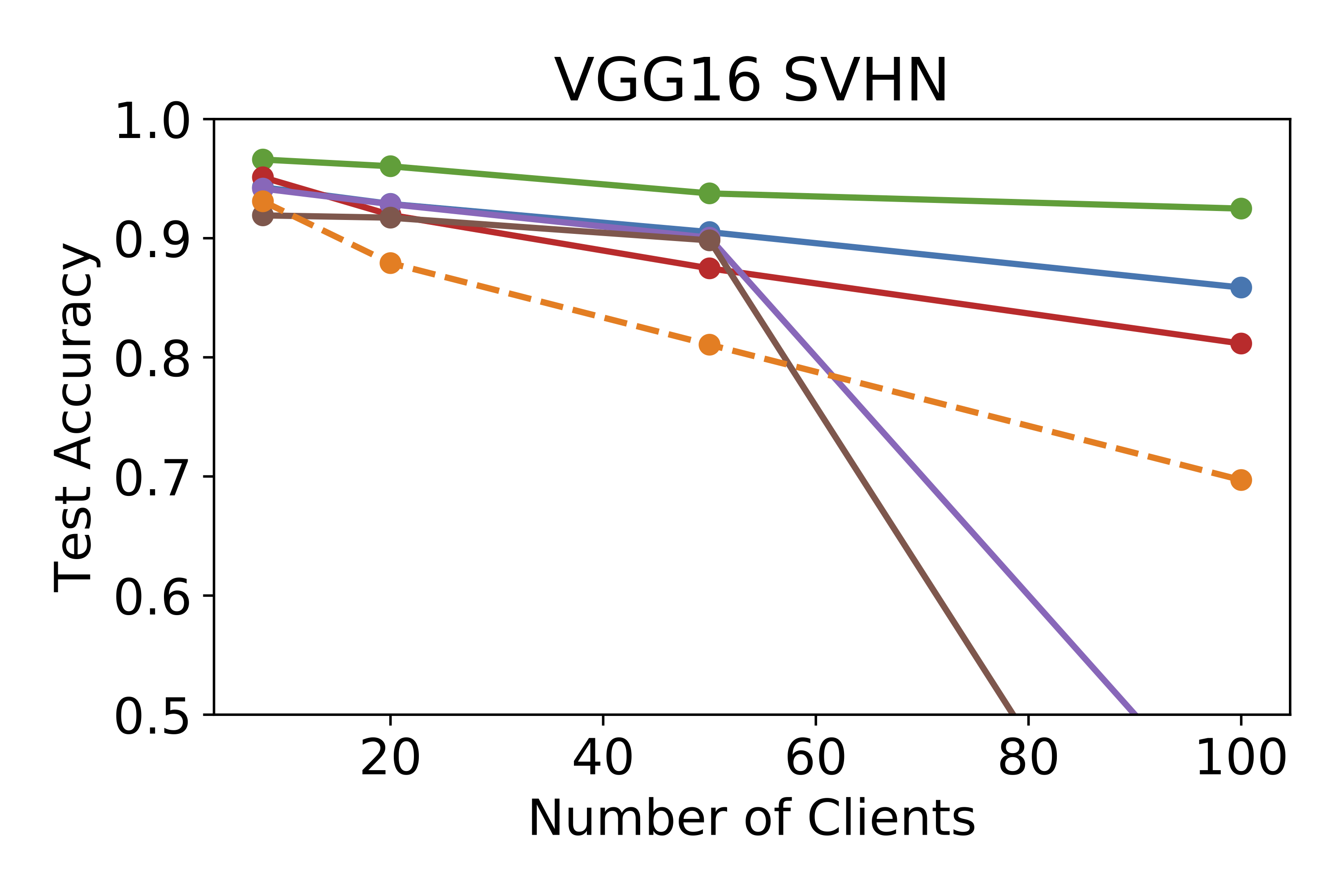}
				\caption{SVHN}
			\end{subfigure}
			\\
			\caption{Comparison of accuracies for \textit{standalone} local models, \textit{FedAvg} global model and models with different privacy-preserving mechanisms (\textit{Federated SPN}($\alpha_{2}=0.1$), \textit{PPDL-0.3}, \textit{PPDL-0.05} and \textit{DP-0.1}). 
				Improvements over standalone models increase with the number of clients. \textbf{Left}: AlexNet; \textbf{Right}: VGG16; F-SPN with $\alpha_{2}=0.1$ and DP-0.1 both having $\frac{||B_I||}{||E_B||} \approx 1$ which is considered borderline between green region and white region, F-SPN-0.1 outperforms DP-0.1 in terms of performance (e.g. ~25\% test accuracy improved in AlexNet CIFAR10.) while maintaining privacy guarantee. While comparing with PPDL with no DP added, F-SPN consistently performs better than PPDL. We observed that PPDL-0.3 and DP-0.1 on VGG16 SVHN are unstable which failed to train at 100 clients.}
			\label{fig:flacc-supp}
			%\vspace{-10pt}
		\end{figure}
	
	\subsection{Accuracies on Non-I.I.D dataset}
	
		\begin{figure}[H]
			\centering
			\includegraphics[scale=0.3]{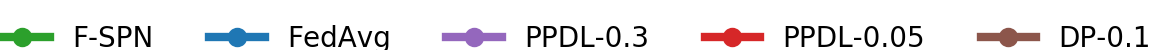}
			\\
			\includegraphics[width=0.3\linewidth]{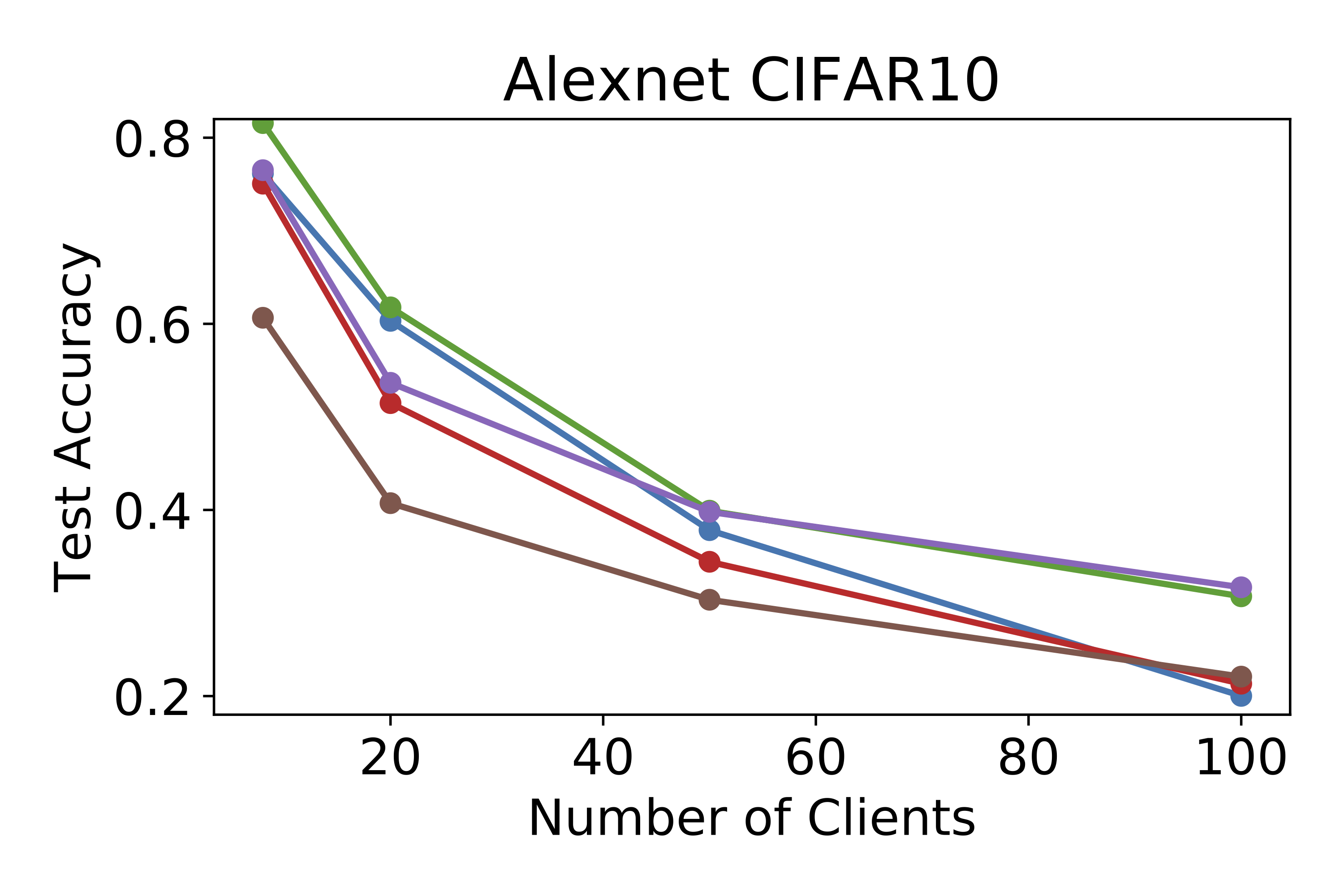}
			\includegraphics[width=0.3\linewidth]{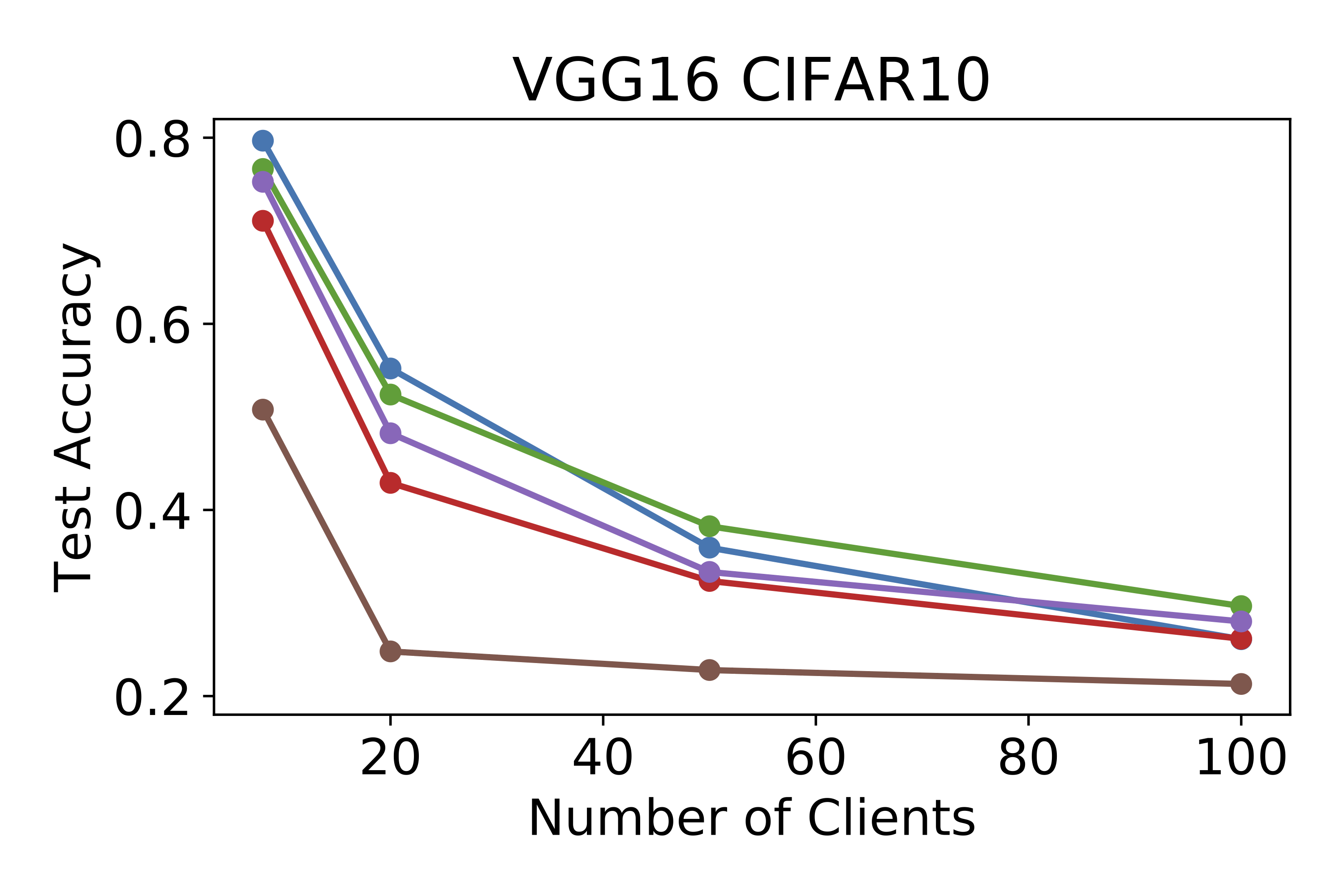}
			\caption{\textbf{Left}: AlexNet; \textbf{Right}: VGG16. For non-IID data, we split the dataset using Dirichlet distribution with $alpha=0.9$ as mentioned in Appendix C. F-SPN is also showing improvements over FedAvg increased with the number of clients and on-par with PPDL-0.3 at 100 clients.}
		\end{figure}
	
	\newpage
	\section*{Appendix F: Algorithm}
	
	\begin{algorithm}[H]
		\KwIn{local model $M_{i}$, private target $\bm{t}_{i}$, local dataset $D_{i}$, global model $G$} % \tcp*{Client Update}
		$M_{i}$ $\leftarrow$ download $G$ from server  \;
		\For{each $d_{j}, y_{j} \in D_{i}$}
		{
			$\bm{u}, \bm{v} = $ forward($M_{i}$, $d_{j}$)  \;
			$\nabla W_{i} = $ backprop($(\alpha_{1} * L_{CE}(W_{i}, \bm{u}, y_{j})) + (\alpha_{2} * L_{P}(W_{i}, \bm{v}, \bm{t}_{i}, y_{j}))$)  \;
			$W_{i} = W_{i} - lr * \nabla W_{i}$  \;
		}
		$\Delta W_{i}^{new} = W_{i} - G$  \;
		\Return $\Delta W_{i}^{new}$ 
		\caption{Client Update}
		\label{alg:client}
	\end{algorithm}	
	
	\begin{algorithm}[H]
		\KwIn{client $\Delta W_{i}^{new}$, global model $G$}   %\tcp*{Server Update}
		receive $\Delta W_{i}^{new}$ from $K$ clients \;
		$G^{new} = G + \frac{1}{K}\sum^{K}_{i=1}{\Delta W_{i}^{new}}$  \;
		\Return $G^{new}$ 
		\caption{Server Update}
	\end{algorithm}
	
	\begin{algorithm}[H]
		% \KwIn{client $\Delta W_i^{new}$, global model $G$}
		\textbf{Server:} initialize global model $G$ \;
		\For{$each\ client$}
		{
			initialize private target $\bm{t}_{i}$  \;
		}
		\For{$each\ communication$}
		{
			\For{$each\ client$}
			{
				$\Delta W_{i}^{new}$ = Client Update \tcp*{Client Update}
			}
			$G^{new} = $ Server Update \tcp*{Server Update}
		}
		\Return $G^{new}$
		
		\caption{Training Experiment (FedAvg)}
		\label{alg:training}
	\end{algorithm}

	\begin{algorithm}[H]
		% \KwIn{client $\Delta W_i^{new}$, global model $G$}
		\textbf{Server:} initialize global model $G$ \;
		\For{$each\ client$}
		{
			initialize private target $\bm{t}_{i}$  \;
		}
		\For{$each\ communication$}
		{
			\For{$each\ client$}
			{
				$\Delta W_{i}^{new}$ = Client Update \tcp*{Client Update}
				$G^{new} = $ Server Update \tcp*{Server Update}
			}
		}
		\Return $G^{new}$
		
		\caption{Training Experiment (Round Robin)}
		\label{alg:training-rr}
	\end{algorithm}
	
	\begin{algorithm}[H]
		% \KwIn{client $\Delta W_i^{new}$, global model $G$}
		\textbf{Input:} Global model $G$, dataset $D$, number of reconstruction $X$, private target $t_i$ \;
		
		\For{$1..X$}
		{
			$d_{x}, y_{x}$ $\leftarrow$ random sample from $D$ \;
			$\bm{u}, \bm{v} = $ forward($G$, $d_{x}$)  \;
			$\nabla W_G =(\alpha_{1} * L_{CE}(W_G, \bm{u}, y_{x})) + (\alpha_{2} * L_{P}(W_G, \bm{v}, \bm{t}_{i}, y_{x}))$  \;
			
			$d^{rec}_{x} = $ INIT(shape of $d_{x}$) \;
			$y^{rec}_{x} = $ INIT(shape of $y_{x}$) \;
			
			\While{not converged}{
				$\bm{u} = $ forward($G$, $d^{rec}_{x}$) \;
				$\nabla W^{rec}_G = L_{CE}(W_G, \bm{u}, y^{rec}_{x})$  \; \tcp*{Attacker doesn't know the existence of private term}
				minimize $||\nabla W_G - W^{rec}_G||$ \;
				update $d^{rec}_{x},y^{rec}_{x}$ \;
			}
		}
		\Return $D^{rec}, Y^{rec}$ \tcp*{Reconstructed Images and Labels (Membership)}
		
		\caption{Reconstruction and Membership Attack (Deep Leakage Attack)}
		\label{alg:recon}
	\end{algorithm}
	
	\newpage
	\begin{algorithm}[H]
		% \KwIn{client $\Delta W_i^{new}$, global model $G$}
		\textbf{Input:} Reconstructed images $D^{rec}$, query dataset $D$ \;
		
		Split $D^{rec}$ and $D$ into $N$ partitions \; \tcp*{Index of reconstructed image is same as index of groundtruth image.}
		
		\For{$d_{j}$, index$_{j} \in D$}
		{
			\For{$d^{rec}_{k}$, index$_{k} \in D^{rec}$}
			{
				MSE $ = ||d_{j} - d^{rec}_{k}||$ \;
				\If{found lowest MSE}{$t_{j} = $ index$_{k}$}
				\tcp*{Assign index of reconstructed image into tracing result}	
			}
			
			\If{$t_{j}$ == index$_{j}$}{Tracing successful}
		}
		\Return $T$ \tcp*{Tracing Indices}
		
		\caption{Tracing Attack}
		\label{alg:tracing}
	\end{algorithm}

%	\section{Appendix G: Efficiency of SPN}
%	
%	Our SPN is an additional loss term added into final loss function, therefore, the efficiency overhead is minor which only comes from calculating the loss term. With 64-bit private target and one extra fully connected layer (the layer with) on DLNet, the speed of forwarding and backpropagation is around 30\% more time than 
	
\end{document}